\theoremstyle{plain}
\newtheorem{theorem}{Theorem}[section]
\newtheorem{proposition}[theorem]{Proposition}
\newtheorem{lemma}[theorem]{Lemma}
\newtheorem{corollary}[theorem]{Corollary}
\theoremstyle{definition}
\newtheorem{definition}[theorem]{Definition}
\newtheorem{assumption}[theorem]{Assumption}
\theoremstyle{remark}
\definecolor{myMaroon}{RGB}{128, 0, 0}
\def\[#1\]{\begin{align*}#1\end{align*}}
\newcommand{\kours}{\texttt{Kernelized-COMWU}\xspace}
\newcommand{\COMWU}{\texttt{COMWU}\xspace}
\newcommand{\OFTRL}{\texttt{OFTRL}\xspace}
\newcommand{\OOMD}{\texttt{OOMD}\xspace}
\newcommand{\FTRL}{\texttt{FTRL}\xspace}
\newcommand{\LRLOFTRL}{\texttt{LRL-OFTRL}\xspace}
\newcommand{\ours}{\texttt{COFTRL}\xspace}
\newcommand{\Opthedge}{\texttt{OMWU}\xspace}
\newcommand{\Hedge}{\texttt{MWU}\xspace}
\newcounter{qst}
\crefname{qst}{Question}{Questions}
\numberwithin{equation}{section}
\def\ind{k}
\DeclareMathOperator{\reg}{Reg}
\DeclareMathOperator{\tildereg}{{\tilde R}eg}
\newcommand{\regdep}{\mathfrak{R}}
\newcommand{\defeq}{\coloneqq}
\renewcommand{\^}[1]{^{(#1)}}
\newcommand{\bbR}{\mathbb{R}}
\renewcommand{\vec}[1]{\bm{#1}}
\newcommand{\vstack}[2]{\begin{pmatrix} #1 \\ #2 \end{pmatrix}}
\let\hat\widehat
\let\tilde\widetilde
\DeclareMathOperator*{\dprime}{{\prime \prime}}
\newcommand{\argmin}{\mathop{\mathrm{arg\,min}}}
\newcommand{\argmax}{\mathop{\mathrm{arg\,max}}}
\newcommand{\ent}[1]{\textup{H}(#1)}
\newcommand{\expect}{\mathbb{E}}
\newcommand{\vP}{\vec{P}}
\newcommand{\vH}{\vec{H}}
\newcommand{\vT}{\vec{T}}
\newcommand{\vertex}{\vec{v}}
\newcommand{\ut}{\vec{u}}
\newcommand{\Ut}{\vec{U}}
\newcommand{\nut}{\vec{\nu}}
\newcommand{\mut}{\vec{\mu}}
\newcommand{\at}{\vec {\mathsf{r}}}
\newcommand{\Nut}{\vec{ \mathcal{V}}}
\newcommand{\At}{\vec {\mathcal{R}}}
\newcommand{\vebar}{\overline{\vec{e}}}
\newcommand{\vb}{\vec{b}}
\newcommand{\vvv}{\vec \upsilon}
\newcommand{\vchi}{\vec{\chi}}
\newcommand{\vx}{{\vec{x}}}
\newcommand{\lambdap}{{\lambda^\prime}}
\newcommand{\vy}{\vec{y}}
\newcommand{\vz}{\vec{z}}
\newcommand{\vtheta}{\vec{\theta}}
\newenvironment{restatetheorem}[1]{%
  \addtocounter{theorem}{-1}%
  \begin{theorem}
    }{%
  \end{theorem}
  \addtocounter{theorem}{1}%
}
\newenvironment{restatelemma}[1]{%
  \addtocounter{lemma}{-1}%
  \begin{lemma}
    }{%
  \end{lemma}
  \addtocounter{lemma}{1}%
}
\newenvironment{restateproposition}[1]{%
  \addtocounter{proposition}{-1}%
  \begin{proposition}
    }{%
  \end{proposition}
  \addtocounter{proposition}{1}%
}
\newenvironment{restatecorollary}[1]{%
  \addtocounter{corollary}{-1}%
  \begin{corollary}
    }{%
  \end{corollary}
  \addtocounter{corollary}{1}%
}
\theoremstyle{definition}
\numberwithin{theorem}{section}
\newenvironment{restatedefinition}[1]{%
  \addtocounter{definition}{-1}%
  \begin{definition}
    }{%
  \end{definition}
  \addtocounter{definition}{1}%
}
\NewDocumentCommand{\numberthis}{om}{%
  \IfNoValueTF{#1}{%
    \refstepcounter{equation}\tag{\theequation}%
  }{%
    \tag{#1}%
  }%
  \label{#2}%
}
\definecolor{darkgrey}{gray}{0.3}
\definecolor{commentcolor}{gray}{0.5}
\crefname{algocf}{Algorithm}{Algorithms}
\newcommand{\crossmark}{\ding{55}}
\renewcommand{\checkmark}{\ding{51}}
\title{Cautious Optimism: A Meta‐Algorithm for Near‐Constant Regret in General Games}
\author{Ashkan Soleymani}
\email{ashkanso@mit.edu}
\affiliation{%
  \institution{MIT}
  \city{Cambridge}
  \country{USA}
}
\author{Georgios Piliouras}
\email{gpil@google.com}
\affiliation{%
  \institution{Google DeepMind}
  \city{London}
  \country{UK} 
}
\author{Gabriele Farina}
\email{gfarina@mit.edu}
\affiliation{%
  \institution{MIT}
  \city{Cambridge}
  \country{USA}
}
\begin{document}

\begin{abstract}

We introduce \emph{Cautious Optimism}, a framework for substantially faster regularized learning in general games. Cautious Optimism, as a variant of Optimism, adaptively controls the learning pace in a dynamic, non-monotone manner to accelerate no-regret learning dynamics. Cautious Optimism takes as input any instance of Follow-the-Regularized-Leader (\FTRL) and outputs an accelerated no-regret learning algorithm (\ours) by pacing the underlying \FTRL with minimal computational overhead. Importantly, it retains uncoupledness, that is, learners do not need to know other players’ utilities. Cautious Optimistic \FTRL (\ours) achieves near-optimal $O_{\scaleto{T}{4pt}}(\log T)$ regret in diverse self-play (mixing and matching regularizers) while preserving the optimal $O_{\scaleto{T}{4pt}}(\sqrt{T})$ regret in adversarial scenarios. In contrast to prior works (e.g., \citet{Syrgkanis15}, \citet{daskalakis2021near}), our analysis does not rely on monotonic step sizes, showcasing a novel route for fast learning in general games. Moreover, instances of \ours achieve new state-of-the-art regret minimization guarantees in general convex games, exponentially improving the dependence on the dimension of the action space $d$ over previous works~\citep{farina2022near}.%
\footnote{
Preliminary versions of this work appeared in the proceedings of the ACM Symposium on Theory of Computing (STOC 2025, \citep{Soleymani25:Faster}) and as a one-page extended abstract at the ACM Conference on Economics and Computation (EC 2025, \citep{soleymani2025cautious}). The present version provides a comprehensive and unified treatment of this framework, substantially extending the preliminary versions.
}

\end{abstract}

\maketitle

\setcounter{tocdepth}{1}
\tableofcontents

\newpage

\section{Introduction}

Understanding multi-agent systems is fundamental to a wide range of disciplines, including economics, game theory, artificial intelligence, optimization, control theory, and human sciences~(e.g., \citet{von1947theory,rawls1971atheory,shoham2008multiagent,egerstedt2001formation,lewis2008convention,gintis2014bounds}). This area has gained importance with the advent of machine learning, where multi-agent systems are integral to many fundamental architectures and applications~\citep{goodfellow2014generative,silver2017mastering,goodfellow2020generative,bighashdel2024policy}. In these systems, multiple autonomous agents interact, often with individual objectives that may be cooperative, competitive, or even a combination of both. It is difficult to overstate the importance of equilibrium concepts in the study of multi-agent systems, characterizing stable outcomes where no agent has an incentive to unilaterally deviate. In the context of games, the Nash equilibrium stands as the primary solution concept~\citep{nash1950equilibrium}.

Online learning and game theory are intimately linked through the celebrated framework of \emph{regret minimization}, wherein an algorithm iteratively updates its decisions to minimize the difference between its cumulative utility and that of the best fixed strategy in hindsight.
Originally formalized within the context of single-agent decision making~\citep{robbins1952some}, regret minimization has deep historical ties to game theory, tracing back to fictitious play \citep{brown1949some, robinson1951iterative}. This connection is not merely conceptual but fundamental, as arguably the most scalable and effective algorithms for finding equilibrium concepts are \emph{uncoupled no-regret learning} algorithms~\citep{cesa2006prediction}, leading to pioneering developments in artificial intelligence~\citep{bowling2015heads,brown2018superhuman,moravvcik2017deepstack,brown2019superhuman}. Moreover, beyond its theoretical significance and practical application in equilibrium computation, regret minimization serves as a powerful model for rational behavior in real-world applications, where agents adaptively respond to their environments based on past experiences~\citep{halpern2012iterated}.

Within the class of two-player zero-sum matrix games, no-regret algorithms are known to converge in a time-average sense to the Nash equilibrium of the game. In particular, if both players follow a no-regret learning algorithm, the time-average of their marginal strategies results in an approximate Nash equilibrium~\citep{cesa2006prediction}. However, while this result is appealing, it does not extend even to the relatively restricted setting of multi-player normal-form games. On the contrary, strong impossibility results demonstrate that no \emph{uncoupled} learning dynamics can guarantee convergence to a Nash equilibrium in such games~\citep{hart2003uncoupled,milionis2023impossibility}. There exists a fundamental barrier: not only do uncoupled learning dynamics fail to converge to Nash equilibria in general settings, but even centralized algorithms with full access to game parameters face inherent computational hardness in Nash equilibrium computation, even in two-player general-sum games~\citep{daskalakis2009complexity,chen2006settling}.  

On the positive side, regret minimization extends to general games, converging to \emph{Coarse Correlated Equilibria} (CCE)~\citep{cesa2006prediction}.  
Formally, the empirical play of no-regret learners in a game converges to an approximate CCE, where the approximation level is controlled by the maximum regret of the players. As the average regret diminishes over time, the equilibrium approximation improves, making regret minimization a natural and scalable framework for equilibrium computation 
\citep{cesa2006prediction,monnot2017limits,roughgarden2017price,papadimitriou2008computing,roughgarden2015intrinsic,babichenko2014simple,keiding2000correlated}. Beyond convergence to CCE in game settings, regret minimization is recognized as a fundamental concept with wide-ranging applications, including learning and generalization~\citep{littlestone1988learning,blum1990learning,lugosi2023online}, von Neumann’s minimax theorem and Blackwell approachability~\citep{freund1996game,abernethy2011blackwell}, boosting~\citep{freund1997decision,freund1996game}, combinatorial optimization~\citep{plotkin1995fast,arora2007combinatorial}, complexity theory~\citep{klivans1999boosting,barak2009uniform}, differential privacy~\citep{hardt2010multiplicative,hsu2013differential}, prediction markets~\citep{chen2010new,abernethy2013efficient}, evolutionary dynamics~\citep{chastain2013multiplicative}, and more.

Online learning traditionally focuses on no-regret algorithms designed to perform optimally in adversarial environments, achieving the minimax optimal regret bound of $O_{\scaleto{T}{4pt}}(\sqrt{T})$\footnote{In $O_{\scaleto{T}{4pt}}$ notation, only the dependence on the time horizon $T$ is explicitly represented.
} in such settings. However, this worst-case perspective is overly pessimistic in the context of \emph{self-play} in games, where the environment exhibits a much higher degree of predictability, despite being nonstationary. This predictability arises from the structured dynamics of the game, where players' utilities evolve smoothly as a function of their actions over time. This phenomenon naturally begs the question: what are the fastest no-regret learning algorithms in games? This remains a fundamental open problem in game theory.

The breakthrough work of \citet{Syrgkanis15} took the first major step in this line, by showing that the framework of \emph{Optimism} framework~\citep{rakhlin2013online,rakhlin2013optimization} leads to faster regrets in the self-play settings. They proved that optimistic variants of regularized learning in games enjoy $O_{\scaleto{T}{4pt}}(T^{1/4})$ regret, an $O_{\scaleto{T}{4pt}} (T^{1/4})$ improvement over the online learning with adversarial utilities. Once theoretical improvements were proven to be feasible, a series of works~\citep{Chen20:Hedging,daskalakis2021near,farina2022near,piliouras2022beyond} aimed to achieve faster regret rates in games by designing specialized, tailor-made algorithms. 
We discuss recent works in detail in \Cref{sec:prior_work}. Optimistic regularized learning algorithms achieve \emph{constant social regret}, ensuring that the total regret of all players in self-play remains bounded over time~\citep{Syrgkanis15}. Unfortunately, controlling \emph{individual regrets}—essential for convergence to Coarse Correlated Equilibria (CCE)—remains considerably more challenging, and despite nearly a decade of substantial progress, the optimal regret rate is yet to be determined. Moreover, although recent works~\citep{daskalakis2021near,farina2022near} have achieved a remarkable exponential improvement, reducing individual regret to $O_{\scaleto{T}{4pt}}(\log T)$, these advances remain confined to single-instance algorithms and lack the generality of the \emph{Optimism} framework of \citet{Syrgkanis15}. In this work, we aim to bridge these gaps.

We introduce the framework of \emph{Cautious Optimism}, a broad characterization of exponentially faster regret minimization, achieving $O_{\scaleto{T}{4pt}}(\log T)$ for regularized learning in games. The central idea is to \emph{pace the dynamics of learners} by employing a non-monotone dynamic learning rate. Intuitively, this mechanism prevents agents from accumulating runaway negative regret. Our key insight is to refine Optimism into a form of Cautious Optimism, where agents decrease their learning rates once their regret becomes too negative, that is, when they substantially outperform all fixed actions. Perhaps unexpectedly, we demonstrate that this principle
can be systematically instantiated to establish new state-of-the-art regret bounds.

Cautious Optimism takes as input an instance of the Follow-the-Regularized-Leader (\FTRL) algorithm—typically achieving $O_{\scaleto{T}{4pt}}(T^{1/4})$ regret\footnote{For general \OFTRL algorithms~\citep{Syrgkanis15}.}—and produces a substantially accelerated no-regret learning algorithm, Cautious Optimistic Follow-the-Regularized-Leader (\ours), with $O_{\scaleto{T}{4pt}}(\log T)$ regret. This acceleration is achieved by dynamically adjusting the pace of the underlying \FTRL{} method, using a structured, nonmonotonic adaptation of the learning rate that enables significantly faster convergence while maintaining minimal computational overhead. In \Cref{fig:coftrl}, we summarize the structure of \ours.

Our framework, through its design of non-monotone dynamic learning rates, establishes a new paradigm for learning dynamics in games. By substantially departing from traditional approaches in online learning and optimization—which primarily rely on constant or monotonically decreasing learning rates—our framework leverages the novel principle of \emph{dynamic pacing}, introducing a principled mechanism to accelerate regret minimization in multi-agent learning settings.

Cautious Optimism mirrors the generality of Optimism \citep{Syrgkanis15}, enables \emph{diverse self-play} by allowing the mixing and matching of regularizers, recovers \LRLOFTRL~\citep{farina2022near} as a special case, and introduces \emph{Cautious Optimistic Multiplicative Weight Updates} (\COMWU) algorithm
and 
two other new \ours instances with regret guarantees achieving a new state-of-the-art regret bound in finite games---see also \Cref{table:regularizers,table:results}. Cautious Optimism applies naturally to the broad class of convex games, with one instance achieving a new state-of-the-art regret bound in convex games.  We discuss our contributions in more depth in \Cref{sec:contributions}. 

\subsection{Prior Work and Context for Cautious Optimism} \label{sec:prior_work}

In this section, we discuss major prior work on accelerating no-regret learning in games. Let $T$ denote the time horizon, $d$ denote the number of actions per player, and $n$ denote the number of players. Please refer to \Cref{table:results} for a summarized comparison of \ours with existing methods.

The quest for optimal regret bounds in general games was initiated by the seminal work of \citet{Syrgkanis15}, which established the first $o_{\scaleto{T}{4pt}}(T^{1/2})$ regret rate for self-play (better than the optimal rate for online learning with adversarial utilities). A key contribution of their work was the introduction of the \emph{RVU property}, an adversarial regret bound that applies broadly to a class of optimistic no-regret learning algorithms, including Follow-the-Regularized-Leader and Mirror Descent. By leveraging this property, they demonstrated that while social regret remains constant, individual player regret grows at a rate of $O_{\scaleto{T}{4pt}}(T^{1/4})$, leaving the challenging problem of determining the optimal individual regret rate largely unresolved. 

More recently, \citet{daskalakis2021near} achieved a major breakthrough by exponentially improving the regret guarantees for Optimistic Multiplicative Weight Updates (\Opthedge). Their work marked the first derivation of a polylogarithmic regret bound in $T$, specifically $O(n \log d \log^4 T)$. This result was obtained through a careful analysis of the smoothness properties of higher-order discrete differentials of the softmax operator in the Multiplicative Weight Updates algorithm and their implications in the frequency domain.  

However, due to their specific choice of learning rate, their analysis is valid only in the regime where $T > C n \log^4 T$ where $C$ is a large constant—see Lemmas 4.2 and C.4 in \cite{daskalakis2021near} for details. Additionally, the hidden constants in their order notation have a very large magnitude.  Most importantly, it remains unknown how these ideas can be extended to the regret analysis of other regularized learning algorithms.

\citet{piliouras2022beyond} introduced Clairvoyant MWU (CMWU), where players leverage the game dynamics as a broadcast channel to implement Nemirovski’s Conceptual Proximal Method~\citep{nemirovski2004prox,farina2022clairvoyant} in a decentralized, uncoupled manner. However, this attitude towards the dynamics of the game play comes at a cost: not all iterations of Clairvoyant MWU satisfy the no-regret property. Instead, only a subsequence of $\Theta(T / \log T)$ iterations exhibits bounded regret of $O(n \log d)$, while the remaining iterations are dedicated to broadcasting.  As a result, the empirical play of this subsequence converges to a CCE of the game at a rate of $O(\frac{n \log d \log T}{T})$. Additionally, CMWU does not guarantee the no-regret property in the presence of adversarial utilities.

\citet{farina2022near} established a logarithmic dependence on $T$ throughout the entire history of play using the Log-Regularized Lifted Optimistic FTRL (\LRLOFTRL) algorithm. Their approach builds upon an instantiation of optimistic Follow-the-Regularized-Leader (FTRL) over a suitably lifted space with a log regularizer. However, this improved dependence on $T$ comes at the cost of an exponentially worse dependence on the number of actions $d$, resulting in an overall regret bound of $O(n d \log T)$.  As we see later in \Cref{app:find_gamma}, the dynamics of \LRLOFTRL are equivalent to those of \ours with a log regularizer, which is a special instance of Cautious Optimism.

Despite all these efforts~\citep{Chen20:Hedging,daskalakis2021near,piliouras2022beyond,farina2022near} following the celebrated result of \citet{Syrgkanis15}, no \emph{unified} framework or analysis broadly characterized faster convergence than \citet{Syrgkanis15} for regularized learning in games is known.  %
We address this gap by introducing \emph{Cautious Optimism}, which establishes a unified framework that exponentially accelerates regularized learning in games in an uncoupled manner, improving the $O_{\scaleto{T}{4pt}}(T^{1/4})$ regret bound of \citet{Syrgkanis15} to $O_{\scaleto{T}{4pt}}(\log T)$. Ultimately, it becomes evident that an instance of Cautious Optimistic \FTRL (\ours) with a log regularizer recovers the dynamics of \LRLOFTRL~\citep{farina2022near}. However, this recovery arises as a simple and immediate byproduct of our general framework and analysis, in contrast to \citep{farina2022near}, where the spectral properties and structural details of the chosen regularizers were heavily exploited.

Moreover, we introduce \COMWU, a refined variant of the Optimistic Multiplicative Weights Update (\Opthedge) algorithm, as an instance of \ours with negative entropy regularizer. \COMWU and two other instances of \ours achieve a new state-of-the-art regret bound of $O(n \log^2 d \log T)$. Interestingly, one of these instances of \ours extends to the class of convex games, exponentially improving dependence on the dimension of the action space $d$ upon the previous state-of-the-art, \LRLOFTRL~\citep{farina2022near}, in this broad setting. Our approach is grounded in dynamic learning rate control, ensuring that learners progress in a coordinated manner throughout the game while preventing agents from developing excessively negative regret. 

\COMWU enjoys an exponential improvement in $d$ compared to \LRLOFTRL~\citep{farina2022near} and significantly reduces the dependence on the time horizon, improving from $\log^4 T$ in \Opthedge~\citep{daskalakis2021near} to $\log T$. Moreover, unlike \citet{daskalakis2021near}, which requires $T > C n \log^4 T$ for some constant $C$ (see Lemmas 4.2 and C.4 in~\citep{daskalakis2021near}), our analysis remains valid for all values of $n$, $d$, and $T$, and holds simultaneously across all time steps $t$, thereby ensuring uniform convergence. As a further remark, the regret analysis of \Opthedge~\citep{daskalakis2021near} obscures extremely large constants in the asymptotic notation, particularly in comparison to ours.

The idea of agents adjusting their behavior depending on whether they feel content or discontent is both simple and intuitive, and has inspired a variety of game dynamics that provably concentrate around pure Nash equilibria~\citep{young2009learning}, as well as adjusted replicator dynamics in evolutionary game theory~\citep{weibull1997evolutionary}. In the context of regret minimization, this idea can be traced back to \citet{bowling2002multiagent}, who introduced the Win or Learn Fast (WoLF) principle. Under WoLF, agents increase their learning rate when they are losing, enabling faster adaptation to the environment and to the strategies of other agents. \citet{bowling2002multiagent} established convergence of gradient ascent–descent with WoLF to a Nash equilibrium in the restricted setting of two players with two actions. Subsequently, \citet{bowling2004convergence} extended WoLF to multiplayer games, showing that it achieves no-regret dynamics with $O(d \sqrt{T})$ regret. Despite the success of WoLF and related heuristics in small-scale games~\citep{abdallah2008multiagent,ratcliffe2019win,xi2015novel,awheda2016exponential,georgila2014single,busoniu2006multi,zhu2021analysis}, their theoretical benefits in general games remain largely unresolved.

On the negative side, it has recently been shown that the multiplicative weights update algorithm, even when equipped with a continuous learning rate (stronger than the original fast and slow rates of WoLF~\citep{bowling2002multiagent}), exhibits chaotic behavior in nonatomic congestion games~\citep{vlatakis2023chaos}. While our motivation for adaptive learning rates and our methodology differ from the WoLF principle, to the best of our knowledge, \ours is the {first algorithmic framework to demonstrate theoretical benefits for such ideas in decision-making, and in particular for general regularized learning in games}.

\begin{table}[t]
    \centering
    \newcommand{\ldarrow}{\raisebox{-.7mm}{\tikz \draw[->] (0,0) -- (.25,0) -- +(0, -.2);}}%
    
    \resizebox{\textwidth}{!}{ 
    \begin{tabular}{>{\arraybackslash}m{5.0cm} >{\arraybackslash}m{4.5cm} lc}
        \bf Method                                            & \bf Regret in Games     & \bf Adversarial Regret & \bf General Learners \\
        \toprule
        OFTRL / OOMD\newline\citep{Syrgkanis15}            & $O(\sqrt{n}\hspace{0.5 mm} \regdep(d) T^{1/4})$    &  $\Tilde{O}(\sqrt{T \log d})$ & \checkmark \\
        \midrule

        COFTRL \newline\textbf{[This paper]}             &  $O(n \hspace{0.5 mm} \Gamma(d) \log T)$                     &   $\Tilde{O}(\sqrt{T \log d})$ & \checkmark\\
        \midrule
        \midrule
        OMWU\newline\citep{Chen20:Hedging}\!                             & $O(n \log^{5/6} d \: T^{1/6})\: \dagger $                      &    $\Tilde{O}(\sqrt{T \log d})$ & \crossmark \\ 
        \midrule
        OMWU\newline\citep{daskalakis2021near}\!                             & $O(n \log d \log^4 T)$                       &   $\Tilde{O}(\sqrt{T \log d})$   & \crossmark   \\
        \midrule
        Clairvoyant MWU\newline\citep{piliouras2022beyond}              & $O(n \log d)$\newline for a subsequence only~$\ddagger$      & No guarantees    & \crossmark                                                           \\
        \midrule
          LRL-OFTRL\newline\citep{farina2022near} \newline\textbf{[$\equiv$ \ours w/ log regularizer]}              &  $O(n \hspace{0.5 mm} d  \log T)$       &   $\Tilde{O}(\sqrt{T \log d})$    & \crossmark                                                                        \\
        \midrule
          \COMWU \newline\textbf{[This paper]}              &  $O(n \log^2 d \log T)$                  &   $\Tilde{O}(\sqrt{T \log d})$     & \crossmark                                                        \\
        \midrule
          \ours with $\ell_{p^*}$ \newline\textbf{[This paper]}             &  $O(n \log^2 d \log T)$                     &   $\Tilde{O}(\sqrt{T \log d})$       & \crossmark                                                      \\
          \midrule
          \ours with  $q^*$-Tsallis entropy \newline\textbf{[This paper]}             &  $O(n \log^2 d \log T)$                     &   $\Tilde{O}(\sqrt{T \log d})$      & \crossmark                                                       \\
        \bottomrule
    \end{tabular}
    }    
    \caption{ 
    Comparison of existing no-regret learning algorithms in general finite games. We define $n$ as the number of players, $T$ as the number of game repetitions, and $d$ as the number of available actions. For simplicity, dependencies on smoothness and utility range are omitted.  $\dagger$ Applicable only to two-player games ($n = 2$).  $\ddagger$ Unlike other algorithms, Clairvoyant MWU (CMWU) does not guarantee sublinear regret for its full sequence of iterates. Instead, after $T$ iterations, only a subsequence of length $\Theta(T/\log T)$ achieves the regret bound stated in the table.
    }
    \label{table:results}
\end{table}

\subsection{Contributions and Techniques} 

In this paper, we introduce Cautious Optimism, a uncoupled framework (meta-algorithm) that takes an instance of the Follow-the-Regularized-Leader (\FTRL) algorithm as input and produces an accelerated no-regret learning algorithm—Cautious Optimistic \FTRL (\ours)—by adaptively regulating the pace of the underlying \FTRL method through dynamic control of the learning rate.

Our work provides the first comprehensive characterization of exponentially faster regret minimization for regularized learning in general games. \ours achieves a near-optimal regret bound of $O_{\scaleto{T}{4pt}}(\log T)$ while simultaneously maintaining the optimal $O_{\scaleto{T}{4pt}}(\sqrt{T})$ regret bound in adversarial settings.
Furthermore, \ours is the first framework that achieves near-optimal $O_{\scaleto{T}{4pt}}(\log T)$ regret through \emph{diverse self-play}, where players mix and match their choice of regularizers for \FTRL, without requiring all players to use the exact same instance of \ours.

We show that social regret of \ours has the same optimal $O_{\scaleto{T}{4pt}}(1)$ as its optimistic counterparts, e.g., \OFTRL. Thus, \ours can enjoy best of both worlds type of guarantees, i.e., $O_{\scaleto{T}{4pt}}(\log T)$ individual regret and $O_{\scaleto{T}{4pt}}(1)$ social regret simultaneously, while for the optimistic counterparts the current choices of learning rates for $O_{\scaleto{T}{4pt}}(T^{1/4})$ individual regret and $O_{\scaleto{T}{4pt}}(1)$ social regret are different~\citep[Corollary 8 and Corollary 12]{Syrgkanis15}.

In addition to the exponentially faster convergence, \ours retains the generality of \citet{Syrgkanis15} and reproduces the dynamics of \LRLOFTRL\citep{farina2022near} as a special case (Please refer to \Cref{table:results} for details). Specifically, when instantiated with \FTRL using a logarithmic regularizer, \ours recovers the dynamics of \LRLOFTRL~\citep{farina2022near}. There, \emph{Cautious Optimism} is applied as the overarching adaptive learning rate control mechanism. This connection sheds new light on the underlying reasons for the rapid convergence of \LRLOFTRL~\citep{farina2022near}. While the analyses of~\citep{farina2022near} are heavily tailored to their specific choices of regularizers and dynamics—such as the strong multiplicative stability of actions induced by the high curvature of the log-regularizer in \LRLOFTRL, we recover their convergence guarantees as an immediate consequence of our general analysis.

We introduce the first uncoupled no-regret learning algorithms as three instances of \ours: Cautious Optimistic Multiplicative Weight Update (\COMWU), \ours with $\ell_{p^*}$, and \ours with $q^*$-Tsallis entropy, achieving regret guarantees of order $O(n \log^2 d \log T)$, a new state-of-the-art in learning in games (see \Cref{table:results} for details). These instances of \ours achieve an exponential in $d$ improvement relative to \LRLOFTRL~\citep{farina2022near} and substantially reduce the horizon dependence—from $\log^4 T$ under \Opthedge~\citep{daskalakis2021near} down to $\log T$. In contrast to \citet{daskalakis2021near}, our guarantees hold uniformly for all regimes of $n$, $d$, and $T$.

Furthermore, inspired by Kernelized \Opthedge~\citep{farina2022kernelized}, we introduce a kernelized version of \COMWU (denoted as \kours), applicable to convex $0$/$1$-polyhedral games such as extensive-form games and flows on directed graphs. In this way, we show that \COMWU inherits the fundamental and intriguing properties of \Opthedge.

Finally, we show that Cautious Optimism extends naturally to the general class of convex games, and when instantiated with the $\ell_{p^*}$ norm, attains $O(n \log^2 d \log T)$ regret, replacing the linear dependence on $d$ in the previous state-of-the-art, \LRLOFTRL~\citep{farina2022near}, with a $\log^2 d$ factor in this broad setting.

At the technical level, we conceptualize the idea of learning-rate control for no-regret learning  and formalize it to design optimization algorithms for learning-rate control, resulting in \ours, a computationally efficient algorithm. The concept of dynamic learning rates has the potential to be beneficial in other areas involving regret minimization, particularly in multi-agent settings.

Moreover, we introduce a relaxed notion of Lipschitz continuity, termed \emph{intrinsic Lipschitzness}, which quantifies changes in the regularizer value through its \emph{Bregman divergence}. \ours applies to any \FTRL algorithm with an \emph{intrinsically Lipschitz} regularizer. As we will demonstrate, intrinsic Lipschitzness is a mild condition, satisfied by a broad class of regularizers, including all Lipschitz continuous regularizers. 

We show that the dynamics of \ours are equivalent to an instance of \OFTRL on the lifted space $(0, 1]\Delta^d$ with a specific composite regularizer obtained as the sum of a strongly convex part and a nonconvex transformation of the original regularizer used by the underlying \FTRL. We prove strong convexity (and self-concordance) of the composite regularizer in a specific regime of hyperparameters and provide technical steps to convert the regret analysis of the resulting \OFTRL on the lifted space $(0, 1]\Delta^d$ to that of \ours. These techniques hold broadly for any intrinsic Lipschitz regularizer and are not constrained by the structural properties of the learning rate control problem.

By studying the geometry induced by the dynamic learning-rate control problem, we prove the multiplicative stability of consecutive learning rates, design efficient 0th-, 1st-, and 2nd-order algorithms to solve the learning-rate control problem up to multiplicative accuracy, and analyze the guarantees of \ours under approximate iterates. This way, we characterize the iteration complexity of \ours and show that the computational overhead compared to \OFTRL is minimal.

And lastly, with \ours, we provide the first theoretical demonstration that dynamic learning-rate adjustments yield benefits in general regularized learning in games—resolving a longstanding open question motivated by heuristics and practice~\citep{bowling2002multiagent,bowling2004convergence,abdallah2008multiagent,busoniu2006multi,donancio2024dynamic}. To our knowledge, \ours is the first framework to translate WoLF-style intuitions into rigorous, general guarantees for regularized learning in games, thereby closing this theoretical gap, as discussed in \Cref{sec:prior_work}.

\label{sec:contributions}

\subsection{Organization of the Paper}

The paper is organized as follows. In \Cref{sec:finite_games_def}, we introduce notation and basic concepts needed in the sequel. In \Cref{sec:cautious_optimism} we introduce and analyze the learning rate control procedure that enables accelerating general FTRL learners and forms the core technical idea behind \ours. \Cref{sec:design} discusses alternative perspectives on the learning rate control subroutine, including a reformulation of the underlying optimization problem which will be especially useful in analyzing the regret cumulated by \ours. \Cref{sec:analysis} presents the analysis of \ours in normal-form games, establishing accelerated $O_T(\log T)$ rates for individual regret.  \Cref{sec:social_regret} analyzes the social regret incurred by \ours, establishing $O_T(1)$ rates consistent with the Optimism framework. \Cref{sec:approx} bounds the effect of approximation errors in the solution of the learning rate control problem, showing limited impact on individual regret. \Cref{sec:iteration} analyzes optimization algorithms for efficiently solving the learning rate control problem. \Cref{sec:special} discusses particular instantiations of our framework, and their regret guarantees. One such notable instantiation is Cautious Optimistic Multiplicative Weights Update (\COMWU). We show in the same section that \COMWU can be efficiently extended to 0/1-polyhedral games (including extensive-form games). Finally, \Cref{sec:convex_games} discusses extensions of \ours to general convex games.

\section{Finite Games and Learning Dynamics} \label{sec:finite_games_def}

Consider a finite $n$-player game, each player $i \in [n]$ having a deterministic strategy space $\mathcal{A}_i$, resulting mixed strategy space $\mathcal{X}_i \defeq \Delta(\mathcal{A}_i) = \Delta^{|\mathcal{A}_i|} $ and a set of utility functions $\mathcal{U}_i : \bigtimes_{j=1}^n \mathcal{A}_j \rightarrow  \mathbb{R}$, for a joint strategy profile $\vx = (\vx_1, \vx_2, \ldots, \vx_n) \in \bigtimes_{j=1}^n \mathcal{X}_j$, we indicate the expected utility of player $i$ by $\nut_i(\vx) \defeq \mathbb{E}_{\vec{s} \sim \vx}[\mathcal{U}_i(\vec{s})] = \langle \vx_i, \nabla_{\vx_i} \nut_i(\vx) \rangle$ where $\nabla_{\vx_i} \nut_i(\vx)[\xi] = \mathbb{E}_{\vec{s}_{-i} \sim \vx_{-i}} [\mathcal{U}_i(\xi, \vec{s}_{-i})]$ for all $\xi \in \mathcal{A}_i$. We assume that the utility functions are bounded by one. Additionally, for simplicity, we define $d \coloneqq \max_{i \in [n]} |\mathcal{A}_i|$. Given that the rates depend on $\max_{i \in [n]} |\mathcal{A}_i|$, we assume, for mathematical simplicity, that $d = |\mathcal{A}_i|$ for all players $i \in [n]$. For convenience, except in \Cref{sec:convex_games}, we use $\mathcal{X}$ and $\Delta^d$ interchangeably and define $\Omega \defeq (0, 1]\mathcal{X}$.

We study the regret in the self-play setting, where the game is repeated over $T$ time steps. At each time step $t$, each player $i$ simultaneously selects an action $\vx_i\^t \in \mathcal{X}_i$. Then each player $i$, observes the reward vector $\nut\^t_i = \nabla_{\vx_i} \nut_i(\vx)[\xi]$ and gains the expected utility $\langle \nut_i\^t, \vx_i\^t \rangle$. We drop the subscript $i$ and write down $\nut\^t$ and $\vx\^t$, whenever it is clear from the context.

Our goal is to design no-regret learning algorithms for the players such that after $T$ rounds, their regret,
\[
  \reg\^T \defeq \max_{\vx^* \in \mathcal{X}}  \sum_{t=1}^T \langle \nut\^t, \vx^* \rangle  - \sum_{t=1}^T \langle \nut\^t, \vx\^t  \rangle
\]
is sublinear $o(T)$ and small as a function of parameters of the game $n, d$ and time horizon $T$.

For details on notation, problem formulation and background, please refer to \Cref{sec:background}.

\begin{figure}[t]
    \centering
    \scalebox{.87}{\begin{tikzpicture}[
        node distance=1.8cm and 2.0cm, %
        thick, 
        every node/.style={align=center},
        box1/.style={rectangle, draw=black, fill=red!10, rounded corners, minimum height=1.2cm, minimum width=3.4cm, text width=3.4cm, font=\small}, %
        box2/.style={rectangle, draw=black, fill=green!10, rounded corners, minimum height=1.2cm, minimum width=3.4cm, text width=3.4cm, font=\small} %
    ]

    \node (X) at (0,0) {\large $\mathbf{r}^{(t)} \in \mathbb{R}^d$};
    \node[box1, right=1.0cm of X] (pi) {{Dynamic learning} \\ {rate control}};
    \node[box2, right=2.2cm of pi] (F) {{Optimistic FTRL with} \\ {learning rate} $\lambda^{(t)}$}; %
    \node (output) [right=1.0cm of F] {\large $\vx^{(t)} \in \mathcal{X}$};
    
    \draw[->] (X) -- (pi);
    \draw[->] (pi) -- (F) node[midway, above, yshift=3pt] { $\lambda^{(t)} \in (0, \eta]$};
    \draw[->] (F) -- (output);
    
    \draw[->] (X.south) |- ([yshift=-0.8cm] F.south) -| (F.south); %

    \node[draw=black, dashed, inner sep=10pt, fit=(pi) (F)] {}; %

    \node at ([yshift=1.2cm, xshift=0.2\linewidth] pi) {\textbf{Cautious Optimistic FTRL algorithm}}; %
    \end{tikzpicture}}
    
    \caption{Dynamics of Cautious Optimistic Follow-the-Regularized-Leader (\ours) Algorithms. \ours takes as input an instance of an \OFTRL algorithm and equips it with a dynamic learning rate control mechanism that nonmonotonically adjusts the learning rate of the underlying \OFTRL instance. We prove that this simple and lightweight overhead on top of \OFTRL leads to exponentially faster convergence guarantees for no-regret learning in games~\citep{Syrgkanis15}, for a broad class of regularizers.}
    \label{fig:coftrl}
\end{figure}

\section{Cautious Optimism} \label{sec:cautious_optimism}

Cautious Optimism is a variant of the Optimistic Follow-the-Regularized-Leader (\OFTRL) algorithms, but with non-monotone, uncoupled and adaptive adjustment of the learning rate based on the regret accumulated up to the current iteration $t$. As depicted in \Cref{fig:coftrl}, our framework can be seen as a module that takes an instance of \OFTRL as input and accelerates its convergence guarantees substantially by adding a learning rate control problem on top.

In the standard version of \OFTRL with regularizer $\psi$, the actions of the play are picked according to 
\[
\vx\^t \leftarrow \argmax_{\vx \in \mathcal{X}} \left\{ \lambda\^t \langle \at\^t, \vx \rangle - \psi(\vx) \right\}, \numberthis{eq:ftrl_step_fixed_lambda}
\]
where learning rate at time $t$ is represented by $\lambda\^t > 0$, and $\at\^t$ denotes the vector containing the accumulated optimistically-corrected regrets for each action up to time $t$. This is given by,
\[
    \at\^t[k] & \defeq (\nut\^{t-1}[k] - \langle \nut\^{t-1}, \vx\^{t-1}\rangle) + \sum_{\tau=1}^{t-1} \left[\nut\^\tau[k] - \langle \nut\^\tau, \vx\^\tau\rangle\right],
\]
for all $k \in \mathcal{A}$. To simplify further, let us define the corrected reward signal as $\ut\^t \defeq \nut\^t - \langle \nut\^t, \vx\^t\rangle \vec{1}_d$, while the accumulated signal is expressed as $\Ut\^t \defeq \sum_{\tau=1}^{t-1} \left[\nut\^\tau - \langle \nut\^\tau, \vx\^\tau\rangle \vec{1}_d\right]$.

The celebrated work of \citet{Syrgkanis15} showed that when all players in a game employ \OFTRL with a fixed learning rate $\lambda\^t = \eta$, the maximum regret accumulated by the players grows at most as $O_{\scaleto{T}{4pt}} (T^{1/4})$.\footnote{Only the dependence on the time horizon $T$ is shown. For details, please refer to \Cref{table:results}.} In this paper, we improve this result exponentially to an $O_{\scaleto{T}{4pt}} (\log T)$ dependence for a general class of regularizers, building on a novel technique that we term \emph{dynamic learning rate control}.

Unlike conventional methods in optimization and learning that enforce a monotonically decreasing learning-rate schedule, our approach allows for more flexible, non-monotone adjustments based on the learner's performance. And opposed to conventional methods, conceptually, it is \emph{not} designed as a means of circumventing uncertainty about the problem’s conditioning (e.g., Lipschitz constants). Instead, we introduce a \emph{universal dynamic learning control} mechanism that \emph{deliberately slows down the learning} process of the underlying \OFTRL whenever \emph{regret becomes excessively negative}.

Due to the counterintuitive nature of this idea, which deliberately slows down the learner when it performs too well, we term our framework \emph{Cautious Optimism}—an extension of Optimism~\citep{rakhlin2013online}, where the learner is additionally cautious about its regret so far. The \emph{universality} of our framework stems from the fact that our approach provides a general learning rate control mechanism applicable to a broad class of \OFTRL algorithms. This leads to a new general class of algorithms with exponentially faster rates, which we coin \emph{Cautious Optimistic Follow-the-Regularized-Leader} (\ours).

Our dynamic learning rate control aims to pace the learner—\emph{slowing it down when it is performing too well}—that is, \emph{when its maximum regret becomes too negative}. Such a goal might appear backwards---after all, if a learner is doing so well, why pace them down? The apparent contradiction is resolved when considering the learning system as a whole. 

In self-play settings, the predictability of the players' actions due to the smooth evolution of the learning dynamics guarantees faster convergence compared to adversarial settings. This observation motivates us to consider the convergence of the learning dynamics of multiple players as a whole, rather than at an individual level. Hence, a learner performing exceedingly well can create an imbalance that hinders others from keeping pace. Consequently, aiming for harmonic learning among players during self-play seems a natural approach to better exploit the predictability of the dynamics. While hindering learning when a player is performing exceptionally well may seem counterintuitive and even unfavorable at first glance, it helps maintain a balance among the players, thereby improving the performance of the hindered player in the long run.

There is another way to justify at the conceptual level why such ideas lead to faster convergence. As discussed in \Cref{sec:background}, no-regret dynamics converge to Coarse Correlated Equilibria at a rate dictated by the \emph{worst-performing} player—the one with the highest regret. Thus, ensuring a balanced performance among players naturally accelerates convergence.  Moreover, when all players’ regrets remain nonnegative, one can establish desirable overall properties of the learning process. These include not only small swap regret~\citep{Anagnostides22:Uncoupled}, but also iterate-level convergence to equilibrium~\citep{Anagnostides22:Last-Iterate}, and the discovery of strongly incentive-compatible equilibria~\citep{Anagnostides22:Optimistic}.

Cautious Optimism works for a broad class of convex regularizers, which we term \emph{intrinsically Lipschitz}, and achieves a regret of $O(n \Gamma_\psi(d) \log T)$ in self-play and the optimal $O (\sqrt {T \log d})$ rate in adversarial settings, where $\Gamma_\psi(d)$ depends on the properties of the chosen regularizer $\psi$. We present the structure of Cautious Optimistic \FTRL (\ours) in \Cref{fig:coftrl} and its pseudocode in \Cref{algo:cftrl}.

\begin{algorithm2e}[th]
    \SetNoFillComment
    \caption{Cautious Optimistic \FTRL (\ours)
    }\label{algo:cftrl}
    \DontPrintSemicolon
    \KwData{Learning rate $\eta$, parameters $\alpha$}\vspace{2mm}
    Set $ {\Ut}\^1, \ut^{(0)} \gets \vec{0} \in \bbR^{d}$\;
    \For{$t=1,2,\dots, T$}{
    \tcc{\color{commentcolor}\texttt{Optimism}}
    Set $\at\^t \gets {\Ut}\^t + {\vec\ut}\^{t-1}$\; \medskip
    \tcc{\color{commentcolor}\texttt{Dynamic Learning Rate Control}}
    Set  $\displaystyle \lambda\^t \gets \argmax_{\lambda \in (0, \eta]} \left\{ \alpha \log \lambda + \psi^*_{\mathcal{X}} (\lambda \at\^t)   \right\}$\label{line:oftrl0} \;  \medskip 
    \tcc{\color{commentcolor} \texttt{\OFTRL with Dynamic Learning Rate}} 
    Set $\displaystyle\vx\^t \leftarrow \argmax_{\vx \in \mathcal{X}} \left\{ \lambda\^t \langle \at\^t, \vx \rangle - \psi(\vx) \right\} $\label{line:norm0} \;  \medskip
    Play strategy $\displaystyle\vx\^t$ \;
    Observe $\vec \nu \^t \in \bbR^d$\;
    \medskip
    \label{line:lift0}
    \tcc{\color{commentcolor}\texttt{Empirical Cumulated Regrets}} 
    Set $\displaystyle {\vec\ut}\^t \gets \vec \nu \^t -\langle \nut\^t, \vx\^t\rangle \vec1_d $ \; 
    Set $ {\Ut}\^{t+1} \gets  {\Ut}\^t +  {\vec\ut}\^t$
    }
\end{algorithm2e}

\subsection{Intrinsic Lipschitzness} \label{sec:intr_lips}

A core building block of Cautious Optimism is our newly defined notion of intrinsic Lipschitzness for convex regularizers, defined as follows. This concept characterizes the Lipschitz properties of regularizers within the geometry they induce as generators.

\begin{definition} \label{def:intrin_lips}
    Let $\psi: \mathcal{X} \rightarrow \mathbb{R}$ be an arbitrary convex regularizer for the simplex, we call $\psi$, $\gamma$-intrinsically Lipschitz ($\gamma$-IL) if for all $\vx^\prime, \vx \in \mathcal{X}$,
    \[
    \mleft| \psi(\vx^\prime) - \psi(\vx) \mright|^2 \leq \gamma D_{\psi} (\vx^\prime \; \| \; \vx).
    \]
\end{definition}

This condition is general and holds for many choices of regularizers. In fact, it is easy to verify that any Lipschitz and strongly convex function $\psi$ is intrinsically Lipschitz. Thus, intrinsic Lipschitzness is a weaker condition than standard Lipschitzness in the context of regularizers. As we will see later (e.g., in \Cref{table:regularizers}), there exist many regularizers that are intrinsically Lipschitz but not Lipschitz in the standard sense, such as negative entropy and negative Tsallis entropy.

\begin{proposition}
\label{prop:lip_intr_lip}
Any regularizer $\psi$ that is $\mu$-strongly convex w.r.t. norm $\|.\|$, and $L$-Lipschitz w.r.t. the same norm, is trivially ($2 L^2/\mu$)-IL.
\end{proposition}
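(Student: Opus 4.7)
The plan is to chain together the two hypotheses via the common norm $\|\cdot\|$, using Lipschitzness to upper bound the left-hand side of the intrinsic Lipschitz inequality by $\|\vx' - \vx\|^2$, and strong convexity to lower bound the Bregman divergence $D_\psi(\vx' \| \vx)$ by a constant multiple of $\|\vx' - \vx\|^2$. Dividing these two bounds produces precisely the constant $\gamma = 2L^2/\mu$.

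More concretely, I would first invoke $L$-Lipschitzness of $\psi$ to write $|\psi(\vx') - \psi(\vx)| \leq L \|\vx' - \vx\|$, and square this to obtain $|\psi(\vx') - \psi(\vx)|^2 \leq L^2 \|\vx' - \vx\|^2$. Next, I would apply the standard consequence of $\mu$-strong convexity of $\psi$ with respect to $\|\cdot\|$, namely $D_\psi(\vx' \| \vx) \geq \tfrac{\mu}{2}\|\vx' - \vx\|^2$, which rearranges to $\|\vx' - \vx\|^2 \leq \tfrac{2}{\mu} D_\psi(\vx' \| \vx)$. Substituting this into the squared Lipschitz bound yields $|\psi(\vx') - \psi(\vx)|^2 \leq \tfrac{2L^2}{\mu} D_\psi(\vx' \| \vx)$, which is exactly the intrinsic Lipschitz condition of \Cref{def:intrin_lips} with parameter $\gamma = 2L^2/\mu$.

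There is no real obstacle here; the statement is essentially a two-line algebraic combination of standard facts about strongly convex functions and their Bregman divergences. The only minor point to be careful about is that both hypotheses must be stated with respect to the \emph{same} norm $\|\cdot\|$, which is already explicit in the proposition's hypothesis, so the chain of inequalities composes cleanly without any norm-equivalence constants.
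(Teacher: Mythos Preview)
Your proposal is correct and follows exactly the same approach as the paper's proof: square the Lipschitz bound, then use the strong-convexity lower bound $D_\psi(\vx'\|\vx)\ge \tfrac{\mu}{2}\|\vx'-\vx\|^2$ to replace $\|\vx'-\vx\|^2$, yielding the constant $2L^2/\mu$.
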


A more refined analysis of the intrinsically Lipschitz parameter $\gamma$ for regularizers may take into account its dependence on the input dimension $d$, rather than relying solely on the direct analysis in \Cref{prop:lip_intr_lip}. Furthermore, we introduce the following localized version of the $\gamma$-IL condition, which is more general.

\begin{definition} [Formal version in \Cref{app:proofs_intr_lips}]\label{def:local_intrin_lips}
    Let $\psi: \mathcal{X} \rightarrow \mathbb{R}$ be an arbitrary convex regularizer for the simplex, we call $\psi$, $\gamma$-locally intrinsically Lipschitz ($\gamma$-LIL) if,
    \[
    \mleft| \psi(\vx') - \psi(\vx) \mright|^2 \leq \gamma (\epsilon) D_{\psi} (\vx'\; \| \; \vx),
    \]
    whenever $\vx, \vx'$ are $\epsilon$-close to each other in a specific way and $\gamma$ is a function of $\epsilon$.
\end{definition}

We note that \Cref{def:local_intrin_lips} is more relaxed than \Cref{def:intrin_lips}—any $\gamma$-IL function $\psi$ is also $\gamma$-LIL. For $\gamma$-LIL functions, the parameter $\gamma$ may additionally depend on the locality parameter $\epsilon$, as well as the dimension $d$. Indeed, the smaller the value of the locality parameter $\epsilon$, the lower the local Lipschitzness parameter $\gamma(\epsilon)$ is expected to be. Examples of $\gamma$-LIL and strongly convex regularizers are provided in \Cref{table:regularizers}. For a mathematical derivation of the $\gamma$ parameter for these regularizers, please refer to \Cref{app:find_gamma}.

\begin{table}[!t]
\renewcommand\arraystretch{1.6}
\rowcolors{2}{gray!25}{white}
\resizebox{\textwidth}{!}{\begin{tabular}{m{2.7cm}m{3cm}m{3cm}m{3cm}m{3.3cm}m{3mm}}
\textbf{Regularizer} & \bf Formulation $\psi$ & \bf Strong convexity $\mu$ & \bf (L)IL parameter $\gamma$ & \bf Regret & \bf \hfill\makebox[0mm][r]{Globally IL} \\
\toprule
Negative entropy    & $\sum_{\ind = 1}^d \vx[\ind] \log \vx[\ind]$ & $\Omega(1)$ & $O(\log^2 d)$ & $O(n 
\log^2 d \log T)$ & \checkmark \\ %

Log & $- \sum_{\ind = 1}^d \log \vx[\ind]$ & $\Omega(1)$ & $O(d)$ & $O(n d \log T)$ & \crossmark \\ %

Squared $\ell_2$ norm & $\frac{1}{2} \| \vx \|_2^2$ & $\Omega(d^{-1})$ & $O(1)$ & $O(n 
d \log T)$ & \checkmark \\ %

Squared $\ell_p$ norm \newline {{($p \in (1, 2]$)}} & {$\frac{1}{2} \| \vx \|_p^2$} & {$\Omega\mleft((p - 1)d^{2/p - 2}\mright)$} & {$O\mleft(\frac{1}{p - 1}\mright)$} & {$O\mleft( n \frac{d^{2 - 2/p}}{( p - 1)^2}  \log T \mright)$ } & {\checkmark} \\ %

Squared $\ell_{p^*}$ norm \newline {($p^* = 1 + 1/\log d$)}  & {$\frac{1}{2} \| \vx \|_{p^*}^2$} & {$\Omega\mleft((p^* - 1)d^{2/p^* - 2}\mright)$} & {$O\mleft(\frac{1}{p^* - 1}\mright)$} & {$O( n \log^2 d \log T )$ } & {\checkmark} \\ %

$1/2$-Tsallis entropy     & $2 \mleft( 1 - \sum_{\ind = 1}^d \sqrt{\vx[\ind]} \mright)$ & $\Omega(1)$  & $O(\sqrt{d}) $ & $O( n \sqrt{d} \log T )$ & \checkmark \\ %

$q$-Tsallis entropy  \newline {($q \in (0,1) $)}   & $\frac{1}{1 - q} \mleft( 1 - \sum_{\ind = 1}^d \sqrt{\vx[\ind]} \mright)$ & $\Omega(q)$  & $O\mleft(\frac{d^{1-q}}{(1 - q)^2}\mright) $ & $O\mleft( n \frac{d^{1-q}}{q (1 - q)^2} \log T \mright)$ & \checkmark \\ %

$q^*$-Tsallis entropy  \newline {($q^* = 1 - 1/\log d$)}   & $\frac{1}{1 - q^*} \mleft( 1 - \sum_{\ind = 1}^d \sqrt{\vx[\ind]} \mright)$ & $\Omega(q^*)$  & $O\mleft(\frac{d^{1-q^*}}{(1 - q^*)^2}\mright) $ & $O\mleft( n \log^2 d \log T  \mright)$ & \checkmark \\ %

$L$-Lipschitz and \newline $\mu$-strongly convex & general & $\mu$ & $O\mleft(\frac{L^2}{\mu}\mright)$ & $O\mleft( n \frac{L^2}{\mu^2} \log T\mright)$ & \checkmark \\
\bottomrule
\end{tabular}}
\vskip 0.1in
\caption{Various examples of appropriate regularizers $\psi$ for Cautious Optimism, which are (locally) intrinsically Lipschitz and strongly convex, along with the corresponding regret rates of \ours with each choice of $\psi$. \ours instantiated with negative entropy, the squared $\ell_{p^*}$ norm (for an appropriate choice of $p^* = 1 + 1/\log d$), or the $q^*$-Tsallis entropy (for an appropriate choice of $q^* = 1 - 1/\log d$) leads to new state-of-the-art no-regret algorithms in games. The parameter $\mu$ represents the strong convexity parameter with respect to the $\ell_1$ norm. The log regularizer is $O(d)$-LIL, whereas the other examples in this table are intrinsically Lipschitz.
}\label{table:regularizers}
\end{table}

\subsection{Properties of Intrinsically Lipschitz Regularizers} \label{sec:circuit}

For the purpose of Cautious Optimism, any $\gamma$-IL regularizer $\psi$ that is strongly convex with respect to the $\ell_1$ norm leads to fast-vanishing regret in learning dynamics. In this section, we demonstrate how this class of regularizers constructs a circuit that, through their combination, allows us to define new regularizers.

\begin{proposition}
\label{prop:ciruit}
Let $\psi_1$ and $\psi_2$ be two functions that are $\gamma_1$- and $\gamma_2$-intrinsically Lipschitz, then,
\begin{enumerate}
    \item $\psi(\vx) = a \psi_1 (\vx) + b$ is $a \gamma_1$-intrinsically Lipschitz for any choices of $a \in \mathbb{R}_+, b \in \mathbb{R}$.
    \item $\psi(\vx) = \psi_1 (\vx) + \psi_2(\vx) $ is $(\gamma_1 + \gamma_2)$-intrinsically Lipschitz.
\end{enumerate}
\end{proposition}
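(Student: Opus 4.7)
My plan is to reduce both parts to two basic facts: (a) the Bregman divergence operator $\psi\mapsto D_\psi$ is linear in the generating regularizer, in particular $D_{a\psi_1+b}=a\,D_{\psi_1}$ and $D_{\psi_1+\psi_2}=D_{\psi_1}+D_{\psi_2}$; and (b) the squared-difference functional $\psi\mapsto |\psi(\vx')-\psi(\vx)|^{2}$ behaves ``like an $\ell_2$-norm squared'' on the space of regularizers, so it is controlled by Cauchy--Schwarz.

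\emph{Part 1.} I would write $\psi(\vx)=a\psi_1(\vx)+b$ and just compute: the constant $b$ cancels in the difference, so $|\psi(\vx')-\psi(\vx)|^{2}=a^{2}|\psi_1(\vx')-\psi_1(\vx)|^{2}\le a^{2}\gamma_1 D_{\psi_1}(\vx'\,\|\,\vx)$ by the $\gamma_1$-IL hypothesis. Next, since the gradient of $\psi$ is $a\nabla\psi_1$, the Bregman divergence satisfies $D_{\psi}(\vx'\,\|\,\vx)=aD_{\psi_1}(\vx'\,\|\,\vx)$. Substituting gives $|\psi(\vx')-\psi(\vx)|^{2}\le a\gamma_1\cdot D_{\psi}(\vx'\,\|\,\vx)$, which is exactly the $(a\gamma_1)$-IL condition. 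The assumption $a\ge 0$ is used to make sure the IL constant is non-negative and that the computation $a\gamma_1 D_{\psi_1}=\gamma_1 D_\psi$ is valid.

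\emph{Part 2.} Set $A\defeq\psi_1(\vx')-\psi_1(\vx)$, $B\defeq\psi_2(\vx')-\psi_2(\vx)$, and $D_i\defeq D_{\psi_i}(\vx'\,\|\,\vx)$, so the hypotheses give $A^{2}\le\gamma_1 D_1$ and $B^{2}\le\gamma_2 D_2$. The key step is a weighted Cauchy--Schwarz inequality: writing $A+B=\sqrt{\gamma_1}\cdot(A/\sqrt{\gamma_1})+\sqrt{\gamma_2}\cdot(B/\sqrt{\gamma_2})$ yields
\[
(A+B)^{2}\le(\gamma_1+\gamma_2)\mleft(\tfrac{A^{2}}{\gamma_1}+\tfrac{B^{2}}{\gamma_2}\mright)\le(\gamma_1+\gamma_2)(D_1+D_2).
\]
Finally, by linearity of the Bregman divergence, $D_{\psi_1+\psi_2}(\vx'\,\|\,\vx)=D_1+D_2$, so $|\psi(\vx')-\psi(\vx)|^{2}=(A+B)^{2}\le(\gamma_1+\gamma_2)\,D_{\psi_1+\psi_2}(\vx'\,\|\,\vx)$, which is the claim. (A couple of degenerate cases need a line: if $\gamma_1=0$ then $A=0$ a.e.\ and the bound reduces to the $\gamma_2$-IL bound on $\psi_2$, and symmetrically.)

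\emph{Expected difficulty.} There is essentially no real obstacle; the only mildly non-obvious step is recognizing that a naive $(A+B)^{2}\le 2(A^{2}+B^{2})$ bound gives $2(\gamma_1 D_1+\gamma_2 D_2)$, which is not tight enough to yield the advertised constant $\gamma_1+\gamma_2$, and that the fix is the weighted Cauchy--Schwarz above. Everything else is bookkeeping using the linearity of the Bregman divergence in its generator.
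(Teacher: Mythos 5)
Your proof is correct and takes essentially the same route as the paper: Part 1 is the identical computation (constants cancel, $D_{a\psi_1+b}=aD_{\psi_1}$), and your weighted Cauchy--Schwarz step in Part 2 is exactly the paper's Young's inequality $(A+B)^2\le(1+c)A^2+(1+1/c)B^2$ with the optimal choice $c=\gamma_2/\gamma_1$ already substituted, followed by additivity of the Bregman divergence in its generator.
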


Given \Cref{prop:lip_intr_lip} and the properties of strong convexity with respect to the $\ell_1$ norm, we can immediately deduce that if $\psi_1$ and $\psi_2$ are $\mu_1$- and $\mu_2$-strongly convex functions, respectively, with intrinsic Lipschitz constants $\gamma_1$ and $\gamma_2$, then for any choice of $a_1, a_2 \in \mathbb{R}_+$ and $b \in \mathbb{R}$, the resulting regularizer  
\[
\psi(\vx) = a_1 \psi_1(\vx) + a_2 \psi_2(\vx) + b
\]
is a valid candidate for Cautious Optimism. This regularizer has an intrinsic Lipschitz parameter of $a_1 \gamma_1 + a_2 \gamma_2$ and a strong convexity parameter of $a_1 \mu_1 + a_2 \mu_2$ with respect to the $\ell_1$ norm.

From now on, unless stated otherwise, any reference to strong convexity is with respect to the $\ell_1$ norm.

\subsection{Learning Rate Control Problem}

Given a suitable regularizer $\psi$ as discussed in \Cref{sec:intr_lips,sec:circuit}, i.e., $\gamma$-IL (or $\gamma$-LL) and $\mu$-strongly convex w.r.t. $\ell_1$ norm, we generalize the notion of "Cautious Optimism", by referring to the general framework that picks action iterates $\vx\^t$ according to the \OFTRL update in \eqref{eq:ftrl_step_fixed_lambda} with a \emph{dynamic learning rate} $\lambda\^t$ carefully chosen as the solution to the following optimization problem,
\[
\lambda\^t \gets \argmax_{\lambda \in (0, \eta]} \left\{ \alpha \log \lambda + \psi^*_{\mathcal{X}} (\lambda \at\^t)   \right\}, \numberthis{eq:dynamic_learning_rate}
\]
where constant $\eta > 0$ is a constant that caps the maximum learning rate, $\alpha$ is a key parameter of the algorithm, and the convex conjugate
$\psi_{\mathcal{X}}^*(\lambda \at\^t)$ defined as
\[
\psi_{\mathcal{X}}^*(
\lambda \at\^t) \defeq \max_{\vx \in \mathcal{X}} \langle \lambda \at\^t, \vx \rangle - \psi(\vx).
\]
We refer to this problem~\eqref{eq:dynamic_learning_rate} as the \emph{learning rate control problem}. In our analysis, we will demonstrate that if $\psi$ is a $\gamma$-IL regularizer, it suffices to choose $\alpha \ge 4 \gamma + \mu$ in the learning rate control problem~\eqref{eq:dynamic_learning_rate}. We recall that, this is step corresponds to Line~\ref{line:oftrl0} of the \ours Algorithm is outlined in \Cref{algo:cftrl}.

Under normal operating conditions, when the maximum regret $\max_k \{\at\^t[k]\}$ accumulated over actions is not excessively negative, the optimal learning rate remains fixed at $\lambda\^t = \eta$, corresponding to a regime with a constant step size for \OFTRL. However, as the maximum regret decreases further into negative values, the optimal $\lambda\^t$ gradually shrinks toward $0$, effectively degrading the player's performance by reducing their reliance on past observations, history of the game and thereby learning. 

In the extreme case where $\lambda\^t \to 0$, the player’s decision-making is entirely governed by the regularizer, leading to actions drawn from the distribution $\argmin_{\vx \in \mathcal{X}} \psi(\vx)$. If the regularizer $\psi$ is symmetric with respect to permutations of the actions, for example, when $\psi$ is negative entropy, then the player acts uniformly over its actions. In \Cref{fig:learning_rate_regime}, we summarize the dynamics of the learning rate as a function of the maximum regret under symmetric regularizers $\psi$.

\begin{figure}[!t]
    \centering
    \scalebox{.87}{
    
    \begin{tikzpicture}
        \pgfdeclarehorizontalshading{gradientlambda}{100bp}{color(0bp)=(blue); color(100bp)=(red)}

        \shade[shading=gradientlambda] (0,0) rectangle (10,0.3);

        \draw[thick,->] (0,0) -- (10.5,0) node[anchor=west] {$\lambda$}; %
        \node[below] at (0,0) {$0$}; %
        \node[above] at (1.7,0.5) {$\max_{r \in [d]} \{ \at[r]\} = - \infty$};
        \node[above] at (9.5,0.5) {$\max_{r \in [d]} \{ \at[r]\} \geq 0$};
        \node[below] at (10,0) {$\eta$}; %
        \node[below] at (1.7,-0.1) {\textbf{Acting uniformly}};
        \node[below] at (8.5,-0.1) {\textbf{Learning fast}};
    \end{tikzpicture}
    }
    \caption{Learning-rate regime under symmetric regularizers. When $\max_k {\at\^t[k]}$ is not excessively negative, the optimal $\lambda\^t=\eta$ (constant step size in \OFTRL). As $\max_k {\at\^t[k]}$ becomes more negative, the optimal $\lambda\^t$ shrinks toward $0$, damping learning by down-weighting history. In the limit $\lambda\^t \to 0$, actions follow $\argmin_{\vx \in \mathcal{X}} \psi(\vx)$; for symmetric $\psi$ (e.g., negative entropy) this yields a uniform policy.}
    \label{fig:learning_rate_regime}
\end{figure}

\Cref{fig:learning_rate_map} illustrates how the learning rate $\lambda\^t$ varies as a function of optimistic cumulative regrets in a two-action scenario, considering different choices of the regularizer $\psi$. Notably, across all cases, $\lambda\^t$ remains a monotonically non-increasing function of $\{\at\^t[1]\}$ and $\{\at\^t[2]\}$, highlighting the structured dependence of the learning rate on regret dynamics.

\begin{figure}[!t]
    \centering
    \includegraphics[width=\textwidth]{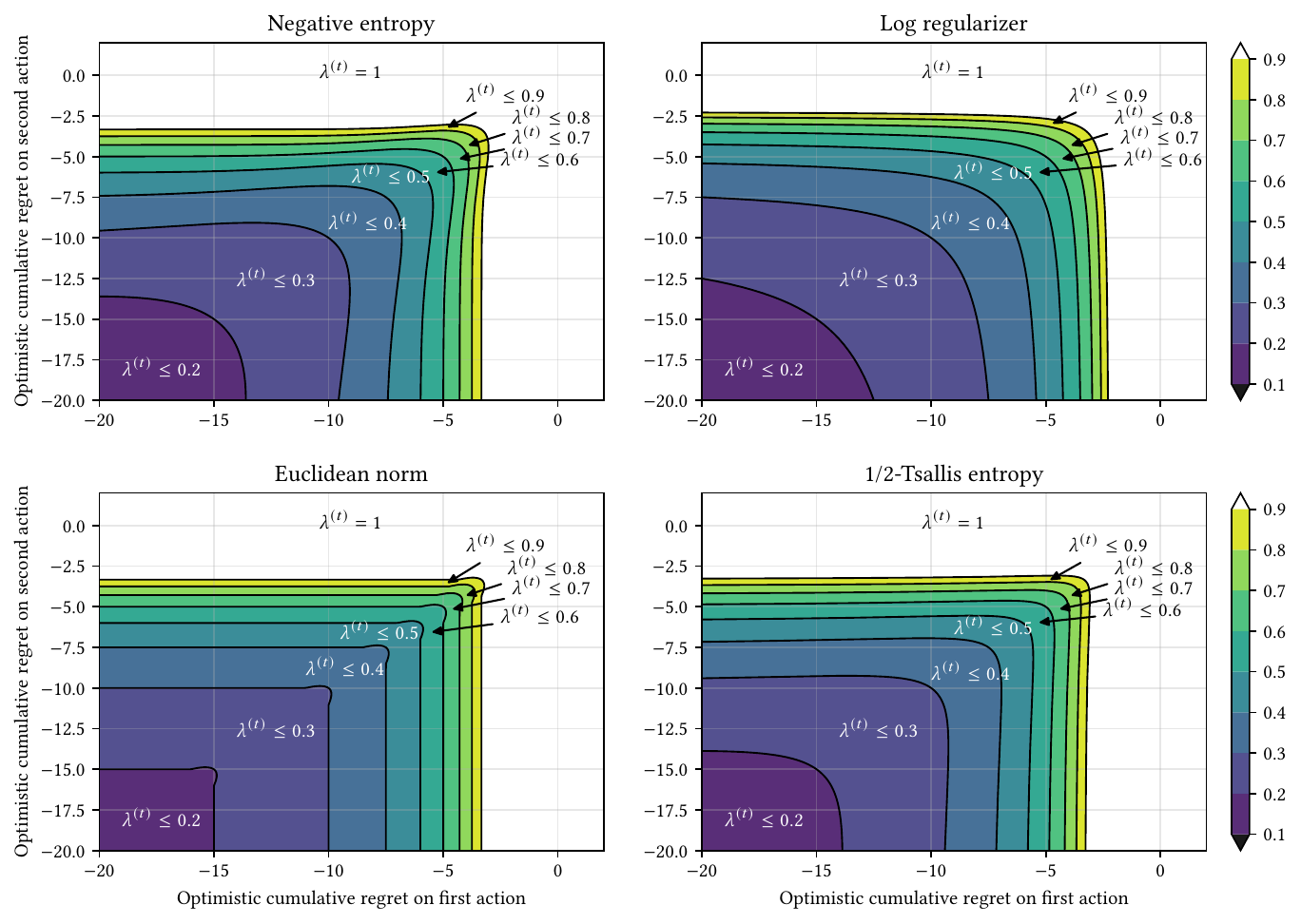}
    \caption{
    Learning rate control landscape for various choices of regularizers $\psi$: negative entropy, log regularizer, Euclidean norm, and Tsallis entropy. Visualization of how $\lambda\^t$, as defined in \eqref{eq:dynamic_learning_rate}, evolves in response to optimistic regrets in the 2-action simplex.  The plot is generated using parameter values $\eta = 1$ and $\alpha = 4$.
    }
    \label{fig:learning_rate_map} %
\end{figure}

\subsection{Properties of the Learning Rate Control Objective} \label{sec:prop_learning_rate_objective}

At first glance, it may not be immediately apparent that the maximization objective of the learning-rate control problem \eqref{eq:dynamic_learning_rate}  
\[
f(\lambda) \defeq \alpha \log \lambda + \psi_\mathcal{X}^*(\lambda \at),
\]
is tractable or even concave in $\lambda$. Interestingly, the convex conjugate term $\psi_\mathcal{X}^*(\lambda \at)$ is strongly convex—quite the opposite of what we hope for. However, as we will characterize later, by choosing a reasonably large hyperparameter $\alpha$, the $\alpha \log \lambda$ term ensures the concavity of the entire objective.

In formal terms, as we show in \Cref{thm:strong_convex_learning_rate_control}, this objective is $(\alpha - \gamma)$-strongly concave for any $\gamma$-IL regularizer $\psi$ and admits high curvature locally. When the regularizer $\psi$ is additionally Legendre, the learning-rate control objective exhibits high curvature globally. Specifically, the curvature of the objective $f$ is sandwiched between constant factors of the curvature of the $\log \lambda$ term; consequently, the $\alpha \log \lambda$ component essentially governs the curvature of $f$. For the later result, the local version of the intrinsic Lipschitzness condition suffices.

\begin{theorem}[Strong concavity of learning rate control problem] \label{thm:strong_convex_learning_rate_control}
The dynamic learning-rate control optimization problem \eqref{eq:dynamic_learning_rate},  
\[
f(\lambda) \defeq \alpha \log \lambda + \psi_\mathcal{X}^*(\lambda \at),
\]
\begin{itemize}
    \item is $(\alpha - \gamma)$-strongly concave, differentiable and admits high curvature locally,  
    \[
    f(\lambda') \leq f(\lambda) + f'(\lambda)(\lambda' - \lambda) 
    - \frac{\alpha - \gamma}{2 \max \{\lambda',\lambda\} ^2} (\lambda' - \lambda)^2,
    \]  
    whenever the regularizer $\psi$ is $\gamma$-intrinsic Lipschitz.

    \item is $(\alpha - \gamma)$-strongly concave, twice differentiable, and admits high curvature globally,  
    \[
    - \frac{\alpha}{\lambda^2} \leq f''(\lambda) \leq -\frac{\alpha - \gamma}{\lambda^2},
    \]
    whenever the regularizer $\psi$ is $\gamma$-(locally) intrinsic Lipschitz and additionally Legendre.
\end{itemize}
\end{theorem}

The detailed proof of this result is presented in \Cref{app:proof_convexity}, and we provide a proof outline here. The general idea is based on \Cref{prop:convex_phi}, which we will encounter later, and involves translating the strong concavity of the composite regularizer $\phi$ in \eqref{eq:def_phi} over the lifted space along the $\lambda$ ray into the strong concavity of the learning-rate control problem $f$. When the regularizer $\psi$ is additionally Legendre, we can establish a stronger form of concavity.

\Cref{thm:strong_convex_learning_rate_control} demonstrates that for a Legendre regularizer $\psi$, the $\log \lambda$ term controls the second-order derivative of the learning-rate control objective $f$. Given this observation, one may hope that the objective $f$ also inherits appealing higher-order properties of $\log \lambda$, in particular self-concordance. In \Cref{thm:log_dom_self_con}, we show that when the regularizer $\psi$ additionally satisfies a mild condition, which we term log-dominance, the learning-rate control objective $f$ is $M$-generalized self-concordant. We defer the formal definition of log-dominance and other mathematical details to \Cref{def:log_dominated,lemma:g_derivatives,lemma:v_formula} in \Cref{proof_selfconcordance}.

\begin{theorem}\label{thm:log_dom_self_con}
    When the $\gamma$-(L)IL regularizer $\psi$ is $\kappa$-log-dominated, then the dynamic learning rate control objective (\ref{eq:dynamic_learning_rate}),
    \[
    f(\lambda) = \alpha \log \lambda + \psi_\mathcal{X}^*( \lambda \at).
    \]
    is $M$-generalized self-concordant, i.e.,
    \[
    \mleft| f'''(\lambda) \mright|^2 \leq M \mleft| f''(\lambda) \mright|^3,
    \]
    where 
    \[
    M = \frac{\mleft(2 \alpha + \kappa\mright)^2}{\mleft( \alpha - \gamma \mright)^3}.
    \]
\end{theorem}

The Log-dominance condition is satisfied by well-known Legendre regularizers $\psi$: negative entropy, the log regularizer, and $q$-Tsallis entropy. Hence, the learning-rate control problem for these choices of regularizer is self-concordant. We defer the derivations to \Cref{app:find_kappa} and summarize the results in \Cref{table:regularizers_self_concordant}.

\begin{table}[!th]
\renewcommand\arraystretch{1.6}
\rowcolors{2}{gray!25}{white}
\resizebox{\textwidth}{!}{\begin{tabular}{m{2.7cm}m{3cm}m{3cm}m{3cm}m{3.3cm}}
\textbf{Regularizer} & \bf Formulation $\psi$ & \bf (L)IL parameter $\gamma$ & \bf Log-dominance $\kappa$ & \bf self-concordance $M$ \\
\toprule
Negative entropy    & $\sum_{\ind = 1}^d \vx[\ind] \log \vx[\ind]$ & $\Theta(\log^2 d)$ & $\Theta(\log^3 d)$ & $\Theta(1)$ \\ %

Log & $- \sum_{\ind = 1}^d \log \vx[\ind]$ & $\Theta(d) $ & $\Theta(d)$ & $\Theta(\frac{1}{d})$ \\ %

$q$-Tsallis entropy  \newline {($q \in (0,1) $)}   & $\frac{1}{1 - q} \mleft( 1 - \sum_{\ind = 1}^d \sqrt{\vx[\ind]} \mright)$ & $\Theta\mleft(\frac{d^{1-q}}{(1 - q)^2}\mright)$ & $\Theta \mleft(\frac{q (2 - q)}{(1 - q)^3} \mleft(d^{\frac{(3 - q)(1 - q)}{(2 - q)}} \mright) \mright)$ & $\Theta\mleft( q^2 (2 - q)^2 d^{\frac{q (1 - q)}{2 - q}}\mright)$ \\ %

$\gamma$-(L)IL and \newline $k$-log-dominated & general Legendre & $\gamma$ & $\kappa$ & $\dfrac{\mleft(2 \alpha + \kappa\mright)^2}{\mleft( \alpha - \gamma \mright)^3}$ \\
\bottomrule
\end{tabular}}
\vskip 0.1in
\caption{We provide different examples of appropriate Legendre regularizers $\psi$ for Cautious Optimism that are log-dominated and yield a self-concordant dynamic learning-rate control optimization problem~\eqref{eq:dynamic_learning_rate}. For details and derivations, see \Cref{app:find_kappa}. As a result, the per-step computational overhead of \ours with these choices of regularizers is $O(\log \log T)$ (see \Cref{app:approx}). We consider the choice $\alpha = \Theta(\gamma + \mu)$ in the formula  $
M = \frac{(2\alpha + \kappa)^2}{(\alpha - \gamma)^3}$,
as required in \Cref{theorem:nonnegative_rvu}.
}\label{table:regularizers_self_concordant}
\end{table}

\section{Design of \ours and Alternative Perspectives} \label{sec:design}

In this section, we discuss how to mathematically formalize the concept of dynamic learning-rate control leading to \eqref{eq:dynamic_learning_rate}, the design of \ours, and alternative perspectives on \ours that highlight its algorithmic aspects and analysis.

Recall that, as discussed in \Cref{sec:cautious_optimism}, we aim to design a no-regret learning algorithm that \emph{paces down} the learner when it is \emph{performing too well}. We achieve this goal and the formulation of \ours through the following step-by-step construction.

We begin with the standard dynamics of the Optimistic Follow-the-Regularized-Leader (\OFTRL) algorithm with a potentially dynamic learning rate $\lambda\^t$,  
\[
\vx\^t \leftarrow \argmax_{\vx \in \mathcal{X}} \left\{ \lambda\^t \langle \at\^t, \vx \rangle - \psi(\vx) \right\}, \numberthis{eq:FTRL_naive}
\]
where $\lambda\^t \in (0, \eta]$ is chosen according to a separate dynamic that we will design later, and $\psi$ is a regularizer over the space $\mathcal{X}$. When the negative entropy is chosen as the regularizer, $\psi(\vx) = \sum_{\ind = 1}^d \vx[\ind] \log \vx[\ind]$, Formulation~\ref{eq:FTRL_naive} recovers the celebrated Optimistic Multiplicative Weights Update (\Opthedge) algorithm with learning rate $\lambda\^t$.

Not only do we intend to have diminishing regret dynamics by incentivizing actions with highest returns, but we also aim to hinder the learner when it is performing too well. We incorporate this phenomenon in the update step by integrating the dynamic learning rate $\lambda\^t$ mechanism into Formulation~\ref{eq:FTRL_naive}, leading to  
\[
\vstack{\lambda\^t}{\vx\^t} \gets \argmax_{\lambda \in (0, \eta], \vx \in \mathcal{X}} \left\{ \lambda \langle \at\^t, \vx \rangle - \psi(\vx) \right\}. \numberthis{eq:FTRL_naive1}
\]
Unfortunately, this naive integration leads to a nonsmooth dynamic for the learning rate $\lambda\^t$ over time $t$, since $\lambda\^t$ exhibits behavior akin to that of a step function. When $\langle \at\^t, \vx\^t \rangle > 0$, this formulation naively reduces to $\lambda\^t = \eta$, corresponding to the original \OFTRL with a constant learning rate; otherwise, it trivially sets $\lambda\^t = 0$ and $\vx\^t = \argmin_{\vx \in \mathcal{X}} \psi(\vx)$.

To increase the predictability of the dynamics and smooth the evolution of $\lambda\^t$ over time, we add the necessary inertia by incorporating the log regularizer over the learning rate $\lambda\^t$ into the dynamics. This modification leads to the following update rule:  
\[
\vstack{\lambda\^t}{\vx\^t} \gets \argmax_{\lambda \in (0, \eta], \vx \in \mathcal{X}} \left\{ \lambda \langle \at\^t, \vx \rangle + \alpha \log \lambda - \psi(\vx) \right\}, \numberthis{eq:OFTRL}
\]
where the hyperparameter $\alpha$, that depends on the properties of the regularizer $\psi$, controls the smooth thresholding procedure of the learning rate and, as a result, determines what it means to “\emph{perform too well}” relative to the regret vector $\at\^t$.

The formulation \eqref{eq:OFTRL} is not jointly concave in $(\lambda, \vx)$, making its \emph{computational} aspects and \emph{regret analysis} unclear in this naive formulation.  From a computational perspective, however, it is immediate to observe that this formulation is equivalent to \ours in \Cref{algo:cftrl}, for which we demonstrate very low computational overhead beyond \OFTRL update rule.

Moreover, a key technical observation enables us to analyze the regret of \ours. By performing the change of variable $\vy = \lambda \vx$, which is invertible on the simplex as $\vx = \vy / (\vec 1^\top \vy)$, we derive the following equivalent update
\[
\vy\^t  \leftarrow  \argmax_{\vy \in (0, 1] \mathcal{X}} \mleft\{ \eta \langle \at\^t, \vy \rangle + \alpha \log(\vec 1^\top \vy) - \psi\mleft(\frac{\vy}{{\vec 1}^\top \vy} \mright) \mright\}. \numberthis{eq:lifted_FTRL}
\]
This formulation corresponds to \OFTRL on the space $(0,1] \mathcal{X}$ and is a key observation central to the analysis of \ours. We note that, at first glance, it is not even clear whether this \OFTRL update step~\eqref{eq:lifted_FTRL} is concave. As we will show later in \Cref{sec:stepII}, not only is this objective concave, but it also exhibits a special form of strong concavity along the $\lambda$-ray  that leads to multiplicative stability in the subsequent dynamic learning rates.

A similar idea to \OFTRL on the lifted space $(0, 1] \mathcal{X}$ has previously been studied~\citep{farina2022near}, albeit only for a very limited choice of regularizer. \citet{farina2022near} utilize the log regularizer to obtain self-concordance of the \OFTRL dynamics and enforce elementwise multiplicative stability in actions by leveraging its high curvature and the structure of the induced intrinsic norms. In contrast, in this work, we introduce a \emph{unified theory of Cautious Optimism} for a broad class of regularizers, identifying their algorithm (\LRLOFTRL) as a \emph{special case}.

\section{Analysis of Cautious Optimism} \label{sec:analysis}

We primarily focus on Formulation~\ref{eq:lifted_FTRL} for the analysis of Cautious Optimism. As illustrated in the previous section, the dynamics are equivalent to the following \OFTRL update:
\[
\vy\^t  \leftarrow \argmax_{\vy \in (0, 1] \mathcal{X}} \mleft\{ \eta \langle \at\^t, \vy \rangle - \phi(\vy) \mright\}, \numberthis{eq:lifted_FTRL_y}
\]
where the regularizer $\phi: (0, 1] \mathcal{X} \rightarrow \mathbb{R}$ is defined as
\[
\phi(\vy) \defeq - \alpha \log (\vec{1}^\top \vy) + \psi \mleft(\frac{\vy}{{\vec 1}^\top \vy} \mright), \numberthis{eq:def_phi}
\]
with $\psi$ being $\gamma$-(locally) intrinsically Lipschitz.  

At first glance, it is not immediately clear whether the function $\phi$ is convex. However, as we will demonstrate later in this section, $\phi$ is indeed convex in $\vy$ provided that $\alpha \geq \gamma$. Consequently, $D_\phi (\cdot \; \| \; \cdot)$ is well-defined.

We split the analysis of the regret cumulated by \ours into four technical steps, discussed in the next subsections.

\subsection{Step I: A Key Identity for $D_\phi (\cdot \; \| \; \cdot)$} \label{sec:stepI}

A key step in analyzing the dynamics of \OFTRL in Formulation~\ref{eq:lifted_FTRL_y} is the following identity, which relates the Bregman divergence of the regularizer $\phi(\vy)$ over $\vy \in (0, 1] \mathcal{X}$ to the Bregman divergences of $-\log \lambda$ over the learning rate $\lambda \in (0, 1]$ and $\psi(\vx)$ over $\vx \in \mathcal{X}$.

\begin{lemma}
\label{lemma:breg_identity}
For arbitrary values of $\vy, \vy^\prime \in (0, 1]\mathcal{X}$, let $\lambda \defeq \vec 1^\top \vy, \lambda^\prime \defeq \vec 1^\top \vy^\prime$. Then,
\[
  D_\phi (\vy^\prime \; \| \; \vy) & = \alpha D_{- \log} (\lambda^\prime \; \| \; \lambda) + \frac{\lambda^\prime}{\lambda} D_\psi (\vx^\prime, \vx) + \mleft(1 - \frac{\lambda^\prime}{\lambda} \mright) [\psi(\vx^\prime) - \psi(\vx)].
\]
\end{lemma}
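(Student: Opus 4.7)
The plan is to verify the identity by direct computation, exploiting the additivity of the Bregman divergence under sums of differentiable functions. Decompose $\phi = \phi_1 + \phi_2$ with $\phi_1(\vy) \defeq -\alpha \log(\vec 1^\top \vy)$ and $\phi_2(\vy) \defeq \psi(\vy/\vec 1^\top \vy)$, so that $D_\phi(\vy' \| \vy) = D_{\phi_1}(\vy' \| \vy) + D_{\phi_2}(\vy' \| \vy)$. I would handle the two summands separately and then collect.

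For $\phi_1$, the function depends on $\vy$ only through the scalar $\lambda = \vec 1^\top \vy$, and $\vec 1^\top(\vy'-\vy) = \lambda' - \lambda$. Substituting into the definition of the Bregman divergence and simplifying gives $D_{\phi_1}(\vy' \| \vy) = \alpha D_{-\log}(\lambda' \| \lambda)$, which accounts for the first term on the right-hand side.

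For $\phi_2$, I would apply the chain rule through $\vx_i = y_i/\lambda$, using $\partial \vx_i/\partial y_j = (\delta_{ij} - \vx_i)/\lambda$, to obtain
\[
\nabla \phi_2(\vy) = \frac{1}{\lambda}\bigl(\nabla\psi(\vx) - \langle\nabla\psi(\vx),\vx\rangle\, \vec 1\bigr).
\]
Expanding $\vy = \lambda\vx$, $\vy' = \lambda'\vx'$ yields $\langle\nabla\psi(\vx),\vy'-\vy\rangle = \lambda'\langle\nabla\psi(\vx),\vx'\rangle - \lambda\langle\nabla\psi(\vx),\vx\rangle$, while the $\vec 1$ correction contributes $-\langle\nabla\psi(\vx),\vx\rangle(\lambda'-\lambda)$; combining these and dividing by $\lambda$, the terms telescope to
\[
\langle\nabla\phi_2(\vy),\vy'-\vy\rangle \;=\; \frac{\lambda'}{\lambda}\,\langle\nabla\psi(\vx),\vx'-\vx\rangle.
\]
Since $\phi_2(\vy') - \phi_2(\vy) = \psi(\vx') - \psi(\vx)$, this gives $D_{\phi_2}(\vy'\|\vy) = \psi(\vx') - \psi(\vx) - (\lambda'/\lambda)\langle\nabla\psi(\vx),\vx'-\vx\rangle$.

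To finish, I would eliminate the gradient inner product by substituting the Bregman identity $\langle\nabla\psi(\vx),\vx'-\vx\rangle = \psi(\vx')-\psi(\vx)- D_\psi(\vx'\|\vx)$, which yields
\[
D_{\phi_2}(\vy'\|\vy) = \frac{\lambda'}{\lambda}\,D_\psi(\vx'\|\vx) + \Bigl(1-\frac{\lambda'}{\lambda}\Bigr)\bigl[\psi(\vx')-\psi(\vx)\bigr],
\]
and adding the $\phi_1$ contribution assembles the claimed identity. No step is genuinely difficult; the only point requiring careful bookkeeping is the cancellation in $\langle\nabla\phi_2(\vy),\vy'-\vy\rangle$, where the $-\langle\nabla\psi(\vx),\vx\rangle\,\vec 1$ correction to the gradient must be weighed against the fact that $\vec 1^\top\vy$ and $\vec 1^\top\vy'$ are unequal, so the contributions do not cancel pointwise but only after combining with $\langle\nabla\psi(\vx),\vy'-\vy\rangle$.
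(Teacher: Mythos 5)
Your proof is correct and takes essentially the same route as the paper: a direct chain-rule computation of $\nabla\phi$ (yielding $\frac{1}{\lambda}[-\alpha\vec 1 - \langle\nabla\psi(\vx),\vx\rangle\vec 1 + \nabla\psi(\vx)]$), followed by expanding the Bregman divergence and regrouping. Splitting $\phi$ into $-\alpha\log(\vec 1^\top\vy)$ and $\psi(\vy/\vec 1^\top\vy)$ and invoking additivity of Bregman divergences is only a cleaner bookkeeping of the paper's single expansion, and every step you outline (the telescoping to $\frac{\lambda'}{\lambda}\langle\nabla\psi(\vx),\vx'-\vx\rangle$ and the substitution of the Bregman identity for $\psi$) checks out.
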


This identity is important as it illustrates how the curvature of the regularizer $\phi$ in $\vy$ and, consequently, the stability of $\vy$ in the corresponding \OFTRL Step~\ref{eq:lifted_FTRL}, are affected by the parameter $\alpha$ and the choice of regularizer $\psi$. We will use this identity repeatedly in the analysis.

\subsection{Step II: Sensitivity of Learning Rate to Cumulated Regret} \label{sec:stepII}

In this section, we demonstrate that the learning rate $\lambda$ in Formulation~\ref{eq:lifted_FTRL_y} remains multiplicatively stable under small additive perturbations to the regret vector $\at$.  

Furthermore, for a $\gamma$-IL choice of regularizer $\psi$, we establish that the function $\phi(\vy)$ is convex with respect to $\vy$. We begin with the following proposition, which provides a lower bound on $D_\phi(\vy' \; \| \; \vy)$ in terms of $\lambda'/\lambda$.

\begin{proposition} \label{prop:convex_phi}
    For arbitrary choices of $\vy',\vy \in (0, 1]\mathcal{X}$, let $\lambda \coloneqq \vec 1^\top \vy, \lambda' \coloneqq \vec 1^\top \vy'$. Then, we infer that,
    \[
    D_\phi(\vy' \; \| \; \vy) + D_\phi(\vy \; \| \; \vy') \geq (\alpha - \gamma) \mleft[ \frac{\lambda'}{\lambda} + \frac{\lambda}{\lambda'} - 2 \mright],
    \]
    where $\psi$ is $\gamma$-intrinsically Lipschitz.
\end{proposition}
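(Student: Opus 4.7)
The natural route is to apply the Bregman identity of \Cref{lemma:breg_identity} symmetrically (once for each argument order) and sum the two expressions, then absorb the remaining $\psi$-dependent terms using the intrinsic Lipschitz hypothesis. Concretely, setting $r \defeq \lambda'/\lambda$ and $\Delta \defeq \psi(\vx') - \psi(\vx)$, summing the two identities yields
\[
D_\phi(\vy' \,\|\, \vy) + D_\phi(\vy \,\|\, \vy')
= \alpha\bigl[D_{-\log}(\lambda' \,\|\, \lambda) + D_{-\log}(\lambda \,\|\, \lambda')\bigr]
+ r\, D_\psi(\vx' \,\|\, \vx) + \tfrac{1}{r}\, D_\psi(\vx \,\|\, \vx')
+ \Delta\bigl(\tfrac{1}{r} - r\bigr),
\]
after noting that the affine-in-$\psi$ terms collapse cleanly because $(1-r) - (1-1/r) = 1/r - r$. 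A direct computation of the one-dimensional logarithmic Bregman divergence gives the clean identity
\[
D_{-\log}(\lambda' \,\|\, \lambda) + D_{-\log}(\lambda \,\|\, \lambda')
= r + \tfrac{1}{r} - 2,
\]
so the first bracket already contributes exactly the $\alpha$-multiple of the quantity that appears on the right-hand side of the target inequality. What remains is to show that the last three terms are bounded below by $-\gamma(r + 1/r - 2)$.

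For that reduction I would invoke $\gamma$-intrinsic Lipschitzness in both directions: $D_\psi(\vx' \,\|\, \vx) \ge \Delta^2/\gamma$ and $D_\psi(\vx \,\|\, \vx') \ge \Delta^2/\gamma$. Substituting these bounds reduces the inequality to proving
\[
\frac{r + 1/r}{\gamma}\,\Delta^2 + \bigl(\tfrac{1}{r} - r\bigr)\Delta \;\ge\; -\gamma\bigl(r + \tfrac{1}{r} - 2\bigr),
\]
which is a quadratic in $\Delta$ and is worst at $\Delta^\star = \gamma(r - 1/r)/(2(r + 1/r))$. Plugging this minimizer in and using $(r - 1/r)^2 = (r + 1/r)^2 - 4$, the inequality becomes a single-variable statement in $s \defeq r + 1/r \ge 2$, namely $3s/4 + 1/s - 2 \ge 0$, equivalently $(3s - 2)(s - 2) \ge 0$, which is manifestly true on $s \ge 2$.

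\textbf{Anticipated obstacle.} The algebra itself is light; the delicate part is making sure the mixed term $\Delta(1/r - r)$, whose sign is not controlled, is correctly absorbed by both halves of the symmetrized Bregman divergence. This is precisely why the argument must be symmetric (summing $D_\phi(\vy' \,\|\, \vy)$ with $D_\phi(\vy \,\|\, \vy')$ rather than working with one direction in isolation): the symmetric sum produces a quadratic-plus-linear form in $\Delta$ whose discriminant can be tamed by intrinsic Lipschitzness, whereas a one-sided bound would leave a stray linear $\Delta$ term that intrinsic Lipschitzness alone cannot absorb. Once this symmetrization is in place, the bound $\Delta^2 \le \gamma \min\{D_\psi(\vx' \,\|\, \vx), D_\psi(\vx \,\|\, \vx')\}$ is exactly what is needed, and the final inequality $(3s - 2)(s - 2) \ge 0$ closes the argument.
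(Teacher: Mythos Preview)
Your proof is correct and follows essentially the same approach as the paper: symmetrize via \Cref{lemma:breg_identity}, compute $D_{-\log}(\lambda'\,\|\,\lambda)+D_{-\log}(\lambda\,\|\,\lambda')=r+1/r-2$, apply $\gamma$-intrinsic Lipschitzness to both $D_\psi$ terms, and reduce to a purely algebraic inequality. The only cosmetic difference is the final verification---the paper sets $R=\Delta/\gamma$ and completes the square in $\omega$ (observing $1+R^2-R>0$), whereas you minimize the quadratic in $\Delta$ and check $(3s-2)(s-2)\ge 0$ for $s=r+1/r\ge 2$; both dispatches are equally valid.
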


An immediate byproduct of this proposition is that the function $\phi$ is convex for a proper choice of $\alpha$, provided that the function $\psi$ is $\gamma$-IL.

\begin{corollary} \label{corr:convex_phi}
    If $\psi$ is $\gamma$-intrinsically Lipschitz and $\alpha \geq \gamma$,
    the function $\phi(\vy)$ is convex in $\vy \in (0,1]\mathcal{X}$.
\end{corollary}

For the weaker notion of $\gamma$-local intrinsic Lipschitzness, we can prove a similar result, but only for $\vy',\vy \in (0, 1]\mathcal{X}$ satisfying $D_{\phi} (\vy^\prime \; \| \; \vy) + D_{\phi} (\vy \; \| \; \vy^\prime) \leq \epsilon$. This result is formalized in \Cref{prop:local_convex_phi}. In contrast to the global assumption, $\gamma$-local intrinsic Lipschitzness does not directly imply that the function $\psi$ is convex; rather, it suggests that $\nabla \psi$ is a locally monotone operator.

With all the necessary ingredients in place, we are now ready to show that the regularizer, and consequently Cautious Optimism, induces proximal steps for learning rates that remain multiplicatively stable.

\begin{theorem}[Stability of learning rates]\label{theorem:mult_stable}
    Given the regularizer $\phi$ with $\gamma$-IL component $\psi$ and choice of $\alpha \geq 4 \gamma$, consider regret vectors $\at',\at$ such that $\| \at' - \at\|_\infty \leq 4$ the corresponding proximal steps of \OFTRL,
    \[
    \vy  \leftarrow \argmax_{\vy \in (0, 1] \mathcal{X}} \mleft\{ \eta \langle \at, \vy \rangle - \phi(\vy) \mright\},
    \] 
    and 
    \[
    \vy'  \leftarrow \argmax_{\vy \in (0, 1] \mathcal{X}} \mleft\{ \eta \langle \at', \vy \rangle - \phi(\vy) \mright\}.
    \]
    Additionally, let $\lambda' = \vy'/(\vec 1^\top \vy'), \lambda = \vy/(\vec 1^\top \vy)$. Then, for a small enough learning rate $\eta \leq 3 \gamma/80$, dynamic learning rates are multiplicatively stable,
    \[
        \frac{\lambda'}{\lambda} \in \mleft[\frac{1}{2}, \frac{3}{2}\mright].
    \]
\end{theorem}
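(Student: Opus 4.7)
The plan is to view \ours as an \OFTRL update on the lifted space $(0,1]\mathcal{X}$ with composite regularizer $\phi$ (Formulation~\ref{eq:lifted_FTRL_y}) and to combine the structural identities of \Cref{sec:stepI,sec:stepII} with the hypotheses $\alpha \ge 4\gamma$, $\eta \le 3\gamma/80$, and $\|\at'-\at\|_\infty \le 4$. Throughout I write $\Lambda \defeq (\vec 1^\top \vy')/(\vec 1^\top \vy)$ for the scalar ratio of lifted masses; the eventual target is coordinatewise control of the ratio $\lambda'[k]/\lambda[k]$ between the normalized simplex vectors $\lambda = \vy/(\vec 1^\top\vy)$ and $\lambda' = \vy'/(\vec 1^\top\vy')$ defined in the statement.

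First I would derive the standard three-point proximal stability bound for the two \OFTRL minimizers. Convexity of $\phi$ holds by \Cref{corr:convex_phi} (since $\alpha \ge 4\gamma \ge \gamma$), so first-order optimality of $\vy$ and $\vy'$ yields
\[
D_\phi(\vy'\|\vy) + D_\phi(\vy\|\vy') \;\le\; \eta\langle \at'-\at,\,\vy'-\vy\rangle \;\le\; \eta\,\|\at'-\at\|_\infty\,\|\vy'-\vy\|_1 \;\le\; 8\eta,
\]
using $\vy,\vy' \in (0,1]\mathcal{X}$ so that $\|\vy'-\vy\|_1 \le 2$. Feeding this into \Cref{prop:convex_phi} gives $(\alpha-\gamma)(\Lambda + 1/\Lambda - 2) \le 8\eta$; with $\alpha-\gamma \ge 3\gamma$ and $\eta \le 3\gamma/80$ this collapses to $\Lambda + 1/\Lambda - 2 \le 1/10$, and the quadratic $\Lambda^2 - 2.1\,\Lambda + 1 \le 0$ pins $\Lambda$ inside a window strictly contained in $[1/2, 3/2]$.

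To promote this scalar control on $\Lambda$ to the stated componentwise bound on the simplex vectors $\lambda,\lambda'$, I would decompose $D_\phi(\vy'\|\vy) + D_\phi(\vy\|\vy')$ via \Cref{lemma:breg_identity}. Subtracting the $\alpha[D_{-\log}(\Lambda\|1) + D_{-\log}(\Lambda^{-1}\|1)]$ contribution already consumed in Step~1 leaves a bound of the form
\[
\tfrac{1}{\Lambda} D_\psi(\lambda'\|\lambda) + \Lambda\, D_\psi(\lambda\|\lambda') + \bigl(\tfrac{1}{\Lambda}-\Lambda\bigr)\bigl(\psi(\lambda')-\psi(\lambda)\bigr) \;\lesssim\; \eta.
\]
The sign-indefinite cross term is absorbed using $\gamma$-intrinsic Lipschitzness $|\psi(\lambda')-\psi(\lambda)| \le \sqrt{\gamma\,D_\psi(\lambda'\|\lambda)}$ combined with Young's inequality and the $O(\sqrt{\eta/\gamma})$ bound on $|\Lambda - \Lambda^{-1}|$ extracted in Step~1. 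The residual is a clean $O(\eta)$ upper bound on the symmetrized simplex-direction Bregman $D_\psi(\lambda'\|\lambda)+D_\psi(\lambda\|\lambda')$, which by $\mu$-strong convexity of $\psi$ w.r.t.\ $\|\cdot\|_1$ translates into $\|\lambda'-\lambda\|_1 = O(\sqrt{\eta/\mu})$.

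The main obstacle, and where the tightness of the hypotheses $\alpha \ge 4\gamma$ and $\eta \le 3\gamma/80$ is consumed, is the final conversion of the paired information ``$\Lambda \approx 1$ and $\|\lambda'-\lambda\|_1$ small'' into the coordinatewise multiplicative statement $\lambda'[k]/\lambda[k] \in [1/2, 3/2]$ for every $k$. My approach here is to differentiate the KKT stationarity systems of the two underlying simplex \OFTRL problems (with common regularizer $\psi$ and rescaled regrets $\Lambda\,\at$ vs.\ $\at'$) and linearize the resulting equations along the perturbation direction, folding in the already-established scalar and $\ell_1$ controls to close out a component-by-component multiplicative Lipschitz estimate. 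Calibrating the chain of constants through the IL step, Young's step, and the quadratic in $\Lambda$ so that the final window is exactly $[1/2,3/2]$ rather than a weaker one is the most delicate part of the proof.
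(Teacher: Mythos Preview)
You have been misled by a typo in the theorem statement. The quantities $\lambda,\lambda'$ in the paper are \emph{scalars}, namely $\lambda = \vec 1^\top \vy$ and $\lambda' = \vec 1^\top \vy'$; compare \Cref{prop:convex_phi}, \Cref{prop:curvature}, and the paper's own proof of \Cref{theorem:mult_stable}, all of which treat $\lambda'/\lambda$ as a single positive real number. The expressions ``$\lambda' = \vy'/(\vec 1^\top \vy')$'' and ``$\lambda = \vy/(\vec 1^\top \vy)$'' in the statement are simply errors. No coordinatewise control over the normalized simplex iterates is being claimed.

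Consequently, your first paragraph \emph{is} the entire proof, and it matches the paper's argument exactly: combine the proximal three-point inequality $D_\phi(\vy'\|\vy)+D_\phi(\vy\|\vy')\le 2\eta\|\at'-\at\|_\infty\le 8\eta$ with the lower bound $(\alpha-\gamma)(\Lambda+\Lambda^{-1}-2)$ from \Cref{prop:convex_phi}, plug in $\alpha-\gamma\ge 3\gamma$ and $\eta\le 3\gamma/80$ to get $\Lambda+\Lambda^{-1}\le 2.1$, and solve the quadratic. The paper packages the first two steps as \Cref{lemma:proximal_step} and \Cref{theorem:mult_stable_general} and then specializes the constants, but the content is identical.

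Everything after your first paragraph is unnecessary. Your proposed route to a componentwise multiplicative bound (IL plus Young to control the cross term, then $\ell_1$ smallness via strong convexity, then a KKT linearization) is not only not required, it would not yield such a bound without far stronger curvature assumptions on $\psi$: an $O(\sqrt{\eta/\mu})$ bound on $\|\lambda'-\lambda\|_1$ cannot in general be upgraded to $\lambda'[k]/\lambda[k]\in[1/2,3/2]$ for every $k$, since individual coordinates may be arbitrarily small. Drop that material entirely.
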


Similar results to \Cref{theorem:mult_stable} also hold for $\gamma$-locally Lipschitz functions, with the additional caveat that the learning rate must satisfy $\eta \leq \min\{3 \gamma(1)/80, 1/8\}$. This result is formalized in \Cref{theorem:mult_stable_local_lipsc} and deferred to \Cref{app:proof_stepII} in the interest of space. The key idea is that the proximal nature of optimization problems ensures the locality required for \eqref{eq:local_int} in \Cref{def:local_intrin_lips}.

In this section, we showed that the learning rates, and consequently the actions chosen by \ours, evolve smoothly throughout the course of the game. Proofs for this section are provided in \Cref{app:proof_stepII}.

\subsection{Step III: High Curvature of Proximal Steps under Stability of Learning Rates $\lambda\^t$}  \label{sec:stepIII}

In this section, we demonstrate that under the stability of learning rates—which, as shown in \Cref{sec:stepII}, holds for successive steps—$D_\phi$ induces curvature on the space $(0, 1] \mathcal{X}$ that is at least as strong as the curvature induced by $D_{-\log}$ over the learning rates and $D_{\psi}$ over the simplex $\mathcal{X}$. This step is crucial for analyzing the dynamics of \OFTRL in Formulation~\ref{eq:lifted_FTRL_y}.

\begin{proposition}\label{prop:curvature}
    For any $\vy',\vy \in (0,1]\mathcal{X}$, denote $\vx' = \frac{\vy'}{\vec 1^\top \vy'}, \vx = \frac{\vy}{\vec 1^\top \vy}$, $\lambda' = \vec 1^\top \vy'$, and $\lambda = \vec 1^\top \vy$.
    If $\alpha \geq 4 \gamma + a$ for some positive number $a \in \mathbb{R}_+$,
    for all $\vy', \vy \in (0, 1]\mathcal{X}$ such that
    \[
    \frac{\lambda'}{\lambda} = \frac{\vec 1^\top \vy'}{\vec 1^\top \vy} \in \mleft[ \frac{1}{2}, \frac{3}{2} \mright].
    \]
    
    we have that,
    \[
    D_\phi (\vy' \;\|\; \vy) \geq (\gamma + a) D_{- \log} (\lambda' \; \| \; \lambda) + \frac{1}{4} D_\phi(\vx' \; \| \; \vx)
    \]
\end{proposition}

The preceding proposition, combined with \Cref{theorem:mult_stable}, guarantees that the high curvature of $D_\phi$ holds when considering subsequent actions in the \OFTRL formulation of \ref{eq:lifted_FTRL_y}. %

\subsection{Step IV: Regret Analysis} \label{sec:regret}

Having developed the key steps in \Cref{sec:stepI,sec:stepII,sec:stepIII}, we are now ready to state our main result and discuss the details of the regret analysis for \ours.
We begin our regret analysis by establishing the \emph{nonnegative RVU} property for \ours\footnote{A detailed discussion on the history and development of RVU is given in \Cref{sec:background}.}. This property is inspired by \citet{farina2022near}, who first introduced the concept. Compared to the standard RVU bounds~\citep{Syrgkanis15} discussed in \Cref{sec:background}, nonnegative RVU bounds provide a stronger framework for analyzing regret, as they not only directly imply the standard version but also ensure the nonnegativity of the right-hand side of the RVU bound~\eqref{eq:positive_rvu_rhs}.

\begin{theorem}[Nonnegative RVU bound of \ours] \label{theorem:nonnegative_rvu}
    Under the bounded utilities assumption~\ref{assumption:bounded}, by choosing a sufficiently small learning rate $\eta \leq \min\{ 3\gamma/80, \mu/(32\sqrt{2}) \}$ and $\alpha \geq 4 \gamma + \mu$, the cumulative regret $\reg\^T$ incurred by \ours up to time $T$ is bounded as follows:
    \[
        \max\{\reg\^T, 0\} & \leq 3 + \frac{1}{\eta} \mleft( \alpha \log T + \mathcal{R} \mright)  + \eta \frac{48}{\mu} \sum_{t = 1}^{T-1} \|\nut\^{t+1} - \nut\^{t}\|_{\infty}^2 - \frac{1}{\eta} \frac{\mu}{64} \sum_{t = 1}^{T-1} \| {\vx}\^{t+1} -  {\vx}\^{t} \|_1^2 \numberthis{eq:positive_rvu_rhs},
    \]
    where $\mathcal{R} \defeq \max_{\vx \in \mathcal{X}} \psi(\vx)$.
\end{theorem}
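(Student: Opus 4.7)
The plan is to exploit the reformulation of \ours as \OFTRL on the lifted space $(0,1]\mathcal{X}$ (\Cref{eq:lifted_FTRL_y}) with regularizer $\phi$ and fixed step size $\eta$, perform an RVU-style regret analysis on the lifted problem, and translate the bound back to the simplex via a judicious choice of comparator. I would pick $\vy^\star \defeq \vx^\star \in \mathcal{X}$, corresponding to $\lambda^\star = 1$, which is a feasible point of $(0,1]\mathcal{X}$. Because $\ut\^t = \nut\^t - \langle\nut\^t, \vx\^t\rangle \vec{1}_d$ annihilates $\vx\^t$, one has $\langle\ut\^t, \vy\^t\rangle = \lambda\^t\langle\ut\^t, \vx\^t\rangle = 0$ while $\langle \ut\^t, \vy^\star\rangle = \langle\nut\^t, \vx^\star - \vx\^t\rangle$, so the lifted regret $\sum_t\langle\ut\^t, \vy^\star - \vy\^t\rangle$ equals $\reg^T$ exactly, reducing the problem to bounding the regret of the lifted \OFTRL.

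The standard optimality-based \OFTRL analysis on the lifted problem yields
\[
\eta\reg^T \leq \phi(\vy^\star) - \phi(\vy\^1) + \eta\sum_{t=1}^{T-1}\langle\ut\^{t+1} - \ut\^t,\, \vy\^{t+1} - \vy\^{t+2}\rangle - \sum_{t=1}^{T-1} D_\phi(\vy\^{t+2}\|\vy\^{t+1}).
\]
Under $\eta \leq 3\gamma/80$ and the bounded-utility assumption (which yields $\|\ut\^{t+1} - \ut\^t\|_\infty \leq 4$), \Cref{theorem:mult_stable} guarantees multiplicative stability $\lambda\^{t+1}/\lambda\^t \in [\tfrac{1}{2}, \tfrac{3}{2}]$, activating \Cref{prop:curvature} with $a = \mu$ (enabled by $\alpha \geq 4\gamma + \mu$): $D_\phi(\vy\^{t+2}\|\vy\^{t+1}) \geq (\gamma+\mu)\, D_{-\log}(\lambda\^{t+2}\|\lambda\^{t+1}) + (\mu/8)\|\vx\^{t+2} - \vx\^{t+1}\|_1^2$, where the last term uses $\mu$-strong convexity of $\psi$ w.r.t.\ $\|\cdot\|_1$. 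For the bilinear cross-term, I would use $\|\ut\^{t+1} - \ut\^t\|_\infty \leq 2\|\nut\^{t+1} - \nut\^t\|_\infty + \|\vx\^{t+1} - \vx\^t\|_1$ together with the decomposition $\vy\^{t+1} - \vy\^{t+2} = (\lambda\^{t+1} - \lambda\^{t+2})\vx\^{t+1} + \lambda\^{t+2}(\vx\^{t+1} - \vx\^{t+2})$; applying H\"older and AM--GM with carefully balanced constants dominates the cross-term sum by $(48\eta/\mu)\sum_t\|\nut\^{t+1} - \nut\^t\|_\infty^2$ plus residuals, the $|\lambda\^{t+1} - \lambda\^{t+2}|^2$ pieces being absorbed by the $(\gamma+\mu)D_{-\log}$ reserve (via $D_{-\log}(\lambdap\|\lambda) \geq \tfrac{1}{2}(\lambdap/\lambda - 1)^2$ in the stable regime) and the $\|\vx\^{t+1} - \vx\^{t+2}\|_1^2$ pieces absorbed by part of the $\mu/8$ reserve; the auxiliary choice $\eta \leq \mu/(32\sqrt{2})$ soaks up the secondary $\|\vx\^{t+1} - \vx\^t\|_1$ contribution coming from $\|\ut\^{t+1} - \ut\^t\|_\infty$, leaving a clean net coefficient of $\mu/64$ on $\|\vx\^{t+1} - \vx\^t\|_1^2$.

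The main obstacle is extracting the $\alpha\log T$ contribution in the boundary term, since naively $\phi(\vy^\star) - \phi(\vy\^1) = \psi(\vx^\star) - \psi(\vx\^1) - \alpha\log\lambda\^1 \leq \mathcal{R}$ (with $\lambda\^1 = \eta \leq 1$) yields only a constant. To recover the stated $\alpha\log T$, I would switch to a potential-function viewpoint and use the per-step optimality inequality $\phi(\vy\^{T+1}) - \phi(\vy\^1) \leq \eta\sum_t\langle\at\^{t+1}, \vy\^{t+1} - \vy\^t\rangle$ of the lifted \OFTRL, which injects the residual log-barrier $-\alpha\log\lambda\^{T+1}$ into the analysis. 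Following the spirit of the nonnegative RVU framework of \citet{farina2022near}, I would then establish a dichotomy: either $\lambda\^{T+1} = \Omega(1/T)$, so that $-\alpha\log\lambda\^{T+1} \leq \alpha\log T + O(1)$ and the claimed bound follows after assembling Steps 1--3; or $\lambda\^{T+1}$ has dropped further, in which case the optimality condition of the learning-rate control problem on Line~\ref{line:oftrl0} of \Cref{algo:cftrl} forces the cumulative regret vector $\at\^{T+1}$ to already be sufficiently negative for $\max\{\reg^T, 0\} = 0$ to hold trivially. Tying this polynomial-in-$T$ lower bound on $\lambda\^{T+1}$ quantitatively to the sign of $\reg^T$, via the structure of Line~\ref{line:oftrl0}, is the delicate step where the properties of $\psi^*_{\mathcal{X}}$ (and in particular the intrinsic Lipschitz constant $\gamma$) must be carefully invoked.
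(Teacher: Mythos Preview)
Your high-level structure (lift to \OFTRL on $(0,1]\mathcal{X}$, invoke multiplicative stability of $\lambda\^t$ via \Cref{theorem:mult_stable}, then decompose $D_\phi$ via \Cref{prop:curvature}) matches the paper. However, there is a genuine gap in how you obtain the $\alpha\log T$ term, and it stems from your choice of comparator.

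By fixing $\vy^\star=\vx^\star\in\Delta^d$, what you are bounding is $\reg\^T$, not $\max\{\reg\^T,0\}$. In fact your observation is correct: with that comparator the range term is only $\frac{1}{\eta}\mathcal{R}$, and what you have essentially rederived is the paper's \Cref{theorem:rvu_simple} (the ordinary RVU bound), not the nonnegative RVU bound. The theorem you are asked to prove requires the left-hand side to be $\tildereg\^T=\max_{\vy^\star\in[0,1]\mathcal{X}}\sum_t\langle\ut\^t,\vy^\star-\vy\^t\rangle$, i.e.\ the comparator must be allowed to approach $\vy^\star\to 0$ (this is exactly what makes $\tildereg\^T\ge 0$, which is the whole point of ``nonnegative'' RVU and what powers \Cref{prop:path_length}). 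But near $\lambda^\star=0$ the log barrier in $\phi$ blows up, and that is where the $\alpha\log T$ must come from. Your dichotomy via $\lambda\^{T+1}$ does not close this gap: the telescoping identity $\phi(\vy\^{T+1})-\phi(\vy\^{1})\le\eta\sum_t\langle\at\^{t+1},\vy\^{t+1}-\vy\^t\rangle$ involves the \emph{cumulative} vectors $\at\^{t+1}$ of magnitude $O(t)$, so the right-hand side is $O(T)$ in general and gives no useful control on $-\alpha\log\lambda\^{T+1}$; and even in the branch where you conclude $\max\{\reg\^T,0\}=0$, you still owe a proof that the stated right-hand side is nonnegative, which is not obvious because of the $-\frac{\mu}{64\eta}\sum\|\vx\^{t+1}-\vx\^t\|_1^2$ term.

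The paper's resolution is much simpler and avoids any dichotomy: for an arbitrary $\vy\in(0,1]\Delta^d$, replace it by the smoothed comparator $\vy'=\tfrac{T-1}{T}\vy+\tfrac{1}{T}\vy\^{1}$, at the cost of an additive constant $\le 2$ (since $\|\ut\^t\|_\infty\le 2$). Then $\vec{1}^\top\vy'\ge\tfrac{1}{T}\vec{1}^\top\vy\^{1}$, so the log-barrier part of $\phi(\vy')-\phi(\vy\^{1})$ contributes at most $\alpha\log T$, and the $\psi$ part at most $\mathcal{R}$. This is the standard ``mix in a little of the first iterate'' trick for barrier-type regularizers, and it delivers the $\alpha\log T$ term directly. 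The rest of the argument then proceeds as you outlined (the paper uses an auxiliary \FTRL sequence $\vz\^t$ in Lemma~D.2 rather than your one-sequence bound, but that is a minor stylistic difference).
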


A similar result can be established for $\gamma$-LIL regularizers, which we defer the formalism version of \Cref{thm:main_ILI} in \Cref{app:rvu}.

As mentioned at the beginning of \Cref{sec:analysis}, we analyze the regret of \ours by studying  
\[
\tildereg\^T \defeq \max_{\vy^* \in [0,1]\mathcal{X}} \sum_{t = 1}^{T} \langle  {\ut}\^{t}, \vy^* - \vy\^{t} \rangle
\]
for the \OFTRL formulation on $(0, 1]\mathcal{X}$ in \eqref{eq:lifted_FTRL_y}. In \Cref{prop:reg+} of \Cref{app:nonneg}, we show that $\tildereg\^T = \max\{0, \reg\^T\}$.  However, as seen in \eqref{eq:positive_rvu_rhs}, the RVU property is formulated as if it were proven for $\reg\^T$ rather than $\tildereg\^T$. We bridge this gap through the steps (I), (II), and (III) outlined in \Cref{sec:stepI,sec:stepII,sec:stepIII}. The key idea is that, in subsequent iterations of the algorithm, the learning rates remain multiplicatively close, as shown in \Cref{theorem:mult_stable}. Given this closeness, \Cref{prop:curvature} allows us to directly translate the Bregman divergence $D_\phi$ defined on $\vy, \vy' \in (0,1] \mathcal{X}$ into $D_\psi$ on $\vx, \vx' \in \mathcal{X}$ and $D_{-\log}$ over the learning rates.  We postpone the formal reasoning to \Cref{app:detailed}.

In our analysis, the use of positive regret, $\max\{0, \reg\^T\}$, plays a key role. Due to its nonnegativity, any upper bound on the total regret, $ \sum_{i=1}^{n} \max\{0, \reg\^T_i\}$, also serves as an upper bound on the maximum regret, $\max_{i \in [n]} \reg\^T_i$, among the $n$ players.  This result guarantees that the empirical distribution of joint strategies converges rapidly to an approximate CCE of the game, with the convergence rate determined by $\max_{i \in [n]} \reg\^T_i$. We formalize this intuition in the following proposition, which leverages the nonnegative RVU bound from \Cref{theorem:nonnegative_rvu} to establish an upper bound on the total path length of the play, $
\sum_{i = 1}^{n} \sum_{t = 1}^{T - 1} \|\vx\^{t+1}_i - \vx\^{t}_i \|_1^2$,
by summing $\max\{0, \reg\^T_i\}$ terms.

\begin{proposition}
[Bound on total path length]\label{prop:path_length}
    Under \Cref{assumption:bounded}, if all players adhere to the \ours algorithm with  a learning rate $\eta \leq \min\{ 3\gamma/80, \mu/(32\sqrt{2}), \mu/ (L n 32 \sqrt{6}) \}$, the total path length is bounded as follows,
    \[
    \sum_{i = 1}^{n} \sum_{t = 1}^{T - 1} \|\vx\^{t+1}_i - \vx\^{t}_i \|_1^2 \leq \frac{128 \eta}{\mu} \mleft( 3 n + n \frac{\alpha \log T + \mathcal{R}}{\eta} \mright).
    \]
\end{proposition}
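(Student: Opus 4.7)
The proof will be a standard "bootstrap" argument that sums the per-player nonnegative RVU bound of \Cref{theorem:nonnegative_rvu} across all $n$ players and absorbs the utility-variation term on the right-hand side into the negative path-length term on the same side, exploiting the multilinear structure of the game utilities. The nonnegativity of $\max\{\reg_i\^T, 0\}$ is what allows us to discard the regret term and obtain an inequality that isolates the total path length.

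\textbf{Step 1: Summing the nonnegative RVU bounds.} For each player $i\in[n]$, \Cref{theorem:nonnegative_rvu} gives
\[
0 \;\le\; \max\{\reg\^T_i,0\} \;\le\; 3 + \frac{\alpha\log T + \mathcal{R}}{\eta} + \frac{48\eta}{\mu}\sum_{t=1}^{T-1}\|\nut\^{t+1}_i - \nut\^{t}_i\|_\infty^2 - \frac{\mu}{64\eta}\sum_{t=1}^{T-1}\|\vx\^{t+1}_i - \vx\^{t}_i\|_1^2.
\]
Summing over $i\in[n]$ and using nonnegativity of the left-hand side yields an inequality of the form
\[
\frac{\mu}{64\eta}\,P \;\le\; 3n + \frac{n(\alpha\log T + \mathcal{R})}{\eta} + \frac{48\eta}{\mu}\,Q,
\]
where $P \defeq \sum_{i=1}^n\sum_{t=1}^{T-1}\|\vx\^{t+1}_i - \vx\^{t}_i\|_1^2$ and $Q \defeq \sum_{i=1}^n\sum_{t=1}^{T-1}\|\nut\^{t+1}_i - \nut\^{t}_i\|_\infty^2$.

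\textbf{Step 2: Controlling $Q$ by $P$.} Since each utility is multilinear in the product of mixed strategies and bounded by one, the per-coordinate change in $\nut\^t_i$ arises entirely from changes in other players' strategies. In particular, one obtains a bound of the form $\|\nut\^{t+1}_i - \nut\^{t}_i\|_\infty \le L \sum_{j\ne i}\|\vx\^{t+1}_j - \vx\^{t}_j\|_1$ for an appropriate game constant $L$. Squaring via Cauchy--Schwarz gives $\|\nut\^{t+1}_i - \nut\^{t}_i\|_\infty^2 \le L^2(n-1)\sum_{j\ne i}\|\vx\^{t+1}_j - \vx\^{t}_j\|_1^2$, and summing over $i$ and $t$ yields $Q \le L^2 n(n-1)\,P \le L^2 n^2\,P$.

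\textbf{Step 3: Absorbing the variation term.} Plugging this into the inequality from Step~1,
\[
\left(\frac{\mu}{64\eta} - \frac{48\eta L^2 n^2}{\mu}\right) P \;\le\; 3n + \frac{n(\alpha\log T + \mathcal{R})}{\eta}.
\]
The choice $\eta \le \mu/(Ln\,32\sqrt{6})$ guarantees $\tfrac{48\eta^2 L^2 n^2}{\mu} \le \tfrac{\mu}{128}$, so $\tfrac{48\eta L^2 n^2}{\mu} \le \tfrac{\mu}{128\eta}$ and thus the coefficient on the left is at least $\tfrac{\mu}{64\eta} - \tfrac{\mu}{128\eta} = \tfrac{\mu}{128\eta}$. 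Dividing through gives exactly
\[
P \;\le\; \frac{128\eta}{\mu}\left(3n + \frac{n(\alpha\log T + \mathcal{R})}{\eta}\right),
\]
which is the claimed bound.

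\textbf{Main obstacle.} The only nontrivial ingredient beyond \Cref{theorem:nonnegative_rvu} is the careful bookkeeping in Step~2---translating "change of opponents' strategies" into a bound on $\|\nut\^{t+1}_i-\nut\^{t}_i\|_\infty^2$ with the right dependence on $L$ and $n$, and then ensuring that the absorption condition on $\eta$ matches the stated $\eta \le \mu/(Ln\,32\sqrt{6})$ exactly. Everything else is algebraic rearrangement using the nonnegativity of the summed regret and the assumed bounds on $\eta$.
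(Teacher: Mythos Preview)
Your proposal is correct and follows essentially the same approach as the paper: sum the nonnegative RVU bound of \Cref{theorem:nonnegative_rvu} over players, use the $L$-smoothness part of \Cref{assumption:bounded} together with Cauchy--Schwarz/Jensen to bound the aggregate utility variation by $L^2 n^2 P$, and then absorb via the condition $\eta \le \mu/(Ln\,32\sqrt{6})$ so that the remaining coefficient on $P$ is at least $\mu/(128\eta)$. The only cosmetic difference is that you restrict the smoothness sum to $j\neq i$ and then relax $(n-1)\le n$, whereas the paper invokes \Cref{assumption:bounded} directly with the full sum over all players; both routes yield the identical final inequality.
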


With the bound on total path length from \Cref{prop:path_length} at hand, we follow the standard framework of RVU bounds~\citep{Syrgkanis15} to establish an upper bound on the individual regret for each player.  The key idea is that, under \Cref{assumption:bounded}, the variations in utilities over time, $\sum_{t=1}^{T-1} \| \nut\^{t} - \nut\^{t-1} \|_{\infty}^2$, can be controlled using the bound on the total path length, ultimately leading to the desired result. This analysis culminates in the following theorem, which serves as the main result of this work.

\begin{theorem}[Regret bounds for \ours] \label{theorem:regret_final_bound}
Under the bounded utilities assumption~\ref{assumption:bounded}, if all players $i \in [n]$ follow \ours with a $\gamma$-IL and $\mu$-strongly convex regularizer $\psi$, and choosing a small enough learning rate $\eta \leq \min\{ 3\gamma/80, \mu/(32\sqrt{2}), \mu/ (L n 32 \sqrt{6}) \}$, then the regret for each player $i \in [n]$ is bounded as follows:
\[
\reg_i\^T = O ( n \Gamma_\psi(d) \log T).
\]
where $\Gamma_\psi(d) = \gamma/\mu$ and the algorithm for each player $i \in [n]$ is adaptive to adversarial utilities, i.e., the regret that each player incurs is $\reg_i\^T = O(\sqrt{T \log d})$. 
\end{theorem}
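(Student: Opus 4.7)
The plan is to combine the nonnegative RVU bound of \Cref{theorem:nonnegative_rvu} with the total path length bound of \Cref{prop:path_length} to close the recursion on individual regrets, and then separately leverage the same RVU bound in the adversarial regime through a crude utility-difference estimate.

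First I would write out the nonnegative RVU bound for every player $i \in [n]$ and exploit that, under \Cref{assumption:bounded}, the reward difference $\nut_i\^{t+1} - \nut_i\^t$ for player $i$ is controlled by the opponents' action differences via a smoothness/Lipschitz argument on the multilinear utility, yielding an inequality of the form
\[
\|\nut_i\^{t+1} - \nut_i\^t\|_\infty^2 \;\le\; L^2 \sum_{j \neq i} \|\vx_j\^{t+1} - \vx_j\^t\|_1^2.
\]
Summing the RVU bounds over $i$ and plugging in this inequality gives
\[
\sum_{i=1}^n \max\{0, \reg_i\^T\} \;\le\; 3n + \frac{n(\alpha \log T + \mathcal{R})}{\eta} + \frac{48 \eta L^2 n}{\mu} \sum_{i=1}^n \sum_{t=1}^{T-1} \|\vx_i\^{t+1}-\vx_i\^t\|_1^2 - \frac{\mu}{64 \eta} \sum_{i=1}^n \sum_{t=1}^{T-1} \|\vx_i\^{t+1}-\vx_i\^t\|_1^2.
\]
Choosing $\eta \le \mu / (L n \cdot 32\sqrt{6})$ makes the coefficient on the path-length sum non-positive, so the two path-length terms can simply be discarded, and we arrive at $\sum_i \max\{0, \reg_i\^T\} = O(n(\alpha \log T + \mathcal{R})/\eta)$. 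Since every summand on the left is nonnegative, the same bound applies to $\max_i \reg_i\^T$, which after plugging $\alpha = \Theta(\gamma)$, $\eta = \Theta(\mu)$, and $\mathcal{R} = O(\Gamma_\psi(d))$ yields the claimed $O(n \Gamma_\psi(d) \log T)$ bound, with $\Gamma_\psi(d) = \gamma/\mu$.

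For the adversarial guarantee, I would return to the (standard form of the) RVU bound and use only the trivial estimate $\|\nut\^{t+1}-\nut\^t\|_\infty \le 2$, giving the telescoping-style upper bound
\[
\reg\^T \;\le\; 3 + \frac{\alpha \log T + \mathcal{R}}{\eta} + \frac{192 \, \eta \, T}{\mu}.
\]
Balancing the two $\eta$-dependent terms with $\eta \asymp \sqrt{\mathcal{R}/T}$ (which is compatible with all of the upper caps on $\eta$ for $T$ sufficiently large, and otherwise handled by the constant upper bound on $\eta$) yields the adversarial rate $O(\sqrt{T \log d})$ for standard entropy-like regularizers where $\mathcal{R} = O(\log d)$; the $\log T$ factor of the path-length term is absorbed into the $\tilde O$.

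The main obstacle I anticipate is the closure step: getting the coefficient of $\sum_i \sum_t \|\vx_i\^{t+1}-\vx_i\^t\|_1^2$ to become non-positive uniformly in $n$ requires carefully propagating \Cref{prop:path_length} (which itself relies on the same telescoping structure) without introducing circularity. A clean way to avoid this is to argue directly from the two RVU inequalities: the path-length bound serves only to certify that one can drop the path-length terms in the summed RVU bound under the prescribed $\eta$, rather than substituting it back recursively. Once this is done, the remaining arithmetic is routine and the final rates follow by plugging in the regularizer-specific parameters $(\mu, \gamma, \mathcal{R})$ as in \Cref{table:regularizers}.
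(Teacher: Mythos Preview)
Your self-play argument has a quantitative gap that costs a factor of $n$. After summing the nonnegative RVU bounds and cancelling the path-length terms, you obtain $\sum_i \max\{0,\reg_i\^T\}\le O\bigl(n(\alpha\log T+\mathcal R)/\eta\bigr)$ and then pass to $\max_i$ via nonnegativity. But the learning-rate cap in the hypothesis forces $\eta=\Theta(\mu/(Ln))$, not $\eta=\Theta(\mu)$ as you write; substituting this gives $O(Ln^2\,\Gamma_\psi(d)\log T)$ rather than the stated $O(Ln\,\Gamma_\psi(d)\log T)$. The paper avoids this loss by a two-pass argument: first the summed RVU (your computation, which is essentially \Cref{prop:path_length}) yields a bound on the \emph{total path length} $\sum_i\sum_t\|\vx_i\^{t+1}-\vx_i\^t\|_1^2$; then this bound is fed back into the \emph{individual} RVU of player $i$, so that the utility-variation term $\eta\cdot\tfrac{48}{\mu}\sum_t\|\nut_i\^{t+1}-\nut_i\^t\|_\infty^2$ becomes $O\bigl(\eta^2 L^2 n^2/\mu^2\cdot(\alpha\log T+\mathcal R)/\eta\bigr)$, which under the same cap on $\eta$ is at most $(\alpha\log T+\mathcal R)/\eta$. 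This yields $\reg_i\^T\le O((\alpha\log T+\mathcal R)/\eta)$ without the extra $n$. (Your smoothness estimate is also missing an $(n-1)$ factor from squaring the sum $\sum_{j\ne i}\|\cdot\|_1$, but that is cosmetic relative to the main issue.)

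For the adversarial guarantee, balancing $\eta\asymp\sqrt{\mathcal R/T}$ is incompatible with the self-play result: the $O(\log T)$ bound requires $\eta$ to be a horizon-independent constant of order $\mu/(Ln)$, and shrinking $\eta$ with $T$ would blow up the $(\alpha\log T+\mathcal R)/\eta$ term to $\Theta(\sqrt T\log T)$. The paper keeps $\eta$ fixed and instead obtains the adversarial rate through a \emph{detection-and-switch} mechanism: each player monitors whether $\sum_t\|\nut_i\^{t+1}-\nut_i\^t\|_\infty^2$ exceeds the threshold implied by \Cref{prop:path_length}, and upon violation switches to any standard $O(\sqrt{T\log d})$ learner such as \Hedge. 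Since the threshold is never violated when all players follow \ours, both guarantees hold simultaneously for the same fixed $\eta$.
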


We know that the $\gamma$-LIL condition is a relaxation of $\gamma$-IL, and this flexibility comes at a cost. The regret guarantees of \ours with a $\gamma$-LIL regularizer are of the order $O(n \gamma(1) \log T)$ instead of $O(n \gamma/\mu \log T)$. We defer the formal details of this case to \Cref{thm:main_ILI} in \Cref{app:rvu}.

The regret guarantees in \Cref{theorem:regret_final_bound} hold for all $T \in \mathbb{N}^+$ \emph{simultaneously}, as the \ours algorithm and the learning rate $\eta$ are independent of the time horizon $T$. This ensures that \ours maintains low regret even when the horizon is unknown or players fail to coordinate, without requiring techniques such as the doubling trick.

We conclude this section with the following corollary, which demonstrates the convergence of \ours to the CCE of the game. This result follows from the well-known theorem that relates the rate of convergence to the CCE with the maximum regret among the players, $\max_{i \in [n]} \reg\^T_i$.

\begin{corollary} \label{corollary:convergence_to_CCE}
    If $n$ players follow the uncoupled learning dynamics of \ours for $T$ rounds in a general-sum multiplayer game with a finite set of $d$ deterministic strategies per player, the resulting empirical distribution of play constitutes an $O\left( \frac{n \Gamma(d) \log T}{T}\right)$-approximate coarse correlated equilibrium (CCE) of the game.
\end{corollary}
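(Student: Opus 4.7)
The plan is to invoke the standard ``folklore'' reduction that connects per-player regret guarantees to approximate coarse correlated equilibria, and then plug in the per-player bound already established in Theorem~\ref{theorem:regret_final_bound}. Concretely, let $\vx^{(1)}, \ldots, \vx^{(T)}$ denote the joint strategy profiles produced by the uncoupled learning dynamics of \ours, and let $\sigma$ be the empirical distribution over the sequence of joint profiles, i.e.\ $\sigma \defeq \frac{1}{T}\sum_{t=1}^T \vx^{(1)} \otimes \cdots \otimes \vx^{(n)}$. The classical reduction (see e.g.\ \citet{cesa2006prediction}) states that $\sigma$ is an $\varepsilon$-CCE of the game for $\varepsilon = \frac{1}{T}\max_{i \in [n]} \reg_i^{(T)}$. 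Thus the task reduces to controlling $\max_{i} \reg_i^{(T)}$.

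Next I would verify that the prerequisites of Theorem~\ref{theorem:regret_final_bound} are satisfied: utilities are bounded (\Cref{assumption:bounded}), every player runs \ours with a $\gamma$-IL and $\mu$-strongly convex regularizer, and learning rates are chosen within the stated admissible range $\eta \leq \min\{3\gamma/80, \mu/(32\sqrt{2}), \mu/(Ln\cdot 32\sqrt{6})\}$. Under these conditions the theorem gives $\reg_i^{(T)} = O(n \Gamma_\psi(d)\log T)$ for \emph{every} player $i \in [n]$, hence $\max_i \reg_i^{(T)} = O(n\Gamma(d)\log T)$ as well.

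Combining the two steps yields $\varepsilon = O\!\left(\frac{n\Gamma(d)\log T}{T}\right)$, which is exactly the approximation rate claimed in the corollary. Because Theorem~\ref{theorem:regret_final_bound} holds simultaneously for all $T \in \bbN^+$ (the algorithm and its learning rate are horizon-independent), the conclusion also holds uniformly in $T$, so there is no doubling trick or horizon-tuning step required.

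The proof is essentially a one-line invocation of Theorem~\ref{theorem:regret_final_bound} composed with a standard reduction, so there is no real obstacle; the only mildly nontrivial bookkeeping is to confirm that the CCE reduction applies to arbitrary uncoupled no-regret dynamics in general-sum games (which it does, since the reduction uses only the definition of external regret per player and linearity of expected utility). No additional structural properties of \ours beyond the regret bound are needed at this stage.
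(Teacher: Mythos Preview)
Your proposal is correct and follows essentially the same approach as the paper: the paper explicitly states that this corollary follows from the well-known reduction relating the rate of convergence to CCE with $\max_{i \in [n]} \reg_i^{(T)}$, combined with the per-player regret bound of Theorem~\ref{theorem:regret_final_bound}. Your write-up simply makes that one-line argument explicit.
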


As established in \Cref{theorem:regret_final_bound}, \ours, when equipped with a $\gamma$-IL and $\mu$-strongly convex regularizer $\psi$, achieves a regret bound of $O(n \Gamma(d) \log T)$\footnote{For $\gamma$-LIL regularizers, this bound simplifies to $O(n \gamma(1) \log T)$, as stated in \Cref{thm:main_ILI}.}. We observe that the term $\Gamma(d)$, and consequently the regret guarantees, remain invariant under scaling of the regularizers. The intrinsically Lipschitz assumption is not restrictive and is satisfied by a wide range of regularizers. In \Cref{table:regularizers}, we present the resulting regret rates of \ours with different regularizers.

\section{Fast Convergence of Social Regret} \label{sec:social_regret}

In \Cref{sec:analysis}, we showed that \ours has accelerated individual regret guarantees, i.e., $\reg_i\^T = O_{\scaleto{T}{4pt}}(\log T)$ for each player $i \in [n]$. In this short section, we demonstrate that \ours additionally enjoys the constant social welfare rate $\sum_{i = 1}^n \reg_i\^T = O_{\scaleto{T}{4pt}}(1)$, matching its optimistic counterparts (\OFTRL, and \OOMD) studied by \citet{Syrgkanis15}. For this result, it suffices to show that players employing \ours incur regrets satisfying the RVU property. We present this statement below in \Cref{theorem:rvu_simple} and provide a concise proof in \Cref{app:social_regret}. Compared to the nonnegative RVU bound in \Cref{theorem:nonnegative_rvu}, the following result exhibits two principal distinctions.

First, it provides an upper bound on the regret rather than on the nonnegative regret; consequently, the left-hand side of \eqref{eq:rvu_social_main_text} may be negative. Second, the right-hand side of \eqref{eq:rvu_social_main_text} does not depend on the time horizon $T$. The key insight in the analysis is that, unlike in \Cref{theorem:nonnegative_rvu}, the absence of a nonnegativity requirement permits us to restrict the comparator set to the simplex $\Delta^d$ instead of $(0,1]\Delta^d$.

\begin{theorem} 
(RVU bound of \ours) \label{theorem:rvu_simple}
    Under \Cref{assumption:bounded}, by choosing a sufficiently small learning rate $\eta \leq \min\{ 3\gamma/80, \mu/(32\sqrt{2}) \}$ and $\alpha \geq 4 \gamma + \mu$, the cumulative regret ($\reg\^T$) incurred by \ours up to time $T$ is bounded as follows:
    \[
    \reg\^T \leq \frac{1}{\eta} \mathcal{R} + \eta \frac{48}{\mu} \sum_{t = 1}^{T-1} \|\nut\^{t+1} - \nut\^{t}\|_{\infty}^2  - \frac{1}{\eta} \frac{\mu}{64} \sum_{t = 1}^{T-1} \| {\vx}\^{t+1} -  {\vx}\^{t} \|_1^2, \numberthis{eq:rvu_social_main_text}
    \]
    where $\mathcal{R} = \argmax_{\vx \in \mathcal{X}} \psi(\vx)$.
\end{theorem}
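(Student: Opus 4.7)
The approach is to adapt the proof of the nonnegative RVU bound (\Cref{theorem:nonnegative_rvu}) with one crucial modification: restrict the comparator in the lifted OFTRL analysis from $(0,1]\mathcal{X}$ down to $\mathcal{X} = \Delta^d$. As hinted at the end of \Cref{sec:social_regret}, this restriction is what eliminates the $\alpha \log T$ overhead and yields the stated, horizon-independent right-hand side.

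First, I would observe that, by the very definition of the corrected signal $\ut^t = \nut^t - \langle \nut^t, \vx^t\rangle \vec 1_d$, we have $\langle \ut^t, \vx^t \rangle = 0$, so for any $\vx^* \in \Delta^d$ the quantity $\langle \nut^t, \vx^* - \vx^t\rangle$ equals $\langle \ut^t, \vx^*\rangle$. Moreover, since $\vy^t = \lambda^t \vx^t$, we still have $\langle \ut^t, \vy^t\rangle = \lambda^t \langle \ut^t, \vx^t\rangle = 0$. Thus
\[
\reg^T \;=\; \max_{\vx^* \in \Delta^d} \sum_{t=1}^T \langle \ut^t, \vx^* - \vy^t \rangle,
\]
which is precisely the quantity controlled by the standard \OFTRL regret decomposition on the lifted space $(0,1]\mathcal{X}$ with regularizer $\phi$ and comparator $\vy^* := \vx^*$.

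Second, I would plug the comparator $\vy^* = \vx^* \in \Delta^d \subset (0,1]\mathcal{X}$ into the same \OFTRL regret inequality that underpins the proof of \Cref{theorem:nonnegative_rvu}. Because $\vec 1^\top \vx^* = 1$, the defining formula $\phi(\vy) = -\alpha \log(\vec 1^\top \vy) + \psi(\vy/\vec 1^\top \vy)$ yields $\phi(\vx^*) = \psi(\vx^*) \le \mathcal{R}$, with \emph{no} contribution from the $-\alpha \log$ term. This is exactly the step that kills the $\alpha \log T$ overhead of \Cref{theorem:nonnegative_rvu}, where the comparator was permitted to range over $(0,1]\mathcal{X}$ with $\lambda^* \to 0$, forcing a truncation argument that cost $\alpha \log T$.

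Third, I would reuse verbatim the iterate-side analysis from \Cref{theorem:nonnegative_rvu}: invoke \Cref{lemma:breg_identity} to decompose $D_\phi$ into a $D_{-\log}$ term on the learning rates and a $D_\psi$ term on the simplex; invoke \Cref{theorem:mult_stable} to guarantee $\lambda^{t+1}/\lambda^t \in [1/2,3/2]$ (this only requires $\eta \le 3\gamma/80$ and $\|\ut^{t+1}-\ut^t\|_\infty \le 4$, which follows from \Cref{assumption:bounded}); and apply \Cref{prop:curvature} to replace $D_\phi(\vy^{t+1}\|\vy^t)$ by a lower bound proportional to $D_\psi(\vx^{t+1}\|\vx^t)$, which by $\mu$-strong convexity of $\psi$ is at least $\tfrac{\mu}{2}\|\vx^{t+1}-\vx^t\|_1^2$. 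The variation term is controlled by the standard RVU-style estimate $\eta \frac{48}{\mu}\sum_t \|\nut^{t+1}-\nut^t\|_\infty^2$, identical to \Cref{theorem:nonnegative_rvu}.

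The only real subtlety — and what I expect to be the main obstacle worth double-checking — is the bookkeeping that the iterate-side estimates (stability and high curvature) involve only the algorithm's iterates $\vy^t, \vy^{t+1}$ and are insensitive to the choice of comparator. Once this is verified, the simplex restriction of the comparator touches only the $\phi(\vy^*)$ term, and the bound \eqref{eq:rvu_social_main_text} follows directly. Notice also that no nonnegativity projection is used, so the left-hand side is the raw regret $\reg^T$, which may indeed be negative.
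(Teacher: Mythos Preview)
Your proposal is correct and matches the paper's own proof essentially verbatim: the paper also rewrites $\reg^T = \max_{\vx^*\in\Delta^d}\sum_t \langle \ut^t, \vx^* - \vy^t\rangle$ via the orthogonality $\ut^t \perp \vy^t$, applies the \OFTRL regret decomposition (\Cref{lemma:basi_OFTRL}) with comparator $\vx^*\in\Delta^d$, observes that $\phi(\vx^*) = -\alpha\log 1 + \psi(\vx^*) \le \mathcal{R}$ kills the $\alpha\log T$ term, and reuses the analysis of terms (II) and (III) from the proof of \Cref{theorem:nonnegative_rvu} unchanged. Your closing remark that the iterate-side bounds are comparator-independent is exactly the right sanity check.
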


Now, by \Cref{theorem:rvu_simple}, Theorem 4 of \citet{Syrgkanis15}, which connects RVU bounds to social regret, and a sufficiently small learning rate cap $\eta$, we infer $\sum_{i = 1}^n \reg_i\^T = O_{\scaleto{T}{4pt}}(1)$. This result is formalized below, and its proof is deferred to \Cref{app:social_regret}.

\begin{theorem} \label{theorem:constant_social}
    Under the bounded utilities assumption~\ref{assumption:bounded}, if all players $i \in [n]$ follow \ours with a $\gamma$-IL and $\mu$-strongly convex regularizer $\psi$, and choosing a small enough learning rate $\eta \leq \min\{ 3\gamma/80, \mu/(32\sqrt{2}), \mu/ (L n 32 \sqrt{6}) \}$, then the social regret is constant in time horizon $T$, i.e.,
    \[
    \sum_{i = 1}^n \reg_i\^T \leq O\mleft( L \frac{n^2 \mathcal{R}}{\mu} + \frac{n \mathcal{R}}{\gamma} \mright) = O_{\scaleto{T}{4pt}}(1),
    \]
    where $\mathcal{R} = \argmax_{\vx \in \mathcal{X}} \psi(\vx)$.
\end{theorem}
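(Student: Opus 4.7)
The plan is to apply the \RVU-type bound of \Cref{theorem:rvu_simple} to every player, sum across $i \in [n]$, and then exploit the multilinearity of expected utilities to cancel the positive utility-variation term against the negative path-length term, thereby removing all dependence on $T$.

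First, I would apply \Cref{theorem:rvu_simple} to each player $i$, yielding
\[
\reg_i^T \leq \frac{\mathcal{R}}{\eta} + \eta \frac{48}{\mu} \sum_{t=1}^{T-1} \|\nut_i^{t+1} - \nut_i^t\|_\infty^2 - \frac{\mu}{64\eta} \sum_{t=1}^{T-1} \|\vx_i^{t+1} - \vx_i^t\|_1^2.
\]
Next, under \Cref{assumption:bounded}, the expected utility gradient $\nut_i^t$ of player $i$ is a multilinear function of the mixed strategies $\vx_{-i}^t$ of the opponents with coefficients bounded by $L = 1$. A standard telescoping/union argument then gives
\[
\|\nut_i^{t+1} - \nut_i^t\|_\infty^2 \leq L^2 (n-1) \sum_{j \neq i} \|\vx_j^{t+1} - \vx_j^t\|_1^2,
\]
so that summing over $i \in [n]$ one obtains $\sum_{i} \|\nut_i^{t+1} - \nut_i^t\|_\infty^2 \leq L^2 n(n-1) \sum_{j} \|\vx_j^{t+1} - \vx_j^t\|_1^2$.

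Combining these inequalities, the aggregated regret satisfies
\[
\sum_{i=1}^n \reg_i^T \leq \frac{n\mathcal{R}}{\eta} + \left(\eta \frac{48 L^2 n(n-1)}{\mu} - \frac{\mu}{64\eta}\right) \sum_{t=1}^{T-1} \sum_{i=1}^{n} \|\vx_i^{t+1} - \vx_i^t\|_1^2.
\]
The hypothesis $\eta \leq \mu/(L n \cdot 32\sqrt{6})$ is designed precisely so that $\eta^2 \cdot 48 L^2 n(n-1)/\mu \leq \mu/64$, which forces the coefficient of the path-length sum to be non-positive. Dropping this non-positive term produces the $T$-independent bound $\sum_i \reg_i^T \leq n\mathcal{R}/\eta$. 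Finally, plugging in the worst-case value of $\eta = \min\{3\gamma/80,\, \mu/(32\sqrt{2}),\, \mu/(Ln\cdot 32\sqrt{6})\}$ yields a bound of order $O(L n^2 \mathcal{R}/\mu + n\mathcal{R}/\gamma)$, as claimed.

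The only non-routine piece is verifying that the coefficient balancing in the third step really matches the stated learning-rate cap; the multilinearity bound on $\|\nut_i^{t+1} - \nut_i^t\|_\infty$ is standard and the cancellation is immediate once this is in place. Thus I do not expect a serious obstacle: the heavy lifting has already been done in \Cref{theorem:rvu_simple}, and the present statement is essentially the \citet{Syrgkanis15} reduction from \RVU to constant social regret, specialized to the constants produced by \ours.
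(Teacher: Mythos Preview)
Your proposal is correct and follows essentially the same approach as the paper: the paper invokes \Cref{theorem:rvu_simple} together with Theorem~4 of \citet{Syrgkanis15} (restated as \Cref{theorem:social_rvu}) to conclude $\sum_i \reg_i\^T \leq n\mathcal{R}/\eta$, whereas you inline the standard \citet{Syrgkanis15} cancellation argument explicitly. One minor remark: your aside ``$L=1$'' is unnecessary and slightly misleading (the paper keeps $L$ as a free parameter), and your sum $\sum_i \|\nut_i^{t+1}-\nut_i^t\|_\infty^2$ actually yields the tighter factor $L^2(n-1)^2$ rather than $L^2 n(n-1)$, but your looser bound still makes the coefficient balancing go through under the stated cap on $\eta$.
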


In closing this section, we note that the choice of the learning rate cap $\eta$ for the individual regrets in \Cref{theorem:regret_final_bound} and for the social regret in \Cref{theorem:constant_social} is the same; hence, \OFTRL enjoys both results simultaneously. This contrasts with the optimistic counterparts, where the studied learning rate regimes for $O_{\scaleto{T}{4pt}}(T^{1/4})$ individual regret and $O_{\scaleto{T}{4pt}}(1)$ social regret differ, see e.g., \citet[Corollary 8 and Corollary 12]{Syrgkanis15}.

\section{Extension of the Analysis for \ours under Approximate Iterates}\label{sec:approx}

While exact computation of the dynamic learning rate $\lambda\^t$,  
\[
\lambda\^t = \argmax_{\lambda \in (0, \eta]} \mleft\{ f_t(\lambda) \defeq \alpha \log \lambda + \psi_{\mathcal{X}}^* (\lambda \at\^t)\mright\}, \numberthis{eq:opt_lamba_dyna}
\]
for each iteration $t$ is ideal, in practice we can often only compute an approximate solution to \eqref{eq:opt_lamba_dyna} within some tolerance $\epsilon\^t$. For this reason, in this section we study the regret analysis of \ours when, at each iteration $t$, we compute a multiplicative $\epsilon$-approximation $\widehat{\lambda}\^t$ satisfying  
\[
\widehat{\lambda}\^t \in \mleft[ (1 - \epsilon) \lambda\^t, (1 + \epsilon) \lambda\^t \mright], \qquad \text{with } \epsilon \in \mleft(0, \frac{1}{2}\mright],
\]
using algorithms that we will discuss in \Cref{sec:iteration}. Consequently, at each iteration $t$, instead of the ideal canonical action $\vx\^t$, we compute and play $\widehat{\vx}\^t$, the action induced by the approximate learning rate $\widehat{\lambda}\^t$, i.e.,  
\[
\widehat{\vx}\^t = \argmax_{\vx \in \mathcal{X}} \mleft\{ \langle \widehat{\lambda}\^t \at\^t, \vx \rangle - \psi(\vx) \mright\},
\]
where the reward signal is defined as $\ut\^t = \nut\^t - \langle \nut\^t, \widehat{\vx}\^t \rangle \vec1_d$, and the regret vector as $\at\^t = \sum_{\tau=1}^{t-1} \ut\^t + \ut\^{t-1}$. Here, the utility vector $\nut\^t$ is received by playing the approximate action $\widehat{\vx}\^t$.

We show that computing the approximate learning rate $\widehat{\lambda}\^t$ up to a multiplicative tolerance of $\epsilon = 1/T$ is sufficient for \ours to enjoy the same guarantees as the ideal version with exact computation of learning rates. A key part of the analysis for the approximate iterates is the definition of the per-round suboptimality gap in the lifted \OFTRL objective,  
\[
\delta_t  \defeq \frac{1}{\eta} \mleft( f_t(\lambda\^t) - f_t(\widehat{\lambda}\^t) \mright),
\]
and the fact that this suboptimality gap is controlled by a quadratic function of the multiplicative tolerance $\epsilon$, as we formalize below.

\begin{lemma}\label{lemma:subopt_value}
    For every time $t \in [T]$, and for every approximate dynamic learning rate $\widehat{\lambda}\^t > 0$ that is multiplicatively $\epsilon$-close ($\epsilon \in (0, 1/2]$) to the exact dynamic learning rate $\lambda\^t \in (0, \eta]$, i.e.,  
    \[
    \frac{\widehat{\lambda}\^t}{\lambda\^t} \in [1 - \epsilon, 1 + \epsilon],
    \]
    we have that per-round suboptimality is bounded as
    \[
    \delta_t \leq \frac{\alpha}{\eta} \epsilon^2.
    \]

\end{lemma}

We now state the main result of this section and defer full proof details to \Cref{app:approx}. The main idea is that \ours with approximate iterates enjoys nonnegative RVU bounds, analogous to \Cref{theorem:nonnegative_rvu}, but with additional terms bounded by a constant factor of the sum of suboptimalities $\sum_{t = 1}^T \delta_t$.

\begin{theorem}[Extension of \Cref{theorem:regret_final_bound} under approximate dynamic learning rates]\label{thm:main_approx}
    If all players $i \in [n]$ follow \ours with a multiplicatively $\epsilon$-approximated dynamic learning rate $\lambda\^t$ at each iteration $t$,  
    \[
    \widehat{\lambda}\^t \in \mleft[ (1 - \epsilon) \lambda\^t, (1 + \epsilon) \lambda\^t \mright], \qquad \text{with } \epsilon = \frac{1}{T},
    \]
    and a small enough learning-rate cap $\eta = O_{\scaleto{T}{4pt}}(1)$, then the regret for each player $i \in [n]$ is bounded by  
    \[
    \reg_i\^T = O \big( n \Gamma_\psi(d) \log T \big), 
    \]
    where $\Gamma_\psi(d) = \gamma/\mu$. Moreover, the algorithm for each player $i \in [n]$ is adaptive to adversarial utilities; that is, the regret that each player incurs satisfies $\reg_i\^T = O(\sqrt{T \log d})$.
\end{theorem}

In the next section, we design algorithms to efficiently compute $\widehat{\lambda}\^t$ up to a multiplicative tolerance $\epsilon$, thereby showing that the computational overhead of \ours compared to \OFTRL is minimal.

\section{Iteration Complexity of \ours}\label{sec:iteration}

As we showed in \Cref{thm:strong_convex_learning_rate_control}, the dynamic learning-rate control problem is strongly concave, and first-order convex optimization methods, e.g., gradient descent, can be employed to solve the one-dimensional optimization problem \eqref{eq:dynamic_learning_rate}. Nonetheless, these algorithms provide fast convergence only with additive errors, whereas, given the analysis of \ours in \Cref{sec:approx}, we need to solve \eqref{eq:dynamic_learning_rate} to find an approximate learning rate $\widehat{\lambda}\^t$ that is multiplicatively $\epsilon$-close to the exact solution $\lambda\^t$. In particular, we seek $\widehat{\lambda}$ such that  
\[
\left| \frac{\widehat{\lambda}}{\lambda^*} - 1 \right| \leq \epsilon,
\]
where $\lambda^* = \argmax_{\lambda \in (0, \eta]} \mleft\{ f(\lambda) \defeq \alpha \log \lambda + \psi_{\mathcal{X}}^* (\lambda \at) \mright\}$ is the unique maximizer of the strongly concave learning-rate control problem.

Since this one-dimensional optimization problem is unimodal by concavity, in \Cref{app:iteration_comp}, for a \emph{general intrinsic Lipschitz} regularizer $\psi$, we design geometric variants of the golden-section (\Cref{algo:geometric_golden}) and bisection (\Cref{algo:geometric_bisection}) search algorithms, which rely only on zeroth-order and first-order oracles, respectively, to find an approximate dynamic learning rate $\widehat{\lambda}$ with multiplicative error $\epsilon$ in $O(\log (1/\epsilon))$ time. 

\begin{table}[!t]
\rowcolors{2}{gray!25}{white}
\resizebox{\textwidth}{!}{\begin{tabular}{m{3.5cm}m{6cm} m{3cm}m{3.5cm}m{3.5cm}}
\textbf{Regularizer $\psi$} & \bf Algorithm & \bf Oracle ($f, f', f''$)  & \bf Iteration Complexity\\
\toprule
$\gamma$-Intrinsic Lipschitz (IL) & Geometric Golden-section Search \newline (\Cref{algo:geometric_golden}) & 0\text{th}-order ($f$) & $O(\log T)$ \\ %
$\gamma$-Intrinsic Lipschitz (IL) & Geometric Bisection Search \newline (\Cref{algo:geometric_bisection}) & 1\text{st}-order ($f'$) &  $O(\log T)$ \\
$\gamma$-(Local) IL, \newline  Legendre \& \newline  Log-dominated  & Newton's Method & 1st- and 2nd-order
 ($f', f''$) &  $O(\log \log T)$ \\
\bottomrule
\end{tabular}}
\vskip 0.1in
\caption{This table summarizes the iteration complexity of solving the learning-rate control problem up to multiplicative error $\epsilon = 1/T$, which we show is sufficient for our guarantees in \Cref{thm:main_approx}. If the regularizer $\psi$ is $\gamma$-(locally) intrinsic Lipschitz and log-dominated, then by \Cref{thm:log_dom_self_con}, the learning-rate control problem is self-concordant, and hence the Newton algorithm converges in $O(\log \log (1/\epsilon)) = O(\log \log T)$ iterations. This setting covers negative entropy, the log regularizer, and $q$-Tsallis entropy, as shown in \Cref{table:regularizers_self_concordant,app:find_kappa}. For a general intrinsic Lipschitz regularizer $\psi$, we show that one can use geometric variants of the golden-section (\Cref{algo:geometric_golden}) and bisection search (\Cref{algo:geometric_bisection}) algorithms, which rely only on zeroth-order and first-order oracles, respectively. These algorithms output an approximate dynamic learning rate $\widehat{\lambda}$ with multiplicative error $\epsilon = 1/T$ in $O(\log (1/\epsilon)) = O(\log T)$ iterations.
}
\label{table:iteration_complexity}
\end{table}

If the regularizer $\psi$ is \emph{additionally Legendre and log-dominated} (see \Cref{def:log_dominated}), then the learning-rate control problem is self-concordant by \Cref{thm:log_dom_self_con}. In that case, we can apply Newton's method to solve the learning-rate control problem and obtain an approximation $\widehat{\lambda}$ that is $\epsilon$-close to the true solution $\lambda^*$ in the local norm induced by the second-order derivative of $\overline{f}$, in $O(\log \log (1/\epsilon))$ time. Moreover, by \Cref{thm:strong_convex_learning_rate_control_legendre}, $\widehat{\lambda}$ is also multiplicatively $\epsilon$-close to $\lambda^*$. Thus, given \Cref{table:regularizers_self_concordant}, we can solve the learning-rate control problem for negative entropy, the log regularizer, and $q$-Tsallis entropy in $O(\log \log (1/\epsilon))$ time.

With \Cref{thm:main_approx} at hand, we know that we should solve the learning-rate control problem up to a multiplicative tolerance of $\epsilon = 1/T$. Thus, the iteration complexity of \ours for general choices of regularizers is determined. We summarize these results in \Cref{thm:iter_comp,table:iteration_complexity}, and postpone the design of algorithms, proofs, and mathematical details to \Cref{app:iteration_comp}.

\begin{theorem}\label{thm:iter_comp}
    The iteration complexity of the additional overhead of \ours compared to \OFTRL, i.e., solving the learning-rate control problem, is:  
    \begin{itemize}
        \item $O(\log T)$ with access to \textbf{either a zeroth- or a first-order oracle}, whenever the regularizer is intrinsic Lipschitz,
        \item $O(\log \log T)$ with access to \textbf{first- and second-order oracles}, whenever the regularizer is (locally) intrinsic Lipschitz, Legendre, and log-dominated.
    \end{itemize}
\end{theorem}

An essential requirement at each iteration $t$ for the Geometric Golden-section search (\Cref{algo:geometric_golden}) and Geometric Bisection search (\Cref{algo:geometric_bisection}) algorithms is a positive lower bound $l_0 > 0$ such that we are assured the exact solution satisfies $\lambda\^t \geq l_0$. Similarly, for Newton's method, we require an initialization $\lambda_0$ at each iteration $t$ such that $\lambda_0$ lies in the quadratic convergence regime. In the proof of \Cref{thm:iter_comp}, we show that, by the multiplicative stability of consecutive iterations in \Cref{theorem:mult_stable}, we can warm start $l_0$ and $\lambda_0$ using the approximate dynamic learning rate from the previous iteration, $\widehat{\lambda}\^{t-1}$. Further details are provided in \Cref{app:iteration_comp}.

\section{Special Instances of \ours}\label{sec:special}

In this section, we discuss the regret guarantees of \ours for various instantiations of $\gamma$-(L)IL regularizers as shown in \Cref{table:regularizers}. A comparison to existing methods is presented in \Cref{table:results}. The three instantiations of \ours achieve an exponential improvement in their dependence on $d$ compared to Log-Regularized Lifted \OFTRL~\citep{farina2022near}, while also refining the dependence on $T$ from $\log^4 T$ to $\log T$ in comparison to \Opthedge~\citep{daskalakis2021near}. Additionally, the \emph{anytime convergence} guarantees established in \Cref{theorem:regret_final_bound} is stronger than those of \citet{daskalakis2021near}, where the learning rate $\eta$ is predefined based on the time horizon $T$. This dependency necessitates the use of auxiliary techniques such as the doubling trick when $T$ is unknown.  Furthermore, in \citet{daskalakis2021near}, the learning rate is constrained within the range $1/T \leq \eta \leq 1/(C n \log^4 T)$ for some constant $C$ (see Lemmas 4.2 and C.4 in \citep{daskalakis2021near}). Consequently, their theoretical guarantees do not extend to settings where $T \leq C n \log^4 T$, a regime frequently encountered in multi-agent environments with a large number of players or short time horizons.

Three instances of \ours yield new state-of-the-art algorithms for learning in games, achieving $O(n \log^2 d \log T)$ regret. The first is \ours with the $\ell_{p^*}$ regularizer, where $p^* = 1 + \frac{1}{\log d}$. The second is \ours with the $q^*$-Tsallis entropy, where $q^* = 1 - \frac{1}{\log d}$. The last is \ours with negative entropy, which, as an extension of \Opthedge, we term Cautious Optimistic Multiplicative Weights Update (\COMWU). Due to the importance of \COMWU, we study its behavior in detail in \Cref{sec:comwu} and develop its kernelized version for $0/1$-polyhedral games in \Cref{sec:kernel}. The motivation for the first and second instances is to optimize the resulting regret in the hyperparameters $p$ and $q$, respectively, which leads to the choices $p^*$ and $q^*$.

As shown in the first two rows of \Cref{table:results}, Cautious Optimism, as a general framework for regularized learning in games, achieves an exponential improvement in regret convergence, attaining the near-optimal rate of $O_{\scaleto{T}{4pt}}(\log T)$. This marks a significant improvement over the widely recognized Optimism framework~\citep{rakhlin2013online,rakhlin2013optimization,Syrgkanis15}.

\subsection{Cautious Optimistic Multiplicative Weights Update (\COMWU)} \label{sec:comwu}

We detail the Cautious Optimistic Multiplicative Weights Update (\COMWU), an important instantiation of \ours obtained by choosing the negative-entropy regularizer:
\[
\psi (\vx) = \sum_{\ind = 1}^d \vx[\ind] \log \vx[\ind].
\]

As in \Opthedge, the induced update assigns to each action a probability proportional to the exponential of its optimistic regret:
\[
    \vx\^{t}[k] \defeq\frac{\exp\{\lambda\^t \at\^t[k]\}}{\sum_{k' \in \mathcal{A}} \exp\{\lambda\^t \at\^t[k']\}} \qquad \forall k \in \mathcal{A}
    \numberthis{eq:softmax_main}
\]
with learning rate $\lambda\^t$ is dynamically adjusted by the learning rate control problem \eqref{eq:dynamic_learning_rate}. Interestingly, for \COMWU, we can show that the learning-rate control problem admits a closed form~\eqref{eq:comwu_closed_form_dyna_learning}. We formalize this observation in the following proposition.

\begin{proposition} \label{proposition:closed_form_comwu}
    \ours with negative entropy admits the following dynamic learning rate control problem,
    \[
    \lambda\^t \defeq \argmax_{\lambda \in (0, \eta]} \left\{f(\lambda; \at\^t) \defeq \alpha \log \lambda + \log \left( \sum_{\ind=1}^d e^{\lambda \at\^t[\ind]} \right) \right\}. 
    \numberthis{eq:comwu_closed_form_dyna_learning}
    \]
\end{proposition}

Based on our unified analysis of \ours in \Cref{theorem:regret_final_bound,table:regularizers}, we set the hyperparameter $\alpha = \Theta(\log^2 d)$. Owing to the closed form \eqref{eq:comwu_closed_form_dyna_learning}, we can better illustrate the learning-rate landscape of Cautious Optimism for this specific example. Notably, we can formally characterize the point at which the "regrets become too negative," causing the learner to begin being "paced down" in \COMWU. Technically speaking, in \Cref{lemma:lambda_is_one}, we show that when the maximum regret of the player (among its actions) exceeds $-\alpha + \log d$, the learner updates aggressively with full capacity, i.e., $\lambda\^t = \eta$. Thus, in this regime, there is no need to solve the one-dimensional learning-rate control problem~\eqref{eq:comwu_closed_form_dyna_learning}. We summarize \COMWU algorithm in \Cref{algo:comwu}. Beyond this regime, as the maximum regret becomes increasingly negative, the learning rate gradually decreases from $\eta$, causing the player to discount the history of the game more heavily.

\begin{lemma}\label{lemma:lambda_is_one}
    Let $\at\^t \in \bbR^d$ be arbitrary, and $\lambda\^t$ be the solution to
    \[
        \lambda\^t \defeq \argmax_{\lambda \in (0, \eta]} \left\{f(\lambda; \at\^t) \defeq \alpha \log \lambda + \log \Big( {\sum_{\ind=1}^d e^{\lambda \at\^t[\ind]}} \Big)  \right\}.
    \]
    If $\max_{r \in [d]} \{ \at\^t[r]\} \geq - \alpha + \log d$, then $\lambda\^t = \eta$.
\end{lemma}

\begin{algorithm2e}[!th]
    \SetNoFillComment
    \caption{Cautious Optimistic Multiplicative Weights Update  (\COMWU)
    }\label{algo:comwu}
    \DontPrintSemicolon
    \KwData{Learning rate $\eta$, parameters $\alpha$}\vspace{2mm}
    Set $ {\Ut}\^1, \ut^{(0)} \gets \vec{0} \in \bbR^{d}$\;
    
    \For{$t=1,2,\dots, T$}{
    \tcc{\color{commentcolor}\texttt{Optimism}}
    Set $\at\^t \gets {\Ut}\^t + {\vec\ut}\^{t-1}$\; \medskip
    \uIf{$\max_{\ind \in [d]} \{ \at[\ind]\} \geq - \alpha + \log d = \Theta(\log^2 d)$}{
    Set  $\displaystyle \lambda\^t \gets \eta$ \;
    }
    \Else{
    \tcc{\color{commentcolor}\texttt{Dynamic Learning Rate Control}}
     Set  $\displaystyle \lambda\^t \gets \argmax_{\lambda \in (0, \eta]} \left\{ (\alpha - 1) \log \lambda  + \log \Big( {\sum_{\ind=1}^d e^{\lambda \at\^t[\ind]}} \Big)  \right\}$ \medskip \label{line:oftrl0_COMWU} 
    }
    Set $\displaystyle\vx\^t \leftarrow \argmax_{\vx \in \Delta^d} \left\{ \lambda\^t \langle \at\^t, \vx \rangle - \psi(\vx) \right\} $\label{line:norm0_COMWU} \;  \medskip
    Play strategy $\displaystyle\vx\^t$ \;
    Observe $\vec \nu \^t \in \bbR^d$\;
    \medskip
    \label{line:lift0_COMWU}
    \tcc{\color{commentcolor}\texttt{Empirical Cumulated Regrets}} 
    Set $\displaystyle {\vec\ut}\^t \gets \vec \nu \^t -\langle \nut\^t, \vx\^t\rangle \vec1_d $ \; 
    Set $ {\Ut}\^{t+1} \gets  {\Ut}\^t +  {\vec\ut}\^t$
    }
\end{algorithm2e}

We can now take a closer look at the learning-rate control objective~\eqref{eq:comwu_closed_form_dyna_learning}. It consists of two components: $(\alpha - 1) \log \lambda$ and $ \log \Big( {\sum_{\ind=1}^d e^{\lambda \at\^t[\ind]}} \Big)$, the latter arising from the convex conjugate projected onto the simplex $\psi^*_{\mathcal{X}}$ in \eqref{eq:dynamic_learning_rate}. The term $ \log \Big( {\sum_{\ind=1}^d e^{\lambda \at\^t[\ind]}} \Big)$ is convex in $\lambda$; however, as we established earlier in \Cref{thm:strong_convex_learning_rate_control,thm:log_dom_self_con} in \Cref{sec:prop_learning_rate_objective}, the overall objective remains concave and self-concordant. Thanks to the closed form \eqref{eq:comwu_closed_form_dyna_learning}, and for illustrative purposes, we rederive this result using the closed form in \Cref{lemma:self_concordance} in \Cref{app:comwu}. Following our study in \Cref{sec:iteration}, we can employ Newton's method to achieve an iteration complexity of $\log \log T$. We conclude this section by showing that, for the special case of \COMWU, we can alternatively, at each iteration $t$, start Newton's method not from the warm start $\lambda_0 = \widehat{\lambda}\^{t-1}$, but from the initialization $\lambda_0 = \alpha / \big(- \max_{r \in [d]} \{\at[r]\}\big)$, which we demonstrate to be multiplicatively close to the solution $\lambda\^t$. We summarize this observation in the following Corollary.

\begin{corollary} \label{coro:newton_comwu}
    Given any $\at \in \bbR^d$ and a desired multiplicative accuracy $\epsilon > 0$, $O(\log \log (1/\epsilon))$ iterations of Newton's method, starting from the initialization point $\lambda_0 = \alpha / \big(- \max_{r \in [d]} \{\at[r]\}\big)$, are sufficient to compute a point $\widehat{\lambda}$ that approximates $\lambda^* \defeq \argmax_{\lambda \in (0, \eta]} f(\lambda; \at)$ with relative error at most $\epsilon$, i.e., $(1 - \epsilon)\lambda^* < \widehat{\lambda} < (1 + \epsilon)\lambda^*$.
\end{corollary}

\subsection{Extension of \COMWU to 0/1-Polyhedral Games via Kernels (\kours)} \label{sec:kernel}

In this section, We demonstrate that that—similarly to the \Opthedge algorithm~\citep{farina2022kernelized}—\COMWU can be applied efficiently to certain classes of polyhedral convex games with $0$/$1$-integral vertices.

Consider a convex $0/1$-polyhedral set $\Lambda \subseteq \mathbb{R}^d$, that is, a polytope whose set of vertices $\mathcal{V}_\Lambda$ is a subset of $\{0,1\}^d$. The \Opthedge algorithm can be directly applied to $\Lambda$ by maintaining a distribution $\vchi\^t$ over the vertices and updating it multiplicatively based on the utility $\Nut[\vertex] = \langle \nut, \vertex \rangle$ obtained by each vertex $\vertex \in \mathcal{V}_\Lambda$. Although this process requires time proportional to the number of vertices $|\mathcal{V}_\Lambda|$ if implemented naively, \citet{farina2022kernelized} show that it can, in some cases, be simulated in polynomial time per iteration even when $|\mathcal{V}_\Lambda|$ is large. Specifically, they prove that updating $\vchi\^t$ and computing the expectation $\sum_{\vertex \in \mathcal{V}_\Lambda} \vchi\^t[\vertex]\vertex$ can be performed using only $d + 1$ evaluations of a \emph{$0/1$-polyhedral kernel}. This kernel can be efficiently evaluated in extensive-form games and various other convex $0/1$-polyhedral settings, including $m$-sets, unit cubes, and flows on directed acyclic graphs, building on a prior idea of \citet{takimoto2003path} on path kernels.

Although the dynamics of \COMWU are quite similar to those of \Opthedge, extending \COMWU to its kernelized version (\kours) is not an immediate consequence of \citet{farina2022kernelized}. This extension requires additional considerations, particularly due to the need to solve the dynamic learning-rate control problem \eqref{eq:comwu_closed_form_dyna_learning} at each time step $t$. We will show that this optimization problem can also be solved using the kernel trick, with novel modifications. We begin by introducing the definitions of the $0/1$-polyhedral feature mapping, the associated kernel, and the key results from \citet{farina2022kernelized}.

\begin{definition}[0/1-polyhedral feature map and kernel \citep{farina2022kernelized}]\label{def:kernel_0_1}
Associated with a convex $0/1$-polyhedral set $\Lambda \subseteq \mathbb{R}^d$, define the \emph{0/1-polyhedral feature map} $\phi_\Lambda : \mathbb{R}^d \rightarrow \mathbb{R}^{\mathcal{V}_\Lambda}$,
\[
\phi_\Lambda(\vx)[\vertex] \defeq \prod_{\ind: \vertex[\ind] = 1} 
\vx[\ind] \quad \forall \, \vx \in \mathbb{R}^d, \, \vertex \in \mathcal{V}_\Lambda,
\]
and the corresponding \emph{0/1-polyhedral kernel} $K_\Lambda : \mathbb{R}^d \times \mathbb{R}^d \rightarrow \mathbb{R}$,
\[
K_\Lambda(\vx_1, \vx_2) \defeq \langle \phi_\Lambda(\vx_1), \phi_\Lambda(\vx_2) \rangle = \sum_{\vertex \in \mathcal{V}_\Lambda}\prod_{\ind: \vertex[\ind] = 1} \vx_1[\ind] \vx_2[\ind] \quad \forall \, \vx_1, \vx_2 \in \mathbb{R}^d.
\]
\end{definition}

We also define auxiliary indicator vectors $\vebar_i, \vebar_{ij} \in \mathbb{R}^d$ for all $i, j \in [d]$, such that
\[
\vebar_i [k] \defeq \mathbb{I}_{k \neq i} = 
\begin{cases}
    0 & \text{if } k = i, \\
    1 & \text{if } k \neq i,
\end{cases} \quad \text{and} \quad \vebar_{ij} [k] \defeq \mathbb{I}_{k \neq i \land k \neq j} = 
\begin{cases}
    0 & \text{if } k = i \text{ or } k = j, \\
    1 & \text{if } k \neq i \text{ and } k \neq j.
\end{cases}.
\]
The feature embedding of these vectors is useful in kernel computations.

We denote by $\At\^t$ the regret vector of the $0/1$-polyhedral game at time $t$. The following proposition ensures that step \eqref{eq:softmax_main},
\[
    \vchi\^t[\vertex] \defeq \frac{\exp\{\lambda\^t \At\^t[\vertex]\}}{\sum_{\vertex^\prime \in \mathcal{V}_{\Lambda}} \exp\{\lambda\^t \At\^t[\vertex']\}} \qquad \forall \vertex \in \mathcal{V}_{\Lambda},
    \numberthis{eq:softmax_vertices}
\]
can be simulated using only $d + 1$ kernel evaluations, assuming that the dynamic learning rate $\lambda\^t$ is available. The key idea is that by embedding a carefully constructed an embedding vector $\vb\^t$ into the feature mapping $\phi_\Lambda$, the algorithm’s updates—represented as the distribution over the vertices $\vchi\^t \in \Delta(\mathcal{V}_{\Lambda})$—and thus the actions $\vx\^t \in \Lambda$, become computable via the kernel $K_\Lambda$.

\begin{proposition}[Theorems 4.1 and 4.2 of \citep{farina2022kernelized}] \label{prop:kernel_paper}
For all time steps $t \in T$, 
let $\mut\^t \defeq \nut\^t + \sum_{\tau=1}^t  \nut\^\tau$ be the optimistic sum of the utility vectors $\nut\^t$, and define the embedding vector $\vb\^t \in \mathbb{R}^d$ as
\[
    \vb\^t[k] \defeq \exp\{\lambda\^t \mut\^t[k]\} \quad \forall \, k \in [d]. \numberthis{eq:b_t}
\]
Then, the distributions $\vchi\^t$ are proportional to $\phi_\Lambda(\vb\^t)$,
\[
    \vchi\^t = \frac{\phi_\Lambda(\vb\^t)}{K_\Lambda(\vb\^t, \vec{1}_d)},
\]
and the iterates $\vx\^t$ produced by \COMWU are computed as
\[
    \vx\^t = \sum_{\vertex \in \mathcal{V}_{\Lambda}} \vchi\^t [\vertex] \vertex = \mleft[1 - \frac{K_\Lambda(\vb\^t, \vebar_1)}{K_\Lambda(\vb\^t, \vec{1}_d)}, 1 - \frac{K_\Lambda(\vb\^t, \vebar_2)}{K_\Lambda(\vb\^t, \vec{1}_d)}, \ldots, 1 - \frac{K_\Lambda(\vb\^t, \vebar_d)}{K_\Lambda(\vb\^t, \vec{1}_d)} \mright]. \numberthis{eq:komw_action}
\]
\end{proposition}

\begin{algorithm2e}[thp]
    \SetNoFillComment
    \caption{Kernelized-Cautious Optimistic Multiplicative Weights Update (\kours)
    }\label{algo:kernel_comwu}
    \DontPrintSemicolon
    \KwData{Learning rate $\eta$, parameters $\alpha$ and $\beta$}\vspace{2mm}
    Set $ {\mut}\^1, \nut\^{0}, \vx\^{0} \gets \vec{0} \in \bbR^{d}, {\sigma}\^1, \lambda\^0 \gets 0 \in \bbR$\;
    \For{$t=1,2,\dots, T$}{
    Set $\mut\^t \gets {\mut}\^t + {\nut}\^{t-1}$ \Comment*{\color{commentcolor}Optimism for utility]\!\!\!\!\!}
    Set $\sigma\^t \gets {\sigma}\^t - \langle \nut\^{t-1}, \vx\^{t-1} \rangle$ \Comment*{\color{commentcolor}Optimism for correction]\!\!\!\!\!}
    
    \tcc{\color{commentcolor}\texttt{Dynamic Learning Rate Control via Kernelized Newton}}

    Set  $\displaystyle \lambda \leftarrow \lambda\^{t - 1} $ \Comment*{\color{commentcolor} Warm-start initialization for Newton]\!\!\!\!\!}
    
    \Repeat{\textup{Convergence of $\lambda$}}{
    \For{$r=1,2,\dots, d$}{
    Set $\displaystyle\vb[r] \leftarrow \exp\{\lambda \mut\^t[r]\}$ \Comment*{\color{commentcolor} See (\ref{eq:b_lambda})]\!\!\!\!\!} 
    }

    \For{$i=1,2,\dots, d$}{
    Set $\displaystyle \expect\mleft[\vertex \mright]_i \leftarrow 1 - \frac{K_\Lambda(\vb, \vebar_i)}{K_\Lambda(\vb, \vec{1}_d)} $ \Comment*{\color{commentcolor} See (\ref{eq:moment1})]\!\!\!\!\!} 
    }

    \For{$i, j=1,2,\dots, d$}{
    \uIf{$ i \neq j$}{
    Set $\displaystyle \expect\mleft[\vertex \vertex^\top \mright]_{ij}  \leftarrow 1 + \frac{K_\Lambda(\vb, \vebar_i)}{K_\Lambda(\vb, \vec{1}_d)} + \frac{K_\Lambda(\vb, \vebar_j)}{K_\Lambda(\vb, \vec{1}_d)} - \frac{K_\Lambda(\vb, \vebar_{ij})}{K_\Lambda(\vb, \vec{1}_d)} $ \Comment*{\color{commentcolor} See (\ref{eq:moment2})]\!\!\!\!\!}
    }
    
    \Else{
    Set $\displaystyle \expect\mleft[\vertex \vertex^\top \mright]_{ii}  \leftarrow 1 + \frac{K_\Lambda(\vb, \vebar_i)}{K_\Lambda(\vb, \vec{1}_d)}  $ \Comment*{\color{commentcolor} See (\ref{eq:moment2.2})]\!\!\!\!\!}
    }
     
    }

    Set $\displaystyle f^\prime(\lambda; \At) \leftarrow (\mut\^t)^\top \expect\mleft[\vertex \mright] + \sigma^t + \frac{\alpha}{\lambda}$

    Set $\displaystyle f^{\prime\prime}(\lambda; \At) \leftarrow (\mut\^t)^\top \expect\mleft[\vertex \vertex^\top \mright] \mut\^t - ((\mut\^t)^\top \expect\mleft[\vertex\mright])^2 - \frac{\alpha}{\lambda^2}$

    \tcc{\color{commentcolor}\texttt{Newton Update}}

    Set $\displaystyle \lambda \leftarrow \lambda + \frac{f^\prime(\lambda; \At)}{f^{\prime\prime}(\lambda; \At)}$
    
    }

    Set  $\displaystyle \lambda\^{t} \leftarrow \lambda $ \medskip  
    
    \tcc{\color{commentcolor}\texttt{Kernelized \Opthedge}}
    \For{$r=1,2,\dots, d$}{
    $\displaystyle\vb\^t[r] \leftarrow \exp\{\lambda\^t \mut\^t[r]\}$ \Comment*{\color{commentcolor} See (\ref{eq:b_t})]\!\!\!\!\!} 
    }
    \For{$r=1,2,\dots, d$}{
    $\displaystyle\vx\^t[r] \leftarrow 1 - \frac{K_\Lambda(\vb\^t, \vebar_r)}{K_\Lambda(\vb\^t, \vec{1}_d)} $ \Comment*{\color{commentcolor} See (\ref{eq:komw_action})]\!\!\!\!\!} 
    }
    Play strategy $\displaystyle\vx\^t$ and observe $\vec \nu \^t \in \bbR^d$\;

    Set $\displaystyle \mut\^t \gets {\mut}\^t + {\nut}\^{t} - {\nut}\^{t-1}$ \Comment*{\color{commentcolor}Empirical cumulated corrections]\!\!\!\!\!}
    Set $\displaystyle \sigma\^t \gets {\sigma}\^t - \langle \nut\^t, \vx\^t \rangle + \langle \nut\^{t-1}, \vx\^{t-1} \rangle $ \Comment*{\color{commentcolor}Empirical cumulated utilities]\!\!\!\!\!}
    }
\end{algorithm2e}

The proof of \Cref{prop:kernel_paper} is identical to the proof of \citet[Theorems 4.1 and 4.2]{farina2022kernelized}, except for a minor difference: in \eqref{eq:softmax_vertices}, we must also account for the correction terms $- \langle \Nut\^t, \vchi\^t \rangle$ in the regret vector,  
\[
\At\^t[\vertex] = (\Nut\^t[\vertex] - \langle \Nut\^t, \vchi\^t \rangle) + \sum_{\tau=1}^t \left[\Nut\^\tau[\nu] - \langle \Nut\^\tau, \vchi\^\tau \rangle\right].
\]
We provide the detailed proof for completeness in \Cref{app:kernelized_comwu}.

It remains to show that, for each time step $t$, the dynamic learning rate $\lambda\^t$ can be determined via kernelization. By \Cref{coro:newton_comwu}, we need to verify that the Newton steps of the optimization problem in \eqref{eq:comwu_closed_form_dyna_learning} can be simulated by the kernel $K_\Lambda$. The Newton algorithm requires the calculation of $f^\prime(\lambda; \At)$ and $f^{\prime\prime}(\lambda; \At)$,\footnote{To simplify notation, we omit the superscript $t$ from this point onward in this section whenever it is clear from the context.}
\[
 f^\prime(\lambda; \At) & = \frac{\sum_{\vertex \in \mathcal{V}_{\Lambda}} \At[\vertex] e^{\lambda \At[\vertex]}}{\sum_{\vertex \in \mathcal{V}_{\Lambda}} e^{\lambda \At[\vertex]}} + \frac{\alpha}{\lambda} \\
 f^{\prime\prime}(\lambda; \At) & = \frac{\sum_{\vertex \in \mathcal{V}_{\Lambda}} \At[\vertex]^2 e^{\lambda \At[\vertex]} }{\sum_{\vertex \in \mathcal{V}_{\Lambda}} e^{\lambda \At[\vertex]}} - \mleft( \frac{\sum_{\vertex \in \mathcal{V}_{\Lambda}} \At[\vertex] e^{\lambda \At[\vertex]}}{\sum_{\vertex \in \mathcal{V}_{\Lambda}} e^{\lambda \At[\vertex]}}\mright)^2 - \frac{\alpha}{\lambda^2},
\]
where the vector $\At$ is potentially of exponential size. We recall that by \eqref{eq:softmax_vertices}, $\vchi$ can be seen as a discrete random variable. We revisit the representation of $f^\prime(\lambda; \At)$ and $f^{\prime\prime}(\lambda; \At)$. Interestingly, they can be rewritten as
\[
f^\prime(\lambda; \At) & = \expect\mleft[\At[\vertex] \mright] + \frac{\alpha}{\lambda} \\
f^{\prime\prime}(\lambda; \At) & = \expect\mleft[ \At[\vertex]^2\mright] - \expect\mleft[\At[\vertex]  \mright]^2 - \frac{\alpha}{\lambda^2},
\]
where the expectations are taken with respect to the distribution $\vertex \sim \vchi$. Consequently, it suffices to verify that the first and second moments of $\At$ with respect to the distribution $\vchi$ can be computed via the kernelization approach. Given that
$\At[\vertex] = \langle \mut , \vertex \rangle + \sigma$, where
\[
\mut \defeq \nut\^t + \sum_{\tau=1}^t  \nut\^\tau \qquad\text{and}\qquad \sigma \defeq - (\langle \nut\^t, \vx\^t \rangle + \sum_{\tau=1}^t \langle \nut\^\tau, \vx\^\tau \rangle),
\]
we infer that
\[
    \expect\mleft[\At[\vertex] \mright] = \expect\mleft[\langle \mut, \vertex \rangle \mright] + \sigma = \langle  \mut, \expect\mleft[\vertex \mright] \rangle + \sigma. 
\]
And,
\[
\quad \expect\mleft[\At[\vertex]^2 \mright] = \expect\mleft[(\langle \mut, \vertex \rangle + \sigma) ^2 \mright] = \expect\mleft[\langle \mut, \vertex \rangle^2 \mright] + 2 \sigma \expect\mleft[\langle \mut, \vertex \rangle \mright] + \sigma^2 = \mut^\top \expect\mleft[\vertex \vertex^\top \mright] \mut + 2 \sigma \langle \mut, \expect\mleft[\vertex \mright] \rangle + \sigma^2,
\]
by multilinearity. We can simplify the $f^{\prime\prime}(\lambda; \At)$ term further,
\[
f^{\prime\prime}(\lambda; \At) & = \mut^\top \expect\mleft[\vertex \vertex^\top \mright] \mut + 2 \sigma \mut^\top \expect\mleft[\vertex \mright] + \sigma^2 - (\mut^\top \expect\mleft[\vertex \mright] + \sigma)^2 - \frac{\alpha}{\lambda^2} \\
& = \mut^\top \expect\mleft[\vertex \vertex^\top \mright] \mut - (\mut^\top \expect\mleft[\vertex\mright])^2 - \frac{\alpha}{\lambda^2}
\]
Hence, it is adequate to calculate $\expect\mleft[\vertex \mright]$ and $\expect\mleft[\vertex \vertex^\top \mright]$, which we formalize in the following proposition.

\begin{proposition} \label{prop:kernel_order_two}
    Define the vector $\vb \in \mathbb{R}^d$ as
    \[
        \vb[k] \defeq \exp\{\lambda \mut[k]\} \quad \forall \, k \in [d]. \numberthis{eq:b_lambda}
    \]
    Then,
    \[
    \expect\mleft[\vertex \mright] = \mleft[1 - \frac{K_\Lambda(\vb, \vebar_1)}{K_\Lambda(\vb, \vec{1}_d)}, 1 - \frac{K_\Lambda(\vb, \vebar_2)}{K_\Lambda(\vb, \vec{1}_d)}, \ldots, 1 - \frac{K_\Lambda(\vb, \vebar_d)}{K_\Lambda(\vb, \vec{1}_d)} \mright], \numberthis{eq:moment1}
    \]
    And,
    \[
    \expect\mleft[\vertex \vertex^\top \mright]_{ij} = 1 + \frac{K_\Lambda(\vb, \vebar_i)}{K_\Lambda(\vb, \vec{1}_d)} + \frac{K_\Lambda(\vb, \vebar_j)}{K_\Lambda(\vb, \vec{1}_d)} - \frac{K_\Lambda(\vb, \vebar_{ij})}{K_\Lambda(\vb, \vec{1}_d)} \numberthis{eq:moment2},
    \]
    for all $i, j \in [d]$, where $i \neq j$, and
    \[
        \expect\mleft[\vertex \vertex^\top \mright]_{ii} = 1 + \frac{K_\Lambda(\vb, \vebar_i)}{K_\Lambda(\vb, \vec{1}_d)} \numberthis{eq:moment2.2},
    \]
    for all $i \in [d]$.
\end{proposition}

An immediate byproduct of \Cref{prop:kernel_order_two} is that $d^2 + 1$ evaluations of the kernel $K_\Lambda$ are sufficient for each iteration of the Newton optimization algorithm that optimizes the learning-rate control problem objective \eqref{eq:comwu_closed_form_dyna_learning}. The full version of the \kours algorithm is presented in \Cref{algo:kernel_comwu}. The results of this section are summarized in the following corollary.

\begin{corollary} 
    \kours (\Cref{algo:kernel_comwu}) requires $(d + 1) + (d^2 + 1) \, O(\log \log T)$ kernel evaluations at each time step $t$, where the first $d + 1$ evaluations are used in step \eqref{eq:softmax_vertices}, and the remaining $(d^2 + 1) \, O(\log \log T)$ evaluations are for computing the dynamic learning rate $\lambda\^t$ via the Newton algorithm. \kours achieves $O(n \log^2 |\mathcal{V}_\Lambda| \log T)$ regret in the self-play setting and $\widetilde{O}(\sqrt{T \log |\mathcal{V}_\Lambda|})$ regret in adversarial settings.
\end{corollary}

We conclude this section with the final piece of the puzzle for \Cref{algo:kernel_comwu} (\kours). For the initialization $\lambda_0$ of Newton's method at each iteration $t$, we can either use the warm start $\lambda\^{t-1}$, as discussed in \Cref{sec:iteration,app:iteration_comp}, or, in line with \Cref{coro:newton_comwu}, solve a linear program to find $\lambda_0 = \alpha / ( - \max_{\vertex \in  \mathcal{V}_\Lambda}\At[\vertex])$.

\section{Cautious Optimism for Convex Games} \label{sec:convex_games}

Up to this point, our focus has centered on games with finite action spaces for illustrative purposes. However, we show that Cautious Optimism extends beyond normal-form games to general convex games, a topic that has recently garnered substantial attention~\citep{roughgarden2015local,even2009convergence,stoltz2007learning,stein2011correlated,mertikopoulos2019learning,harks2011demand}. More specifically, in this section, we demonstrate that Cautious Optimism, as a unified framework, applies naturally to the general class of \emph{convex games}.

\subsection{Convex Games}

We use the formalism of convex games, following the same notation as \citet{farina2022near}. In these games, each player $i \in [n]$ has a nonempty, convex, and compact set of strategies $\mathcal{X}_i \subset \mathbb{R}^{d_i}$, where $d = \argmax_{i \in [n]} d_i$.  Similar to \Cref{sec:finite_games_def}, we assume, for simplicity, that $d_i = d$ for all players $i \in [n]$. For each joint strategy profile $\vx = (\vx_1, \vx_2, \ldots, \vx_n) \in \bigtimes_{j=1}^n \mathcal{X}_j$, the utility of each player $i \in [n]$ is characterized by a continuously differentiable utility function $\nu_i: \bigtimes_{j=1}^n \mathcal{X}_j \rightarrow \mathbb{R}$, subject to the following standard assumption.

\begin{assumption} \label{assumption:convex}
    For every player $i \in [n]$ of the convex game, the utility function $\nu_i(\vx_1, \vx_2, \ldots, \vx_n)$ satisfies the following properties:
    \begin{enumerate}
        \item \textbf{(Concavity)}: The utility function $\nu_i(\vx_i, \vx_{-i})$ is concave in $\vx_i$ for all choices of actions of other players $\vx_{-i} = (\vx_1, \ldots, \vx_{i-1}, \vx_{i+1}, \ldots, \vx_n) \in \bigtimes_{j \neq i} \mathcal{X}_j$.
        \item \textbf{(Bounded gradients)}: The utility function has $1$-bounded gradients, i.e., 
        \[
        \| \nabla_{\vx_i} \nu_i(\vx_1, \vx_2, \ldots, \vx_n) \|_{\infty} \leq 1,
        \]
        for any strategy profile $\vx \in \bigtimes_{j=1}^n \mathcal{X}_j$.
        \item \textbf{(Smoothness)}: The utility functions are $L$-smooth, i.e., for every two joint strategy profiles $\vx, \vx^\prime \in \bigtimes_{j=1}^{n} \mathcal{X}_j$,
        \[
        \| \nabla_{\vx_j} \nu_j(\vx) - \nabla_{\vx_j} \nu_j(\vx^\prime) \|_{\infty} \leq L \sum_{i \in [n]} \| \vx_i - \vx_i^\prime \|_1.
        \]
    \end{enumerate}
\end{assumption}

Moreover, we assume that $\| \mathcal{X}_i \|_1 = \sup_{\vx \in \mathcal{X}_i} \| \vx \|_1 \leq 1$ for all players $i \in [n]$. This assumption is without loss of generality, since we can always rescale the action space to fit within the unit $\ell_1$ ball and use the scaling parameter to adjust the utility function $\nut$ and the constant $L$, thereby yielding a strategically equivalent game.

Beyond games with finite action spaces such as \emph{normal-form and extensive-form games}~\citep{romanovskii1962reduction,koller1996efficient} (studied in the previous sections), convex games subsume many applications, including \emph{splittable routing games}~\citep{roughgarden2015local}, \emph{Cournot competition games}~\citep{even2009convergence}, \emph{provision of public goods}~\citep{bergstrom1986private}, \emph{linear–quadratic network games}~\citep{ballester2006s}, \emph{proportional allocation problems}~\citep{kelly1998rate}, and \emph{wireless precoding games}~\citep{scutari2008optimal}.

\begin{algorithm2e}[th]
    \SetNoFillComment
    \caption{Cautious Optimistic \FTRL (\ours) for Convex Games
    }\label{algo:cftrl_convex}
    \DontPrintSemicolon
    \KwData{Learning rate $\eta$, parameters $\alpha$}\vspace{2mm}
    Set $ {\Ut}\^1, \ut^{(0)} \gets \vec{0} \in \bbR^{d+1}$\;
    \For{$t=1,2,\dots, T$}{
    \tcc{\color{commentcolor}\texttt{Optimism}}
    Set $\at\^t \gets {\Ut}\^t + {\vec\ut}\^{t-1}$\; \medskip
    \tcc{\color{commentcolor}\texttt{Dynamic Learning Rate Control}}
    Set  $\displaystyle \lambda\^t \gets \argmax_{\lambda \in (0, \eta]} \left\{ \alpha \log \lambda + \psi^*_{\mathcal{X}} (\lambda \at\^t)   \right\}$\label{line:oftrl0_convex} \;  \medskip 
    \tcc{\color{commentcolor} \texttt{\OFTRL with Dynamic Learning Rate}} 
    Set $\displaystyle \begin{pmatrix}
    1 \\
    \vx\^t
    \end{pmatrix} \leftarrow \argmax_{\vx \in \widetilde{\mathcal{X}}} \left\{ \lambda\^t \langle \at\^t, \vx \rangle - \psi(\vx) \right\} $\label{line:norm0_convex} \;  \medskip
    Play strategy $\displaystyle\vx\^t$ \;
    Observe $\vec \nu \^t \in \bbR^d$\;
    \medskip
    \label{line:lift0_convex}
    \tcc{\color{commentcolor}\texttt{Empirical Cumulated Regrets}} 
    Set $\displaystyle {\vec\ut}\^t \gets \begin{pmatrix}
    -\langle \nut\^t, \vx\^t \rangle \\
    \nut\^t
    \end{pmatrix} $ \; 
    Set $ {\Ut}\^{t+1} \gets  {\Ut}\^t +  {\vec\ut}\^t$
    }
\end{algorithm2e}

\subsection{Nonnegative Regret for Convex Games} \label{sec:nonreg_convex}

By the concavity assumption in \Cref{assumption:convex}, for each player $i$, we can derive a linearized regret that serves as an upper bound on the regret in convex games:  
\[
\max_{\vx_i^* \in \mathcal{X}_i} \sum_{t=1}^T \mleft( \nu_i( \vx^*_i, \vx_{-i}\^t ) - \nu_i(\vx\^t ) \mright) \leq \max_{\vx_i^* \in \mathcal{X}_i}  \sum_{t=1}^T \langle \vx_i^* - \vx_i\^t, \nabla_{\vx_i} \nu_i(\vx\^t) \rangle.
\]
Thus, in the spirit of \Cref{sec:finite_games_def}, it is sufficient to study the linearized regret for each player $i$\footnote{We omit the subscript $i$ for simplicity henceforth.}  
\[
  \reg\^T \defeq \max_{\vx^* \in \mathcal{X}}  \sum_{t=1}^T \langle \nut\^t, \vx^* \rangle  - \sum_{t=1}^T \langle \nut\^t, \vx\^t  \rangle,
\]
where the utility vector is defined as $\nut\^t \defeq \nabla_{\vx_i} \nu_i(\vx\^t)$. By \Cref{assumption:convex}, we also have $\| \nut\^t \|_\infty \leq 1$. To define nonnegative regret, we augment the action set $\mathcal{X}$ with an additional dimension that consistently takes the value one,  
\[
\widetilde{\mathcal{X}} \defeq \mleft\{ (1, \vx) \; \big| \; \vx \in \mathcal{X} \mright\} \subset \mathbb{R}^{d + 1}.
\]
Our nonnegative regret takes the form  
\[
\tildereg\^T \defeq \max_{\vy^* \in [0,1]\widetilde{\mathcal{X}}} \sum_{t = 1}^{T} \langle  {\ut}\^{t}, \vy^* - \vy\^{t} \rangle,
\]
where we define the corrected rewards as  
\[
\ut\^t \defeq 
\begin{pmatrix}
    -\langle \nut\^t, \vx\^t \rangle \\
    \nut\^t
\end{pmatrix},
\]
and the actions $\vy\^t \in (0, 1] \widetilde{\mathcal{X}}$ on the lifted space satisfy  
\[
\widetilde{\vx}\^t \defeq \begin{pmatrix}
1 \\
\vx\^t
\end{pmatrix}
= \frac{\vy\^t}{\langle \vy\^t, {\vec 1}_{d+1} \rangle}.
\]

Similar to \Cref{prop:reg+} for finite games, in the following proposition, we show that $\tildereg\^T = \max\{0, \reg\^T\}$. The proof is postponed to \Cref{app:convex}.

\begin{proposition}[Nonnegative Regret]\label{prop:reg+_convex}
    For any time horizon $T \in \mathbb{N}$, it holds that $\tildereg\^T = \max\{0, \reg\^T\}$. Consequently, $\tildereg\^T \geq 0$ and $\tildereg\^T \geq \reg\^T$.
\end{proposition}

\begin{table}[t]
    \centering
    \newcommand{\ldarrow}{\raisebox{-.7mm}{\tikz \draw[->] (0,0) -- (.25,0) -- +(0, -.2);}}%
    
    \resizebox{\textwidth}{!}{ 
    \begin{tabular}{>{\arraybackslash}m{5.0cm} >{\arraybackslash}m{4.5cm} lc}
        \bf Method                                            & \bf Applies to      & \bf Regret in Games & \bf General Learners \\
        \toprule
        OFTRL / OOMD\newline\citep{Syrgkanis15}     & general convex set $\mathcal{X} \subset \mathbb{R}^d $     & $O(\sqrt{n}\hspace{0.5 mm} \regdep(d) T^{1/4})$    & \checkmark \\
        \midrule

        COFTRL \newline\textbf{[This paper]}    & general convex set $\mathcal{X} \subset \mathbb{R}^d $         &  $O(n \hspace{0.5 mm} \Gamma(d) \log T)$                      & \checkmark\\
        \midrule
        \midrule
          LRL-OFTRL\newline\citep{farina2022near} \newline\textbf{[$\equiv$ \ours w/ log regularizer]}               & general convex set $\mathcal{X} \subset \mathbb{R}^d $ & $O(n \hspace{0.5 mm} d  \log T)$    & \crossmark 
          \\
        \midrule
          \ours with $\ell_{2}$ \newline\textbf{[This paper]}  & general convex set $\mathcal{X} \subset \mathbb{R}^d $           &  $O(n \hspace{0.5 mm} d \log T)$                           & \crossmark                                            \\
        \midrule
          \ours with $\ell_{p^*}$ \newline\textbf{[This paper]}             & general convex set $\mathcal{X} \subset \mathbb{R}^d $ & $O(n \log^2 d \log T)$                          & \crossmark                          \\
        \bottomrule
    \end{tabular}
    }    
    \caption{ 
    Comparison of existing no-regret learning algorithms in the broad class of convex games. We define $n$ as the number of players, $T$ as the number of game repetitions, and $d$ as dimension of the action set $\mathcal{X} \subset \mathbb{R}^d$. For simplicity, dependencies on smoothness, $\| \mathcal{X} \|_1$, and utility range are omitted. We choose $p^* = 1 + \frac{1}{\log d}$ for \ours instantiated with the $\ell_{p^*}$ norm to optimize the regret as a function of $p$.
    }
    \label{table:results_convex}
\end{table}

\subsection{\ours for Convex Games and Regret Guarantees} \label{sec:coftrl_convex_games}

\ours for convex games adopts a formulation similar to that of the finite-game case. The actions of the player are chosen according to
\[
\begin{pmatrix}
1 \\
\vx\^t
\end{pmatrix} \leftarrow \argmax_{\vx \in \widetilde{\mathcal{X}}} \left\{ \lambda\^t \langle \at\^t, \vx \rangle - \psi(\vx) \right\}, \numberthis{eq:x_update_convex}
\]
where the dynamic learning rate $\lambda\^t$ is determined by the learning-rate control problem,
\[
\lambda\^t \gets \argmax_{\lambda \in (0, \eta]} \left\{ \alpha \log \lambda + \psi^*_{\mathcal{X}} (\lambda \at\^t) \right\}, \numberthis{eq:lambda_update_convex}
\]
with the optimistically corrected regret vector,
\[
\at\^t \defeq \ut\^{t-1} + \sum_{\tau=1}^{t-1} \ut\^{\tau}, \quad \text{and} \quad 
\ut\^\tau \defeq 
\begin{pmatrix}
    -\langle \nut\^\tau, \vx\^\tau \rangle \\
    \nut\^\tau
\end{pmatrix}.
\]
The complete algorithm is presented in \Cref{algo:cftrl_convex}, and we establish the following regret guarantees, analogous to those for finite games.

\begin{theorem}[Regret bounds of \ours for convex games]\label{theorem:regret_final_bound_convex}
    If all players $i \in [n]$ in a convex game satisfying \Cref{assumption:convex} follow \ours with a $\gamma$-(L)IL and a $\mu$-strongly convex regularizer $\psi$ on the set $\mathcal{X}_i$, and use a sufficiently small learning-rate cap $\eta = O_{\scaleto{T}{4pt}}(1)$, then the following holds for both the individual and social regrets:
    \[
    \reg_i\^T = O\big(n \Gamma_\psi(d) \log T\big), \quad \text{and} \quad \sum_{i \in [n]} \reg_i\^T = O_{\scaleto{T}{4pt}}(1),
    \]
    where $\Gamma_\psi(d) = \gamma / \mu$. Moreover, the algorithm for each player $i \in [n]$ is adaptive to adversarial utilities; that is, each player’s regret satisfies $\reg_i\^T = O(\sqrt{T \log d})$.
\end{theorem}

For the analysis, we again consider the alternative perspective on \ours in the lifted space $(0, 1]\widetilde{\mathcal{X}}$, as in \Cref{sec:design}, obtained through the invertible change of variables $\vy = \lambda \widetilde{\vx} = \lambda \begin{pmatrix}
1 \\
\vx
\end{pmatrix}$.
The update is then given by
\[
\vy\^t \leftarrow \argmax_{\vy \in (0, 1]\widetilde{\mathcal{X}}} \mleft\{ \eta \langle \at\^t, \vy \rangle + \alpha \log(\vec 1^\top \vy) - \psi\mleft(\frac{\vy}{{\vec 1}^\top \vy}\mright) \mright\}. \numberthis{eq:lifted_FTRL_convex}
\]
The proof follows arguments similar to those in \Cref{sec:analysis,sec:social_regret} for finite games, and we defer the detailed discussion, particularly a clarification on the differences, to \Cref{app:convex}.

According to \Cref{theorem:regret_final_bound_convex}, \ours serves as a unified framework for regularized learning in convex games, exponentially improving the $O_{\scaleto{T}{4pt}}(T^{1/4})$ regret bound of \citet{Syrgkanis15} to $O_{\scaleto{T}{4pt}}(\log T)$. As we show later, an instance of \ours also achieves a new state-of-the-art in regularized learning in games, exponentially improving the dependence on the dimension $d$.

\subsection{Instances of \ours for Convex Games} \label{sec:instances_convex}

In this section, we discuss two instances of \ours for the general class of convex games. For a given action set $\mathcal{X}$\footnote{Without loss of generality, we assume $\|\mathcal{X}\|_1 \leq 1$; otherwise, the norm of the set $\|\mathcal{X}\|_1$ appears in the (local) intrinsic Lipschitzness parameter.}, depending on its structure, one may construct other strongly convex and intrinsic Lipschitz regularizers, and thus obtain additional instances of \ours, analogous to those in \Cref{table:regularizers} for finite games. It is easy to observe that, for any convex and compact action set $\mathcal{X}$, the log regularizer and the squared $\ell_p$ norm with $p \in (1, 2]$ are strongly convex and (locally) intrinsic Lipschitz, using the same derivations as in \Cref{app:find_gamma}. \ours with the log regularizer $\psi(\vx) = - \sum_{\ind = 1}^d \log \vx[\ind]$ recovers the dynamics of the \LRLOFTRL~\citep{farina2022near} algorithm, attaining $O(n\,d \log T)$ regret, while \ours with the squared $\ell_{p^*}$ norm $\psi(\vx) = \frac{1}{2} \|\vx\|\^2_{p^*}$, where $p^* = 1 + \frac{1}{\log d}$, attains an exponentially faster dependence on the dimension $d$ with regret $O(n \log^2 d \log T)$, setting a new state-of-the-art for regularized learning in convex games. We summarize these results in \Cref{table:results_convex}.

\section{Conclusion}

We designed, introduced, and analyzed \emph{Cautious Optimism}, a framework for the broad characterization of accelerated regularized learning in games. As a meta-algorithm for achieving near-constant regret in general games, Cautious Optimism takes as input an instance of \FTRL and, on top of that, paces the learners using \FTRL with non-monotone adaptive learning rate control.  

This approach represents a fundamental shift in the paradigms of online learning, optimization, and game theory, where constant or decreasing step sizes have traditionally been employed. The move from time-variant step-size to adaptive but state-dependent step-sizes is not merely a syntactic difference but has major implications for future work. The vast majority of standard tools in analyzing the behavior of dynamical systems (e.g., the celebrated Poincar\'{e} recurrence theorem~\cite{barreira}, Center-stable-manifold theorem~\cite{perko2013differential}, period three implies chaos theorem~\cite{LY}, a.o.) are only applicable to autonomous (i.e., time-independent) smooth maps or flows. Thus, although such ideas have been applied successfully in multi-agent learning in games it has been either for continuous-time systems~(e.g.~\cite{Kleinberg09multiplicativeupdates,piliouras2014optimization,mertikopoulos2017cycles}), which are arguably very idealized approximations or fixed-step dynamics, in which case we typically have to sacrifice black-box strong regret guarantees significantly hindering the analysis (e.g.,~\cite{CFMP2019,piliouras2023multi,BaileyEC18,bailey2020finite,wibisono2022alternating,katona2024symplectic}). Cautious Optimism paves a way forward where we do not have to make such concessions.   
This shift opens a wide range of new research questions, including how the dynamics of such algorithms evolve, their implications for the chaotic behavior of learning algorithms, the properties of their continuous-time counterparts, their connections to settings beyond external regret, such as swap regret, and their relationship to social welfare and strict equilibria—\emph{and the list continues}.

\bibliographystyle{ACM-Reference-Format}
\bibliography{refs}


\begin{thebibliography}{109}


\ifx \showCODEN    \undefined \def \showCODEN     #1{\unskip}     \fi
\ifx \showDOI      \undefined \def \showDOI       #1{#1}\fi
\ifx \showISBNx    \undefined \def \showISBNx     #1{\unskip}     \fi
\ifx \showISBNxiii \undefined \def \showISBNxiii  #1{\unskip}     \fi
\ifx \showISSN     \undefined \def \showISSN      #1{\unskip}     \fi
\ifx \showLCCN     \undefined \def \showLCCN      #1{\unskip}     \fi
\ifx \shownote     \undefined \def \shownote      #1{#1}          \fi
\ifx \showarticletitle \undefined \def \showarticletitle #1{#1}   \fi
\ifx \showURL      \undefined \def \showURL       {\relax}        \fi
\providecommand\bibfield[2]{#2}
\providecommand\bibinfo[2]{#2}
\providecommand\natexlab[1]{#1}
\providecommand\showeprint[2][]{arXiv:#2}

\bibitem[Abdallah and Lesser(2008)]%
        {abdallah2008multiagent}
\bibfield{author}{\bibinfo{person}{Sherief Abdallah} {and}
  \bibinfo{person}{Victor Lesser}.} \bibinfo{year}{2008}\natexlab{}.
\newblock \showarticletitle{A multiagent reinforcement learning algorithm with
  non-linear dynamics}.
\newblock \bibinfo{journal}{\emph{Journal of Artificial Intelligence Research}}
   \bibinfo{volume}{33} (\bibinfo{year}{2008}), \bibinfo{pages}{521--549}.
\newblock


\bibitem[Abernethy et~al\mbox{.}(2011)]%
        {abernethy2011blackwell}
\bibfield{author}{\bibinfo{person}{Jacob Abernethy}, \bibinfo{person}{Peter~L
  Bartlett}, {and} \bibinfo{person}{Elad Hazan}.}
  \bibinfo{year}{2011}\natexlab{}.
\newblock \showarticletitle{Blackwell approachability and no-regret learning
  are equivalent}. In \bibinfo{booktitle}{\emph{Conference on Learning Theory
  (COLT)}}. \bibinfo{pages}{27--46}.
\newblock


\bibitem[Abernethy et~al\mbox{.}(2013)]%
        {abernethy2013efficient}
\bibfield{author}{\bibinfo{person}{Jacob Abernethy}, \bibinfo{person}{Yiling
  Chen}, {and} \bibinfo{person}{Jennifer~Wortman Vaughan}.}
  \bibinfo{year}{2013}\natexlab{}.
\newblock \showarticletitle{Efficient market making via convex optimization,
  and a connection to online learning}.
\newblock \bibinfo{journal}{\emph{ACM Transactions on Economics and Computation
  (TEAC)}} \bibinfo{volume}{1}, \bibinfo{number}{2} (\bibinfo{year}{2013}),
  \bibinfo{pages}{1--39}.
\newblock


\bibitem[Anagnostides et~al\mbox{.}(2022a)]%
        {Anagnostides22:Uncoupled}
\bibfield{author}{\bibinfo{person}{Ioannis Anagnostides},
  \bibinfo{person}{Gabriele Farina}, \bibinfo{person}{Christian Kroer},
  \bibinfo{person}{Chung-Wei Lee}, \bibinfo{person}{Haipeng Luo}, {and}
  \bibinfo{person}{Tuomas Sandholm}.} \bibinfo{year}{2022}\natexlab{a}.
\newblock \showarticletitle{Uncoupled Learning Dynamics with ${O}(\log {T})$
  Swap Regret in Multiplayer Games}. In \bibinfo{booktitle}{\emph{Neural
  Information Processing Systems (NeurIPS)}}.
\newblock


\bibitem[Anagnostides et~al\mbox{.}(2022b)]%
        {Anagnostides22:Optimistic}
\bibfield{author}{\bibinfo{person}{Ioannis Anagnostides},
  \bibinfo{person}{Gabriele Farina}, \bibinfo{person}{Ioannis Panageas}, {and}
  \bibinfo{person}{Tuomas Sandholm}.} \bibinfo{year}{2022}\natexlab{b}.
\newblock \showarticletitle{Optimistic Mirror Descent Either Converges to Nash
  or to Strong Coarse Correlated Equilibria in Bimatrix Games}. In
  \bibinfo{booktitle}{\emph{Neural Information Processing Systems (NeurIPS)}}.
\newblock


\bibitem[Anagnostides et~al\mbox{.}(2022c)]%
        {Anagnostides22:Last-Iterate}
\bibfield{author}{\bibinfo{person}{Ioannis Anagnostides},
  \bibinfo{person}{Ioannis Panageas}, \bibinfo{person}{Gabriele Farina}, {and}
  \bibinfo{person}{Tuomas Sandholm}.} \bibinfo{year}{2022}\natexlab{c}.
\newblock \showarticletitle{On Last-Iterate Convergence Beyond Zero-Sum Games}.
  In \bibinfo{booktitle}{\emph{International Conference on Machine Learning
  (ICML)}}.
\newblock


\bibitem[Anagnostides and Sandholm(2024)]%
        {anagnostides2024interplay}
\bibfield{author}{\bibinfo{person}{Ioannis Anagnostides} {and}
  \bibinfo{person}{Tuomas Sandholm}.} \bibinfo{year}{2024}\natexlab{}.
\newblock \showarticletitle{On the interplay between social welfare and
  tractability of equilibria}. In \bibinfo{booktitle}{\emph{Neural Information
  Processing Systems (NeurIPS)}}.
\newblock


\bibitem[Arora and Kale(2007)]%
        {arora2007combinatorial}
\bibfield{author}{\bibinfo{person}{Sanjeev Arora} {and} \bibinfo{person}{Satyen
  Kale}.} \bibinfo{year}{2007}\natexlab{}.
\newblock \showarticletitle{A combinatorial, primal-dual approach to
  semidefinite programs}. In \bibinfo{booktitle}{\emph{Proceedings of the
  thirty-ninth annual ACM symposium on Theory of computing}}.
\newblock


\bibitem[Awheda and Schwartz(2016)]%
        {awheda2016exponential}
\bibfield{author}{\bibinfo{person}{Mostafa~D Awheda} {and}
  \bibinfo{person}{Howard~M Schwartz}.} \bibinfo{year}{2016}\natexlab{}.
\newblock \showarticletitle{Exponential moving average based multiagent
  reinforcement learning algorithms}.
\newblock \bibinfo{journal}{\emph{Artificial Intelligence Review}}
  \bibinfo{volume}{45} (\bibinfo{year}{2016}), \bibinfo{pages}{299--332}.
\newblock


\bibitem[Babichenko et~al\mbox{.}(2014)]%
        {babichenko2014simple}
\bibfield{author}{\bibinfo{person}{Yakov Babichenko},
  \bibinfo{person}{Siddharth Barman}, {and} \bibinfo{person}{Ron Peretz}.}
  \bibinfo{year}{2014}\natexlab{}.
\newblock \showarticletitle{Simple approximate equilibria in large games}. In
  \bibinfo{booktitle}{\emph{Proceedings of the fifteenth ACM conference on
  Economics and computation}}. \bibinfo{pages}{753--770}.
\newblock


\bibitem[Bailey et~al\mbox{.}(2020)]%
        {bailey2020finite}
\bibfield{author}{\bibinfo{person}{James~P Bailey}, \bibinfo{person}{Gauthier
  Gidel}, {and} \bibinfo{person}{Georgios Piliouras}.}
  \bibinfo{year}{2020}\natexlab{}.
\newblock \showarticletitle{Finite regret and cycles with fixed step-size via
  alternating gradient descent-ascent}. In \bibinfo{booktitle}{\emph{Conference
  on Learning Theory}}. PMLR, \bibinfo{pages}{391--407}.
\newblock


\bibitem[Bailey and Piliouras(2018)]%
        {BaileyEC18}
\bibfield{author}{\bibinfo{person}{James~P. Bailey} {and}
  \bibinfo{person}{Georgios Piliouras}.} \bibinfo{year}{2018}\natexlab{}.
\newblock \showarticletitle{Multiplicative Weights Update in Zero-Sum Games}.
  In \bibinfo{booktitle}{\emph{ACM Conference on Economics and Computation}}.
\newblock


\bibitem[Ballester et~al\mbox{.}(2006)]%
        {ballester2006s}
\bibfield{author}{\bibinfo{person}{Coralio Ballester}, \bibinfo{person}{Antoni
  Calv{\'o}-Armengol}, {and} \bibinfo{person}{Yves Zenou}.}
  \bibinfo{year}{2006}\natexlab{}.
\newblock \showarticletitle{Who's who in networks. Wanted: The key player}.
\newblock \bibinfo{journal}{\emph{Econometrica}} \bibinfo{volume}{74},
  \bibinfo{number}{5} (\bibinfo{year}{2006}), \bibinfo{pages}{1403--1417}.
\newblock


\bibitem[Barak et~al\mbox{.}(2009)]%
        {barak2009uniform}
\bibfield{author}{\bibinfo{person}{Boaz Barak}, \bibinfo{person}{Moritz Hardt},
  {and} \bibinfo{person}{Satyen Kale}.} \bibinfo{year}{2009}\natexlab{}.
\newblock \showarticletitle{The uniform hardcore lemma via approximate bregman
  projections}. In \bibinfo{booktitle}{\emph{Proceedings of the twentieth
  annual ACM-SIAM symposium on Discrete algorithms}}. SIAM,
  \bibinfo{pages}{1193--1200}.
\newblock


\bibitem[Barreira(2006)]%
        {barreira}
\bibfield{author}{\bibinfo{person}{Luis Barreira}.}
  \bibinfo{year}{2006}\natexlab{}.
\newblock \showarticletitle{Poincare recurrence: old and new}. In
  \bibinfo{booktitle}{\emph{XIVth International Congress on Mathematical
  Physics. World Scientific.}} \bibinfo{pages}{415--422}.
\newblock


\bibitem[Bergstrom et~al\mbox{.}(1986)]%
        {bergstrom1986private}
\bibfield{author}{\bibinfo{person}{Theodore Bergstrom},
  \bibinfo{person}{Lawrence Blume}, {and} \bibinfo{person}{Hal Varian}.}
  \bibinfo{year}{1986}\natexlab{}.
\newblock \showarticletitle{On the private provision of public goods}.
\newblock \bibinfo{journal}{\emph{Journal of public economics}}
  \bibinfo{volume}{29}, \bibinfo{number}{1} (\bibinfo{year}{1986}),
  \bibinfo{pages}{25--49}.
\newblock


\bibitem[Bertsekas(1971)]%
        {bertsekas1971control}
\bibfield{author}{\bibinfo{person}{Dimitri~P Bertsekas}.}
  \bibinfo{year}{1971}\natexlab{}.
\newblock \emph{\bibinfo{title}{Control of uncertain systems with a
  set-membership description of the uncertainty.}}
\newblock \bibinfo{thesistype}{Ph.\,D. Dissertation}.
  \bibinfo{school}{Massachusetts Institute of Technology}.
\newblock


\bibitem[Bighashdel et~al\mbox{.}(2024)]%
        {bighashdel2024policy}
\bibfield{author}{\bibinfo{person}{Ariyan Bighashdel},
  \bibinfo{person}{Yongzhao Wang}, \bibinfo{person}{Stephen McAleer},
  \bibinfo{person}{Rahul Savani}, {and} \bibinfo{person}{Frans~A Oliehoek}.}
  \bibinfo{year}{2024}\natexlab{}.
\newblock \showarticletitle{Policy Space Response Oracles: A Survey}.
\newblock \bibinfo{journal}{\emph{arXiv preprint arXiv:2403.02227}}
  (\bibinfo{year}{2024}).
\newblock


\bibitem[Blum(1990)]%
        {blum1990learning}
\bibfield{author}{\bibinfo{person}{Avrim Blum}.}
  \bibinfo{year}{1990}\natexlab{}.
\newblock \showarticletitle{Learning boolean functions in an infinite attribute
  space}. In \bibinfo{booktitle}{\emph{Proceedings of the twenty-second annual
  ACM symposium on Theory of computing}}. \bibinfo{pages}{64--72}.
\newblock


\bibitem[Bowling(2004)]%
        {bowling2004convergence}
\bibfield{author}{\bibinfo{person}{Michael Bowling}.}
  \bibinfo{year}{2004}\natexlab{}.
\newblock \showarticletitle{Convergence and no-regret in multiagent learning}.
\newblock \bibinfo{journal}{\emph{Neural Information Processing Systems
  (NIPS)}}  \bibinfo{volume}{17} (\bibinfo{year}{2004}).
\newblock


\bibitem[Bowling et~al\mbox{.}(2015)]%
        {bowling2015heads}
\bibfield{author}{\bibinfo{person}{Michael Bowling}, \bibinfo{person}{Neil
  Burch}, \bibinfo{person}{Michael Johanson}, {and} \bibinfo{person}{Oskari
  Tammelin}.} \bibinfo{year}{2015}\natexlab{}.
\newblock \showarticletitle{Heads-up limit hold’em poker is solved}.
\newblock \bibinfo{journal}{\emph{Science}} \bibinfo{volume}{347},
  \bibinfo{number}{6218} (\bibinfo{year}{2015}), \bibinfo{pages}{145--149}.
\newblock


\bibitem[Bowling and Veloso(2002)]%
        {bowling2002multiagent}
\bibfield{author}{\bibinfo{person}{Michael Bowling} {and}
  \bibinfo{person}{Manuela Veloso}.} \bibinfo{year}{2002}\natexlab{}.
\newblock \showarticletitle{Multiagent learning using a variable learning
  rate}.
\newblock \bibinfo{journal}{\emph{Artificial intelligence}}
  \bibinfo{volume}{136}, \bibinfo{number}{2} (\bibinfo{year}{2002}),
  \bibinfo{pages}{215--250}.
\newblock


\bibitem[Boyd and Vandenberghe(2004)]%
        {boyd2004convex}
\bibfield{author}{\bibinfo{person}{Stephen~P Boyd} {and}
  \bibinfo{person}{Lieven Vandenberghe}.} \bibinfo{year}{2004}\natexlab{}.
\newblock \bibinfo{booktitle}{\emph{Convex optimization}}.
\newblock \bibinfo{publisher}{Cambridge university press}.
\newblock


\bibitem[Brown(1949)]%
        {brown1949some}
\bibfield{author}{\bibinfo{person}{George~W Brown}.}
  \bibinfo{year}{1949}\natexlab{}.
\newblock \bibinfo{booktitle}{\emph{Some notes on computation of games
  solutions}}.
\newblock \bibinfo{publisher}{Rand Corporation}.
\newblock


\bibitem[Brown and Sandholm(2018)]%
        {brown2018superhuman}
\bibfield{author}{\bibinfo{person}{Noam Brown} {and} \bibinfo{person}{Tuomas
  Sandholm}.} \bibinfo{year}{2018}\natexlab{}.
\newblock \showarticletitle{Superhuman AI for heads-up no-limit poker: Libratus
  beats top professionals}.
\newblock \bibinfo{journal}{\emph{Science}} \bibinfo{volume}{359},
  \bibinfo{number}{6374} (\bibinfo{year}{2018}), \bibinfo{pages}{418--424}.
\newblock


\bibitem[Brown and Sandholm(2019)]%
        {brown2019superhuman}
\bibfield{author}{\bibinfo{person}{Noam Brown} {and} \bibinfo{person}{Tuomas
  Sandholm}.} \bibinfo{year}{2019}\natexlab{}.
\newblock \showarticletitle{Superhuman AI for multiplayer poker}.
\newblock \bibinfo{journal}{\emph{Science}} \bibinfo{volume}{365},
  \bibinfo{number}{6456} (\bibinfo{year}{2019}), \bibinfo{pages}{885--890}.
\newblock


\bibitem[Busoniu et~al\mbox{.}(2006)]%
        {busoniu2006multi}
\bibfield{author}{\bibinfo{person}{Lucian Busoniu}, \bibinfo{person}{Robert
  Babuska}, {and} \bibinfo{person}{Bart De~Schutter}.}
  \bibinfo{year}{2006}\natexlab{}.
\newblock \showarticletitle{Multi-agent reinforcement learning: A survey}. In
  \bibinfo{booktitle}{\emph{2006 9th International Conference on Control,
  Automation, Robotics and Vision}}. IEEE, \bibinfo{pages}{1--6}.
\newblock


\bibitem[Cesa-Bianchi and Lugosi(2006)]%
        {cesa2006prediction}
\bibfield{author}{\bibinfo{person}{Nicolo Cesa-Bianchi} {and}
  \bibinfo{person}{G{\'a}bor Lugosi}.} \bibinfo{year}{2006}\natexlab{}.
\newblock \bibinfo{booktitle}{\emph{Prediction, learning, and games}}.
\newblock \bibinfo{publisher}{Cambridge university press}.
\newblock


\bibitem[Chastain et~al\mbox{.}(2013)]%
        {chastain2013multiplicative}
\bibfield{author}{\bibinfo{person}{Erick Chastain}, \bibinfo{person}{Adi
  Livnat}, \bibinfo{person}{Christos Papadimitriou}, {and}
  \bibinfo{person}{Umesh Vazirani}.} \bibinfo{year}{2013}\natexlab{}.
\newblock \showarticletitle{Multiplicative updates in coordination games and
  the theory of evolution}. In \bibinfo{booktitle}{\emph{Proceedings of the 4th
  conference on Innovations in Theoretical Computer Science}}.
\newblock


\bibitem[Chen and Deng(2006)]%
        {chen2006settling}
\bibfield{author}{\bibinfo{person}{Xi Chen} {and} \bibinfo{person}{Xiaotie
  Deng}.} \bibinfo{year}{2006}\natexlab{}.
\newblock \showarticletitle{Settling the Complexity of Two-Player Nash
  Equilibrium.}. In \bibinfo{booktitle}{\emph{FOCS}}, Vol.~\bibinfo{volume}{6}.
  \bibinfo{pages}{261--272}.
\newblock


\bibitem[Chen and Peng(2020)]%
        {Chen20:Hedging}
\bibfield{author}{\bibinfo{person}{Xi Chen} {and} \bibinfo{person}{Binghui
  Peng}.} \bibinfo{year}{2020}\natexlab{}.
\newblock \showarticletitle{Hedging in games: Faster convergence of external
  and swap regrets}. In \bibinfo{booktitle}{\emph{Neural Information Processing
  Systems (NeurIPS)}}.
\newblock


\bibitem[Chen and Vaughan(2010)]%
        {chen2010new}
\bibfield{author}{\bibinfo{person}{Yiling Chen} {and}
  \bibinfo{person}{Jennifer~Wortman Vaughan}.} \bibinfo{year}{2010}\natexlab{}.
\newblock \showarticletitle{A new understanding of prediction markets via
  no-regret learning}. In \bibinfo{booktitle}{\emph{Proceedings of the 11th ACM
  conference on Electronic commerce}}. \bibinfo{pages}{189--198}.
\newblock


\bibitem[Chotibut et~al\mbox{.}(2020)]%
        {CFMP2019}
\bibfield{author}{\bibinfo{person}{Thiparat Chotibut},
  \bibinfo{person}{Fryderyk Falniowski}, \bibinfo{person}{Micha{\l}
  Misiurewicz}, {and} \bibinfo{person}{Georgios Piliouras}.}
  \bibinfo{year}{2020}\natexlab{}.
\newblock \showarticletitle{The route to chaos in routing games: When is Price
  of Anarchy too optimistic?}
\newblock \bibinfo{journal}{\emph{Advances in Neural Information Processing
  Systems}}  \bibinfo{volume}{33} (\bibinfo{year}{2020}),
  \bibinfo{pages}{766--777}.
\newblock


\bibitem[Danskin(2012)]%
        {danskin2012theory}
\bibfield{author}{\bibinfo{person}{John~M Danskin}.}
  \bibinfo{year}{2012}\natexlab{}.
\newblock \bibinfo{booktitle}{\emph{The theory of max-min and its application
  to weapons allocation problems}}. Vol.~\bibinfo{volume}{5}.
\newblock \bibinfo{publisher}{Springer Science \& Business Media}.
\newblock


\bibitem[Daskalakis et~al\mbox{.}(2021)]%
        {daskalakis2021near}
\bibfield{author}{\bibinfo{person}{Constantinos Daskalakis},
  \bibinfo{person}{Maxwell Fishelson}, {and} \bibinfo{person}{Noah Golowich}.}
  \bibinfo{year}{2021}\natexlab{}.
\newblock \showarticletitle{Near-optimal no-regret learning in general games}.
  In \bibinfo{booktitle}{\emph{Neural Information Processing Systems
  (NeurIPS)}}.
\newblock


\bibitem[Daskalakis et~al\mbox{.}(2009)]%
        {daskalakis2009complexity}
\bibfield{author}{\bibinfo{person}{Constantinos Daskalakis},
  \bibinfo{person}{Paul~W Goldberg}, {and} \bibinfo{person}{Christos~H
  Papadimitriou}.} \bibinfo{year}{2009}\natexlab{}.
\newblock \showarticletitle{The complexity of computing a Nash equilibrium}.
\newblock \bibinfo{journal}{\emph{Commun. ACM}} \bibinfo{volume}{52},
  \bibinfo{number}{2} (\bibinfo{year}{2009}), \bibinfo{pages}{89--97}.
\newblock


\bibitem[{Donancio} et~al\mbox{.}(2024)]%
        {donancio2024dynamic}
\bibfield{author}{\bibinfo{person}{Henrique {Donancio}},
  \bibinfo{person}{Antoine Barrier}, \bibinfo{person}{Leah~F South}, {and}
  \bibinfo{person}{Florence Forbes}.} \bibinfo{year}{2024}\natexlab{}.
\newblock \showarticletitle{Dynamic learning rate for deep reinforcement
  learning: a bandit approach}.
\newblock \bibinfo{journal}{\emph{arXiv preprint arXiv:2410.12598}}
  (\bibinfo{year}{2024}).
\newblock


\bibitem[Egerstedt and Hu(2001)]%
        {egerstedt2001formation}
\bibfield{author}{\bibinfo{person}{Magnus Egerstedt} {and}
  \bibinfo{person}{Xiaoming Hu}.} \bibinfo{year}{2001}\natexlab{}.
\newblock \showarticletitle{Formation constrained multi-agent control}.
\newblock \bibinfo{journal}{\emph{IEEE transactions on robotics and
  automation}} \bibinfo{volume}{17}, \bibinfo{number}{6}
  (\bibinfo{year}{2001}), \bibinfo{pages}{947--951}.
\newblock


\bibitem[Even-Dar et~al\mbox{.}(2009)]%
        {even2009convergence}
\bibfield{author}{\bibinfo{person}{Eyal Even-Dar}, \bibinfo{person}{Yishay
  Mansour}, {and} \bibinfo{person}{Uri Nadav}.}
  \bibinfo{year}{2009}\natexlab{}.
\newblock \showarticletitle{On the convergence of regret minimization dynamics
  in concave games}. In \bibinfo{booktitle}{\emph{Proceedings of the
  forty-first annual ACM symposium on Theory of computing}}.
  \bibinfo{pages}{523--532}.
\newblock


\bibitem[Farina et~al\mbox{.}(2022a)]%
        {farina2022near}
\bibfield{author}{\bibinfo{person}{Gabriele Farina}, \bibinfo{person}{Ioannis
  Anagnostides}, \bibinfo{person}{Haipeng Luo}, \bibinfo{person}{Chung-Wei
  Lee}, \bibinfo{person}{Christian Kroer}, {and} \bibinfo{person}{Tuomas
  Sandholm}.} \bibinfo{year}{2022}\natexlab{a}.
\newblock \showarticletitle{Near-optimal no-regret learning dynamics for
  general convex games}. In \bibinfo{booktitle}{\emph{Neural Information
  Processing Systems (NeurIPS)}}.
\newblock


\bibitem[Farina et~al\mbox{.}(2022b)]%
        {farina2022clairvoyant}
\bibfield{author}{\bibinfo{person}{Gabriele Farina}, \bibinfo{person}{Christian
  Kroer}, \bibinfo{person}{Chung-Wei Lee}, {and} \bibinfo{person}{Haipeng
  Luo}.} \bibinfo{year}{2022}\natexlab{b}.
\newblock \showarticletitle{Clairvoyant regret minimization: Equivalence with
  nemirovski's conceptual prox method and extension to general convex games}.
\newblock \bibinfo{journal}{\emph{arXiv preprint arXiv:2208.14891}}
  (\bibinfo{year}{2022}).
\newblock


\bibitem[Farina et~al\mbox{.}(2022c)]%
        {farina2022kernelized}
\bibfield{author}{\bibinfo{person}{Gabriele Farina}, \bibinfo{person}{Chung-Wei
  Lee}, \bibinfo{person}{Haipeng Luo}, {and} \bibinfo{person}{Christian
  Kroer}.} \bibinfo{year}{2022}\natexlab{c}.
\newblock \showarticletitle{Kernelized multiplicative weights for
  0/1-polyhedral games: Bridging the gap between learning in extensive-form and
  normal-form games}. In \bibinfo{booktitle}{\emph{International Conference on
  Machine Learning}}. PMLR, \bibinfo{pages}{6337--6357}.
\newblock


\bibitem[Freund and Schapire(1996)]%
        {freund1996game}
\bibfield{author}{\bibinfo{person}{Yoav Freund} {and} \bibinfo{person}{Robert~E
  Schapire}.} \bibinfo{year}{1996}\natexlab{}.
\newblock \showarticletitle{Game theory, on-line prediction and boosting}. In
  \bibinfo{booktitle}{\emph{Proceedings of the ninth annual conference on
  Computational learning theory}}. \bibinfo{pages}{325--332}.
\newblock


\bibitem[Freund and Schapire(1997)]%
        {freund1997decision}
\bibfield{author}{\bibinfo{person}{Yoav Freund} {and} \bibinfo{person}{Robert~E
  Schapire}.} \bibinfo{year}{1997}\natexlab{}.
\newblock \showarticletitle{A decision-theoretic generalization of on-line
  learning and an application to boosting}.
\newblock \bibinfo{journal}{\emph{Journal of computer and system sciences}}
  \bibinfo{volume}{55}, \bibinfo{number}{1} (\bibinfo{year}{1997}),
  \bibinfo{pages}{119--139}.
\newblock


\bibitem[Georgila et~al\mbox{.}(2014)]%
        {georgila2014single}
\bibfield{author}{\bibinfo{person}{Kallirroi Georgila}, \bibinfo{person}{Claire
  Nelson}, {and} \bibinfo{person}{David Traum}.}
  \bibinfo{year}{2014}\natexlab{}.
\newblock \showarticletitle{Single-agent vs. multi-agent techniques for
  concurrent reinforcement learning of negotiation dialogue policies}. In
  \bibinfo{booktitle}{\emph{Proceedings of the 52nd Annual Meeting of the
  Association for Computational Linguistics (Volume 1: Long Papers)}}.
  \bibinfo{pages}{500--510}.
\newblock


\bibitem[Gintis(2014)]%
        {gintis2014bounds}
\bibfield{author}{\bibinfo{person}{Herbert Gintis}.}
  \bibinfo{year}{2014}\natexlab{}.
\newblock \bibinfo{booktitle}{\emph{The bounds of reason: Game theory and the
  unification of the behavioral sciences-revised edition}}.
\newblock \bibinfo{publisher}{Princeton university press}.
\newblock


\bibitem[Goodfellow et~al\mbox{.}(2020)]%
        {goodfellow2020generative}
\bibfield{author}{\bibinfo{person}{Ian Goodfellow}, \bibinfo{person}{Jean
  Pouget-Abadie}, \bibinfo{person}{Mehdi Mirza}, \bibinfo{person}{Bing Xu},
  \bibinfo{person}{David Warde-Farley}, \bibinfo{person}{Sherjil Ozair},
  \bibinfo{person}{Aaron Courville}, {and} \bibinfo{person}{Yoshua Bengio}.}
  \bibinfo{year}{2020}\natexlab{}.
\newblock \showarticletitle{Generative adversarial networks}.
\newblock \bibinfo{journal}{\emph{Commun. ACM}} \bibinfo{volume}{63},
  \bibinfo{number}{11} (\bibinfo{year}{2020}), \bibinfo{pages}{139--144}.
\newblock


\bibitem[Goodfellow et~al\mbox{.}(2014)]%
        {goodfellow2014generative}
\bibfield{author}{\bibinfo{person}{Ian~J. Goodfellow}, \bibinfo{person}{Jean
  Pouget-Abadie}, \bibinfo{person}{Mehdi Mirza}, \bibinfo{person}{Bing Xu},
  \bibinfo{person}{David Warde-Farley}, \bibinfo{person}{Sherjil Ozair},
  \bibinfo{person}{Aaron Courville}, {and} \bibinfo{person}{Yoshua Bengio}.}
  \bibinfo{year}{2014}\natexlab{}.
\newblock \showarticletitle{Generative Adversarial Nets}. In
  \bibinfo{booktitle}{\emph{Neural Information Processing Systems (NIPS)}}.
\newblock


\bibitem[Halpern and Pass(2012)]%
        {halpern2012iterated}
\bibfield{author}{\bibinfo{person}{Joseph~Y Halpern} {and}
  \bibinfo{person}{Rafael Pass}.} \bibinfo{year}{2012}\natexlab{}.
\newblock \showarticletitle{Iterated regret minimization: A new solution
  concept}.
\newblock \bibinfo{journal}{\emph{Games and Economic Behavior}}
  \bibinfo{volume}{74}, \bibinfo{number}{1} (\bibinfo{year}{2012}),
  \bibinfo{pages}{184--207}.
\newblock


\bibitem[Hardt and Rothblum(2010)]%
        {hardt2010multiplicative}
\bibfield{author}{\bibinfo{person}{Moritz Hardt} {and} \bibinfo{person}{Guy~N
  Rothblum}.} \bibinfo{year}{2010}\natexlab{}.
\newblock \showarticletitle{A multiplicative weights mechanism for
  privacy-preserving data analysis}. In \bibinfo{booktitle}{\emph{2010 IEEE
  51st annual symposium on foundations of computer science}}. IEEE,
  \bibinfo{pages}{61--70}.
\newblock


\bibitem[Harks and Klimm(2011)]%
        {harks2011demand}
\bibfield{author}{\bibinfo{person}{Tobias Harks} {and} \bibinfo{person}{Max
  Klimm}.} \bibinfo{year}{2011}\natexlab{}.
\newblock \showarticletitle{Demand allocation games: Integrating discrete and
  continuous strategy spaces}. In \bibinfo{booktitle}{\emph{International
  Workshop on Internet and Network Economics}}. Springer,
  \bibinfo{pages}{194--205}.
\newblock


\bibitem[Hart and Mas-Colell(2003)]%
        {hart2003uncoupled}
\bibfield{author}{\bibinfo{person}{Sergiu Hart} {and} \bibinfo{person}{Andreu
  Mas-Colell}.} \bibinfo{year}{2003}\natexlab{}.
\newblock \showarticletitle{Uncoupled dynamics do not lead to Nash
  equilibrium}.
\newblock \bibinfo{journal}{\emph{American Economic Review}}
  \bibinfo{volume}{93}, \bibinfo{number}{5} (\bibinfo{year}{2003}),
  \bibinfo{pages}{1830--1836}.
\newblock


\bibitem[Hazan et~al\mbox{.}(2016)]%
        {hazan2016introduction}
\bibfield{author}{\bibinfo{person}{Elad Hazan} {et~al\mbox{.}}}
  \bibinfo{year}{2016}\natexlab{}.
\newblock \showarticletitle{Introduction to online convex optimization}.
\newblock \bibinfo{journal}{\emph{Foundations and Trends{\textregistered} in
  Optimization}} \bibinfo{volume}{2}, \bibinfo{number}{3-4}
  (\bibinfo{year}{2016}), \bibinfo{pages}{157--325}.
\newblock


\bibitem[Hsu et~al\mbox{.}(2013)]%
        {hsu2013differential}
\bibfield{author}{\bibinfo{person}{Justin Hsu}, \bibinfo{person}{Aaron Roth},
  {and} \bibinfo{person}{Jonathan Ullman}.} \bibinfo{year}{2013}\natexlab{}.
\newblock \showarticletitle{Differential privacy for the analyst via private
  equilibrium computation}. In \bibinfo{booktitle}{\emph{Proceedings of the
  forty-fifth annual ACM symposium on Theory of computing}}.
  \bibinfo{pages}{341--350}.
\newblock


\bibitem[Katona et~al\mbox{.}(2024)]%
        {katona2024symplectic}
\bibfield{author}{\bibinfo{person}{Jonas Katona}, \bibinfo{person}{Xiuyuan
  Wang}, {and} \bibinfo{person}{Andre Wibisono}.}
  \bibinfo{year}{2024}\natexlab{}.
\newblock \showarticletitle{A Symplectic Analysis of Alternating Mirror
  Descent}.
\newblock \bibinfo{journal}{\emph{arXiv preprint arXiv:2405.03472}}
  (\bibinfo{year}{2024}).
\newblock


\bibitem[Keiding and Peleg(2000)]%
        {keiding2000correlated}
\bibfield{author}{\bibinfo{person}{Hans Keiding} {and} \bibinfo{person}{Bezalel
  Peleg}.} \bibinfo{year}{2000}\natexlab{}.
\newblock \showarticletitle{Correlated equilibria of games with many players}.
\newblock \bibinfo{journal}{\emph{International Journal of Game Theory}}
  \bibinfo{volume}{29} (\bibinfo{year}{2000}), \bibinfo{pages}{375--389}.
\newblock


\bibitem[Kelly et~al\mbox{.}(1998)]%
        {kelly1998rate}
\bibfield{author}{\bibinfo{person}{Frank~P Kelly}, \bibinfo{person}{Aman~K
  Maulloo}, {and} \bibinfo{person}{David Kim~Hong Tan}.}
  \bibinfo{year}{1998}\natexlab{}.
\newblock \showarticletitle{Rate control for communication networks: shadow
  prices, proportional fairness and stability}.
\newblock \bibinfo{journal}{\emph{Journal of the Operational Research society}}
  \bibinfo{volume}{49}, \bibinfo{number}{3} (\bibinfo{year}{1998}),
  \bibinfo{pages}{237--252}.
\newblock


\bibitem[Kiefer(1953)]%
        {kiefer1953sequential}
\bibfield{author}{\bibinfo{person}{Jack Kiefer}.}
  \bibinfo{year}{1953}\natexlab{}.
\newblock \showarticletitle{Sequential minimax search for a maximum}.
\newblock \bibinfo{journal}{\emph{Proceedings of the American mathematical
  society}} \bibinfo{volume}{4}, \bibinfo{number}{3} (\bibinfo{year}{1953}),
  \bibinfo{pages}{502--506}.
\newblock


\bibitem[Kleinberg et~al\mbox{.}(2009)]%
        {Kleinberg09multiplicativeupdates}
\bibfield{author}{\bibinfo{person}{Robert Kleinberg}, \bibinfo{person}{Georgios
  Piliouras}, {and} \bibinfo{person}{{\'E}va Tardos}.}
  \bibinfo{year}{2009}\natexlab{}.
\newblock \showarticletitle{Multiplicative Updates Outperform Generic No-Regret
  Learning in Congestion Games}. In \bibinfo{booktitle}{\emph{ACM Symposium on
  Theory of Computing (STOC)}}.
\newblock


\bibitem[Klivans and Servedio(1999)]%
        {klivans1999boosting}
\bibfield{author}{\bibinfo{person}{Adam~R Klivans} {and}
  \bibinfo{person}{Rocco~A Servedio}.} \bibinfo{year}{1999}\natexlab{}.
\newblock \showarticletitle{Boosting and hard-core sets}. In
  \bibinfo{booktitle}{\emph{40th Annual Symposium on Foundations of Computer
  Science (Cat. No. 99CB37039)}}. IEEE.
\newblock


\bibitem[Koller et~al\mbox{.}(1996)]%
        {koller1996efficient}
\bibfield{author}{\bibinfo{person}{Daphne Koller}, \bibinfo{person}{Nimrod
  Megiddo}, {and} \bibinfo{person}{Bernhard Von~Stengel}.}
  \bibinfo{year}{1996}\natexlab{}.
\newblock \showarticletitle{Efficient computation of equilibria for extensive
  two-person games}.
\newblock \bibinfo{journal}{\emph{Games and economic behavior}}
  \bibinfo{volume}{14}, \bibinfo{number}{2} (\bibinfo{year}{1996}),
  \bibinfo{pages}{247--259}.
\newblock


\bibitem[Legacci et~al\mbox{.}(2024)]%
        {legaccino}
\bibfield{author}{\bibinfo{person}{Davide Legacci}, \bibinfo{person}{Panayotis
  Mertikopoulos}, \bibinfo{person}{Christos Papadimitriou},
  \bibinfo{person}{Georgios Piliouras}, {and} \bibinfo{person}{Bary
  Pradelski}.} \bibinfo{year}{2024}\natexlab{}.
\newblock \showarticletitle{No-regret Learning in Harmonic Games: Extrapolation
  in the Face of Conflicting Interests}. In \bibinfo{booktitle}{\emph{The
  Thirty-eighth Annual Conference on Neural Information Processing Systems}}.
\newblock


\bibitem[Lewis(2008)]%
        {lewis2008convention}
\bibfield{author}{\bibinfo{person}{David Lewis}.}
  \bibinfo{year}{2008}\natexlab{}.
\newblock \bibinfo{booktitle}{\emph{Convention: A philosophical study}}.
\newblock \bibinfo{publisher}{John Wiley \& Sons}.
\newblock


\bibitem[Li and Yorke(1975)]%
        {LY}
\bibfield{author}{\bibinfo{person}{T.~Y. Li} {and} \bibinfo{person}{J.~A.
  Yorke}.} \bibinfo{year}{1975}\natexlab{}.
\newblock \showarticletitle{Period three implies chaos}.
\newblock \bibinfo{journal}{\emph{Amer. Math. Monthly}}  \bibinfo{volume}{82}
  (\bibinfo{year}{1975}), \bibinfo{pages}{985--992}.
\newblock


\bibitem[Littlestone(1988)]%
        {littlestone1988learning}
\bibfield{author}{\bibinfo{person}{Nick Littlestone}.}
  \bibinfo{year}{1988}\natexlab{}.
\newblock \showarticletitle{Learning quickly when irrelevant attributes abound:
  A new linear-threshold algorithm}.
\newblock \bibinfo{journal}{\emph{Machine learning}} \bibinfo{volume}{2},
  \bibinfo{number}{4} (\bibinfo{year}{1988}), \bibinfo{pages}{285--318}.
\newblock


\bibitem[Lugosi and Neu(2023)]%
        {lugosi2023online}
\bibfield{author}{\bibinfo{person}{G{\'a}bor Lugosi} {and}
  \bibinfo{person}{Gergely Neu}.} \bibinfo{year}{2023}\natexlab{}.
\newblock \showarticletitle{Online-to-PAC conversions: Generalization bounds
  via regret analysis}.
\newblock \bibinfo{journal}{\emph{arXiv preprint arXiv:2305.19674}}
  (\bibinfo{year}{2023}).
\newblock


\bibitem[Mertikopoulos et~al\mbox{.}(2018)]%
        {mertikopoulos2017cycles}
\bibfield{author}{\bibinfo{person}{Panayotis Mertikopoulos},
  \bibinfo{person}{Christos Papadimitriou}, {and} \bibinfo{person}{Georgios
  Piliouras}.} \bibinfo{year}{2018}\natexlab{}.
\newblock \showarticletitle{Cycles in adversarial regularized learning}. In
  \bibinfo{booktitle}{\emph{Proceedings of the Twenty-Ninth Annual ACM-SIAM
  Symposium on Discrete Algorithms}}. SIAM, \bibinfo{pages}{2703--2717}.
\newblock


\bibitem[Mertikopoulos and Zhou(2019)]%
        {mertikopoulos2019learning}
\bibfield{author}{\bibinfo{person}{Panayotis Mertikopoulos} {and}
  \bibinfo{person}{Zhengyuan Zhou}.} \bibinfo{year}{2019}\natexlab{}.
\newblock \showarticletitle{Learning in games with continuous action sets and
  unknown payoff functions}.
\newblock \bibinfo{journal}{\emph{Mathematical Programming}}
  \bibinfo{volume}{173}, \bibinfo{number}{1} (\bibinfo{year}{2019}),
  \bibinfo{pages}{465--507}.
\newblock


\bibitem[Milionis et~al\mbox{.}(2023)]%
        {milionis2023impossibility}
\bibfield{author}{\bibinfo{person}{Jason Milionis}, \bibinfo{person}{Christos
  Papadimitriou}, \bibinfo{person}{Georgios Piliouras}, {and}
  \bibinfo{person}{Kelly Spendlove}.} \bibinfo{year}{2023}\natexlab{}.
\newblock \showarticletitle{An impossibility theorem in game dynamics}.
\newblock \bibinfo{journal}{\emph{Proceedings of the National Academy of
  Sciences}} \bibinfo{volume}{120}, \bibinfo{number}{41}
  (\bibinfo{year}{2023}), \bibinfo{pages}{e2305349120}.
\newblock


\bibitem[Monnot and Piliouras(2017)]%
        {monnot2017limits}
\bibfield{author}{\bibinfo{person}{Barnab{\'e} Monnot} {and}
  \bibinfo{person}{Georgios Piliouras}.} \bibinfo{year}{2017}\natexlab{}.
\newblock \showarticletitle{Limits and limitations of no-regret learning in
  games}.
\newblock \bibinfo{journal}{\emph{The Knowledge Engineering Review}}
  \bibinfo{volume}{32} (\bibinfo{year}{2017}), \bibinfo{pages}{e21}.
\newblock


\bibitem[Morav{\v{c}}{\'\i}k et~al\mbox{.}(2017)]%
        {moravvcik2017deepstack}
\bibfield{author}{\bibinfo{person}{Matej Morav{\v{c}}{\'\i}k},
  \bibinfo{person}{Martin Schmid}, \bibinfo{person}{Neil Burch},
  \bibinfo{person}{Viliam Lis{\`y}}, \bibinfo{person}{Dustin Morrill},
  \bibinfo{person}{Nolan Bard}, \bibinfo{person}{Trevor Davis},
  \bibinfo{person}{Kevin Waugh}, \bibinfo{person}{Michael Johanson}, {and}
  \bibinfo{person}{Michael Bowling}.} \bibinfo{year}{2017}\natexlab{}.
\newblock \showarticletitle{Deepstack: Expert-level artificial intelligence in
  heads-up no-limit poker}.
\newblock \bibinfo{journal}{\emph{Science}} \bibinfo{volume}{356},
  \bibinfo{number}{6337} (\bibinfo{year}{2017}), \bibinfo{pages}{508--513}.
\newblock


\bibitem[Nash~Jr(1950)]%
        {nash1950equilibrium}
\bibfield{author}{\bibinfo{person}{John~F Nash~Jr}.}
  \bibinfo{year}{1950}\natexlab{}.
\newblock \showarticletitle{Equilibrium points in n-person games}.
\newblock \bibinfo{journal}{\emph{Proceedings of the national academy of
  sciences}} \bibinfo{volume}{36}, \bibinfo{number}{1} (\bibinfo{year}{1950}),
  \bibinfo{pages}{48--49}.
\newblock


\bibitem[Nemirovski(2004)]%
        {nemirovski2004prox}
\bibfield{author}{\bibinfo{person}{Arkadi Nemirovski}.}
  \bibinfo{year}{2004}\natexlab{}.
\newblock \showarticletitle{Prox-method with rate of convergence O (1/t) for
  variational inequalities with Lipschitz continuous monotone operators and
  smooth convex-concave saddle point problems}.
\newblock \bibinfo{journal}{\emph{SIAM Journal on Optimization}}
  \bibinfo{volume}{15}, \bibinfo{number}{1} (\bibinfo{year}{2004}),
  \bibinfo{pages}{229--251}.
\newblock


\bibitem[Nesterov and Nemirovskii(1994)]%
        {nesterov1994interior}
\bibfield{author}{\bibinfo{person}{Yurii Nesterov} {and}
  \bibinfo{person}{Arkadii Nemirovskii}.} \bibinfo{year}{1994}\natexlab{}.
\newblock \bibinfo{booktitle}{\emph{Interior-point polynomial algorithms in
  convex programming}}.
\newblock \bibinfo{publisher}{SIAM}.
\newblock


\bibitem[Orabona(2019)]%
        {orabona2019modern}
\bibfield{author}{\bibinfo{person}{Francesco Orabona}.}
  \bibinfo{year}{2019}\natexlab{}.
\newblock \showarticletitle{A modern introduction to online learning}.
\newblock \bibinfo{journal}{\emph{arXiv preprint arXiv:1912.13213}}
  (\bibinfo{year}{2019}).
\newblock


\bibitem[Papadimitriou and Roughgarden(2008)]%
        {papadimitriou2008computing}
\bibfield{author}{\bibinfo{person}{Christos~H Papadimitriou} {and}
  \bibinfo{person}{Tim Roughgarden}.} \bibinfo{year}{2008}\natexlab{}.
\newblock \showarticletitle{Computing correlated equilibria in multi-player
  games}.
\newblock \bibinfo{journal}{\emph{Journal of the ACM (JACM)}}
  \bibinfo{volume}{55}, \bibinfo{number}{3} (\bibinfo{year}{2008}),
  \bibinfo{pages}{1--29}.
\newblock


\bibitem[Perko(2013)]%
        {perko2013differential}
\bibfield{author}{\bibinfo{person}{Lawrence Perko}.}
  \bibinfo{year}{2013}\natexlab{}.
\newblock \bibinfo{booktitle}{\emph{Differential equations and dynamical
  systems}}. Vol.~\bibinfo{volume}{7}.
\newblock \bibinfo{publisher}{Springer Science \& Business Media}.
\newblock


\bibitem[Piliouras and Shamma(2014)]%
        {piliouras2014optimization}
\bibfield{author}{\bibinfo{person}{Georgios Piliouras} {and}
  \bibinfo{person}{Jeff~S Shamma}.} \bibinfo{year}{2014}\natexlab{}.
\newblock \showarticletitle{Optimization despite chaos: Convex relaxations to
  complex limit sets via Poincar{\'e} recurrence}. In
  \bibinfo{booktitle}{\emph{Proceedings of the twenty-fifth annual ACM-SIAM
  symposium on Discrete algorithms}}. SIAM, \bibinfo{pages}{861--873}.
\newblock


\bibitem[Piliouras et~al\mbox{.}(2022)]%
        {piliouras2022beyond}
\bibfield{author}{\bibinfo{person}{Georgios Piliouras}, \bibinfo{person}{Ryann
  Sim}, {and} \bibinfo{person}{Stratis Skoulakis}.}
  \bibinfo{year}{2022}\natexlab{}.
\newblock \showarticletitle{Beyond time-average convergence: Near-optimal
  uncoupled online learning via clairvoyant multiplicative weights update}. In
  \bibinfo{booktitle}{\emph{Neural Information Processing Systems (NeurIPS)}}.
\newblock


\bibitem[Piliouras and Yu(2023)]%
        {piliouras2023multi}
\bibfield{author}{\bibinfo{person}{Georgios Piliouras} {and}
  \bibinfo{person}{Fang-Yi Yu}.} \bibinfo{year}{2023}\natexlab{}.
\newblock \showarticletitle{Multi-agent performative prediction: From global
  stability and optimality to chaos}. In \bibinfo{booktitle}{\emph{Proceedings
  of the 24th ACM Conference on Economics and Computation}}.
  \bibinfo{pages}{1047--1074}.
\newblock


\bibitem[Plotkin et~al\mbox{.}(1995)]%
        {plotkin1995fast}
\bibfield{author}{\bibinfo{person}{Serge~A Plotkin}, \bibinfo{person}{David~B
  Shmoys}, {and} \bibinfo{person}{{\'E}va Tardos}.}
  \bibinfo{year}{1995}\natexlab{}.
\newblock \showarticletitle{Fast approximation algorithms for fractional
  packing and covering problems}.
\newblock \bibinfo{journal}{\emph{Mathematics of Operations Research}}
  \bibinfo{volume}{20}, \bibinfo{number}{2} (\bibinfo{year}{1995}),
  \bibinfo{pages}{257--301}.
\newblock


\bibitem[Rakhlin and Sridharan(2013a)]%
        {rakhlin2013online}
\bibfield{author}{\bibinfo{person}{Alexander Rakhlin} {and}
  \bibinfo{person}{Karthik Sridharan}.} \bibinfo{year}{2013}\natexlab{a}.
\newblock \showarticletitle{Online learning with predictable sequences}. In
  \bibinfo{booktitle}{\emph{Conference on Learning Theory}}. PMLR,
  \bibinfo{pages}{993--1019}.
\newblock


\bibitem[Rakhlin and Sridharan(2013b)]%
        {rakhlin2013optimization}
\bibfield{author}{\bibinfo{person}{Sasha Rakhlin} {and}
  \bibinfo{person}{Karthik Sridharan}.} \bibinfo{year}{2013}\natexlab{b}.
\newblock \showarticletitle{Optimization, learning, and games with predictable
  sequences}.
\newblock \bibinfo{journal}{\emph{Advances in Neural Information Processing
  Systems}}  \bibinfo{volume}{26} (\bibinfo{year}{2013}).
\newblock


\bibitem[Ratcliffe et~al\mbox{.}(2019)]%
        {ratcliffe2019win}
\bibfield{author}{\bibinfo{person}{Dino~Stephen Ratcliffe},
  \bibinfo{person}{Katja Hofmann}, {and} \bibinfo{person}{Sam Devlin}.}
  \bibinfo{year}{2019}\natexlab{}.
\newblock \showarticletitle{Win or learn fast proximal policy optimisation}. In
  \bibinfo{booktitle}{\emph{2019 IEEE Conference on Games (CoG)}}. IEEE,
  \bibinfo{pages}{1--4}.
\newblock


\bibitem[Rawls(1971)]%
        {rawls1971atheory}
\bibfield{author}{\bibinfo{person}{John Rawls}.}
  \bibinfo{year}{1971}\natexlab{}.
\newblock \showarticletitle{Atheory of justice}.
\newblock \bibinfo{journal}{\emph{Cambridge (Mass.)}} (\bibinfo{year}{1971}).
\newblock


\bibitem[Reeb and Wolf(2015)]%
        {reeb2015tight}
\bibfield{author}{\bibinfo{person}{David Reeb} {and} \bibinfo{person}{Michael~M
  Wolf}.} \bibinfo{year}{2015}\natexlab{}.
\newblock \showarticletitle{Tight bound on relative entropy by entropy
  difference}.
\newblock \bibinfo{journal}{\emph{IEEE Transactions on Information Theory}}
  \bibinfo{volume}{61}, \bibinfo{number}{3} (\bibinfo{year}{2015}),
  \bibinfo{pages}{1458--1473}.
\newblock


\bibitem[Robbins(1952)]%
        {robbins1952some}
\bibfield{author}{\bibinfo{person}{Herbert Robbins}.}
  \bibinfo{year}{1952}\natexlab{}.
\newblock \showarticletitle{Some aspects of the sequential design of
  experiments}.
\newblock  (\bibinfo{year}{1952}).
\newblock


\bibitem[Robinson(1951)]%
        {robinson1951iterative}
\bibfield{author}{\bibinfo{person}{Julia Robinson}.}
  \bibinfo{year}{1951}\natexlab{}.
\newblock \showarticletitle{An iterative method of solving a game}.
\newblock \bibinfo{journal}{\emph{Annals of mathematics}} \bibinfo{volume}{54},
  \bibinfo{number}{2} (\bibinfo{year}{1951}), \bibinfo{pages}{296--301}.
\newblock


\bibitem[Romanovskii(1962)]%
        {romanovskii1962reduction}
\bibfield{author}{\bibinfo{person}{I Romanovskii}.}
  \bibinfo{year}{1962}\natexlab{}.
\newblock \showarticletitle{Reduction of a game with complete memory to a
  matrix game}.
\newblock \bibinfo{journal}{\emph{Soviet Mathematics}}  \bibinfo{volume}{3}
  (\bibinfo{year}{1962}), \bibinfo{pages}{678--681}.
\newblock


\bibitem[Roughgarden(2015)]%
        {roughgarden2015intrinsic}
\bibfield{author}{\bibinfo{person}{Tim Roughgarden}.}
  \bibinfo{year}{2015}\natexlab{}.
\newblock \showarticletitle{Intrinsic robustness of the price of anarchy}.
\newblock \bibinfo{journal}{\emph{Journal of the ACM (JACM)}}
  \bibinfo{volume}{62}, \bibinfo{number}{5} (\bibinfo{year}{2015}),
  \bibinfo{pages}{1--42}.
\newblock


\bibitem[Roughgarden and Schoppmann(2015)]%
        {roughgarden2015local}
\bibfield{author}{\bibinfo{person}{Tim Roughgarden} {and}
  \bibinfo{person}{Florian Schoppmann}.} \bibinfo{year}{2015}\natexlab{}.
\newblock \showarticletitle{Local smoothness and the price of anarchy in
  splittable congestion games}.
\newblock \bibinfo{journal}{\emph{Journal of Economic Theory}}
  \bibinfo{volume}{156} (\bibinfo{year}{2015}), \bibinfo{pages}{317--342}.
\newblock


\bibitem[Roughgarden et~al\mbox{.}(2017)]%
        {roughgarden2017price}
\bibfield{author}{\bibinfo{person}{Tim Roughgarden}, \bibinfo{person}{Vasilis
  Syrgkanis}, {and} \bibinfo{person}{Eva Tardos}.}
  \bibinfo{year}{2017}\natexlab{}.
\newblock \showarticletitle{The price of anarchy in auctions}.
\newblock \bibinfo{journal}{\emph{Journal of Artificial Intelligence Research}}
   \bibinfo{volume}{59} (\bibinfo{year}{2017}), \bibinfo{pages}{59--101}.
\newblock


\bibitem[Scutari et~al\mbox{.}(2008)]%
        {scutari2008optimal}
\bibfield{author}{\bibinfo{person}{Gesualdo Scutari}, \bibinfo{person}{Daniel~P
  Palomar}, {and} \bibinfo{person}{Sergio Barbarossa}.}
  \bibinfo{year}{2008}\natexlab{}.
\newblock \showarticletitle{Optimal linear precoding strategies for wideband
  noncooperative systems based on game theory—Part I: Nash equilibria}.
\newblock \bibinfo{journal}{\emph{IEEE Transactions on Signal Processing}}
  \bibinfo{volume}{56}, \bibinfo{number}{3} (\bibinfo{year}{2008}),
  \bibinfo{pages}{1230--1249}.
\newblock


\bibitem[Shalev-Shwartz(2007)]%
        {shalev2007online}
\bibfield{author}{\bibinfo{person}{Shai Shalev-Shwartz}.}
  \bibinfo{year}{2007}\natexlab{}.
\newblock \bibinfo{booktitle}{\emph{Online learning: Theory, algorithms, and
  applications}}.
\newblock \bibinfo{publisher}{Hebrew University}.
\newblock


\bibitem[Shoham and Leyton-Brown(2008)]%
        {shoham2008multiagent}
\bibfield{author}{\bibinfo{person}{Yoav Shoham} {and} \bibinfo{person}{Kevin
  Leyton-Brown}.} \bibinfo{year}{2008}\natexlab{}.
\newblock \bibinfo{booktitle}{\emph{Multiagent systems: Algorithmic,
  game-theoretic, and logical foundations}}.
\newblock \bibinfo{publisher}{Cambridge University Press}.
\newblock


\bibitem[Silver et~al\mbox{.}(2017)]%
        {silver2017mastering}
\bibfield{author}{\bibinfo{person}{David Silver}, \bibinfo{person}{Julian
  Schrittwieser}, \bibinfo{person}{Karen Simonyan}, \bibinfo{person}{Ioannis
  Antonoglou}, \bibinfo{person}{Aja Huang}, \bibinfo{person}{Arthur Guez},
  \bibinfo{person}{Thomas Hubert}, \bibinfo{person}{Lucas Baker},
  \bibinfo{person}{Matthew Lai}, \bibinfo{person}{Adrian Bolton},
  {et~al\mbox{.}}} \bibinfo{year}{2017}\natexlab{}.
\newblock \showarticletitle{Mastering the game of go without human knowledge}.
\newblock \bibinfo{journal}{\emph{nature}} \bibinfo{volume}{550},
  \bibinfo{number}{7676} (\bibinfo{year}{2017}), \bibinfo{pages}{354--359}.
\newblock


\bibitem[Soleymani et~al\mbox{.}(2025a)]%
        {soleymani2025cautious}
\bibfield{author}{\bibinfo{person}{Ashkan Soleymani}, \bibinfo{person}{Georgios
  Piliouras}, {and} \bibinfo{person}{Gabriele Farina}.}
  \bibinfo{year}{2025}\natexlab{a}.
\newblock \showarticletitle{Cautious Optimism: A Meta-Algorithm for
  Near-Constant Regret in General Games}. In
  \bibinfo{booktitle}{\emph{Proceedings of the 26th ACM Conference on Economics
  and Computation}}. \bibinfo{pages}{870--870}.
\newblock


\bibitem[Soleymani et~al\mbox{.}(2025b)]%
        {Soleymani25:Faster}
\bibfield{author}{\bibinfo{person}{Ashkan Soleymani}, \bibinfo{person}{Georgios
  Piliouras}, {and} \bibinfo{person}{Gabriele Farina}.}
  \bibinfo{year}{2025}\natexlab{b}.
\newblock \showarticletitle{Faster Rates for No-Regret Learning in General
  Games via Cautious Optimism}. In \bibinfo{booktitle}{\emph{Proceedings of the
  57th Annual ACM Symposium on Theory of Computing}}.
  \bibinfo{pages}{518--529}.
\newblock


\bibitem[Stein et~al\mbox{.}(2011)]%
        {stein2011correlated}
\bibfield{author}{\bibinfo{person}{Noah~D Stein}, \bibinfo{person}{Pablo~A
  Parrilo}, {and} \bibinfo{person}{Asuman Ozdaglar}.}
  \bibinfo{year}{2011}\natexlab{}.
\newblock \showarticletitle{Correlated equilibria in continuous games:
  Characterization and computation}.
\newblock \bibinfo{journal}{\emph{Games and Economic Behavior}}
  \bibinfo{volume}{71}, \bibinfo{number}{2} (\bibinfo{year}{2011}),
  \bibinfo{pages}{436--455}.
\newblock


\bibitem[Stoltz and Lugosi(2007)]%
        {stoltz2007learning}
\bibfield{author}{\bibinfo{person}{Gilles Stoltz} {and}
  \bibinfo{person}{G{\'a}bor Lugosi}.} \bibinfo{year}{2007}\natexlab{}.
\newblock \showarticletitle{Learning correlated equilibria in games with
  compact sets of strategies}.
\newblock \bibinfo{journal}{\emph{Games and Economic Behavior}}
  \bibinfo{volume}{59}, \bibinfo{number}{1} (\bibinfo{year}{2007}),
  \bibinfo{pages}{187--208}.
\newblock


\bibitem[Syrgkanis et~al\mbox{.}(2015)]%
        {Syrgkanis15}
\bibfield{author}{\bibinfo{person}{Vasilis Syrgkanis}, \bibinfo{person}{Alekh
  Agarwal}, \bibinfo{person}{Haipeng Luo}, {and} \bibinfo{person}{Robert~E
  Schapire}.} \bibinfo{year}{2015}\natexlab{}.
\newblock \showarticletitle{Fast convergence of regularized learning in games}.
  In \bibinfo{booktitle}{\emph{Neural Information Processing Systems (NIPS)}}.
\newblock


\bibitem[Takimoto and Warmuth(2003)]%
        {takimoto2003path}
\bibfield{author}{\bibinfo{person}{Eiji Takimoto} {and}
  \bibinfo{person}{Manfred~K Warmuth}.} \bibinfo{year}{2003}\natexlab{}.
\newblock \showarticletitle{Path kernels and multiplicative updates}.
\newblock \bibinfo{journal}{\emph{The Journal of Machine Learning Research}}
  \bibinfo{volume}{4} (\bibinfo{year}{2003}), \bibinfo{pages}{773--818}.
\newblock


\bibitem[Vlatakis-Gkaragkounis et~al\mbox{.}(2023)]%
        {vlatakis2023chaos}
\bibfield{author}{\bibinfo{person}{Emmanouil-Vasileios Vlatakis-Gkaragkounis},
  \bibinfo{person}{Lampros Flokas}, {and} \bibinfo{person}{Georgios
  Piliouras}.} \bibinfo{year}{2023}\natexlab{}.
\newblock \showarticletitle{Chaos persists in large-scale multi-agent learning
  despite adaptive learning rates}.
\newblock \bibinfo{journal}{\emph{arXiv preprint arXiv:2306.01032}}
  (\bibinfo{year}{2023}).
\newblock


\bibitem[Von~Neumann and Morgenstern(1947)]%
        {von1947theory}
\bibfield{author}{\bibinfo{person}{John Von~Neumann} {and}
  \bibinfo{person}{Oskar Morgenstern}.} \bibinfo{year}{1947}\natexlab{}.
\newblock \showarticletitle{Theory of games and economic behavior, 2nd rev}.
\newblock  (\bibinfo{year}{1947}).
\newblock


\bibitem[Weibull(1997)]%
        {weibull1997evolutionary}
\bibfield{author}{\bibinfo{person}{J{\"o}rgen~W Weibull}.}
  \bibinfo{year}{1997}\natexlab{}.
\newblock \bibinfo{booktitle}{\emph{Evolutionary game theory}}.
\newblock \bibinfo{publisher}{MIT press}.
\newblock


\bibitem[Wibisono et~al\mbox{.}(2022)]%
        {wibisono2022alternating}
\bibfield{author}{\bibinfo{person}{Andre Wibisono}, \bibinfo{person}{Molei
  Tao}, {and} \bibinfo{person}{Georgios Piliouras}.}
  \bibinfo{year}{2022}\natexlab{}.
\newblock \showarticletitle{Alternating mirror descent for constrained min-max
  games}.
\newblock \bibinfo{journal}{\emph{Advances in Neural Information Processing
  Systems}}  \bibinfo{volume}{35} (\bibinfo{year}{2022}),
  \bibinfo{pages}{35201--35212}.
\newblock


\bibitem[Xi et~al\mbox{.}(2015)]%
        {xi2015novel}
\bibfield{author}{\bibinfo{person}{Lei Xi}, \bibinfo{person}{Tao Yu},
  \bibinfo{person}{Bo Yang}, {and} \bibinfo{person}{Xiaoshun Zhang}.}
  \bibinfo{year}{2015}\natexlab{}.
\newblock \showarticletitle{A novel multi-agent decentralized win or learn fast
  policy hill-climbing with eligibility trace algorithm for smart generation
  control of interconnected complex power grids}.
\newblock \bibinfo{journal}{\emph{Energy Conversion and Management}}
  \bibinfo{volume}{103} (\bibinfo{year}{2015}), \bibinfo{pages}{82--93}.
\newblock


\bibitem[Young(2009)]%
        {young2009learning}
\bibfield{author}{\bibinfo{person}{H~Peyton Young}.}
  \bibinfo{year}{2009}\natexlab{}.
\newblock \showarticletitle{Learning by trial and error}.
\newblock \bibinfo{journal}{\emph{Games and economic behavior}}
  \bibinfo{volume}{65}, \bibinfo{number}{2} (\bibinfo{year}{2009}),
  \bibinfo{pages}{626--643}.
\newblock


\bibitem[Zhu et~al\mbox{.}(2021)]%
        {zhu2021analysis}
\bibfield{author}{\bibinfo{person}{Ziqing Zhu}, \bibinfo{person}{Ka~Wing Chan},
  \bibinfo{person}{Siqi Bu}, \bibinfo{person}{Siu~Wing Or},
  \bibinfo{person}{Xiang Gao}, {and} \bibinfo{person}{Shiwei Xia}.}
  \bibinfo{year}{2021}\natexlab{}.
\newblock \showarticletitle{Analysis of evolutionary dynamics for bidding
  strategy driven by multi-agent reinforcement learning}.
\newblock \bibinfo{journal}{\emph{IEEE transactions on power systems}}
  \bibinfo{volume}{36}, \bibinfo{number}{6} (\bibinfo{year}{2021}),
  \bibinfo{pages}{5975--5978}.
\newblock


\end{thebibliography}

\appendix
\newpage

\phantomsection
\addcontentsline{toc}{section}{Appendix}

\vspace{1.5cm}
\begin{center}
  {\LARGE\bfseries Appendix}
\end{center}
\vspace{0.5cm}

\section{Background} \label{sec:background}

\paragraph{Notation. }
We use lowercase letters (e.g., $x \in \mathbb{R}$) to represent scalar values and uppercase boldface letters (e.g., $\vx \in \mathbb{R}^d$) to denote vectors. The set $\{1, 2, \dots, n\}$ is abbreviated as $[n]$ for any natural number $n \in \mathbb{N}$. For a vector $\vx \in \mathbb{R}^d$, the $\ind$-th coordinate, where $\ind \in [d]$, is expressed as $\vx[\ind]$. We define $\vec{0}_d \in \mathbb{R}^d$ and $\vec{1}_d \in \mathbb{R}^d$ to represent the $d$-dimensional vectors of all zeros and all ones, respectively. Calligraphic uppercase letters (e.g., $\mathcal{A}$) are used to denote sets. For a finite set $\mathcal{A}$ of size $d = |\mathcal{A}|$, we denote the set of probability distributions over $\mathcal{A}$ by $\Delta(\mathcal{A}) \defeq \Delta^d$. We use subscript $\circ$ to represent the interior of the set, e.g., $\Delta^d_\circ$ is the interior of the simplex. We denote the vector $\vx$ associated with the $i$-th player by $\vx_i$, and use superscripts, such as $\vx_i^t$, to indicate dependence on timesteps. We denote the natural logarithm by $\log$. We use $\mathcal{X}$ and $\Delta^d$ interchangeably. Additionally, we define $\Omega \defeq (0, 1] \mathcal{X}$.

Given a convex function $\psi: \mathcal{X} \rightarrow \mathbb{R}$, we denote its convex conjugate by $\psi^*$, and further we define,
\[
\psi_{\mathcal{X}}^*(\vec g) \defeq \sup_{x \in \mathcal{X}} \langle \vec {g}, \vx \rangle  - \psi(\vx). \numberthis{eq:def_psi_x}
\]
$\psi_{\mathcal{X}}^*$ is the conjugate function of $\psi$ restricted to the convex set $\mathcal{X}$. Additionally, we consider the Bregman divergence $D_\psi(. \;\|\; .)$ generated by $\psi$,
\[
D_\psi(\vx^\prime \;\|\; \vx) = \psi(\vx^\prime) - \psi(\vx) - \langle \nabla \psi(\vx), \vx^\prime - \vx \rangle.
\]
\paragraph{Finite Games.} We study general-sum games involving $n$ players, indexed by the set $[n]$. Each player $i \in [n]$ is associated with a finite, nonempty set of pure strategies, denoted by $\mathcal{A}_i$. The mixed strategy space for player $i$ is defined as $\mathcal{X}_i = \Delta(\mathcal{A}_i)$, the set of probability distributions over $\mathcal{A}_i$. Collectively, the joint action space of all players is represented as $\bigtimes_{j=1}^n \mathcal{A}_j$. The game's utility structure is specified by functions $\mathcal{U}_i: \bigtimes_{j=1}^n \mathcal{A}_j \to \mathbb{R}$ for each player $i \in [n]$, where $\mathcal{U}_i$ assigns a real-valued payoff to every combination of pure strategies. When players adopt mixed strategies $\vx = (\vx_1, \vx_2, \dots, \vx_n) \in \bigtimes_{j=1}^n \mathcal{X}_j$, the expected utility for player $i$ is given by
\[
\nut_i(\vx) \coloneqq \mathbb{E}_{\vec{s} \sim \vx}[\mathcal{U}_i(\vec{s})] = \sum_{s \in \bigtimes_{j=1}^n \mathcal{A}_j} \vx(\vec{s}) \mathcal{U}_i(\vec{s}) = \sum_{\xi_i \in \mathcal{A}_i} \vx_i(\xi_i)  \mathbb{E}_{\vec{s}_{-i} \sim \vx_{-i}} [\mathcal{U}_i(\xi_i, \vec{s}_{-i}) ] ,
\]
which can also be expressed as $\nut_i(\vx) = \langle \vx_i, \nabla_{\vx_i} \nut_i(\vx) \rangle$, where
\[
\nabla_{\vx_i} \nut_i(\vx) = ( \mathbb{E}_{\vec{s}_{-i} \sim \vx_{-i}} [\mathcal{U}_i(\xi_1, \vec{s}_{-i}) ] \ldots , \mathbb{E}_{\vec{s}_{-i} \sim \vx_{-i}} [\mathcal{U}_i(\xi_i, \vec{s}_{-i}) ], \ldots, \mathbb{E}_{\vec{s}_{-i} \sim \vx_{-i}} [\mathcal{U}_i(\xi_{|\mathcal{A}_i|}, \vec{s}_{-i}) ] ),
\]
represents the gradient of $\nut_i$ with respect to $\vx_i$.

To streamline analysis, we let $d \coloneqq \max_{i \in [n]} |\mathcal{A}_i|$ denote the largest number of pure strategies available to any player. We further assume that $|\mathcal{A}_i| \geq 2$ for all $i \in [n]$, as players with a single strategy can be trivially excluded from the game without loss of generality.

Following the literature \citep{Syrgkanis15,daskalakis2021near,piliouras2022beyond,farina2022near,anagnostides2024interplay,legaccino,Soleymani25:Faster}, we make the following standard assumption on the boundedness of the utilities of the game.

\begin{assumption} \label{assumption:bounded}
    For every player $i$ of the game, the utilities are bounded,
    \[
    \max_{s \in {\bigtimes_{i=1}^n \mathcal{A}_i}} |\mathcal{U}_j(s)| \leq 1.
    \]
    and they are $L$-smooth, i.e.,  for every two joint strategy profiles $\vx, \vx^\prime \in \bigtimes_{j = 1}^{n} \Delta (\mathcal{A}_j)$,
    \[
    \| \nabla_{\vx_j} \nut_j (\vx) - \nabla_{\vx_j} \nut_j (\vec x^\prime) \|_{\infty} \leq L \sum\limits_{i \in [n]} \|\vx_i - {\vec x^\prime_i} \|_1.
    \]
\end{assumption}

The assumption of bounded utilities is quite general and does not impose significant restrictions, as any utility function with bounded values can be rescaled to satisfy this condition without altering the problem's essence. Under this boundedness condition, it can be shown that the game exhibits $L$-smoothness with $L = 1$. However, the actual smoothness parameter $L$ often depends on the underlying structure of the game and can be much smaller. For example, in games characterized by vanishing sensitivity $\epsilon_n$, the smoothness parameter reduces to $L = \epsilon_n$, where $\epsilon_n \ll 1$ \citep{anagnostides2024interplay}. By distinguishing $L$ from the boundedness condition, we account for structural nuances in the game that may enable improved convergence rates.

\paragraph{No-regret Learning.} In the \emph{online learning framework}, an agent interacts with the environment by choosing a strategy $\vx^t \in \mathcal{X} = \Delta^d$ at each timestep $t \in \mathbb{N}$. We consider the \emph{full information} setting, where the agent receives feedback in the form of a \emph{linear utility function}, given by $\vx \mapsto \langle \vx, \nu^t \rangle$, where $\nu^t \in \mathbb{R}^d$ denotes the utility vector at time $t$. The agent's performance over a horizon of $T$ rounds is quantified using the notion of \emph{external regret}, defined as:
\begin{equation}
    \label{eq:linreg}
    \reg^T \coloneqq \max_{\vx^* \in \mathcal{X}} \left\{ \sum_{t=1}^T \langle \nut^t, \vx^* \rangle \right\} - \sum_{t=1}^T \langle \nut^t, \vx^t \rangle.
\end{equation}
This metric captures the difference between the cumulative utility obtained by the agent and the best \emph{fixed strategy} in hindsight. It is important to note that the regret can be negative. The goal is to minimize regret as a function of the time horizon $T$. In adversarial settings, established algorithms such as Online Mirror Descent (OMD) and Follow-the-Regularized-Leader (FTRL) achieve the optimal regret rate of $\reg^T = \Theta(\sqrt{T})$~\citep{hazan2016introduction,orabona2019modern}. 

However, in \emph{self-play} dynamics within games, which is the focus of this work, regret minimization can exhibit faster rates due to the structured nature of the interactions. Unlike adversarial scenarios, where the utility vectors are independent of the agent's actions, in games, each player's utility vector evolves as a function of the joint strategy profile. Specifically, at timestep $t$, the utility vector for player $i \in [n]$ is given by $\nu^t = \nabla_{\vx_i} \nu_i(\vx^t)$, where $\vx^t = (\vx_1^t, \vx_2^t, \dots, \vx_n^t)$ denotes the collective strategy of all players.

\emph{Optimistic} predictive no-regret learning algorithms, such as Optimistic Online Mirror Descent and Optimistic Follow-the-Regularized-Leader, form a key class of no-regret algorithms with fast convergence properties in game settings. Originally introduced by \citet{rakhlin2013online}, these algorithms gained significant attention following the groundbreaking work of \citet{Syrgkanis15}, who demonstrated their faster convergence by proving that their regret satisfies the \emph{Regret bounded by Variation in Utilities (RVU)} property. For completeness, we restate this property below.

\begin{definition}[RVU property~\citep{Syrgkanis15}] \label{def:rvu}
    A no-regret learning algorithm is said to satisfy the Regret bounded by Variation in Utilities (RVU) property if its regret over any sequence of utilities $\{\nut\^t\}_{t=1}^T$ is bounded as:
    \[
        \reg\^T \leq a + b \sum_{t=1}^{T} \| \nut\^{t} - \nut\^{t-1} \|_{*}^2 - c \sum_{t=1}^{T} \| \vx\^{t+1} - \vx\^{t}  \|^2,
    \]
    where $a \geq 0$, $0 < b \leq c$, and $a$, $b$, and $c$ are algorithm-dependent parameters. The bounds are defined in terms of a pair of dual norms $(\| .\|, \| . \|_{*})$, which quantify variations in utilities and actions, respectively.
\end{definition}

In line with \Cref{assumption:bounded}, usually the pair of dual norms $(\ell_1, \ell_\infty)$ is considered for the RVU property in \Cref{def:rvu}.

\paragraph{Coarse Correlated Equilibrium.} A probability distribution $\sigma \in \Delta\left(\bigtimes_{j=1}^n \mathcal{A}_j\right)$ over the joint action space $\bigtimes_{j=1}^n \mathcal{A}_j$ is called an $\epsilon$-\emph{approximate coarse correlated equilibrium (CCE)} if, for every player $i \in [n]$ and every deterministic strategy $s_i' \in \mathcal{A}_i$, unilateral deviations do not increase the player's expected utility in the game. In terms,
\[
\mathbb{E}_{\mathbf{s} \sim \sigma}\left[\mathcal{U}_i(\mathbf{s})\right] \geq \mathbb{E}_{\mathbf{s} \sim \sigma}\left[\mathcal{U}_i(s_i', \mathbf{s}_{-i})\right] - \epsilon,
\]
where $\mathbf{s} = (s_1, s_2, \ldots, s_n)$ represents the joint action drawn from $\sigma$, and $\mathbf{s}_{-i}$ denotes the actions of all players except player $i$.

A well-known result in game theory connects no-regret learning algorithms to coarse correlated equilibria (CCE) of a game. Specifically, when the players in a game follow no-regret learning algorithms with regrets $\reg_i\^T$ for all $i \in [n]$, the empirical play of the game, $\sigma = \frac{1}{T} \sum_{t=1}^{T} \vx^t$, forms a $\left(\frac{1}{T} \max_{i \in [n]} \reg_i\^T\right)$-approximate CCE of the game. Consequently, achieving faster rates of no-regret learning in self-play directly translates to faster convergence to the CCE of the game.

\section{Details and Proofs for \Cref{sec:intr_lips}} \label{app:proofs_intr_lips}

\begin{restatedefinition}{def:local_intrin_lips}
    Let $\psi: \mathcal{X} \rightarrow \mathbb{R}$ be an arbitrary convex regularizer for the simplex, we call $\psi$, $\gamma$-locally intrinsically Lipschitz ($\gamma$-LIL) if,
    \[
    \mleft| \psi(\vx') - \psi(\vx) \mright|^2 \leq \gamma (\epsilon) D_{\psi} (\vx'\; \| \; \vx), \numberthis{eq:local_int} 
    \]
    for all $\vy^\prime, \vy \in (0, 1]\mathcal{X}$ that 
    \[
    D_{\phi} (\vy^\prime \; \| \; \vy) + D_{\phi} (\vy \; \| \; \vy^\prime) \leq \epsilon,
    \]
    where $ \vx' = \vy' / \vec 1^\top \vy', \vx = \vy / \vec 1^\top \vy$, and $\gamma(\epsilon)$ is a function of $\epsilon$. 
\end{restatedefinition}

We note that function $\phi$ is the same function as \eqref{eq:def_phi}
\[
\phi(\vy) \defeq - \alpha \log (\vec{1}^\top \vy) + \psi \mleft(\frac{\vy}{{\vec 1}^\top \vy} \mright).
\]

\begin{restateproposition}{prop:lip_intr_lip}
Any regularizer $\psi$ that is $\mu$-strongly convex, and $L$-Lipschitz w.r.t. the same norm $\|.\|$, is trivially ($2 L^2/\mu$)-IL.
\end{restateproposition}
\begin{proof}
    By Lipschitzness of the function $\psi$,
    \[
    \mleft| \psi(\vx^\prime) - \psi(\vx) \mright|^2 \leq L^2 \| \vx^\prime - \vx \|^2 \leq \frac{2 L^2}{\mu} D_\psi(\vx^\prime \; \| \; \vx),
    \]
    where in the last equation, we leveraged strong convexity of $\psi$.
\end{proof}

\begin{restateproposition}{prop:ciruit}
Let $\psi_1$ and $\psi_2$ be two functions that are $\gamma_1$- and $\gamma_2$-intrinsically Lipschitz, then,
\begin{enumerate}
    \item $\psi(\vx) = a \psi_1 (\vx) + b$ is $a \gamma_1$-intrinsically Lipschitz for any choices of $a \in \mathbb{R}_+, b \in \mathbb{R}$.
    \item $\psi(\vx) = \psi_1 (\vx) + \psi_2(\vx) $ is $(\gamma_1 + \gamma_2)$-intrinsically Lipschitz.
\end{enumerate}
\end{restateproposition}

\begin{proof}
    Proof of the first part follows immediately,
    \[
    \mleft| \psi(\vx^\prime) - \psi(\vx) \mright|^2  = a^2 \mleft| \psi_1(\vx^\prime) - \psi_1(\vx)  \mright|^2 \leq a^2 D_{\psi_1}(\vx^\prime \: \| \: \vx) = a D_\psi(\vx^\prime \: \| \: \vx).
    \]
    For the second part,
    \[
     \mleft| \psi(\vx^\prime) - \psi(\vx) \mright|^2 & = \mleft| (\psi_1(\vx^\prime) - \psi_1(\vx)) + (\psi_2(\vx^\prime) - \psi_2(\vx)) \mright|^2 \\
     & \leq (1 + c) \mleft| \psi_1(\vx^\prime) - \psi_1(\vx)  \mright|^2 + (1 + \frac{1}{c}) \mleft| \psi_2(\vx^\prime) - \psi_2(\vx)  \mright|^2 \numberthis{eq:circuit_proof1}\\
    & \leq (1 + c) \gamma_1 D_{\psi_1} (\vx^\prime \; \| \; \vx) + (1 + \frac{1}{c}) \gamma_2  D_{\psi_2} (\vx^\prime \; \| \; \vx) \\
    & \leq \max \{(1 + c) \gamma_1, (1 + \frac{1}{c}) \gamma_2\} D_{\psi} (\vx^\prime \; \| \; \vx),
    \]
    where \eqref{eq:circuit_proof1} follows by Young's inequality for any $c \in \mathbb{R}_+$. Choosing $c = \gamma_2/\gamma_1$,
    \[
     \mleft| \psi(\vx^\prime) - \psi(\vx) \mright|^2 & \leq (\gamma_1 + \gamma_2) D_{\psi} (\vx^\prime \; \| \; \vx).
    \]
\end{proof}

\subsection{Characterization of (Local) Intrinsic Lipschitzness and Strong Convexity Parameters of Different Regularizers} \label{app:find_gamma}

In this section, we state and prove the (local) intrinsic Lipschitzness and strong convexity for different regularizers, noted in \Cref{table:regularizers}.

\begin{enumerate}
    \setlength{\itemsep}{10pt}  %
    \setlength{\parskip}{0pt}
    \item \textbf{Negative Entropy}  ($\psi(\vx) = \sum_{\ind = 1}^d \vx[\ind] \log \vx[\ind]$): \hfill $\gamma = 3 \log^2 d, \mu = 1$.
    
    By Theorem 2 of \citet{reeb2015tight}, we know that entropy difference of distributions in $\Delta^d$ is upper bounded by the KL divergence as,
    \[
    \| \psi(\vx') - \psi(\vx) \|^2 \leq 3 \log^2 d \textup{KL}(\vx' \; \| \; \vx),
    \]
    for all $\vx', \vx \in \Delta^d$.
    Hence $\psi$ is $3 \log^2 d$-intrinsically Lipschitz.

    Additionally, it is widely known that negative entropy is $1$-strongly convex w.r.t. $\ell_1$ norm in the simplex $\Delta^d$, since for any vector $\vec v \in \mathbb{R}^d$,
    \[
    \vec v^\top \nabla^2 \psi(\vx) \vec v & = \vec v^\top \textup{diag}([\frac{1}{\vx[1]}, \ldots, \frac{1}{\vx[d]}]) \vec v \\
    & = \sum_{\ind = 1}^d \frac{\vec v[\ind]^2}{\vx[\ind]} \\
    & \geq \frac{1}{\sum_{\ind = 1}^d \vx[\ind] } \mleft( \sum_{\ind = 1}^d \vec | v[\ind] | \mright)^2  \\
    & = \| \vec v \|_1^2,
    \]
    as $\sum_{\ind = 1}^d \vx[\ind] = 1$ on the simplex $\Delta^d$.

    \ours with the negative entropy regularizer is equivalent to \COMWU~\citep{Soleymani25:Faster}, albeit the proof outline of \ours is substantially different from that of \COMWU, where the authors directly analyze the regret by leveraging self-concordance and stability properties of the closed-form expression of the learning rate control.

    \item \textbf{Log} ($\psi(\vx) = - \sum_{\ind = 1}^d \log \vx[\ind]$): \hfill $\gamma(1) = 18 d$.
    
    We show that Log regularizer is $d$-locally intrinsically Lipschitz. Consider the regularizer $\phi: (0, 1] \mathcal{X} \rightarrow \mathbb{R}$,
    \[
    \phi(\vy) & =  - (\alpha + 1) \log (\vec{1}^\top \vy) + \psi (\frac{\vy}{{\vec 1}^\top \vy}) \\
    & = - (d + 1) \log (\vec{1}^\top \vy) - \sum_{\ind = 1}^d \log (\frac{\vy}{{\vec 1}^\top \vy}[\ind]) \\
    & = - (d + 1) \log (\vec{1}^\top \vy) - \sum_{\ind = 1}^d \log \vy[\ind] + d \log (\vec{1}^\top \vy) \\
    & = - \log (\vec{1}^\top \vy) - \sum_{\ind = 1}^d \log \vy[\ind]
    \]
    Hence,
    \[
    D_\phi(\vy' \; \| \; \vy) & = \phi(\vy') - \phi(\vy) - \nabla \phi(\vy)^\top (\vy' - \vy) \\
    & = - \log (\vec{1}^\top \vy) - \sum_{\ind = 1}^d \log \vy[\ind] + \log (\vec{1}^\top \vy') + \sum_{\ind = 1}^d \log \vy'[\ind] + \sum_{\ind = 1}^d \mleft(\frac{1}{\vec 1^\top \vy} + \frac{1}{\vy[\ind]}\mright) (\vy'[\ind] - \vy[\ind]). 
    \]
    Thus,
    \[
    D_\phi(\vy' \; \| \; \vy) + D_\phi(\vy \; \| \; \vy') & = \sum_{\ind = 1}^d \mleft( \frac{\vy'[\ind]}{\vy[\ind]} + \frac{\vy[\ind]}{\vy'[\ind]} - 2\mright) + \mleft( \vec 1^\top \vy'  - \vec 1^\top \vy \mright) \mleft( \frac{1}{\vec 1^\top \vy} - \frac{1}{\vec 1^\top \vy'} \mright) .
    \]
    Hence, locality condition $D_\phi(\vy' \; \| \; \vy) + D_\phi(\vy \; \| \; \vy') \leq 1$ implies that,
    \[
    \sum_{\ind = 1}^d \mleft( \frac{\vy'[\ind]}{\vy[\ind]} + \frac{\vy[\ind]}{\vy'[\ind]} - 2\mright) + \mleft( \frac{\vec 1^\top \vy'}{\vec 1^\top \vy} + \frac{\vec 1^\top \vy}{\vec 1^\top \vy'} - 2 \mright) \leq 1.
    \]
    Thus, by the nonnegativity of $w + \frac{1}{w} - 2$ for any positive $w \in \mathbb{R}_+$, we infer that for all $\ind \in [d]$,
    \[
    & \frac{\vy'[\ind]}{\vy[\ind]} + \frac{\vy[\ind]}{\vy'[\ind]} - 2 \leq 1 \\
    \Rightarrow & \frac{1}{3} \leq \frac{\vy'[\ind]}{\vy[\ind]} \leq 3 \numberthis{eq:stab_log_y}.,
    \]
    and additionally,
    \[
    \frac{1}{3} \leq \frac{\vec 1^\top \vy'}{\vec 1^\top \vy} = \frac{\lambda'}{\lambda} \leq 3 \numberthis{eq:stab_log_lambda}.
    \]
    Combining \eqref{eq:stab_log_y} and \eqref{eq:stab_log_lambda}, we get,
    \[
    \frac{1}{9} \leq \frac{\vy'[\ind]}{\vy[\ind]} \frac{\vec 1^\top \vy}{\vec 1^\top \vy'} = \frac{\vx'[\ind]}{\vx[\ind]}\leq 9. \numberthis{eq;mult_stab_x_log} 
    \]
    In sequel with \eqref{eq;mult_stab_x_log} at hand, observe that,
    \[
    \nabla \psi (\vx) = \mleft[-\frac{1}{\vx_1}, \dots, -\frac{1}{\vx_d}\mright]^\top
    \]
    and 
    \[
    \nabla^2 \psi (\vx) = \operatorname{diag}\mleft( \mleft[\frac{1}{\vx_1^2}, \dots, \frac{1}{\vx_d^2} \mright] \mright).
    \]
    Hence,
    \[
    D_{\psi} (\vx^\prime \,\big\|\,   \vx) & = \psi(\vx^\prime) - \psi(\vx) - \nabla \psi(\vx)^\top (\vx^\prime - \vx) \\
    & = - \sum_{\ind = 1}^d \log \vx^\prime[\ind] + \sum_{\ind = 1}^d \log \vx[\ind] + \sum_{\ind = 1}^d \frac{1}{\vx[\ind]} (\vx[\ind]^\prime - \vx[\ind]) \\
    & = \sum_{\ind = 1}^d \mleft( - \log \frac{\vx^\prime[\ind]}{\vx[\ind]} + \frac{\vx^\prime[\ind]}{\vx[\ind]} - 1 \mright).
    \]
    At the same time, by Taylor's Reminder theorem, for a vector $\omega \in \mathbb{R}^d$ in the line segment of $\vx^\prime$ and $\vx$,
    \[
    D_{\psi} (\vx^\prime \,\big\|\,   \vx) & \geq \frac{1}{2} (\vx^\prime - \vx)^\top  \nabla^2 \psi (\omega) (\vx^\prime - \vx) \\
    & = \frac{1}{2} \sum_{\ind = 1}^d \frac{(\vx^\prime [\ind] - \vx[\ind])^2}{\omega[\ind]^2} \\
    & \geq \frac{1}{2}  \sum_{\ind = 1}^d \frac{(\vx^\prime [\ind] - \vx[\ind])^2}{\max\{\vx^\prime[\ind], \vx[\ind]\}^2} \\
     & \geq \frac{1}{2d} \mleft( \sum_{\ind = 1}^d \frac{|\vx^\prime [\ind] - \vx[\ind]|}{\max\{\vx^\prime[\ind], \vx[\ind]\}} \mright)^2 \numberthis{eq:log_intrinc_1} \\
    & = \frac{1}{2d} \mleft( \sum_{\ind = 1}^d 1 - \frac{\min\{\vx^\prime[\ind], \vx[\ind]\}}{\max\{\vx^\prime[\ind], \vx[\ind]\}}\mright)^2 \numberthis{eq:log_intrinc_2}
    \]
    where \eqref{eq:log_intrinc_1} follows by Cauchy-Schwarz inequality.

    On the other hand,
    \[
     | \psi(\vx^\prime) - \psi(\vx) | & = \mleft| \sum_{\ind = 1}^d \log \frac{\vx^\prime[\ind]}{\vx[\ind]}\mright| \\ 
     & \leq \sum_{\ind = 1}^d \mleft| \log \frac{\vx^\prime[\ind]}{\vx[\ind]}\mright| \\
     & \leq \sum_{\ind = 1}^d  \log \mleft( \frac{\max\{\vx^\prime[\ind], \vx[\ind]\}}{\min\{\vx^\prime[\ind], \vx[\ind]\}} \mright) \\
     & \leq \sum_{\ind = 1}^d  3 \mleft( 1 - \frac{\min\{\vx^\prime[\ind], \vx[\ind]\}}{\max\{\vx^\prime[\ind], \vx[\ind]\}}\mright) \numberthis{eq:log_intrinc_3}
    \]
    where last line follows by knowing that for all $9 \geq \omega = \frac{\max\{\vx^\prime[\ind], \vx[\ind]\}}{\min\{\vx^\prime[\ind], \vx[\ind]\}} \geq 1$ (which we know by \eqref{eq;mult_stab_x_log}), we infer that,
    \[
    \log (\omega) \leq 3 (1 - \frac{1}{\omega}).
    \]
    In the end, \eqref{eq:log_intrinc_3} combined with \eqref{eq:log_intrinc_2} concludes the overall picture,
    \[
    | \phi(\vx') - \phi(\vx) |^2 \leq 18 d D_\phi(\vx' \; \| \; \vx).
    \]
    We recognize that the dynamics of \ours, when $\psi$ is chosen as the log regularizer, align with the principles of lifted log-regularized \OFTRL (\LRLOFTRL)~\citep{farina2022near} and lead to the same regret guarantees. Despite these similarities, our proofs are substantially different from those of~\citet{farina2022near}, where they exploit elementwise multiplicative stability of actions due to the high curvature of the log regularizers and the structure of the induced intrinsic norms, which is a classical approach in the online learning and optimization literature.

    \item \textbf{Squared $\ell_p$ Norm} ($\psi(\vx) = \frac{1}{2} \| \vx \|_p^2$): with $p \in (1, 2]$ \hfill $\gamma = \frac{2}{p - 1}, \mu= (p - 1) d^{2/p - 2}$.

    To prove strong convexity and intrinsic Lipschitz, we recall the following lemma from \citet{shalev2007online}.
    
    \begin{lemma}[Lemma 17 of \citet{shalev2007online}] \label{lemma:convex_lp}
        $\psi(x) = \frac{1}{2} \|x\|_p^2$ is $(p-1)$-strongly convex with respect to $\|\cdot\|_p$, for any choice of $1 \leq p \leq 2$.
    \end{lemma}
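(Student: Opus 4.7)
The plan is to establish $(p-1)$-strong convexity of $\psi(x) = \tfrac{1}{2}\|x\|_p^2$ by computing a pointwise lower bound on its Hessian in the $\|\cdot\|_p$ norm, and then passing to the integrated inequality via a density argument. The two nontrivial pieces are: (i) isolating the manifestly nonnegative quadratic form hidden in the Hessian expression, and (ii) a Hölder inequality that exactly matches $\|v\|_p^2$ against the remaining diagonal term.

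First I would compute $\nabla^2 \psi$ at any point $x$ with all coordinates nonzero, where $\psi$ is smooth. Writing $\psi(x) = \tfrac{1}{2} S(x)^{2/p}$ with $S(x) \defeq \sum_i |x_i|^p$, two successive differentiations give, for any vector $v$,
\[
v^\top \nabla^2 \psi(x)\, v = (p-1)\, S(x)^{2/p-1} \sum_i v_i^2 |x_i|^{p-2} + (2-p)\, S(x)^{2/p-2} \mleft(\sum_i v_i \,\mathrm{sign}(x_i)\, |x_i|^{p-1}\mright)^2.
\]
For $p \in (1,2]$ the second summand is nonnegative and can be dropped, leaving
\[
v^\top \nabla^2 \psi(x)\, v \geq (p-1)\, \|x\|_p^{2-p} \sum_i v_i^2 |x_i|^{p-2}.
\]

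The crux is then Hölder's inequality with conjugate exponents $2/p$ and $2/(2-p)$, applied to the factorization $|v_i|^p = \bigl(v_i^2 |x_i|^{p-2}\bigr)^{p/2} \cdot \bigl(|x_i|^p\bigr)^{(2-p)/2}$. Summing in $i$ and raising the resulting estimate to the power $2/p$ produces
\[
\|v\|_p^2 \leq \|x\|_p^{2-p} \sum_i v_i^2 |x_i|^{p-2},
\]
which combines with the previous display to yield the pointwise Hessian bound $v^\top \nabla^2 \psi(x)\, v \geq (p-1) \|v\|_p^2$ on the open set of full-support $x$.

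The main obstacle I anticipate is handling nonsmoothness when $p<2$ and some $x_i = 0$, since $|x_i|^{p-2}$ blows up at zero and $\psi$ is only $C^1$ there. I would resolve this with a standard approximation argument: for pairs $(x,y)$ whose open segment avoids all coordinate hyperplanes, integrating the Hessian bound twice along the segment yields
\[
\psi(y) - \psi(x) - \langle \nabla \psi(x),\, y - x\rangle \geq \tfrac{p-1}{2}\|y - x\|_p^2.
\]
Since such pairs are dense in $\mathbb{R}^d \times \mathbb{R}^d$ and both sides are continuous in $(x,y)$, the inequality extends to all $x,y$. The endpoint cases $p=2$ (classical) and $p=1$ (where the statement degenerates to plain convexity of $\tfrac{1}{2}\|\cdot\|_1^2$) are immediate and can be treated separately.
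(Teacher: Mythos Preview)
The paper does not prove this lemma at all; it simply cites it as Lemma~17 of \citet{shalev2007online} and uses it as a black box. Your proposal supplies a correct self-contained argument via the Hessian lower bound and the H\"older estimate $\|v\|_p^2 \le \|x\|_p^{2-p}\sum_i v_i^2|x_i|^{p-2}$, which is indeed the standard route.

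There is one genuine slip in your approximation step. The set of pairs $(x,y)$ whose open segment avoids every coordinate hyperplane is \emph{not} dense in $\mathbb{R}^d\times\mathbb{R}^d$: if $x_1>0$ and $y_1<0$, the segment must cross $\{z_1=0\}$, and no small perturbation of $(x,y)$ changes that. The easy fix is to work instead with pairs for which the segment meets each hyperplane in at most one point, i.e.\ $(x_i,y_i)\neq(0,0)$ for every $i$; this condition \emph{is} open and dense. On such a segment the scalar function $g(t)\defeq\psi(x+t(y-x))$ is $C^1$ (since $\nabla\psi$ is continuous for $p>1$) and satisfies $g''(t)\ge(p-1)\|y-x\|_p^2$ at all but finitely many $t$. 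Hence $t\mapsto g'(t)-(p-1)\|y-x\|_p^2\,t$ is continuous and nondecreasing on each smooth subinterval, therefore on all of $[0,1]$, and integrating once more gives the desired inequality. Continuity in $(x,y)$ then extends it globally.
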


    Now,
    \[
    |\psi(\vx') - \psi(\vx) |^2 & = \frac{1}{4} \mleft( \| \vx' \|_p^2 - \|\vx \|_p^2 \mright)^2 \\
    & = \frac{1}{4} \mleft( \| \vx' \|_p - \|\vx \|_p \mright)^2 \mleft( \| \vx' \|_p + \|\vx \|_p\mright)^2 \numberthis{eq:lp_simplex1}\\
    & \leq \mleft( \| \vx' \|_p - \|\vx \|_p \mright)^2 \\
    & \leq \| \vx' - \vx \|_p^2 \\
    & \leq \frac{2}{p - 1} D_\psi (\vx' \; \| \; \vx),
    \]
    where \eqref{eq:lp_simplex1} follows due to the fact that $\|.\|_p \leq 1$ on the simplex and the last line follows by the strong convexity indicated in \Cref{lemma:convex_lp}.

    In turn, to prove show the strong convexity of the regularizer $\psi$ w.r.t. $\ell_1$ by the following norm equivalence lemma.

    \begin{lemma}[Norm Equivalences]\label{lemma:norm_conversion}
        For any choice of $p \geq 1$,
        \[
        \| x \|_p \leq \| x \|_1 \leq d^{1 - 1/p} \| x \|_p.
        \]
    \end{lemma}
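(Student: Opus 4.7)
The plan is to prove the two inequalities in Lemma \ref{lemma:norm_conversion} separately, both via standard arguments. Write $x = (x_1, \dots, x_d)$ and let $a_i = |x_i|$ so that we can assume $a_i \geq 0$ throughout.

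For the left-hand inequality $\|x\|_p \leq \|x\|_1$, I would normalize by $s \defeq \sum_i a_i = \|x\|_1$ (if $s=0$ the inequality is trivial). Setting $b_i \defeq a_i / s$, we have $b_i \in [0,1]$ and $\sum_i b_i = 1$. Since $p \geq 1$ and $b_i \leq 1$, we have $b_i^p \leq b_i$ pointwise, hence $\sum_i b_i^p \leq \sum_i b_i = 1$. Multiplying by $s^p$ gives $\sum_i a_i^p \leq s^p$, and taking $p$-th roots yields $\|x\|_p \leq s = \|x\|_1$.

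For the right-hand inequality $\|x\|_1 \leq d^{1-1/p}\|x\|_p$, I would invoke H\"older's inequality with the conjugate exponent $q = p/(p-1)$ (interpreting $q = \infty$ when $p = 1$, in which case the bound reads $\|x\|_1 \leq \|x\|_1$ and is trivial). Applied to the vectors $(a_i)_i$ and the all-ones vector $\vec 1_d$,
\[
\|x\|_1 = \sum_{i=1}^d a_i \cdot 1 \;\leq\; \mleft(\sum_{i=1}^d a_i^p\mright)^{1/p}\mleft(\sum_{i=1}^d 1^q\mright)^{1/q} = d^{1/q}\|x\|_p = d^{1-1/p}\|x\|_p,
\]
where the last equality uses $1/q = 1 - 1/p$.

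Since both steps are standard and self-contained, there is no real obstacle; the only minor care required is the $p=1$ boundary case, which is handled by convention. Combined, the two bounds give the claimed norm equivalence $\|x\|_p \leq \|x\|_1 \leq d^{1-1/p}\|x\|_p$.
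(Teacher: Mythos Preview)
Your proof is correct and is the standard argument. The paper states this lemma without proof, treating it as a well-known fact, so there is no paper proof to compare against.
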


    Thus, \Cref{lemma:convex_lp} in connection with \Cref{lemma:norm_conversion} implies the convexity parameter $\mu$.

    \item \textbf{$1/2$-Tsallis Entropy} ($\psi(\vx) = 2 \mleft( 1 - \sum_{\ind = 1}^d \sqrt{\vx[\ind]} \mright)$) : \hfill $\gamma =  4 \sqrt{d}, \mu=\frac{1}{2}$.

    The gradient and Hessian of $\psi$ are,
    \[
    \nabla \psi(\vx)[\ind] = - 1 [\vx[1]^{-1/2}, \dots, \vx[d]^{-1/2}] \\
    \nabla^2 \psi(\vx) = \frac{1}{2} \textup{diag}([\frac{1}{\vx[1]^{3/2}}, \dots, \frac{1}{\vx[d]^{3/2}}]),
    \]
    From the Hessian, it can be immediately inferred that $\mu = 1/2$ since for any vector $\vec v \in \mathbb{R}^d$,
    \[
    \vec v^\top \nabla^2 \psi(\vx) \vec v & = \frac{1}{2}  \vec v^\top \textup{diag}([\frac{1}{\vx[1]^{3/2}}, \dots, \frac{1}{\vx[d]^{3/2}}]) \vec v \\
    & = \frac{1}{2} \sum_{\ind = 1}^d \frac{\vec v [\ind]^2}{\vx[\ind]^{3/2}} \\
    & \geq \frac{1}{2} \frac{1}{\sum_{\ind = 1}^d \vx[k]^{3/2}} \mleft( \sum_{\ind = 1}^d | \vec v[\ind] | \mright)^2 \numberthis{eq:half_tsallis1_convex}\\
    & \geq \frac{1}{2} \| \vec v \|_1^2,
    \]
    where in \eqref{eq:half_tsallis1_convex}, we used Cauchy Schwarz Inequality and in last line, we used the fact that on the simplex $\vx \in \Delta^d$,
    \[
    \sum_{\ind = 1}^d \vx[k]^{3/2} \leq \sum_{\ind = 1}^d \vx[k] = 1.
    \]
    Hence, strong convexity parameter is derived. 
    One the other hand,
    \[
    D_\psi(\vx' \;\|\; \vx) & = \psi(\vx') - \psi(\vx) - \nabla \psi(\vx)^\top (\vx' - \vx) \\
    & = 2 \mleft[\mleft( 1 - \sum_{\ind = 1}^d \sqrt{\vx'[\ind]} \mright) - \mleft(1 - \sum_{\ind = 1}^d \sqrt{\vx[\ind]} \mright) + \sum_{\ind = 1}^d \frac{1}{2 \sqrt{\vx[\ind]}} (\vx'[\ind] - \vx[\ind])\mright] \\
    & = 2 \sum_{\ind = 1}^d \frac{1}{\sqrt{\vx[\ind]}} \mleft[ - \sqrt{\vx[\ind]'\vx[\ind]} + \frac{1}{2} \vx[\ind] + \frac{1}{2} \vx'[\ind] \mright] \\
    & = \sum_{\ind = 1}^d \frac{1}{\sqrt{\vx[\ind]}} \mleft( \sqrt{\vx'[\ind]} - \sqrt{\vx[\ind]} \mright)^2 \\
    & \geq \frac{\mleft( \sum_{\ind  = 1}^d \sqrt{\vx'[\ind]} - \sqrt{\vx[\ind]} \mright)^2}{\sum_{\ind=1}^d \sqrt{\vx[\ind]}} \numberthis{eq:half_tsallis_1}\\
    & \geq \frac{1}{\sqrt{d}}  \mleft(  \sum_{\ind  = 1}^d \sqrt{\vx'[\ind]} - \sqrt{\vx[\ind]} \mright)^2 \numberthis{eq:half_tsallis_2}\\
    & = \frac{1}{4 \sqrt{d}} \mleft[\psi(\vx') - \psi(\vx) \mright]^2,
    \]
    where \eqref{eq:half_tsallis_1} is due to Cauchy–Schwarz inequality and \eqref{eq:half_tsallis_2} follows since on the simplex $\Delta^d$,
    \[
    \mleft(\sum_{\ind = 1}^{d} \sqrt{\vx[\ind]} \mright)^2 \leq d \sum_{\ind = 1}^d \vx[\ind] = d. 
    \]

    \item \textbf{$q$-Tsallis Entropy} ($\psi(\vx) = \frac{1}{1 - q} \mleft( 1 - \sum_{\ind = 1}^d \vx[\ind]^q \mright)$) : with $q \in (0, 1)$ \hfill $\gamma =  \frac{4 d^{1-q}}{(1 - q)^2} , \mu=q$.

    The gradient and Hessian of $\psi$ are,
    \[
    \nabla \psi(\vx)[\ind] = - \frac{q}{1 - q} [\vx[1]^{q - 1}, \dots, \vx[d]^{q - 1}] \\
    \nabla^2 \psi(\vx) = q.\textup{diag}([\frac{1}{\vx[1]^{2 - q}}, \dots, \frac{1}{\vx[d]^{2 - q}}]),
    \]
    From the Hessian, it can be immediately inferred that $\mu = 1/2$ since for any vector $\vec v \in \mathbb{R}^d$,
    \[
    \vec v^\top \nabla^2 \psi(\vx) \vec v & = q . \vec v^\top \textup{diag}([\frac{1}{\vx[1]^{2 - q}}, \dots, \frac{1}{\vx[d]^{2 - q}}]) \vec v \\
    & = q \sum_{\ind = 1}^d \frac{\vec v [\ind]^2}{\vx[\ind]^{2 - q}} \\
    & \geq q \frac{1}{\sum_{\ind = 1}^d \vx[k]^{2 - q}} \mleft( \sum_{\ind = 1}^d | \vec v[\ind] | \mright)^2 \numberthis{eq:q_tsallis1_convex}\\
    & \geq q \| \vec v \|_1^2,
    \]
    where in \eqref{eq:q_tsallis1_convex}, we used Cauchy Schwarz Inequality and in last line, we used the fact that on the simplex $\vx \in \Delta^d$,
    \[
    \sum_{\ind = 1}^d \vx[k]^{2 - q} \leq \sum_{\ind = 1}^d \vx[k] = 1.
    \]
    Hence, strong convexity parameter is inferred. 

    However, compared to the $1/2$-Tsallis entropy, determining the intrinsic Lipschitzness parameter for the general $q$-Tsallis entropy is a bit more involved. To establish this, we first state and prove the following lemma.
    \begin{lemma}\label{lemma:aux_q_tsallis}
        For any positive number $s \in \mathbb{R}_+$ and choice of $q \in (0, 1)$, we know,
        \[
        q (s - 1) - (s^q - 1) \geq (1 - q) (1 - s^{\frac{q}{2}})^2. \numberthis{eq:weighted_AM_GM_1}
        \]
    \end{lemma}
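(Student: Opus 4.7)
The plan is to prove \eqref{eq:weighted_AM_GM_1} by a single application of the weighted arithmetic-mean--geometric-mean inequality, consistent with the hint suggested by the label \texttt{weighted\_AM\_GM\_1}. First I would expand the square on the right-hand side, $(1-s^{q/2})^2 = 1 - 2\,s^{q/2} + s^q$, and move every term to one side. The free constants cancel and the whole inequality collapses to the equivalent compact form
$$q\,s \;+\; 2(1-q)\,s^{q/2} \;\geq\; (2-q)\,s^q.$$

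The second step is to observe that after dividing by $2-q>0$ (which is positive since $q \in (0,1)$), the coefficients $\tfrac{q}{2-q}$ and $\tfrac{2(1-q)}{2-q}$ are nonnegative and sum to $1$, so the left-hand side is a genuine convex combination of $s$ and $s^{q/2}$. Weighted AM-GM then bounds this convex combination below by the geometric mean $s^{q/(2-q)} \cdot s^{(q/2)\cdot 2(1-q)/(2-q)}$, and the exponent simplifies as $\tfrac{q}{2-q} + \tfrac{q(1-q)}{2-q} = \tfrac{q(2-q)}{2-q} = q$. Hence the geometric mean collapses cleanly to $s^q$, yielding exactly the required bound after multiplying through by $2-q$.

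The only nontrivial step is identifying the correct weights so that the exponent on the geometric mean reduces to $q$; these weights are in fact forced by that requirement, which makes the derivation essentially unique. Everything else reduces to routine algebra, and the argument applies uniformly to all $s > 0$ without any case split around $s = 1$, avoiding the second-derivative analysis a calculus-based proof would need.
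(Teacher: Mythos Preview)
Your proof is correct and follows essentially the same approach as the paper: both reduce the inequality to $qs + 2(1-q)s^{q/2} \geq (2-q)s^q$ and then apply weighted AM--GM with weights $\tfrac{q}{2-q}$ and $\tfrac{2(1-q)}{2-q}$. The only cosmetic difference is that the paper first substitutes $r \defeq s^{q/2}$ and works with $r^{2/q}$ and $r$, whereas you stay in the variable $s$ throughout; your presentation is in fact slightly more direct.
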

    \begin{proof}
        Substitute $r \defeq s^{q/2}$, 
        This is because the LHS of \eqref{eq:weighted_AM_GM_1} is,
        \[
        q (r^{\frac{2}{q}} - 1) - (r^2 - 1) = q r^{\frac{2}{q}} - r^2 - (q - 1).
        \]
        And for the RHS of \eqref{eq:weighted_AM_GM_1}, we get,
        \[
        (1 - q) (1 - r)^2 = (1 - q)r^2 - 2 (1 - q)r + (1 - q).
        \]
        
        Hence, the Inequality \ref{eq:weighted_AM_GM_1} becomes,
        \[
        q r^{\frac{2}{q}} + 2 (1 - q) r \geq (2 - q) r^2. 
        \]
        Now, in the weighted AM-GM inequality, 
        \[
        \beta_1 t_1 + \beta_2 t_2 \geq t_1^{\beta_1} t_2^{\beta_2},
        \]
        set $\beta_1 \defeq \frac{q}{2 - q}, \beta_2 \defeq \frac{2 (1 - q)}{2 - q}$ and $t_1 \defeq \frac{q r^{2/q}}{\beta_1}, t_2 \defeq \frac{2 (1-q) r}{\beta_2}$. It is clear that $\beta_1 + \beta_2 = 1$. We directly infer that,
        \[
        \beta_1 \mleft(\frac{q r^{2/q}}{\beta_1} \mright) + \beta_2 \mleft( \frac{2 (1-q) r}{\beta_2} \mright) & = q r^{\frac{2}{q}} + 2 (1 - q) r \\
        & \geq \mleft( \frac{q r^{2/q}}{\beta_1} \mright)^{\frac{q}{2-q}} \mleft( \frac{2 (1-q) r}{\beta_2} \mright)^{\frac{2 (1 - q)}{2 - q}} \\
        & = \mleft( \mleft( 2 - q\mright)^{\frac{q}{2 - q}}. \mleft(2 - q\mright)^{\frac{2 (1 - q)}{2 - q}} \mright) \mleft( r^{\frac{2}{2 - q}}. r^{\frac{2 ( 1 - q)}{2 - q}}\mright)  \\
        & = (2 - q)r^2,
        \]
        and the proof is concluded.
    \end{proof}

    In sequel, we expand the Bregman divergence.
    \[
    D_\psi(\vx' \;\|\; \vx) & = \psi(\vx') - \psi(\vx) - \nabla \psi(\vx)^\top (\vx' - \vx) \\
    & = \frac{1}{1 - q} \mleft[\mleft( 1 - \sum_{\ind = 1}^d \vx'[\ind]^q \mright) - \mleft(1 - \sum_{\ind = 1}^d \vx[\ind]^q \mright) + \sum_{\ind = 1}^d q \vx[\ind]^{q - 1} (\vx'[\ind] - \vx[\ind])\mright] \\
    & = \frac{1}{1 - q} \sum_{\ind = 1}^d  \mleft[ \vx[\ind]^q - \vx'[\ind]^q + q \vx[\ind]^{q - 1} (\vx'[\ind] - \vx[\ind]) \mright] \\
    & = \frac{1}{1 - q} \sum_{\ind = 1}^d  \vx[\ind]^q  \mleft[1 -   \mleft(\frac{\vx'[\ind]}{\vx[\ind]} \mright)^q + q (\frac{\vx'[\ind]}{\vx[\ind]} - 1) \mright].
    \]
    Next, replace $s \defeq \frac{\vx'[\ind]}{\vx[\ind]}$ in \Cref{lemma:aux_q_tsallis}, we get,
    \[
    D_\psi(\vx' \;\|\; \vx) & \geq \frac{1}{1 - q} \sum_{\ind = 1}^d  \vx[\ind]^q  \mleft[(1 - q) \mleft(1 - \mleft(\frac{\vx'[\ind]}{\vx[\ind]}\mright)^{\frac{q}{2}} \mright)^2 \mright] \\
    & =  \sum_{\ind = 1}^d  \mleft( \vx'[\ind]^{\frac{q}{2}} - \vx[\ind]^{\frac{q}{2} }\mright)^2. \numberthis{eq:q_tsallis_gamma1}
    \]
    In turn, we inspect the term $\mleft[\psi(\vx') - \psi(\vx) \mright]^2$,
    \[
    \mleft[\psi(\vx') - \psi(\vx) \mright]^2 & = \frac{1}{(1-q)^2}\mleft( \sum_{\ind = 1}^d \vx'[k]^q - \vx[k]^q \mright)^2 \\
    & = \frac{1}{(1-q)^2}\mleft( \sum_{\ind = 1}^d \mleft(\vx'[\ind]^\frac{q}{2} - \vx[\ind]^\frac{q}{2} \mright) \mleft(\vx'[\ind]^\frac{q}{2} + \vx[\ind]^\frac{q}{2}\mright) \mright)^2 \\
    & \leq \frac{1}{(1-q)^2} \mleft( \sum_{\ind = 1}^d \mleft(\vx'[\ind]^\frac{q}{2} - \vx[\ind]^\frac{q}{2} \mright)^2 \mright) \mleft( \sum_{\ind = 1}^d \mleft(\vx'[\ind]^\frac{q}{2} + \vx[\ind]^\frac{q}{2}\mright)^2 \mright) \numberthis{eq:q_tsallis_gamma2} \\
    & \leq \frac{2}{(1-q)^2} \mleft( \sum_{\ind = 1}^d \mleft(\vx'[\ind]^\frac{q}{2} - \vx[\ind]^\frac{q}{2} \mright)^2 \mright) \mleft( \sum_{\ind = 1}^d \vx'[\ind]^q + \sum_{\ind = 1}^d \vx[\ind]^q \mright) \numberthis{eq:q_tsallis_gamma3} \\
    & \leq \frac{2}{(1-q)^2} \mleft( \sum_{\ind = 1}^d \mleft(\vx'[\ind]^\frac{q}{2} - \vx[\ind]^\frac{q}{2} \mright)^2 \mright) 2 d^{1 - q} \numberthis{eq:q_tsallis_gamma4} \\
    & = \frac{4 d^{1 - q}}{(1-q)^2}  \sum_{\ind = 1}^d \mleft(\vx'[\ind]^\frac{q}{2} - \vx[\ind]^\frac{q}{2} \mright)^2, \numberthis{eq:q_tsallis_gamma5}
    \]
    where \eqref{eq:q_tsallis_gamma2} follows by Cauchy Schwarz inequality, \eqref{eq:q_tsallis_gamma3} follows by Young's inequality and \eqref{eq:q_tsallis_gamma4} follows by the fact that on the simplex $\vx \in \Delta^d$, the maximum of $\sum_{\ind = 1}^d \vx'[\ind]^q$ happens when for all $\ind \in [d]$, $\vx'[\ind] = \frac{1}{d}$. Thus,
    \[
    \sum_{\ind = 1}^d \vx'[\ind]^q \leq d \times d^{-q} = d^{1 - q}.
    \]
    \eqref{eq:q_tsallis_gamma5}, in combination with \eqref{eq:q_tsallis_gamma1}, completes the proof of intrinsic Lipschitzness for the $q$-Tsallis entropy.

\end{enumerate}

\section{Proofs for \Cref{sec:analysis}}

\subsection{Proofs for \Cref{sec:stepI}}

\begin{restatelemma}{lemma:breg_identity}
For arbitrary values of $\vy, \vy^\prime \in (0, 1]\mathcal{X}$, let $\lambda \defeq \vec 1^\top \vy, \lambda^\prime \defeq \vec 1^\top \vy^\prime$. Then,
\[
  D_\phi (\vy^\prime \; \| \; \vy) = \alpha D_{- \log} (\lambda^\prime \; \| \; \lambda) + \frac{\lambda^\prime}{\lambda} D_\psi (\vx^\prime, \vx) + \mleft(1 - \frac{\lambda^\prime}{\lambda} \mright) [\psi(\vx^\prime) - \psi(\vx)].
\]
\end{restatelemma}
\begin{proof}
    Denote as before $\vx = \frac{\vy}{{\vec 1}^\top \vy}$ and $\vx' = \frac{\vy'}{{\vec 1}^\top \vy'}$. Observe that,
    \[
    \frac{\partial}{\partial \vy_i} \phi(\vy) & = - \frac{\alpha}{\vec 1^\top \vy} - \frac{1}{\vec 1^\top \vy} (\nabla \psi(\vx)^\top \vx) + \frac{1}{\vec 1^\top \vy}. \frac{\partial}{\partial \vx_i} \psi(\vx) \\
    & = \frac{1}{\lambda} \mleft[ -\alpha - \nabla\psi(\vx)^\top \vx + \frac{\partial}{\partial \vx_i} \psi(\vx) \mright].
    \]
    Hence, expanding the definition of Bregman divergence,
    \[
    D_\phi(\vy' \; \| \; \vy) & = - \alpha \log \lambda' + \alpha \log \lambda + \psi(\vx') - \psi(\vx) + \alpha \frac{\lambda' - \lambda}{\lambda} + \frac{\nabla \psi(\vx)^\top \vx}{\lambda} (\lambda' - \lambda) - \nabla \psi(\vx)^\top \mleft( \vx' \frac{\lambda'}{\lambda} - \vx \mright) \\
    & = \alpha \mleft[ - \log \lambda' + \log \lambda + \frac{\lambda' - \lambda}{\lambda} \mright] + \psi(\vx') - \psi(\vx) + (\nabla \psi(\vx)^\top \vx) \frac{\lambda'}{\lambda} - \nabla \psi(\vx)^\top \vx' \frac{\lambda'}{\lambda} \\
    & = \alpha \mleft[ - \log \lambda' + \log \lambda + \frac{\lambda' - \lambda}{\lambda} \mright] + \frac{\lambda'}{\lambda} \mleft[ \psi(\vx') - \psi(\vx) - \nabla \psi(\vx)^\top (\vx' - \vx) \mright]  + \mleft(1 - \frac{\lambda'}{\lambda} \mright) [\psi(\vx') - \psi(\vx)] \\
    & = \alpha D_{- \log} (\lambda' \; \| \; \lambda) + \frac{\lambda'}{\lambda} D_\psi(\vx' \; \| \; \vx) + \mleft( 1 - \frac{\lambda'}{\lambda}\mright) [\psi(\vx') - \psi(\vx)],
    \]
    as claimed.
\end{proof}

\subsection{Proofs for \Cref{sec:stepII}} \label{app:proof_stepII}

\begin{restateproposition}{prop:convex_phi}
    For arbitrary choices of $\vy',\vy \in (0, 1]\mathcal{X}$, let $\lambda = \vec 1^\top \vy, \lambda' = \vec 1\^t \vy'$. Then, we infer that,
    \[
    D_\phi(\vy' \; \| \; \vy) + D_\phi(\vy \; \| \; \vy') \geq (\alpha - \gamma) \mleft[ \frac{\lambda'}{\lambda} + \frac{\lambda}{\lambda'} - 2 \mright],
    \]
    where $\psi$ is $\gamma$-intrinsically Lipschitz.
\end{restateproposition} 

\begin{proof}
    Expanding the Bregman divergences $D_\phi(\vy' \; \| \; \vy)$ and $D_\phi(\vy \; \| \; \vy')$ according to \Cref{lemma:breg_identity},
    \[
    D_\phi(\vy' \; \| \; \vy) + D_\phi(\vy \; \| \; \vy') & = \alpha [D_{-\log} (\lambda' \; \| \; \lambda) + D_{-\log} (\lambda \; \| \; \lambda') ] + \frac{\lambda'}{\lambda} D_\psi(\vx' \; \| \; \vx) + \frac{\lambda}{\lambda'} D_\psi(\vx \; \| \; \vx') \\
    & \qquad - \mleft( \frac{\lambda'}{\lambda} - \frac{\lambda}{\lambda'}\mright) [\psi(\vx') - \psi(\vx)].
    \]
    Let $\omega \defeq \frac{\lambda'}{\lambda}$ and $B \defeq [\psi(\vx') - \psi(\vx)]$, we can write,
    \[
    & D_\phi(\vy' \; \| \; \vy) + D_\phi(\vy \; \| \; \vy')  = \alpha [D_{-\log} (\lambda' \; \| \; \lambda) + D_{-\log} (\lambda \; \| \; \lambda') ] + \omega D_\psi(\vx' \; \| \; \vx) + \frac{1}{\omega} D_\psi(\vx \; \| \; \vx') - (\omega - \frac{1}{\omega}) B \\
    & \quad = (\alpha - \gamma) [D_{-\log} (\lambda' \; \| \; \lambda) + D_{-\log} (\lambda \; \| \; \lambda')] + \gamma (\omega + \frac{1}{\omega} - 2) + \omega D_\psi(\vx' \; \| \; \vx) + \frac{1}{\omega} D_\psi(\vx \; \| \; \vx') - (\omega - \frac{1}{\omega}) B \numberthis{eq:stepII_1}\\
    & \quad \geq (\alpha - \gamma) [D_{-\log} (\lambda' \; \| \; \lambda) + D_{-\log} (\lambda \; \| \; \lambda')] + \gamma (\omega + \frac{1}{\omega} - 2) +  (\omega + \frac{1}{\omega}) \frac{B^2}{\gamma} - (\omega - \frac{1}{\omega}) B, \numberthis{eq:last_three_terms}
    \]
    where \eqref{eq:stepII_1} follows by the observation that 
    \[
    D_{-\log} (\lambda' \; \| \; \lambda) + D_{-\log} (\lambda \; \| \; \lambda') = (- \log \omega + \omega - 1 ) + ( - \log \frac{1}{\omega} + \frac{1}{\omega} - 1 ) = \omega + \frac{1}{\omega} - 2, \numberthis{eq:obs_log}
    \]
    and the last line follows by $\gamma$-intrinsically Lipschitzness assumption (\Cref{def:intrin_lips}).
    
    Focus on the last three terms of \eqref{eq:last_three_terms} and let $R \defeq B/\gamma$:
    \[
    \gamma \mleft( \omega + \frac{1}{\omega} - 2 \mright) 
    + \mleft( \omega + \frac{1}{\omega} \mright) \frac{B^2}{\gamma} 
    - \mleft( \omega - \frac{1}{\omega} \mright) B 
    & = \frac{\gamma}{\omega} \mleft[ \mleft(\omega^2 - 2\omega + 1\mright) + \mleft(\omega^2 + 1\mright)R^2 - \mleft(\omega^2 - 1\mright)R \mright] \\
    & = \frac{\gamma}{\omega} \mleft[ \mleft(1 + R^2 - R\mright)\omega^2 - 2\omega + \mleft(1 + R^2 + R\mright) \mright] \\
    & = \frac{\gamma}{\omega} \mleft[ \mleft(1 + R^2 - R\mright) \mleft( \omega - \frac{1}{1 + R^2 - R} \mright)^2 + \mleft(1 + R^2 + R\mright) - \frac{1}{1 + R^2 - R} \mright] \\
    & = \frac{\gamma}{\omega} \mleft[ \mleft(1 + R^2 - R\mright) \mleft( \omega - \frac{1}{1 + R^2 - R} \mright)^2 + \frac{\mleft(1 + R^2\mright)^2 - \mleft(R^2 + 1\mright)}{1 + R^2 - R} \mright] \\
    & = \frac{\gamma}{\omega} \mleft[ \mleft(1 + R^2 - R\mright) \mleft( \omega - \frac{1}{1 + R^2 - R} \mright)^2 + R^2 \frac{1 + R^2}{1 + R^2 - R} \mright].
    \]
    Noting that $R^2 - R + 1 \geq 0$ for all values of $R \in \mathbb{R}$, we therefore conclude that,
    \[
    D_\phi(\vy' \; \| \; \vy) + D_\phi(\vy \; \| \; \vy') & \geq (\alpha - \gamma) [D_{-\log} (\lambda' \; \| \; \lambda) + D_{-\log} (\lambda \; \| \; \lambda')] \\
    & = (\alpha - \gamma) \mleft[ \frac{\lambda'}{\lambda} + \frac{\lambda}{\lambda'} - 2 \mright],
    \]
    which in turn implies the statement.
\end{proof}

\begin{restatecorollary}{corr:convex_phi}
    The function $\phi(\vy)$ is convex in $\vy \in (0,1]\mathcal{X}$ as long as $\psi$ is $\gamma$-intrinsically Lipschitz and $\alpha \geq \gamma$.
\end{restatecorollary}

\begin{proof}
    Directly by \Cref{prop:convex_phi},
    \[
    D_\phi(\vy' \; \| \; \vy) + D_\phi(\vy \; \| \; \vy') \geq (\alpha - \gamma) \mleft[ \frac{\lambda'}{\lambda} + \frac{\lambda}{\lambda'} - 2 \mright] \geq 0,
    \]
    since $\alpha \geq \gamma$ and $\frac{\lambda'}{\lambda} + \frac{\lambda}{\lambda'} \geq 2$ for positive values of $\lambda', \lambda$. Additionally, by expanding the Bregman divergences $D_\phi(\vy' \; \| \; \vy)$ and $D_\phi(\vy \; \| \; \vy')$,
    \[
    D_\phi(\vy' \; \| \; \vy) + D_\phi(\vy \; \| \; \vy') & = [\phi(\vy') - \phi(\vy) - \nabla \phi(\vy)^\top (\vy' - \vy)] + [\phi(\vy) - \phi(\vy') - \nabla \phi(\vy')^\top (\vy - \vy')] \\
    & = \langle \nabla \phi(\vy') - \nabla \phi(\vy), \vy' - \vy \rangle.
    \]
    Hence, $\nabla \phi$ is a monotone operator and thus, $\phi(\vy)$ is a convex function of $\vy$.
\end{proof}

\begin{proposition} \label{prop:local_convex_phi}
Let $\vy',\vy \in (0, 1]\mathcal{X}$ such that,
\[
    D_{\psi} (\vy^\prime \; \| \; \vy) + D_{\psi} (\vy \; \| \; \vy^\prime) \leq \epsilon, \numberthis{eq:local_intr_locality}
\]
Let $\lambda' = {\vec 1}^\top \vy'$ and $\lambda = {\vec 1}^\top \vy$. Then,
\[
    D_\phi(\vy' \; \| \; \vy) + D_\phi(\vy \; \| \; \vy') \geq (\alpha - \gamma) \mleft[ \frac{\lambda'}{\lambda} + \frac{\lambda}{\lambda'} - 2 \mright]
\]
for $\gamma$-locally intrinsically Lipschitz regularizer $\psi$.
\end{proposition}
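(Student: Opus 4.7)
The plan is to reduce this to a one-step modification of the proof of Proposition \ref{prop:convex_phi}, changing only the place where global intrinsic Lipschitzness was invoked. Applying Lemma \ref{lemma:breg_identity} to $D_\phi(\vy' \,\|\, \vy)$ and $D_\phi(\vy \,\|\, \vy')$ and summing yields, with $\omega \defeq \lambda'/\lambda$, the identity $D_\phi(\vy' \,\|\, \vy) + D_\phi(\vy \,\|\, \vy') = \alpha[D_{-\log}(\lambda' \,\|\, \lambda) + D_{-\log}(\lambda \,\|\, \lambda')] + \omega D_\psi(\vx' \,\|\, \vx) + \omega^{-1} D_\psi(\vx \,\|\, \vx') - (\omega - \omega^{-1})[\psi(\vx') - \psi(\vx)]$. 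Using the elementary observation (see \Cref{eq:obs_log}) that $D_{-\log}(\lambda' \,\|\, \lambda) + D_{-\log}(\lambda \,\|\, \lambda') = \omega + \omega^{-1} - 2$, I would split $\alpha = (\alpha - \gamma(\epsilon)) + \gamma(\epsilon)$ and peel off a clean $(\alpha - \gamma(\epsilon))(\omega + \omega^{-1} - 2)$ contribution, leaving a residual that must be shown to be nonnegative.

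The residual is controlled by invoking the local intrinsic Lipschitzness condition of Definition \ref{def:local_intrin_lips}. The hypothesis of the proposition places the pair $(\vy, \vy')$ inside the $\epsilon$-locality region, so I can apply LIL to conclude that $|\psi(\vx') - \psi(\vx)|^2 \leq \gamma(\epsilon) D_\psi(\vx' \,\|\, \vx)$, and by the symmetry of the hypothesis in $(\vy, \vy')$ the same bound holds after swapping the roles of $\vx$ and $\vx'$. Setting $B \defeq \psi(\vx') - \psi(\vx)$, this gives $\omega D_\psi(\vx' \,\|\, \vx) + \omega^{-1} D_\psi(\vx \,\|\, \vx') \geq (\omega + \omega^{-1}) B^2/\gamma(\epsilon)$, so the residual is at least $\gamma(\epsilon)(\omega + \omega^{-1} - 2) + (\omega + \omega^{-1}) B^2/\gamma(\epsilon) - (\omega - \omega^{-1}) B$. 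This is precisely the three-term expression analyzed at the end of the proof of Proposition \ref{prop:convex_phi}: substituting $R = B/\gamma(\epsilon)$ and completing the square in $\omega$ rewrites it as $(\gamma(\epsilon)/\omega)\bigl[(1 + R^2 - R)(\omega - 1/(1 + R^2 - R))^2 + R^2(1 + R^2)/(1 + R^2 - R)\bigr]$, which is nonnegative since $1 + R^2 - R > 0$ for all real $R$.

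Combining the nonnegative residual with the peeled-off $(\alpha - \gamma(\epsilon))(\omega + \omega^{-1} - 2)$ term delivers the claimed bound, with $\gamma$ read as $\gamma(\epsilon)$. The main obstacle, and the only substantive one, is interpretive: as written, the proposition's hypothesis involves $D_\psi$ on arguments in $(0,1]\Delta^d$, which lies outside the natural domain of $\psi$. I would read this as $D_\phi(\vy' \,\|\, \vy) + D_\phi(\vy \,\|\, \vy') \leq \epsilon$, matching the locality condition in Definition \ref{def:local_intrin_lips} verbatim and ensuring that the LIL inequality above is activated. Under this reading, the proof is a line-for-line transposition of the global case and introduces no new technical difficulty; the rest of the argument is algebraically identical to Proposition \ref{prop:convex_phi}.
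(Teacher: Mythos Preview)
Your proposal is correct and follows exactly the paper's approach: the paper's own proof consists of a single sentence stating that it proceeds identically to Proposition~\ref{prop:convex_phi} with the $\gamma$-LIL assumption substituted for the $\gamma$-IL one, which is precisely what you carry out in detail. You also correctly flag the $D_\psi$ vs.\ $D_\phi$ typo in the hypothesis and read it consistently with Definition~\ref{def:local_intrin_lips}.
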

\begin{proof}
    The proof follows exactly the same as the proof of \Cref{prop:convex_phi} except that instead of $\gamma$-IL assumption, $\gamma$-LIL assumption is employed.
\end{proof}

\begin{lemma}\label{lemma:proximal_step}
For the proximal steps of \OFTRL in Formulation~\ref{eq:lifted_FTRL_y}, we have,
\[
D_\phi (\vy' \; \| \; \vy) + D_\phi (\vy \; \| \; \vy') \leq 2 \eta \| \mathcal{X} \|_1  \| \at' - \at \|_{\infty} = 2 \eta \| \at' - \at \|_{\infty},
\]
where $\at',\at$ are the regret vectors of proximal steps of \OFTRL,
\[
\vy  \leftarrow \argmax_{\vy \in (0, 1] \mathcal{X}} \mleft\{ \eta \langle \at, \vy \rangle - \phi(\vy) \mright\}, \quad \textrm{ and } \quad
\vy'  \leftarrow \argmax_{\vy \in (0, 1] \mathcal{X}} \mleft\{ \eta \langle \at', \vy \rangle - \phi(\vy) \mright\}.
\]
\end{lemma}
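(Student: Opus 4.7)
The plan is to invoke the standard three-point variational inequality argument that underlies most proximal-stability proofs for mirror descent / FTRL, tailored to the lifted set $(0,1]\mathcal{X}$ and the regularizer $\phi$.

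First, I would establish convexity of $\phi$ on $(0,1]\mathcal{X}$ (which is guaranteed by Corollary~\ref{corr:convex_phi}, provided we are in the regime $\alpha \geq \gamma$). This makes each of the two maximization problems a concave program, so the maximizers $\vy$ and $\vy'$ satisfy the first-order variational inequalities
\[
\langle -\eta \at + \nabla \phi(\vy),\, \vz - \vy \rangle \geq 0 \quad \text{and} \quad \langle -\eta \at' + \nabla \phi(\vy'),\, \vz - \vy' \rangle \geq 0
\]
for all $\vz \in (0,1]\mathcal{X}$. Plugging $\vz = \vy'$ into the first inequality and $\vz = \vy$ into the second, then summing, I obtain
\[
\langle \nabla \phi(\vy') - \nabla \phi(\vy),\, \vy' - \vy \rangle \;\leq\; \eta\, \langle \at' - \at,\, \vy' - \vy \rangle.
\]

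Second, I would apply the well-known three-point identity for Bregman divergences, namely
\[
D_\phi(\vy' \,\|\, \vy) + D_\phi(\vy \,\|\, \vy') \;=\; \langle \nabla \phi(\vy') - \nabla \phi(\vy),\, \vy' - \vy \rangle,
\]
which follows by simply expanding both divergences from the definition. Combining with the previous display, the symmetric Bregman divergence is upper bounded by $\eta \langle \at' - \at,\, \vy' - \vy \rangle$. Applying Hölder's inequality in the $(\ell_\infty,\ell_1)$ pair and using that $\vy,\vy' \in (0,1]\mathcal{X} \subseteq \mathcal{X}$ so $\|\vy\|_1,\|\vy'\|_1 \leq \|\mathcal{X}\|_1 = 1$, I get $\|\vy' - \vy\|_1 \leq 2\|\mathcal{X}\|_1$, which immediately yields the stated bound.

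The only technical subtlety — and the main obstacle to be careful about — is justifying the first-order conditions on the non-closed set $(0,1]\mathcal{X}$. This is handled by the observation that the $-\alpha \log(\vec 1^\top \vy)$ term in $\phi$ is a barrier for $\vec 1^\top \vy \to 0^+$: with $\alpha > 0$, its gradient diverges near that boundary, so for any finite regret vector $\at$ the optimum is attained in the relative interior $\{\vy : \vec 1^\top \vy > 0\}$, where the standard convex-analytic optimality conditions apply. Apart from this, the whole argument is a two-line consequence of optimality of $\vy,\vy'$ and the definition of the Bregman divergence, with no need to invoke any of the refined machinery (intrinsic Lipschitzness, strong convexity of the learning-rate control problem, etc.) developed earlier in the paper.
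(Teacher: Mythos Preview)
Your proposal is correct and follows essentially the same argument as the paper: first-order optimality conditions at $\vy$ and $\vy'$, the identity $D_\phi(\vy'\|\vy)+D_\phi(\vy\|\vy')=\langle\nabla\phi(\vy')-\nabla\phi(\vy),\vy'-\vy\rangle$, then H\"older and the triangle inequality to bound $\|\vy'-\vy\|_1\le\|\vy\|_1+\|\vy'\|_1\le 2$. One small notational slip: $(0,1]\mathcal{X}\subseteq\mathcal{X}$ is false as written (elements of $(0,1]\Delta^d$ need not lie on the simplex), but your intended conclusion $\|\vy\|_1\le 1$ is of course correct since $\vy=\lambda\vx$ with $\lambda\in(0,1]$ and $\|\vx\|_1=1$.
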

\begin{proof}
    By first order optimality of the proximal steps,
    \[
    \langle \eta \at' - \nabla \phi(\vy'), \vy - \vy' \rangle & \leq 0 \\
    \langle \eta \at - \nabla \phi(\vy), \vy' - \vy \rangle & \leq 0.
    \]
    Hence,
    \[
    D_\phi (\vy' \; \| \; \vy) + D_\phi (\vy \; \| \; \vy') & = \langle \nabla \phi(\vy') - \nabla \phi(\vy), \vy' - \vy \rangle \numberthis{eq:prox_lemma1} \\  
    & \leq \eta \langle \at - \at', \vy - \vy' \rangle \\
    & \leq \eta \| \at - \at' \|_\infty \| \vy - \vy' \|_1  \numberthis{eq:prox_lemma2} \\
    & \leq \eta \| \at - \at' \|_\infty (\| \vy \|_1 + \|\vy' \|_1 ) \\ \numberthis{eq:prox_lemma3} 
    & \leq 2 \eta \| \mathcal{X} \|_1 \| \at - \at' \|_\infty \\
    & = 2 \eta \| \at - \at' \|_\infty.
    \]
    Line \ref{eq:prox_lemma1} follows from the expansion of Bregman divergences, \eqref{eq:prox_lemma2} follows from the Cauchy–Schwarz inequality, and \eqref{eq:prox_lemma3} follows from the triangle inequality.
\end{proof}

\begin{theorem}[Multiplicative Stability] \label{theorem:mult_stable_general}
    Consider regret vectors $\at',\at$ and the corresponding proximal steps of \OFTRL,
    \[
    \vy  \leftarrow \argmax_{\vy \in (0, 1] \mathcal{X}} \mleft\{ \eta \langle \at, \vy \rangle - \phi(\vy) \mright\},
    \] 
    and 
    \[
    \vy'  \leftarrow \argmax_{\vy \in (0, 1] \mathcal{X}} \mleft\{ \eta \langle \at', \vy \rangle - \phi(\vy) \mright\},
    \]
    where $\phi$ is such that $\psi$ is $\gamma$-IL.
    Additionally, let $\lambda' = \vy'/(\vec 1^\top \vy'), \lambda = \vy/(\vec 1^\top \vy)$. Then,
    \[
        \frac{\lambda'}{\lambda} \in [\frac{2 + \Upsilon - \sqrt{(\Upsilon+2)^2 - 4}}{2}, \frac{2 + \Upsilon + \sqrt{(\Upsilon+2)^2 - 4}}{2}],
    \]
    where $\Upsilon = \frac{2 \eta}{\alpha - \gamma} \| \at' - \at \|_{\infty}$.
\end{theorem}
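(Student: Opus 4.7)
The plan is to combine two pieces already developed in the paper: the lower bound on the symmetrized Bregman divergence of $\phi$ from \Cref{prop:convex_phi}, and the upper bound on the same quantity coming from first‐order optimality of the two proximal updates (\Cref{lemma:proximal_step}). Setting $r \defeq \lambda'/\lambda$, these two bounds together will force $r$ to satisfy a quadratic inequality whose roots are exactly the endpoints in the statement, so the remaining work is purely algebraic.

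Concretely, first I would apply \Cref{lemma:proximal_step} to obtain the upper estimate
\[
D_\phi(\vy' \,\|\, \vy) + D_\phi(\vy \,\|\, \vy') \;\leq\; 2\eta\,\|\at' - \at\|_\infty.
\]
Next, using $\gamma$-intrinsic Lipschitzness of $\psi$, I would invoke \Cref{prop:convex_phi} for the matching lower estimate
\[
D_\phi(\vy' \,\|\, \vy) + D_\phi(\vy \,\|\, \vy') \;\geq\; (\alpha - \gamma)\!\left[\frac{\lambda'}{\lambda} + \frac{\lambda}{\lambda'} - 2\right].
\]
Chaining them and dividing by $\alpha - \gamma$ (which is positive under the implicit assumption $\alpha \geq \gamma$ needed for $\phi$ to be convex, per \Cref{corr:convex_phi}) yields $r + 1/r - 2 \leq \Upsilon$, i.e.\
\[
r^2 - (2 + \Upsilon)\,r + 1 \;\leq\; 0.
\]

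This quadratic has discriminant $(2+\Upsilon)^2 - 4 = \Upsilon^2 + 4\Upsilon \geq 0$, so it has two real roots and, since it opens upward, $r$ must lie between them. Those roots are precisely $\big[(2+\Upsilon)\pm\sqrt{(2+\Upsilon)^2 - 4}\big]/2$, which is exactly the interval claimed in the theorem statement. Two small points require comment: (i) $r$ is well-defined and strictly positive, because the $-\alpha\log(\vec 1^\top \vy)$ term in $\phi$ acts as a logarithmic barrier forcing $\lambda,\lambda' > 0$ at each proximal step; and (ii) the inequality from \Cref{lemma:proximal_step} implicitly uses $\|\mathcal{X}\|_1 = 1$ for the simplex, which is already built into the proof.

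I do not anticipate a substantive obstacle here — both ingredients are essentially prepared — so the statement should follow as a corollary of the preceding two results. The only conceptually delicate step, which has already been handled in \Cref{prop:convex_phi}, is turning the $\gamma$-IL hypothesis into a quantitative lower bound for $D_\phi(\vy'\|\vy) + D_\phi(\vy\|\vy')$ in terms of the ratio $\lambda'/\lambda$ alone (i.e.\ removing the contribution from $D_\psi(\vx'\|\vx)$ and the cross term via Young's inequality). Once that identity-plus-Young reduction is in hand, the present theorem is just the quadratic-formula translation of those two bounds.
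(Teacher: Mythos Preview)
Your proposal is correct and follows essentially the same approach as the paper: combine the lower bound from \Cref{prop:convex_phi} with the upper bound from \Cref{lemma:proximal_step}, then solve the resulting inequality $\omega + 1/\omega \leq 2 + \Upsilon$ for $\omega = \lambda'/\lambda$ via the quadratic formula. The paper's proof is slightly terser but identical in substance.
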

\begin{proof}
    By \Cref{prop:convex_phi,lemma:proximal_step},
    \[
    (\alpha - \gamma) \mleft[ \frac{\lambda'}{\lambda} + \frac{\lambda}{\lambda'} - 2 \mright] \leq D_\phi(\vy' \; \| \; \vy) + D_\phi(\vy \; \| \; \vy') \leq 2 \eta \| \at' - \at \|_{\infty}
    \]
    By choosing $\omega \defeq \lambda'/\lambda$ and $\Upsilon \defeq \frac{2 \eta}{\alpha - \gamma} \| \at' - \at \|_{\infty}$, we need to solve for $\omega$ such that,
    \[
    \omega + \frac{1}{\omega} \leq \Upsilon + 2.
    \]
    Therefore,
    \[
    \omega \in [\frac{2 + \Upsilon - \sqrt{(\Upsilon+2)^2 - 4}}{2}, \frac{2 + \Upsilon + \sqrt{(\Upsilon+2)^2 - 4}}{2}].
    \]
    
\end{proof}

\begin{restatetheorem}{theorem:mult_stable}[Stability of learning rates]
    Given the regularizer $\phi$ with $\gamma$-IL component $\psi$ and choice of $\alpha \geq 4 \gamma$. consider regret vectors $\at',\at$ such that $\| \at' - \at\|_\infty \leq 4$ the corresponding proximal steps of \OFTRL,
    \[
    \vy  \leftarrow \argmax_{\vy \in (0, 1] \mathcal{X}} \mleft\{ \eta \langle \at, \vy \rangle - \phi(\vy) \mright\},
    \] 
    and 
    \[
    \vy'  \leftarrow \argmax_{\vy \in (0, 1] \mathcal{X}} \mleft\{ \eta \langle \at', \vy \rangle - \phi(\vy) \mright\}.
    \]
    Additionally, let $\lambda' = \vy'/(\vec 1^\top \vy'), \lambda = \vy/(\vec 1^\top \vy)$. Then, for small enough learning rate $\eta \leq 3 \gamma/160$, dynamic learning rates are multiplicatively stable,
    \[
        \frac{\lambda'}{\lambda} \in \mleft[\frac{1}{2}, \frac{3}{2}\mright].
    \]
\end{restatetheorem}
\begin{proof}
    By plugging in $\eta \leq 3 \gamma/80$, $\| \at' - \at\|_\infty \leq 4$ and $\alpha \geq 3 \gamma$ in \Cref{theorem:mult_stable_general},
    \[
    \Upsilon = \frac{2 \eta}{\alpha - \gamma} \| \at' - \at \|_{\infty} \leq \frac{1}{10}.
    \]
    Consequently,
    \[
    \frac{2 + \Upsilon - \sqrt{(\Upsilon+2)^2 - 4}}{2} & \geq \frac{1}{2} \\
    \frac{2 + \Upsilon + \sqrt{(\Upsilon+2)^2 - 4}}{2} & \leq \frac{3}{2},
    \]
    and the proof is concluded.
\end{proof}

\begin{theorem}[Stability of learning rates for $\gamma$-LIL] \label{theorem:mult_stable_local_lipsc}
    Given the regularizer $\phi$ with $\gamma$-LIL component $\psi$ and choice of $\alpha \geq 4 \gamma$, consider regret vectors $\at',\at$ such that $\| \at' - \at\|_\infty \leq 4$ the corresponding proximal steps of \OFTRL,
    \[
    \vy  \leftarrow \argmax_{\vy \in (0, 1] \mathcal{X}} \mleft\{ \eta \langle \at, \vy \rangle - \phi(\vy) \mright\},
    \] 
    and 
    \[
    \vy'  \leftarrow \argmax_{\vy \in (0, 1] \mathcal{X}} \mleft\{ \eta \langle \at', \vy \rangle - \phi(\vy) \mright\}.
    \]
    Additionally, let $\lambda' = \vy'/(\vec 1^\top \vy'), \lambda = \vy/(\vec 1^\top \vy)$. Then, for small enough learning rate $\eta \leq \min\{3 \gamma(1)/80, 1/8\}$, dynamic learning rates are multiplicatively stable,
    \[
        \frac{\lambda'}{\lambda} \in \mleft[\frac{1}{2}, \frac{3}{2}\mright].
    \]
\end{theorem}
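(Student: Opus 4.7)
The plan is to mirror the proof of \Cref{theorem:mult_stable} with one extra preliminary step that certifies the locality required to invoke \Cref{prop:local_convex_phi}. The two hypotheses on $\eta$ play distinct roles: $\eta \leq 1/8$ activates the locality window $\epsilon = 1$ for the $\gamma$-LIL assumption, while $\eta \leq 3\gamma(1)/80$ then plays exactly the same role that $\eta \leq 3\gamma/80$ plays in the globally intrinsically Lipschitz proof.

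First, I would apply \Cref{lemma:proximal_step} to the two proximal iterates to obtain
\[
D_\phi(\vy' \; \| \; \vy) + D_\phi(\vy \; \| \; \vy') \;\leq\; 2\eta \|\at' - \at\|_\infty \;\leq\; 8\eta \;\leq\; 1,
\]
where the last two inequalities use $\|\at' - \at\|_\infty \leq 4$ and $\eta \leq 1/8$. This places $(\vy',\vy)$ inside the locality region of \Cref{def:local_intrin_lips} with $\epsilon = 1$, so the local intrinsic Lipschitz constant $\gamma(1)$ is available on this pair.

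With locality in hand, I would invoke \Cref{prop:local_convex_phi} (instantiated at $\epsilon = 1$) to obtain
\[
D_\phi(\vy' \; \| \; \vy) + D_\phi(\vy \; \| \; \vy') \;\geq\; (\alpha - \gamma(1))\mleft[\frac{\lambda'}{\lambda} + \frac{\lambda}{\lambda'} - 2\mright],
\]
and combine it with the upper bound from the previous step. Setting $\omega \defeq \lambda'/\lambda$ and $\Upsilon \defeq 2\eta\|\at' - \at\|_\infty/(\alpha - \gamma(1))$ reduces the question to the scalar inequality $\omega + 1/\omega \leq 2 + \Upsilon$, exactly as in the derivation of \Cref{theorem:mult_stable_general}. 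Plugging in $\alpha \geq 4\gamma$ (read at scale $\epsilon = 1$, i.e. $\alpha \geq 4\gamma(1)$), $\eta \leq 3\gamma(1)/80$, and $\|\at' - \at\|_\infty \leq 4$ yields $\Upsilon \leq 1/10$, and solving the resulting quadratic as in \Cref{theorem:mult_stable} gives $\omega \in [1/2, 3/2]$.

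The only genuine subtlety, and arguably the main obstacle, is the reason the LIL definition is phrased in terms of the lifted divergence $D_\phi$ rather than $D_\psi$: we need the natural sensitivity bound supplied by \Cref{lemma:proximal_step}, which lives on the lifted space and is measured by $D_\phi$, to immediately certify the locality required by the LIL hypothesis. This is precisely what the extra constraint $\eta \leq 1/8$ buys us; without it one cannot guarantee $D_\phi(\vy' \| \vy) + D_\phi(\vy \| \vy') \leq 1$ a priori, and the loop between proximal stability and local convexity of $\phi$ cannot be closed. Everything past that point is mechanical and parallels the IL argument verbatim.
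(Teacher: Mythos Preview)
Your proposal is correct and follows essentially the same approach as the paper: apply \Cref{lemma:proximal_step} together with $\eta \le 1/8$ to certify the locality condition $D_\phi(\vy'\|\vy)+D_\phi(\vy\|\vy')\le 1$, invoke the LIL version of the lower bound (\Cref{prop:local_convex_phi}) with $\gamma(1)$ in place of $\gamma$, and then repeat the scalar argument of \Cref{theorem:mult_stable_general}/\Cref{theorem:mult_stable} verbatim to obtain $\Upsilon \le 1/10$ and hence $\lambda'/\lambda \in [1/2,3/2]$. Your commentary on why the LIL definition is phrased in terms of $D_\phi$ is also on point.
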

\begin{proof}
    First by \Cref{lemma:proximal_step},
    \[
    D_\phi (\vy' \; \| \; \vy) + D_\phi (\vy \; \| \; \vy') \leq 2 \eta \| \at' - \at \|_{\infty} \leq 1.
    \]
    Thus, by \Cref{def:local_intrin_lips}, the definition of local intrinsic Lipschitzness,
    \[
     \mleft| \psi(\vx^\prime) - \psi(\vx) \mright|^2 \leq \gamma(1) D_{\psi} (\vx^\prime \; \| \; \vx).
    \]
    The result of the proof follows the same way as \Cref{theorem:mult_stable_general,theorem:mult_stable}
\end{proof}

\subsection{Proofs for \Cref{sec:stepIII}} \label{app:proof_stepIII}

\begin{restateproposition}{prop:curvature}
    As long as $\alpha \geq 4 \gamma + a$ for any positive number $a \in \mathbb{R}_+$, we have that,
    \[
    D_\phi (\vy' \;\|\; \vy) \geq (\gamma + a) D_{- \log} (\lambda' \; \| \; \lambda) + \frac{1}{4} D_\phi(\vx' \; \| \; \vx)
    \]
    for all $\vy', \vy \in (0, 1]\mathcal{X}$ such that
    \[
    \frac{\lambda'}{\lambda} = \frac{\vec 1^\top \vy'}{\vec 1^\top \vy} \in \mleft[ \frac{1}{2}, \frac{3}{2} \mright],
    \]
    where we denoted $\vx' = \frac{\vy'}{\vec 1^\top \vy'}, \vx = \frac{\vy}{\vec 1^\top \vy}$.
\end{restateproposition}
\begin{proof}
    By the expression for $D_\phi(\vy' \; \| \; \vy)$ in \Cref{lemma:breg_identity},
    \[
    D_\phi(\vy' \; \| \; \vy) & = \alpha D_{- \log} (\lambda^\prime \; \| \; \lambda) + \frac{\lambda^\prime}{\lambda} D_\psi (\vx^\prime, \vx) + \mleft(1 - \frac{\lambda^\prime}{\lambda} \mright) [\psi(\vx^\prime) - \psi(\vx)].
    \]
    Using Young's inequality on the last term, we can infer
    \[
    \mleft(1 - \frac{\lambda^\prime}{\lambda} \mright) [\psi(\vx^\prime) - \psi(\vx)] & \geq - \gamma \mleft( 1 - \frac{\lambda^\prime}{\lambda} \mright)^2 - \frac{1}{4 \gamma} [\psi(\vx^\prime) - \psi(\vx)]^2 \\
    & \geq - 3 \gamma D_{- \log} (\lambda^\prime \; \| \; \lambda) - \frac{1}{4} D_\psi(\vx' \; \| \; \vx),
    \]
    where in the last line, for the first term, we used the fact that
    \[
    - \log \omega + \omega - 1 \geq \frac{1}{3} (\omega - 1)^2 \qquad \forall \; \omega = \frac{\lambda'}{\lambda} \in \mleft[\frac{1}{2}, \frac{3}{2} \mright],
    \]
    and for the second term, we used \Cref{def:intrin_lips}. Setting $\alpha \geq 3 \gamma + a$ yields the expression.
\end{proof}

\begin{theorem}[Curvature in subsequent iterates of \OFTRL]\label{theorem:cruv}
    Given the regularizer $\phi$ with $\gamma$-IL component (or with $\gamma$-LIL component) and choice of $\alpha \geq 4 \gamma + a$ for any positive number $a \in \mathbb{R}_+$. consider regret vectors $\at',\at$ such that $\| \at' - \at\|_\infty \leq 4$ the corresponding proximal steps of \OFTRL,
    \[
    \vy  \leftarrow \argmax_{\vy \in (0, 1] \mathcal{X}} \mleft\{ \eta \langle \at, \vy \rangle - \phi(\vy) \mright\},
    \] 
    and 
    \[
    \vy'  \leftarrow \argmax_{\vy \in (0, 1] \mathcal{X}} \mleft\{ \eta \langle \at', \vy \rangle - \phi(\vy) \mright\}.
    \]
    Additionally, let $\lambda' = \vy'/(\vec 1^\top \vy'), \lambda = \vy/(\vec 1^\top \vy)$. Then, for small enough learning rate $\eta \leq 3 \gamma/80$ (or $\eta \leq \min\{3 \gamma(1)/80, 1/8\}$ when $\psi$ is $\gamma$-LIL), we have
    \[
    D_\phi (\vy' \;\|\; \vy) \geq \gamma D_{- \log} (\lambda' \; \| \; \lambda) + \frac{1}{4} D_\phi(\vx' \; \| \; \vx).
    \]
\end{theorem}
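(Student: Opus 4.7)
The plan is to derive \Cref{theorem:cruv} as an immediate corollary of the two results already proved in the two preceding subsections: the multiplicative stability of the lifted iterates (\Cref{theorem:mult_stable} in the $\gamma$-IL case, \Cref{theorem:mult_stable_local_lipsc} in the $\gamma$-LIL case) and the pointwise curvature lower bound (\Cref{prop:curvature}). The key observation is that the ratio-of-learning-rates hypothesis of \Cref{prop:curvature} — namely $\lambda'/\lambda \in [1/2,3/2]$ — is exactly what the stability theorems are designed to furnish under the assumed cap on $\eta$.

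First, I would verify the hypotheses of the stability result. Since $\alpha \ge 4\gamma + a > 4\gamma$, $\|\at' - \at\|_\infty \le 4$, and $\eta \le 3\gamma/80$ (or $\eta \le \min\{3\gamma(1)/80,\,1/8\}$ in the LIL case so that \Cref{lemma:proximal_step} also certifies the locality radius needed for \Cref{prop:local_convex_phi}), the proximal steps $\vy,\vy'$ satisfy
\[
\frac{\lambda'}{\lambda} \;=\; \frac{\vec 1^\top \vy'}{\vec 1^\top \vy} \;\in\; \mleft[\tfrac{1}{2},\tfrac{3}{2}\mright],
\]
directly by \Cref{theorem:mult_stable} (resp.\ \Cref{theorem:mult_stable_local_lipsc}).

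Second, this is precisely the interval under which \Cref{prop:curvature} may be invoked. Together with the assumption $\alpha \ge 4\gamma + a$ (and, in the LIL regime, \Cref{prop:local_convex_phi} in place of \Cref{prop:convex_phi} inside the proof of \Cref{prop:curvature}), the proposition yields
\[
D_\phi(\vy' \;\|\; \vy) \;\ge\; (\gamma + a)\, D_{-\log}(\lambda' \;\|\; \lambda) \;+\; \tfrac{1}{4}\, D_\phi(\vx' \;\|\; \vx).
\]
Since $a > 0$, we may weaken $(\gamma + a)$ to $\gamma$, giving exactly the claim of the theorem.

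There is no real technical obstacle here: all of the work is already packaged into the two cited results, so the proof is essentially a two-line composition. The only care needed is bookkeeping to cover both the $\gamma$-IL and $\gamma$-LIL cases in a unified statement — in the latter setting the extra constraint $\eta \le 1/8$ is required so that \Cref{lemma:proximal_step} forces $D_\phi(\vy'\,\|\,\vy) + D_\phi(\vy\,\|\,\vy') \le 1$, which in turn activates the locality condition of \Cref{def:local_intrin_lips} and thus legitimizes the use of the intrinsic-Lipschitz bound on $|\psi(\vx')-\psi(\vx)|^2$ inside the proof of \Cref{prop:curvature}. Once this subtlety is handled, the conclusion follows verbatim.
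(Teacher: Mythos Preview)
Your proposal is correct and matches the paper's own proof essentially verbatim: the paper simply states that the result follows directly by combining \Cref{theorem:mult_stable} (or \Cref{theorem:mult_stable_local_lipsc} in the LIL case) with \Cref{prop:curvature}. Your added remark about the $\eta \le 1/8$ condition ensuring the locality hypothesis of \Cref{def:local_intrin_lips} is exactly the bookkeeping encapsulated in \Cref{theorem:mult_stable_local_lipsc}.
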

\begin{proof}
    The proof follows directly by combination of \Cref{theorem:mult_stable,theorem:mult_stable_local_lipsc} with \Cref{prop:curvature}.
\end{proof}

\section{Detailed analysis of regret (\Cref{sec:regret})} \label{app:detailed}

To analyze the regret of \ours, we utilize its equivalent \OFTRL formulation \ref{eq:lifted_FTRL} and demonstrate that its regret, referred to as nonnegative regret, satisfies the RVU property~\citep{Syrgkanis15}. Consequently, leveraging the nonnegativity of the regret for the formulation in \ref{eq:lifted_FTRL}, we establish a nonnegative RVU bound for \ours. The approach of examining regret through nonnegative RVU bounds is inspired by the work of \citet{farina2022near}. This idea is formalized in the following section.

\subsection{Nonnegative Regret} \label{app:nonneg}

In terms, we analyze the nonnegative regret, defined as
\[
\tildereg\^T \defeq \max_{\vy^* \in [0,1]\mathcal{X} } \sum_{t = 1}^{T} \langle  {\ut}\^{t}, \vy^* - \vy\^{t} \rangle.
\]

The following proposition, characterizes that not only $\tildereg\^T$ is positive but also it gives us an upperbound on $\reg$ of \ours. This proposition is pivotal as it establishes a direct translation of RVU bounds for $\tildereg\^T$ into nonnegative RVU bounds for $\reg^T$.

\begin{proposition}[Nonnegative Regret]\label{prop:reg+}
    For any time horizon $T \in \mathbb{N}$, it holds that $\tildereg\^T = \max\{0, \reg\^T\}$. Consequently, $\tildereg\^T \geq 0$ and $\tildereg\^T \geq \reg\^T$.
\end{proposition}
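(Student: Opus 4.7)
The plan is to reduce the maximum over the lifted comparator set $[0,1]\mathcal{X}$ to a joint maximum over a scalar $c \in [0,1]$ and a simplex point $\vx^* \in \Delta^d$, and then use the fact that the corrected utility vector $\ut\^t$ is centered with respect to the played strategy $\vx\^t$.

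First, I would note the key identity $\langle \ut\^t, \vx\^t \rangle = 0$ for every $t$. This follows immediately from the definition $\ut\^t \defeq \nut\^t - \langle \nut\^t, \vx\^t \rangle \vec{1}_d$ together with $\vec{1}_d^\top \vx\^t = 1$ (since $\vx\^t \in \Delta^d$). Because $\vy\^t = \lambda\^t \vx\^t$ for some $\lambda\^t \in (0,\eta]$, it also follows that $\langle \ut\^t, \vy\^t \rangle = 0$, and hence the played-strategy terms drop out of the definition of $\tildereg\^T$, leaving
\[
\tildereg\^T \;=\; \max_{\vy^* \in [0,1]\mathcal{X}} \sum_{t=1}^T \langle \ut\^t, \vy^* \rangle.
\]

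Next, I would parametrize every $\vy^* \in [0,1]\mathcal{X}$ as $\vy^* = c\, \vx^*$ with $c \in [0,1]$ and $\vx^* \in \Delta^d$. Using $\vec{1}_d^\top \vx^* = 1$, a direct computation gives
\[
\langle \ut\^t, \vx^* \rangle \;=\; \langle \nut\^t, \vx^* \rangle - \langle \nut\^t, \vx\^t \rangle,
\]
so that $\sum_{t=1}^T \langle \ut\^t, \vx^* \rangle$ matches the bracketed expression in the definition of $\reg\^T$, and taking a supremum over $\vx^* \in \Delta^d$ recovers $\reg\^T$ exactly.

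Finally, since the objective is linear in $c$, I would swap the order of optimization to obtain
\[
\tildereg\^T \;=\; \max_{c \in [0,1]} c \cdot \reg\^T \;=\; \max\{0,\,\reg\^T\},
\]
where the last equality is because the optimum in $c$ is attained at $c = 1$ when $\reg\^T \ge 0$ and at $c = 0$ (i.e., $\vy^* = \vec{0} \in [0,1]\mathcal{X}$) otherwise. The two stated consequences $\tildereg\^T \ge 0$ and $\tildereg\^T \ge \reg\^T$ then follow from the identity $\max\{0,r\} \ge \max\{0,r\} \ge r$. I do not foresee any real obstacle: the whole argument is a short manipulation that crucially exploits the centering $\langle \ut\^t, \vx\^t \rangle = 0$ and the inclusion of the origin in $[0,1]\mathcal{X}$, which is precisely what makes the lifted regret a \emph{nonnegative} surrogate for $\reg\^T$.
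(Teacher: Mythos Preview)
Your proof is correct and rests on the same key observation as the paper, namely the orthogonality $\langle \ut\^t, \vy\^t \rangle = 0$, which makes the played-iterate terms vanish. Your explicit parametrization $\vy^* = c\,\vx^*$ and the linear-in-$c$ optimization is a clean way to obtain the full equality $\tildereg\^T = \max\{0,\reg\^T\}$ in one stroke; the paper's own write-up argues only the two inequalities $\tildereg\^T \ge \reg\^T$ (by restricting the comparator to $\Delta^d$) and $\tildereg\^T \ge 0$ (by taking $\vy^* = \vec 0$), which suffice for the downstream usage but leave the equality implicit.
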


\begin{proof}\allowdisplaybreaks
    To establish the result, consider the definition of the reward signal $ {\ut}\^t = \nut\^t -\langle \nut\^t, \vx\^t\rangle \vec1_d$ and the induced action $\vx\^t = \frac{\vy\^t}{\langle \vy\^t, {\vec 1}_d \rangle}$. Based on these, the regret can be analyzed as follows:
    \[
        \tildereg\^T & = \max_{\vy^* \in [0,1]\Delta^d } \sum_{t = 1}^{T} \langle  {\vec\ut}\^{t}, \vy^* - \vy\^{t} \rangle                              \\
        & = \max_{\vy^* \in [0,1]\mathcal{X} } \sum_{t = 1}^{T} \langle \nut\^t -\langle \nut\^t, \frac{\vy\^t}{\langle \vy\^t, {\vec 1}_d \rangle} \rangle \vec1_d, \vy^* - \vy\^{t} \rangle                                                   \\
        & =  \max_{\vy^* \in [0,1]\mathcal{X} } \sum_{t = 1}^{T} \langle \nut\^t, \vy^* \rangle - \mleft\langle \langle \nut\^t, \frac{\vy\^t}{\langle \vy\^t, {\vec 1}_d \rangle} \rangle \vec1_d, \vy^* \mright\rangle \numberthis{eq:reg+_1} \\
        & \geq \max_{\vy^* \in \mathcal{X} } \sum_{t = 1}^{T} \langle \nut\^t, \vy^* \rangle - \langle \nut\^t, \vx\^{t} \rangle . \langle \vec1_d, \vy^* \rangle                                                                           \\
        & \geq \max_{\vy^* \in \mathcal{X} } \sum_{t = 1}^{T} \langle \nut\^t, \vy^* \rangle - \langle \nut\^t, \vx\^{t} \rangle                                                                                                            \\
        & = \reg\^T,
    \]
    where \eqref{eq:reg+_1} follows from the orthogonality condition $\vec \ut\^{t} \perp \vy\^{t}$. Furthermore, it is straightforward to see that $\tildereg\^T \geq 0$ by selecting $0$ as the comparator. This completes the proof.
\end{proof}

\subsection{Analysis of \OFTRL in Formulation \ref{eq:lifted_FTRL}}

To proceed, we define $ {\vy}\^{t}$ as the outputs produced by the \OFTRL algorithm in Formulation \ref{eq:lifted_FTRL}:
\begin{align*}
    {\vy}\^{t} = \argmax_{  {\vy} \in \Omega} - F_t(  {\vy}) = \argmin_{  {\vy} \in \Omega} F_t(  {\vy}), \textup{ where } F_t(  {\vy}) \defeq - \mleft\langle  {\Ut}\^t +  {\vec\ut}\^{t-1},  {\vy} \mright\rangle + \frac{1}{\eta} \phi(  {\vy}). \numberthis{eq:Fdef}
\end{align*}
Additionally, we introduce the auxiliary sequence $ {\vz}\^{t}$, which corresponds to the solutions obtained from the standard \FTRL algorithm at each time step $t$:
\begin{align*}
    {\vz}\^{t} = \argmax_{  {\vz} \in \Omega} - G_t(  {\vz}) = \argmin_{  {\vz} \in \Omega} G_t(  {\vz}), \textup{ where } G_t(  {\vz}) \defeq - \mleft\langle  {\Ut}\^t,  {\vz} \mright\rangle + \frac{1}{\eta} \phi(  {\vz}). \numberthis{eq:Gdef}
\end{align*}
Both $G_t$ and $F_t$ are convex, as established in \Cref{corr:convex_phi}. Let $\lambda\^t = \vec 1^\top \vy\^t$ be the dynamic learning rate and $\vx\^t = \vy\^t / \lambda\^t$ the corresponding as noted in Formulation \ref{eq:OFTRL}. Similarly for \FTRL step \eqref{eq:Gdef}, let the dynamic learning rate $\xi\^t = \vec 1^\top \vz\^t$ and the corresponding actions $\vtheta\^t = \vz\^t / \xi\^t$.

With these definitions in place, we proceed to the next step of the analysis using the following standard lemma from the \OFTRL framework~\citep{rakhlin2013online}.

\begin{lemma}\label{lemma:basi_OFTRL}
    For any $ {\vy} \in \Omega$, let the sequences $\{ {\vy}\^{t}\}_{t = 1}^{T}$ and $\{ {\vz}\^{t}\}_{t = 1}^{T}$ be generated by the \FTRL update rules specified in \eqref{eq:Fdef} and \eqref{eq:Gdef}, respectively. Then, the following inequality holds:
    \[
        \sum_{t = 1}^{T} \mleft \langle  {\vy} -  {\vy}\^{t},  {\ut}\^{t} \mright \rangle & \leq \frac{1}{\eta} \phi( {\vy}) - \frac{1}{\eta} \phi( {\vy}\^{1}) + \sum_{t = 1}^{T} \mleft \langle  {\vz}\^{t+1} -  {\vy}\^{t},  {\ut}\^{t} -   {\ut}\^{t - 1}\mright \rangle \\
        & \qquad - \frac{1}{\eta} \sum_{t = 1}^{T} \mleft( D_{\phi}( {\vy}\^{t} \,\big\|\,  {\vz}\^{t}) + D_{\phi}( {\vz}\^{t+1} \,\big\|\,  {\vy}\^{t}) \mright).
    \]
\end{lemma}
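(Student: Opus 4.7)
The lemma is a standard \OFTRL identity with an auxiliary \FTRL sequence, and I would prove it by decomposing each regret term into three pieces and bounding them separately:
\[
\mleft\langle \ut\^{t}, \vy - \vy\^{t} \mright\rangle = \underbrace{\mleft\langle \ut\^{t}, \vy - \vz\^{t+1} \mright\rangle}_{\text{(A)}} + \underbrace{\mleft\langle \ut\^{t-1}, \vz\^{t+1} - \vy\^{t} \mright\rangle}_{\text{(B)}} + \underbrace{\mleft\langle \ut\^{t} - \ut\^{t-1}, \vz\^{t+1} - \vy\^{t} \mright\rangle}_{\text{(C)}}.
\]
Term (C) appears verbatim on the right-hand side of the claim, so only (A) and (B) need work. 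The main tool is the one-step optimality inequality: if $\vw = \argmin_{\vw'\in\Omega}\{\langle \vg, \vw'\rangle + R(\vw')\}$ with $R$ convex, then for every $\vw''\in\Omega$,
\[
\langle \vg, \vw - \vw'' \rangle + R(\vw) \le R(\vw'') - D_{R}(\vw'' \,\|\, \vw),
\]
which follows from first-order optimality together with the definition of the Bregman divergence. Convexity of $\phi$ on $\Omega$, and hence the well-definedness of $D_\phi$, is guaranteed by \Cref{corr:convex_phi} under the assumption $\alpha \ge \gamma$.

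\textbf{Bounding (B).} I would apply the optimality inequality twice: once to $\vy\^{t} = \argmin F_t$ with test point $\vz\^{t+1}$, and once to $\vz\^{t} = \argmin G_t$ with test point $\vy\^{t}$. Adding the two resulting inequalities and simplifying the cross-terms (using $F_t = G_t - \langle \ut\^{t-1}, \cdot\rangle$) yields
\[
\mleft\langle \ut\^{t-1}, \vz\^{t+1} - \vy\^{t} \mright\rangle \le \tfrac{1}{\eta}\mleft(\phi(\vz\^{t+1}) - \phi(\vz\^{t})\mright) - \mleft\langle \Ut\^{t}, \vz\^{t+1} - \vz\^{t} \mright\rangle - \tfrac{1}{\eta}\mleft[ D_\phi(\vz\^{t+1} \,\|\, \vy\^{t}) + D_\phi(\vy\^{t} \,\|\, \vz\^{t}) \mright].
\]
\textbf{Bounding (A).} I would use summation by parts, $\ut\^{t} = \Ut\^{t+1} - \Ut\^{t}$, together with the optimality inequality applied once more to $\vz\^{T+1} = \argmin G_{T+1}$ with test point $\vy$:
\[
\sum_{t=1}^{T}\mleft\langle \ut\^{t}, \vy - \vz\^{t+1} \mright\rangle \le \tfrac{1}{\eta}\mleft(\phi(\vy) - \phi(\vz\^{T+1})\mright) - \tfrac{1}{\eta} D_\phi(\vy \,\|\, \vz\^{T+1}) + \sum_{t=1}^{T}\mleft\langle \Ut\^{t}, \vz\^{t+1} - \vz\^{t} \mright\rangle,
\]
where I also use $\Ut\^{1} = \vec 0$ so that the boundary term vanishes.

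\textbf{Assembling the pieces.} Summing the bounds for (A) and (B) across $t = 1,\dots,T$, the parasitic terms $\sum_t \langle \Ut\^{t}, \vz\^{t+1} - \vz\^{t}\rangle$ cancel exactly, the sum $\sum_t [\phi(\vz\^{t+1}) - \phi(\vz\^{t})]$ telescopes to $\phi(\vz\^{T+1}) - \phi(\vz\^{1})$ and kills the $\phi(\vz\^{T+1})$ from (A), and the nonnegative term $\tfrac{1}{\eta}D_\phi(\vy \,\|\, \vz\^{T+1})$ is simply dropped. Since $\Ut\^{1} = \vec 0$ and $\ut\^{0} = \vec 0$, both $F_1$ and $G_1$ collapse to $\phi/\eta$, so $\vy\^{1} = \vz\^{1}$ and $\phi(\vz\^{1}) = \phi(\vy\^{1})$. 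Adding back (C) produces exactly the stated inequality. The main technical obstacle is the bookkeeping in the cancellation of the $\langle \Ut\^{t}, \vz\^{t+1} - \vz\^{t}\rangle$ terms, which requires the indices of the two optimality applications in (B) to line up precisely with those produced by the summation by parts in (A); everything else is routine Bregman-divergence algebra that relies only on the convexity of $\phi$ on $\Omega$.
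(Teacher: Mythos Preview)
Your proof is correct and uses the same underlying one-step optimality inequalities as the paper (applied to $\vy\^{t}=\argmin F_t$, $\vz\^{t}=\argmin G_t$, and once to $\vz\^{T+1}$), so the approaches are essentially equivalent. The only difference is organizational: the paper chains these inequalities into a direct telescope $G_t(\vz\^{t}) \le G_{t+1}(\vz\^{t+1}) + (\text{stuff})$ and then bounds the endpoints, whereas you unpack the regret into (A)/(B)/(C), run a summation by parts on (A), and then let the $\langle \Ut\^{t}, \vz\^{t+1}-\vz\^{t}\rangle$ terms cancel — a slightly longer but equally valid route to the same identity.
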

\begin{proof}
     Using \Cref{lemma:opt_bregman} and the optimality condition for $ {\vz}\^{t}$, we have
     \[
     G_t( {\vz}\^{t}) & \leq G_t( {\vy}\^{t}) - \frac{1}{\eta} D_{\phi} ( {\vy}\^{t} \,\big\|\, {\vz}\^{t}) \\ & \leq F_t( {\vy}\^{t}) + \langle {\vy}\^{t} , {\ut}\^{t - 1} \rangle - \frac{1}{\eta} D_{\phi} ( {\vy}\^{t} \,\big\|\, {\vz}\^{t}).
     \]
     Similarly, from the optimality of $ {\vy}\^{t}$, it follows that
     \[
     F_t( {\vy}\^{t}) & \leq F_t( {\vz}\^{t+1}) - \frac{1}{\eta} D_{\phi} ( {\vz}\^{t+1} \,\big\|\, {\vy}\^{t}) \\ & \leq G_{t+1}( {\vz}\^{t+1}) + \langle {\vz}\^{t+1}, {\ut}\^{t} - {\ut}\^{t-1} \rangle - \frac{1}{\eta} D_{\phi} ( {\vz}\^{t+1} \,\big\|\, {\vy}\^{t}).
     \]
     By combining these inequalities and summing over all $t$, we obtain
     \[
     G_1( {\vz}\^{1}) & \leq G_{T+1}( {\vz}\^{T + 1}) + \sum_{t = 1}^{T} (\langle {\vy}\^{t}, {\ut}\^{t} \rangle + \langle {\vz}\^{t+1} - {\vy}\^{t}, {\ut}\^{t} - {\ut}\^{t-1}\rangle) \\ & \qquad \quad - \frac{1}{\eta} \sum_{t=1}^{T} ( D_{\phi} ( {\vy}\^{t} \,\big\|\, {\vz}\^{t}) + D_{\phi} ( {\vz}\^{t+1} \,\big\|\, {\vy}\^{t})).
     \] 
     Substituting $G_{T+1}( {\vz}\^{T + 1}) \leq - \langle {\vy}, {\Ut}\^{T+1} \rangle + \frac{1}{\eta} \phi(\vy)$ and $G_{1}( {\vz}\^{1}) = \frac{1}{\eta} \phi( {\vy}\^{1})$, we complete the proof as follows,
     \[
     \sum_{t = 1}^{T} \mleft \langle {\vy} - {\vy}\^{t}, {\ut}\^{t} \mright \rangle & \leq \frac{1}{\eta} \phi( {\vy}) - \frac{1}{\eta} \phi( {\vy}\^{1}) + \sum_{t = 1}^{T} \mleft \langle {\vz}\^{t+1} - {\vy}\^{t}, {\ut}\^{t} - {\ut}\^{t - 1}\mright \rangle \\ & \qquad - \frac{1}{\eta} \sum_{t = 1}^{T} \mleft( D_{\phi}( {\vy}\^{t} \,\big\|\, {\vz}\^{t}) + D_{\phi}( {\vz}\^{t+1} \,\big\|\, {\vy}\^{t}) \mright). 
     \]
\end{proof}

\begin{lemma} \label{lemma:opt_bregman}
    Let $F: \Omega \rightarrow \bbR$ be a convex function defined on the compact set $\Omega$. The minimizer $ {\vz}^* = \argmin_{ {\vz} \in \Omega} F( {\vz})$ satisfies:
    \[
        F( {\vz}^*) \leq F( {\vz}) - D_F( {\vz} \|  {\vz}^*) \qquad \forall \vz \in \Omega,
    \]
    where $D_F$ is the Bregman divergence associated with the function $F$.
\end{lemma}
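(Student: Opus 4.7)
The plan is to combine the first-order optimality condition at $\vz^*$ with the definition of the Bregman divergence, which is a standard one-line argument once both ingredients are on the table. Since $\vz^* \in \argmin_{\vz \in \Omega} F(\vz)$ and $\Omega = (0,1]\mathcal{X}$ is convex, the variational characterization of optimality for a convex function on a convex set yields
\[
    \langle \nabla F(\vz^*),\, \vz - \vz^* \rangle \geq 0 \qquad \forall\, \vz \in \Omega.
\]
If $F$ happens to be nondifferentiable at $\vz^*$, the same inequality holds with any subgradient in $\partial F(\vz^*)$, and the argument below goes through verbatim.

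Next I would expand the definition of the Bregman divergence
\[
    D_F(\vz \,\|\, \vz^*) = F(\vz) - F(\vz^*) - \langle \nabla F(\vz^*),\, \vz - \vz^* \rangle,
\]
and rearrange it as
\[
    F(\vz^*) = F(\vz) - D_F(\vz \,\|\, \vz^*) - \langle \nabla F(\vz^*),\, \vz - \vz^* \rangle.
\]
The last term is nonnegative by the optimality inequality above, so dropping it only weakens the equation to an inequality, and I obtain $F(\vz^*) \leq F(\vz) - D_F(\vz \,\|\, \vz^*)$ for every $\vz \in \Omega$, as claimed. There is no serious obstacle here: the whole argument is two displays plus a sign observation. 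The only mild subtlety is that the statement implicitly presumes $F$ is (sub)differentiable at $\vz^*$ so that $D_F(\cdot\,\|\,\vz^*)$ is well defined; in the two uses of this lemma in the paper, $F$ is one of $F_t$ or $G_t$ from \eqref{eq:Fdef}--\eqref{eq:Gdef}, which inherit differentiability from $\phi$ on the relative interior of $(0,1]\Delta^d$, so this is automatic.
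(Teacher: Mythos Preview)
Your proof is correct and follows essentially the same approach as the paper: both expand the definition of the Bregman divergence to isolate $F(\vz^*)$ and then drop the inner product term $\langle \nabla F(\vz^*), \vz - \vz^* \rangle$ using first-order optimality. The paper's version is terser and omits your remarks on subgradients and differentiability, but the argument is identical.
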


\begin{proof}
    From the definition of the Bregman divergence, we know,
    \[
        F( {\vz}^*) = F( {\vz}) - \langle \nabla F( {\vz}^*),  {\vz} -  {\vz}^* \rangle - D_F( {\vz} \|  {\vz}^*).
    \]
    Applying the first-order optimality condition for $ {\vz}^*$, we conclude
    \[
        F( {\vz}^*) \leq F( {\vz}) - D_F( {\vz} \|  {\vz}^*).
    \]
    This completes the proof.
\end{proof}

\begin{lemma}\label{lemma:u_correction}
    Suppose that $\| \nut\^{t} \|_\infty \leq 1$ holds for all $t \in [T]$ according to \Cref{assumption:bounded}. Then, the following inequality is satisfied:
    \[
        \| \ut\^{t} - \ut\^{t - 1} \|_{\infty}^2 \leq 6 \|\nut\^t - \nut\^{t - 1}\|_{\infty}^2 + 4 \| \vx\^t - \vx\^{t-1} \|_1^2.
    \]
\end{lemma}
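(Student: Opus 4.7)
The plan is to unfold the definition $\ut\^t \defeq \nut\^t - \langle \nut\^t, \vx\^t\rangle \vec 1_d$ and then repeatedly apply the elementary inequality $(a+b)^2 \leq 2a^2 + 2b^2$ together with Hölder's inequality on the simplex, exploiting $\|\vx\^t\|_1 = 1$ and $\|\nut\^t\|_\infty \leq 1$ from Assumption~\ref{assumption:bounded}.

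First I would write
\[
\ut\^t - \ut\^{t-1} = (\nut\^t - \nut\^{t-1}) - \bigl(\langle \nut\^t, \vx\^t\rangle - \langle \nut\^{t-1}, \vx\^{t-1}\rangle\bigr)\vec 1_d,
\]
and observe that $\|\vec 1_d\|_\infty = 1$, so taking $\ell_\infty$ norms and using $(a+b)^2 \leq 2a^2 + 2b^2$ yields
\[
\|\ut\^t - \ut\^{t-1}\|_\infty^2 \leq 2\,\|\nut\^t - \nut\^{t-1}\|_\infty^2 + 2\,\bigl|\langle \nut\^t, \vx\^t\rangle - \langle \nut\^{t-1}, \vx\^{t-1}\rangle\bigr|^2.
\]

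Next I would handle the scalar cross-term by splitting it as
\[
\langle \nut\^t, \vx\^t\rangle - \langle \nut\^{t-1}, \vx\^{t-1}\rangle = \langle \nut\^t - \nut\^{t-1}, \vx\^t\rangle + \langle \nut\^{t-1}, \vx\^t - \vx\^{t-1}\rangle,
\]
again applying $(a+b)^2 \leq 2a^2 + 2b^2$, and then bounding each piece by Hölder: the first by $\|\nut\^t - \nut\^{t-1}\|_\infty \|\vx\^t\|_1 = \|\nut\^t - \nut\^{t-1}\|_\infty$, and the second by $\|\nut\^{t-1}\|_\infty \|\vx\^t - \vx\^{t-1}\|_1 \leq \|\vx\^t - \vx\^{t-1}\|_1$ using the boundedness assumption. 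This gives
\[
\bigl|\langle \nut\^t, \vx\^t\rangle - \langle \nut\^{t-1}, \vx\^{t-1}\rangle\bigr|^2 \leq 2\,\|\nut\^t - \nut\^{t-1}\|_\infty^2 + 2\,\|\vx\^t - \vx\^{t-1}\|_1^2.
\]
Combining with the previous display yields the claimed constants $6$ and $4$.

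There is essentially no technical obstacle here: the argument is a short combination of the triangle-type inequality $(a+b)^2 \leq 2a^2 + 2b^2$ and Hölder, and the constants come out to exactly $6$ and $4$. The only care required is to make sure the scalar term is decomposed so that each piece can be bounded by one of the two quantities appearing on the right-hand side (the utility variation and the action variation), and to keep track of the two applications of the $2a^2 + 2b^2$ inequality (which produce the overall factor of $4$ on the cross-term, hence the $4$ in front of $\|\vx\^t - \vx\^{t-1}\|_1^2$ and the $2+4=6$ in front of $\|\nut\^t - \nut\^{t-1}\|_\infty^2$).
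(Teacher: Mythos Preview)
Your proof is correct and essentially identical to the paper's: both expand $\ut\^t - \ut\^{t-1}$, apply $(a+b)^2 \le 2a^2+2b^2$ once, split the scalar cross-term into a utility-variation piece and an action-variation piece, apply $(a+b)^2 \le 2a^2+2b^2$ again, and finish with H\"older. The only cosmetic difference is that the paper decomposes the scalar as $\langle \nut\^t,\vx\^t-\vx\^{t-1}\rangle+\langle \nut\^t-\nut\^{t-1},\vx\^{t-1}\rangle$ whereas you use $\langle \nut\^t-\nut\^{t-1},\vx\^t\rangle+\langle \nut\^{t-1},\vx\^t-\vx\^{t-1}\rangle$; both yield the constants $6$ and $4$.
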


\begin{proof}
    From the definition, we have:
    \[
    \hspace{-0.5cm}
        \| {\ut}\^{t} -   {\ut}\^{t - 1} \|_{\infty}^2 & = \| ({\nut}\^{t} - \langle \nut\^t, \vx\^t \rangle \vec{1} ) -   ({\nut}\^{t-1} - \langle \nut\^{t-1}, \vx\^{t-1} \rangle \vec{1} ) \|_{\infty}^2 \\
        & \leq \mleft(\|\nut\^t - \nut\^{t - 1}\|_{\infty} + \| \langle \nut\^t, \vx\^t \rangle \vec{1} - \langle \nut\^{t-1}, \vx\^{t-1} \rangle \vec{1} \|_{\infty} \mright)^2 \numberthis{eq:u_correction1} \\
        & =  \mleft(\|\nut\^t - \nut\^{t - 1}\|_{\infty} + \big| \langle \nut\^t, \vx\^t \rangle  - \langle \nut\^{t-1}, \vx\^{t-1} \rangle \big|\mright)^2 \\
        & \leq 2 \|\nut\^t - \nut\^{t - 1}\|_{\infty}^2 + 2 \big| \langle \nut\^t, \vx\^t \rangle  - \langle \nut\^{t-1}, \vx\^{t-1} \rangle \big|^2 \numberthis{eq:u_correction2} \\
        & \leq  2 \|\nut\^t - \nut\^{t - 1}\|_{\infty}^2 + 2 \big| \big(\langle \nut\^t, \vx\^t \rangle  - \langle \nut\^t, \vx\^{t-1} \rangle\big) + \big(\langle \nut\^t, \vx\^{t-1} \rangle - \langle \nut\^{t-1}, \vx\^{t-1} \rangle\big) \big|^2 \\
        & \leq  2 \|\nut\^t - \nut\^{t - 1}\|_{\infty}^2 + 4 \big|\langle \nut\^t, \vx\^t - \vx\^{t-1}\rangle\big|^2 + 4 \big| \langle \nut\^t - \nut\^{t-1}, \vx\^{t-1} \rangle \big|^2 \numberthis{eq:u_correction3} \\
        & \leq  2 \|\nut\^t - \nut\^{t - 1}\|_{\infty}^2 + 4 \| \vx\^t - \vx\^{t-1} \|_1^2 + 4 \| \nut\^t - \nut\^{t-1} \|_{\infty}^2 \numberthis{eq:u_correction4} \\
        & = 6 \|\nut\^t - \nut\^{t - 1}\|_{\infty}^2 + 4 \| \vx\^t - \vx\^{t-1} \|_1^2,
    \]
    where \eqref{eq:u_correction1} follows from the triangle inequality, \eqref{eq:u_correction2} and \eqref{eq:u_correction3} make use of Young’s inequality, and \eqref{eq:u_correction4} applies Hölder’s inequality.
\end{proof}

\begin{lemma} \label{lemma:correct_y}
    Let $\vz, \vy$ be two arbitrary vectors in $\Omega = (0, 1] \mathcal{X}$, then,
    \[
    \| \vz - \vy \|_1^2 \leq 2 \mleft| \vec 1^\top \vz - \vec 1 ^\top \vy \mright|^2 + 2 \| \frac{\vz}{1^\top \vz} - \frac{\vy}{1^\top \vy} \|_1^2.
    \]
\end{lemma}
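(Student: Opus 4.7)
The plan is to decompose $\vz - \vy$ into two natural pieces: one capturing the change in the total mass ($\vec 1^\top \vz$ vs.\ $\vec 1^\top \vy$) and one capturing the change in the normalized direction on the simplex. First I would introduce the shorthand $\lambda_z \defeq \vec 1^\top \vz$, $\lambda_y \defeq \vec 1^\top \vy$, $\vx_z \defeq \vz/\lambda_z$, and $\vx_y \defeq \vy/\lambda_y$, so that $\vz = \lambda_z \vx_z$ and $\vy = \lambda_y \vx_y$, with $\vx_z, \vx_y \in \Delta^d$ and $\lambda_z, \lambda_y \in (0,1]$.

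Next, adding and subtracting $\lambda_z \vx_y$ yields the identity $\vz - \vy = \lambda_z(\vx_z - \vx_y) + (\lambda_z - \lambda_y)\vx_y$. Applying the triangle inequality in the $\ell_1$ norm, together with the facts that $\|\vx_y\|_1 = 1$ (since $\vx_y \in \Delta^d$) and $\lambda_z \leq 1$ (since $\vz \in \Omega$), gives
\[
\|\vz - \vy\|_1 \leq \lambda_z \|\vx_z - \vx_y\|_1 + |\lambda_z - \lambda_y|\,\|\vx_y\|_1 \leq \|\vx_z - \vx_y\|_1 + |\lambda_z - \lambda_y|.
\]
Squaring both sides and applying the elementary inequality $(a+b)^2 \leq 2a^2 + 2b^2$ produces exactly the claimed bound.

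There is no real obstacle here; the only subtlety is choosing the right splitting of $\vz - \vy$ so that the bounded terms ($\lambda_z \leq 1$ and $\|\vx_y\|_1 = 1$) match the coefficients appearing on the right-hand side. Splitting symmetrically (using $\lambda_y \vx_z$ instead) would work equally well thanks to the symmetric $(0,1]$ bound on both learning rates.
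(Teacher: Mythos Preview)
Your proof is correct and follows essentially the same approach as the paper: decompose $\vz - \vy$ via an add-and-subtract trick, apply the triangle inequality together with $\lambda \le 1$ and $\|\vx\|_1 = 1$, and finish with $(a+b)^2 \le 2a^2 + 2b^2$. The only cosmetic difference is that the paper adds and subtracts $\lambda_y \vx_z$ rather than your $\lambda_z \vx_y$, which (as you yourself note) is the symmetric variant and yields the identical bound.
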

\begin{proof}
    \[
    \| \vz - \vy \|_1 & = \sum_{\ind}^d \mleft| \vz[\ind] - \vy[\ind] \mright| \\
    & = \sum_{\ind}^d \mleft|  \vec 1^\top \vz \frac{\vz[\ind]}{ \vec 1^\top \vz} - \vec 1 ^\top \vy \frac{\vy[\ind]}{\vec 1 ^\top \vy}\mright|  \\
    & = \sum_{\ind}^d \mleft|  \vec 1^\top \vz \frac{\vz[\ind]}{ \vec 1^\top \vz} -  \vec 1^\top \vy \frac{\vz[\ind]}{ \vec 1^\top \vz} + \vec 1^\top \vy \frac{\vz[\ind]}{ \vec 1^\top \vz} - \vec 1 ^\top \vy \frac{\vy[\ind]}{\vec 1 ^\top \vy}\mright| \\
    & \leq \sum_{\ind}^d \mleft( \frac{\vz[\ind]}{ \vec 1^\top \vz} \mleft| \vec 1^\top \vz - \vec 1^\top \vy \mright| + \vec 1 ^\top \vy \mleft|  \frac{\vz[\ind]}{ \vec 1^\top \vz} - \frac{\vy[\ind]}{\vec 1 ^\top \vy}\mright| \mright) \numberthis{eq:beta_terms_with_lambda_trick1} \\
    & \leq \mleft| \vec 1^\top \vz - \vec 1^\top \vy \mright| + \| \frac{\vz}{1^\top \vz} - \frac{\vy}{1^\top \vy} \|_1,
    \]
    where \eqref{eq:beta_terms_with_lambda_trick1} follows by triangle inequality and the last line follows since $ \vy \in (0, 1]\mathcal{X}$. Next, by Young's inequality,
    \[
    \| \vz - \vy \|_1^2 \leq 2 \mleft| \vec 1^\top \vz - \vec 1 ^\top \vy \mright|^2 + 2 \| \frac{\vz}{1^\top \vz} - \frac{\vy}{1^\top \vy} \|_1^2.
    \]
    
\end{proof}

\subsection{RVU Bounds and Regret Analysis} \label{app:rvu}

\begin{restatetheorem}{theorem:nonnegative_rvu}
[Nonnegative RVU bound of \ours]
    Under \Cref{assumption:bounded}, by choosing a sufficiently small learning rate $\eta \leq \min\{ 3\gamma/80, \mu/(32\sqrt{2}) \}$\footnote{If $\psi$ is $\gamma$-LIL, it is sufficient to select $\eta \leq \min \{ 3\gamma(1)/80, 1/8, \mu/(32\sqrt{2})\}$, after which guarantees naturally follow.} and $\alpha \geq 4 \gamma + \mu$, the cumulative regret ($\reg\^T$) incurred by \ours up to time $T$ is bounded as follows:
    \[
        \max\{\reg\^T, 0\} = \tildereg\^T \leq 3 + \frac{1}{\eta} \mleft( \alpha \log T + \mathcal{R} \mright) + \eta \frac{48}{\mu} \sum_{t = 1}^{T-1} \|\nut\^{t+1} - \nut\^{t}\|_{\infty}^2  - \frac{1}{\eta} \frac{\mu}{64} \sum_{t = 1}^{T-1} \| {\vx}\^{t+1} -  {\vx}\^{t} \|_1^2.
    \]
\end{restatetheorem}
\begin{proof}
    For any choice of comparator $ {\vy} \in \Omega = (0,1]\mathcal{X}$, let $ {\vy}^\prime = \frac{T - 1}{T} {\vy} + \frac{1}{T} {\vy}\^{1} \in \Omega$. Define $ \lambda' = \vec 1^\top \vy$ and $\vx' = \vy / \vec 1^\top \vy $ such that ${\vy}^\prime = \lambdap \vx^\prime$. By direct calculations,
    \[
        \sum_{t = 1}^T  \mleft\langle {\vy} -  {\vy}\^{t},  {\ut}\^{t} \mright\rangle  & = \sum_{t = 1}^T \langle  {\vy} -  {\vy}^\prime,  {\ut}\^{t} \rangle + \sum_{t = 1}^T \langle  {\vy}^\prime -  {\vy}\^{t},  {\ut}\^{t} \rangle           \\
        & = \frac{1}{T} \sum_{t = 1}^T \langle  {\vy} -  {\vy}\^{1},  {\ut}\^{t} \rangle + \sum_{t = 1}^T \langle  {\vy}^\prime -  {\vy}\^{t},  {\ut}\^{t} \rangle \\
        & \leq 2 + \sum_{t = 1}^T \langle  {\vy}^\prime -  {\vy}\^{t},  {\ut}\^{t} \rangle,
    \]
    where the last line follows because of Hölder's inequality and $\|  {\ut}\^{t} \|_\infty \leq 2 \| \nut\^t \|_\infty \leq 2$, under \Cref{assumption:bounded}. 

    In turn, we need to upperbound the $\sum_{t = 1}^T \langle  {\vy}^\prime -  {\vy}\^{t},  {\ut}\^{t} \rangle$ term. By \Cref{lemma:basi_OFTRL},
    \[
    \sum_{t = 1}^{T} \mleft \langle  {\vy}' -  {\vy}\^{t},  {\ut}\^{t} \mright \rangle & \leq \underbrace{\frac{1}{\eta} \phi( {\vy'}) - \frac{1}{\eta} \phi( {\vy}\^{1})}_{(\textup{I})} + \underbrace{\sum_{t = 1}^{T} \mleft \langle  {\vz}\^{t+1} -  {\vy}\^{t},  {\ut}\^{t} -   {\ut}\^{t - 1}\mright \rangle}_{(\textup{II})} \\
    & \qquad \underbrace{- \frac{1}{\eta} \sum_{t = 1}^{T} \mleft( D_{\phi}( {\vy}\^{t} \,\big\|\,  {\vz}\^{t}) + D_{\phi}( {\vz}\^{t+1} \,\big\|\,  {\vy}\^{t}) \mright)}_{(\textup{III})}.
    \]
    For the term (I), following some calculations,
    \[
    (\textup{I}) & = \frac{1}{\eta} \mleft(\phi( {\vy'}) - \phi( {\vy}\^{1}) \mright) \\
    & = \frac{1}{\eta} \mleft( - \alpha \log (\vec{1}^\top \vy') + \psi (\frac{\vy'}{{\vec 1}^\top \vy'}) +  \alpha \log (\vec{1}^\top \vy\^1) - \psi (\frac{\vy\^1}{{\vec 1}^\top \vy\^1}) \mright) \\
    & \leq \frac{1}{\eta} \mleft( \alpha \log \mleft( \frac{{\vec 1}^\top \vy\^1}{{\vec{1}}^\top \vy'} \mright) + \psi(\vx') \mright) \\
    & \leq \frac{1}{\eta} \mleft(\alpha \log T + \mathcal{R} \mright), 
    \]
    where $\mathcal{R} = \argmax_{\vx \in \mathcal{X}} \psi(\vx)$.

    In sequel, using Hölder's and Young's inequalities, we conclude that term (II) is upper bounded by,
    \[
    (\textup{II}) & \leq \sum_{t = 1}^{T} \mleft \langle  {\vz}\^{t+1} -  {\vy}\^{t},  {\ut}\^{t} -   {\ut}\^{t - 1}\mright \rangle \\
        & \leq \sum_{t = 1}^{T} \| {\vz}\^{t+1} -  {\vy}\^{t}\|_{1} \cdot \| {\ut}\^{t} -   {\ut}\^{t - 1} \|_{\infty}  \\
        & \leq \sum_{t = 1}^{T} \mleft( \frac{\mu}{32 \eta} \| {\vz}\^{t+1} -  {\vy}\^{t}\|_{1}^2 + \frac{8}{\mu} \eta \| {\ut}\^{t} -   {\ut}\^{t - 1} \|_{\infty}^2 \mright) \\
        & \leq \sum_{t = 1}^{T} \mleft(\frac{\mu}{32 \eta} \| {\vz}\^{t+1} -  {\vy}\^{t}\|_{1}^2 + \frac{48}{\mu} \eta \|\nut\^t - \nut\^{t - 1}\|_{\infty}^2 + \frac{32}{\mu} \eta \| \vx\^t - \vx\^{t-1} \|_1^2 \mright) \numberthis{eq:pluging_u}, \\
        & \leq \sum_{t = 1}^{T} \mleft(\frac{\mu}{16 \eta} \mleft| \xi\^{t+1} - \lambda\^{t} \mright|^2 + \frac{\mu}{16 \eta} \| {\vtheta}\^{t+1} -  {\vx}\^{t}\|_{1}^2 + \frac{48}{\mu} \eta \|\nut\^t - \nut\^{t - 1}\|_{\infty}^2 + \frac{32}{\mu} \eta \| \vx\^t - \vx\^{t-1} \|_1^2 \mright)
    \]
    where we used \Cref{lemma:u_correction} in \eqref{eq:pluging_u} and \Cref{lemma:correct_y} in the last line.

    Subsequently for the term (III), by setting $a = \mu$ in \Cref{theorem:cruv},
    \[
    \hspace{-0.5cm}(\textup{III}) & = - \frac{1}{\eta} \sum_{t = 1}^{T} \mleft( D_{\phi}( {\vy}\^{t} \,\big\|\,  {\vz}\^{t}) + D_{\phi}( {\vz}\^{t+1} \,\big\|\,  {\vy}\^{t}) \mright) \\
    & \leq - \frac{1}{\eta} \sum_{t = 1}^{T} \mleft( (\gamma + \mu) D_{-\log} (\lambda\^t \; \| \; \xi\^t) + \frac{1}{4} D_{\psi}( {\vx}\^{t} \,\big\|\,  {\vtheta}\^{t}) + (\gamma + \mu) D_{-\log} (\xi\^{t+1} \; \| \; \lambda\^t)+ \frac{1}{4} D_{\psi}( {\vtheta}\^{t+1} \,\big\|\,  {\vx}\^{t}) \mright) \\
    & = - \frac{1}{4 \eta} \sum_{t = 1}^{T} \mleft( D_{\psi}( {\vx}\^{t} \,\big\|\,  {\vtheta}\^{t}) + D_{\psi}( {\vtheta}\^{t+1} \,\big\|\,  {\vx}\^{t})\mright) - \frac{\gamma + \mu}{\eta} \sum_{t = 1}^{T} \mleft( - \log(\frac{\xi\^{t + 1}}{\lambda\^t}) + \frac{\xi\^{t + 1}}{\lambda\^t} - 1 \mright) \\
    & \leq - \frac{1}{4 \eta} \sum_{t = 1}^{T} \mleft( D_{\psi}( {\vx}\^{t} \,\big\|\,  {\vtheta}\^{t}) + D_{\psi}( {\vtheta}\^{t+1} \,\big\|\,  {\vx}\^{t})\mright) - \frac{\gamma + \mu}{3 \eta} \sum_{t = 1}^{T} \mleft| \frac{\xi\^{t + 1}}{ \lambda\^t} - 1 \mright|^2 \numberthis{eq;log_breg_to_quadratic1} \\
    & \leq - \frac{1}{4 \eta} \sum_{t = 1}^{T} \mleft( D_{\psi}( {\vx}\^{t} \,\big\|\,  {\vtheta}\^{t}) + D_{\psi}( {\vtheta}\^{t+1} \,\big\|\,  {\vx}\^{t})\mright) - \frac{\gamma + \mu}{3 \eta} \sum_{t = 1}^{T} \mleft| \xi\^{t + 1}- \lambda\^t \mright|^2 \numberthis{eq;log_breg_to_quadratic2} \\
    & \leq - \frac{1}{8 \eta} \sum_{t = 1}^{T}  D_{\psi}( {\vtheta}\^{t+1} \,\big\|\,  {\vx}\^{t}) - \frac{\mu}{16 \eta} \sum_{t = 1}^{T} \mleft( \| {\vx}\^{t} -  {\vtheta}\^{t} \|_1^2 + \| {\vtheta}\^{t+1} -  {\vx}\^{t} \|_1^2 \mright) - \frac{\gamma + \mu}{3 \eta} \sum_{t = 1}^{T} \mleft| \xi\^{t + 1} - \lambda\^t \mright|^2 \numberthis{eq;breg_to_quadratic3} \\
    & \leq - \frac{1}{8 \eta} \sum_{t = 1}^{T}  D_{\psi}( {\vtheta}\^{t+1} \,\big\|\,  {\vx}\^{t}) - \frac{\mu}{16 \eta} \sum_{t = 1}^{T - 1} \mleft( \| {\vx}\^{t+1} -  {\vtheta}\^{t+1} \|_1^2 + \| {\vtheta}\^{t+1} -  {\vx}\^{t} \|_1^2 \mright) - \frac{\gamma + \mu}{3 \eta} \sum_{t = 1}^{T} \mleft| \xi\^{t + 1} - \lambda\^t \mright|^2 \\
    & \leq - \frac{\mu}{16 \eta} \sum_{t = 1}^{T}  \| {\vtheta}\^{t+1} -  {\vx}\^{t} \|_1^2 - \frac{\mu}{32 \eta} \sum_{t = 1}^{T-1}  \| {\vx}\^{t+1} -  {\vx}\^{t} \|_1^2 - \frac{\gamma + \mu}{3 \eta} \sum_{t = 1}^{T} \mleft| \xi\^{t + 1} - \lambda\^t \mright|^2, \numberthis{eq;breg_to_quadratic4}
    \]
    where \eqref{eq;log_breg_to_quadratic1} follows by mutiplicative stability of learning rates for proximal steps established in \Cref{theorem:mult_stable,theorem:mult_stable_local_lipsc} and the fact that
    \[
    - \log \omega + \omega - 1 \geq \frac{1}{3} (\omega - 1)^2 \qquad \forall \; \omega = \frac{\xi\^{t + 1}}{\lambda\^t} \in \mleft[\frac{1}{2}, \frac{3}{2} \mright]. 
    \]
    \eqref{eq;log_breg_to_quadratic2} follows simply by remembering that $\lambda\^t \in (0, 1]$, \eqref{eq;breg_to_quadratic3} follows by $\mu$-strong convexity of $\psi$ w.r.t. $\ell_1$ norm. And finally, \eqref{eq;breg_to_quadratic4} follows by Young's inequality.

    In turn, we assemble the complete picture by summing (II) and (III) terms,
    \[
    (\textup{II}) + (\textup{III}) & \leq \eta \frac{48}{\mu} \sum_{t = 1}^T  \|\nut\^t - \nut\^{t - 1}\|_{\infty}^2 - \frac{1}{\eta} \mleft( \frac{\mu}{32} - \frac{32}{\mu} \eta^2 \mright) \sum_{t = 1}^{T-1} \| {\vx}\^{t+1} -  {\vx}\^{t} \|_1^2 \\
    & \leq 1 + \eta \frac{48}{\mu} \sum_{t = 1}^{T-1} \|\nut\^{t+1} - \nut\^{t}\|_{\infty}^2  - \frac{1}{\eta} \mleft( \frac{\mu}{32} - \frac{32}{\mu} \eta^2 \mright) \sum_{t = 1}^{T-1} \| {\vx}\^{t+1} -  {\vx}\^{t} \|_1^2 \\
    & \leq 1 + \eta \frac{48}{\mu} \sum_{t = 1}^{T-1} \|\nut\^{t+1} - \nut\^{t}\|_{\infty}^2  - \frac{1}{\eta} \frac{\mu}{64} \sum_{t = 1}^{T-1} \| {\vx}\^{t+1} -  {\vx}\^{t} \|_1^2,
    \]
    where in the last line we used $\eta \leq \frac{\mu}{32 \sqrt{2}}$.
    Summing all terms,
    \[
    \sum_{t = 1}^T  \mleft\langle {\vy} -  {\vy}\^{t},  {\ut}\^{t} \mright\rangle & \leq 2 + (\textup{I}) + (\textup{II}) + (\textup{III}) \\
    & \leq 3 + \frac{1}{\eta} \mleft( \alpha \log T + \mathcal{R} \mright) + \eta \frac{48}{\mu} \sum_{t = 1}^{T-1} \|\nut\^{t+1} - \nut\^{t}\|_{\infty}^2  - \frac{1}{\eta} \frac{\mu}{64} \sum_{t = 1}^{T-1} \| {\vx}\^{t+1} -  {\vx}\^{t} \|_1^2.
    \]
\end{proof}

\begin{restateproposition}{prop:path_length}
    Under \Cref{assumption:bounded}, if all players adhere to the \ours algorithm with  a learning rate $\eta \leq \min\{ 3\gamma/80, \mu/(32\sqrt{2}), \mu/ (L n 32 \sqrt{6}) \}$\footnote{If $\psi$ is $\gamma$-LIL, it is sufficient to select $\eta \leq \min\{ 3\gamma(1)/80, 1/8, \mu/(32\sqrt{2}), \mu/ (L n 32 \sqrt{6})\}$ to ensure the same guarantees.}, the total path length is bounded as follows,
    \[
    \sum_{i = 1}^{n} \sum_{t = 1}^{T - 1} \|\vx\^{t+1}_i - \vx\^{t}_i \|_1^2 \leq \frac{128 \eta}{\mu} \mleft( 3 n + n \frac{\alpha \log T + \mathcal{R}}{\eta} \mright).
    \]
\end{restateproposition}
\begin{proof}
    \Cref{assumption:bounded} implies that,
    \[
    \| \nut\^{t + 1}_i - \nut\^{t}_i \|_{\infty}^2 & \leq L^2 \mleft(\sum_{i = 1}^{n} \| \vx\^{t+1}_i - \vx\^{t}_i \|_1 \mright)^2 \\
    & \leq L^2 n \sum_{i = 1}^{n} \| \vx\^{t+1}_i - \vx\^{t}_i \|_1^2,
    \]
    where the final line follows from Jensen's inequality. Subsequently, we combine this result with the nonnegative RVU bound on $\tildereg^T$ for the $i$th player, as stated in \Cref{theorem:nonnegative_rvu},
    \[
    \tildereg_i\^T & \leq 3 + \frac{1}{\eta} \mleft( \alpha \log T + \mathcal{R} \mright) + \eta \frac{48}{\mu} \sum_{t = 1}^{T-1} \|\nut\^{t+1} - \nut\^{t}\|_{\infty}^2  - \frac{1}{\eta} \frac{\mu}{64} \sum_{t = 1}^{T-1} \| {\vx}\^{t+1} -  {\vx}\^{t} \|_1^2 \\
    & \leq 3 + \frac{1}{\eta} \mleft( \alpha \log T + \mathcal{R} \mright) + \eta \frac{48}{\mu} L^2 n \sum_{j = 1}^{n} \sum_{t = 1}^{T-1} \| \vx\^{t+1} - \vx\^{t}\|_{\infty}^2  - \frac{1}{\eta} \frac{\mu}{64} \sum_{t = 1}^{T-1} \| {\vx}\^{t+1} -  {\vx}\^{t} \|_1^2.
    \]
    Summing the nonnegative regret over all the players $i \in [n]$,
    \[
    \sum_{i = 1}^{n}  \tildereg\^T_i & \leq 3 n +  \frac{n}{\eta} \mleft( \alpha \log T + \mathcal{R} \mright) + \mleft( \eta \frac{48 L^2 n^2}{\mu} - \frac{1}{\eta} \frac{\mu}{64} \mright) \sum_{j = 1}^{n} \sum_{t = 1}^{T-1} \| \vx\^{t+1} - \vx\^{t}\|_{\infty}^2 \\
    & \leq 3 n +  \frac{n}{\eta} \mleft( \alpha \log T + \mathcal{R} \mright) - \frac{1}{\eta} \frac{\mu}{128} \sum_{j = 1}^{n} \sum_{t = 1}^{T-1} \| \vx\^{t+1} - \vx\^{t}\|_{\infty}^2,
    \]
    where we used the choice that $\eta \leq \frac{\mu}{L n 32 \sqrt{6}}$.

    In turn, using the nonnegativity of regret, $0 \leq \tildereg^T_i$ for all $i \in [n]$, we deduce that,
    \[
    0 \leq 3 n +  \frac{n}{\eta} \mleft( \alpha \log T + \mathcal{R} \mright) - \frac{1}{\eta} \frac{\mu}{128} \sum_{j = 1}^{n} \sum_{t = 1}^{T-1} \| \vx\^{t+1} - \vx\^{t}\|_{\infty}^2,
    \]
    and hence,
    \[
    \sum_{j = 1}^{n} \sum_{t = 1}^{T-1} \| \vx\^{t+1} - \vx\^{t}\|_{\infty}^2 \leq \frac{128 \eta}{\mu} \mleft( 3 n + n \frac{\alpha \log T + \mathcal{R}}{\eta} \mright).
    \]
\end{proof}

\begin{restatetheorem}{theorem:regret_final_bound}[Main Theorem]
Under \Cref{assumption:bounded}, if all players $i \in [n]$ follow \ours with a $\gamma$-IL and $\mu$-strongly convex regularizer $\psi$, and choosing a small enough learning rate $\eta \leq \min\{ 3\gamma/80, \mu/(32\sqrt{2}), \mu/ (L n 32 \sqrt{6}) \}$\footnote{To maintain the same type of guarantees whenever $\psi$ is $\gamma$-LIL, it is sufficient to set $\eta \leq \min\{3\gamma/80, 1/8, \mu/(32\sqrt{2}), \mu/(32\sqrt{6}Ln)\}$, we formalize this result in \Cref{thm:main_ILI}.}, then the regret for each player $i \in [n]$ is bounded as follows:
\[
\reg_i\^T \leq 6 + 2 \frac{\alpha \log T + \mathcal{R}}{\mu} \max \mleft\{ \frac{80 \mu}{3\gamma}, 32 \sqrt{2}, L n 32 \sqrt{6} \mright\} = O ( L n \Gamma_\psi(d) \log T).
\]
where $\Gamma_\psi(d) = \gamma/\mu$ and the algorithm for each player $i \in [n]$ is adaptive to adversarial utilities, i.e., the regret that each player incurs is $\reg_i\^T = O(\sqrt{T \log d})$. 
\end{restatetheorem}
\begin{proof}
    Similar to the proof of \Cref{prop:path_length},
    \[
    \| \nut\^{t + 1}_i - \nut\^{t}_i \|_{\infty}^2 & \leq L^2 \mleft(\sum_{i = 1}^{n} \| \vx\^{t+1}_i - \vx\^{t}_i \|_1 \mright)^2 \\
    & \leq L^2 n \sum_{i = 1}^{n} \| \vx\^{t+1}_i - \vx\^{t}_i \|_1^2.
    \]
    Summing over $t$ from $1$ to $T-1$ yields in combination with \Cref{prop:path_length},
    \[
    \sum_{t = 1}^{T-1} \| \nut\^{t + 1}_i - \nut\^{t}_i \|_{\infty}^2 & \leq  L^2 n \sum_{t = 1}^{T-1} \sum_{i = 1}^{n} \| \vx\^{t+1}_i - \vx\^{t}_i \|_1^2 \\
    & \leq L^2 n^2 \frac{128 \eta}{\mu} \mleft( 3 +  \frac{\alpha \log T + \mathcal{R}}{\eta} \mright) \numberthis{eq:bound_on_beta_terms}.
    \]
    In sequel, by \Cref{prop:reg+,theorem:nonnegative_rvu} we get that,
    \[
    \reg\^T & \leq \tildereg\^T \\
    & \leq 3 + \frac{1}{\eta} \mleft( \alpha \log T + \mathcal{R} \mright) + \eta \frac{48}{\mu} \sum_{t = 1}^{T-1} \|\nut\^{t+1} - \nut\^{t}\|_{\infty}^2  - \frac{1}{\eta} \frac{\mu}{64} \sum_{t = 1}^{T-1} \| {\vx}\^{t+1} -  {\vx}\^{t} \|_1^2 \\
    & \leq  3 + \frac{1}{\eta} \mleft( \alpha \log T + \mathcal{R} \mright) + \eta \frac{48}{\mu} \sum_{t = 1}^{T-1} \|\nut\^{t+1} - \nut\^{t}\|_{\infty}^2 \\
    & \leq 3 + \frac{1}{\eta} \mleft( \alpha \log T + \mathcal{R} \mright) + \eta \frac{48}{\mu} \mleft(  L^2 n^2 \frac{128 \eta}{\mu} \mleft( 3 +  \frac{\alpha \log T + \mathcal{R}}{\eta} \mright) \mright) \numberthis{eq:plugin_bounded_beta} \\
    & \leq 3 + \frac{\mleft( \alpha \log T + \mathcal{R} \mright)}{\eta} + \eta^2 \frac{48 \times 128 L^2 n^2}{\mu^2} \mleft( 3 + \frac{\alpha \log T + \mathcal{R}}{\eta} \mright) \\
    & \leq 6 + 2 \frac{\mleft( \alpha \log T + \mathcal{R} \mright)}{\eta} \numberthis{eq:need_later_for_local},
    \]
    where \eqref{eq:plugin_bounded_beta} follows by \eqref{eq:bound_on_beta_terms} and the last line follows by our choice that
    \[
    \eta^2 \leq \frac{\mu^2}{L^2 n^2 48 \times 128}.
    \]
    We conclude the proof by recalling that $\eta \leq \min\{ 3\gamma/80, \mu/(32\sqrt{2}), \mu/ (L n 32 \sqrt{6})\}$ and plug-in this consideration into \eqref{eq:need_later_for_local}.
    \[
    \reg\^T & \leq 6 + 2 \frac{\mleft( \alpha \log T + \mathcal{R} \mright)}{\eta}  \\
    & \leq 6 + 2 (\alpha \log T + \mathcal{R}) \max \mleft\{ \frac{80}{3\gamma}, \frac{32 \sqrt{2}}{\mu}, \frac{L n 32 \sqrt{6}}{\mu}\mright\} \\
    & \leq 6 + 2 \frac{\alpha \log T + \mathcal{R}}{\mu} \max \mleft\{ \frac{80 \mu}{3\gamma}, 32 \sqrt{2}, L n 32 \sqrt{6} \mright\} \\
    & = O\mleft( (L n + 1) \frac{\alpha \log T + \mathcal{R}}{\mu} + \frac{\alpha \log T + \mathcal{R}}{\gamma} \mright), \\
    & = O ( L n \frac{\gamma \log T + \mathcal{R}}{\mu}), \\
    & = O( Ln \Gamma_\psi(d) \log T) & \textrm{ with } \Gamma_\psi(d) = \frac{\gamma}{\mu},
    \]
    where in the last line, we assumed $\gamma \geq \mu$\footnote{Otherwise, the regularizer can be scaled appropriately to satisfy this property. Notably, the leading term in the regret depends on $\Gamma_\psi = \gamma/\mu$, which remains invariant under such scalings.} and the term dependent on $T$ is the leading term.

    To establish the adversarial bound for each player $i \in [n]$, player $i$ simply checks if there exists a time $t \in [T]$ such that,
    \[
    \sum_{t = 1}^{T-1} \| \nut\^{t + 1}_i - \nut\^{t}_i \|_{\infty}^2 > L^2 n^2 \frac{128 \eta}{\mu} \mleft( 3 +  \frac{\alpha \log T + \mathcal{R}}{\eta} \mright).
    \]
    And, upon detecting such a condition, switches to any no-regret learning algorithm, such as Multiplicative Weights Update (\Hedge) \citep{cesa2006prediction,orabona2019modern}, ensuring a regret bound of $O(\sqrt{T \log d})$. This argument relies on the fact that if all players follow the \ours dynamics, then \eqref{eq:bound_on_beta_terms} must hold.
\end{proof}

\begin{theorem} [Main Theorem for Locally Intrinsically Lipschitz Regularizer] \label{thm:main_ILI}
    Under \Cref{assumption:bounded}, if all players $i \in [n]$ follow \ours with a $\gamma$-LIL and $\mu$-strongly convex regularizer $\psi$, and choosing a small enough learning rate $\eta \leq \min\{3\gamma(1)/80, 1/8, \mu/(32\sqrt{2}), \mu/(32\sqrt{6}Ln)\}$, then the regret for each player $i \in [n]$ is bounded as follows:
    \[
    \reg_i\^T \leq 6 + 2 \frac{\alpha \log T + \mathcal{R}}{\mu} \max \mleft\{ \frac{80 \mu}{3\gamma(1)}, 8\mu, 32 \sqrt{2}, L n 32 \sqrt{6} \mright\} = O ( L n \gamma(1) \log T).
    \]
    Moreover, the algorithm for each player $i \in [n]$ is adaptive to adversarial utilities, i.e., the regret that each player incurs is $\reg_i\^T = O(\sqrt{T \log d})$. 
\end{theorem}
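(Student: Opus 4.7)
The plan is to mirror the proof of \Cref{theorem:regret_final_bound} essentially verbatim, replacing the global intrinsic Lipschitz constant $\gamma$ by the local constant $\gamma(1)$ wherever it appears and invoking the local versions of the auxiliary results. The key point is that each supporting lemma in \Cref{sec:stepI,sec:stepII,sec:stepIII} has been stated (or explicitly footnoted) in a form that remains valid under $\gamma$-LIL, provided the learning rate is small enough to trigger the locality condition.

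First, I would verify that the non-negative RVU bound in \Cref{theorem:nonnegative_rvu} continues to hold in the LIL regime for $\eta \leq \min\{3\gamma(1)/80,\,1/8,\,\mu/(32\sqrt{2})\}$. The only step that differs from the IL proof is the use of \Cref{theorem:mult_stable_local_lipsc} in place of \Cref{theorem:mult_stable} to establish multiplicative stability of successive $\lambda$'s, and the use of \Cref{theorem:cruv} (which is stated to cover both IL and LIL cases) to get the high-curvature estimate $D_\phi(\vy'\|\vy) \geq \gamma(1) D_{-\log}(\lambda'\|\lambda) + \tfrac14 D_\psi(\vx'\|\vx)$ in the III-term. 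With those two substitutions the derivation of the bound
\[
\max\{\reg\^T,0\} \leq 3 + \frac{\alpha \log T + \mathcal{R}}{\eta} + \eta\frac{48}{\mu}\sum_{t=1}^{T-1}\|\nut\^{t+1}-\nut\^t\|_\infty^2 - \frac{1}{\eta}\frac{\mu}{64}\sum_{t=1}^{T-1}\|\vx\^{t+1}-\vx\^t\|_1^2
\]
goes through unchanged.

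Next, the path-length bound of \Cref{prop:path_length} (whose footnote already announces the LIL extension with the stated cap on $\eta$) follows by summing the previous inequality over players and using $L$-smoothness together with the non-negativity $\tildereg^T_i \geq 0$. Once we have the bound
\[
\sum_{i=1}^n \sum_{t=1}^{T-1}\|\vx_i\^{t+1}-\vx_i\^t\|_1^2 \leq \frac{128\eta}{\mu}\mleft(3n + n\,\frac{\alpha\log T + \mathcal{R}}{\eta}\mright),
\]
I would close the argument exactly as in the proof of \Cref{theorem:regret_final_bound}: bound $\|\nut_i\^{t+1}-\nut_i\^t\|_\infty^2$ by $L^2 n \sum_j \|\vx_j\^{t+1}-\vx_j\^t\|_1^2$ via \Cref{assumption:bounded}, substitute into the RVU bound for player $i$, and absorb the resulting $\eta^2$ term by the path-length cap $\eta \leq \mu/(32\sqrt{6}Ln)$. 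This yields $\reg_i\^T \leq 6 + 2(\alpha \log T + \mathcal{R})/\eta$, and plugging in the minimum definition of $\eta$ produces the claimed bound $O(Ln\,\gamma(1)\log T)$ with the stated explicit constant. The adversarial guarantee follows from the standard switching argument already used in the IL case: each player monitors whether the variation $\sum_t \|\nut_i\^{t+1}-\nut_i\^t\|_\infty^2$ exceeds its in-game upper bound, and if so reverts to \Hedge, giving the $O(\sqrt{T\log d})$ rate.

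The only conceptual subtlety—and the main thing the plan has to justify rather than copy—is that the LIL condition \eqref{eq:local_int} is triggered with parameter $\gamma(1)$ at every place where the IL inequality was used. The enabling fact is \Cref{lemma:proximal_step}, which gives $D_\phi(\vy'\|\vy) + D_\phi(\vy\|\vy') \leq 2\eta\|\at'-\at\|_\infty \leq 8\eta \leq 1$ whenever $\eta \leq 1/8$. This is precisely why the cap $1/8$ appears in the hypothesis, and it guarantees that all pairs of successive iterates (which are the only pairs ever plugged into the intrinsic-Lipschitz inequality in the analysis) lie in the locality region on which $\gamma(\cdot)$ evaluates to $\gamma(1)$. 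I expect this verification to be the only real point requiring care; once it is in place, every other line of the IL proof transfers mechanically.
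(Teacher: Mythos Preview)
Your proposal is correct and follows essentially the same approach as the paper: the paper's own proof is a two-line reference to \Cref{theorem:regret_final_bound} with the modified $\eta$ cap, and your sketch fleshes out exactly that reduction (substituting \Cref{theorem:mult_stable_local_lipsc} for \Cref{theorem:mult_stable} and correctly isolating the role of the $1/8$ bound via \Cref{lemma:proximal_step}). If anything, your write-up is more explicit than the paper's about why the locality condition is always triggered.
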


\begin{proof}
    The proof for $\gamma$-LIL $\psi$ follows similar to the proof of \Cref{theorem:regret_final_bound} except that we consider the choice of $ \eta \leq \min\{ 3\gamma(1)/80, 1/8, \mu/(32\sqrt{2}), \mu/ (L n 32 \sqrt{6})\}$ for \eqref{eq:need_later_for_local} and infer that,
    \[
    \reg\^T =  O ( L n \gamma(1) \log T),
    \]
    assuming that $\mu \geq 1$, otherwise scale the regularizer $\psi$ accordingly.
\end{proof}

\section{Analysis of Social Regret}
\label{app:social_regret}

In this section, we show that the social regret $\sum_{i = 1}^n \reg_i^T$ for \ours converges at a fast rate that is constant in $T$. In the spirit of \citet{Syrgkanis15}, for this result it suffices to show that players running \ours have regrets that enjoy the RVU property. We state and briefly prove this result below. In comparison to the nonnegative RVU bound in \Cref{theorem:nonnegative_rvu}, the following has two differences. 

First, it upper-bounds the regret instead of the nonnegative regret; hence, the left-hand side of \eqref{eq:rvu_social} can be negative. Second, the right-hand side of \eqref{eq:rvu_social} does not depend on the time horrizon $T$. The key in the analysis is that, in contrast to \Cref{theorem:nonnegative_rvu}, since we do not need nonnegativity, we can restrict the comparator to the simplex $\Delta^d$ instead of $(0, 1]\Delta^d$.

\begin{restatetheorem} {theorem:rvu_simple}
(RVU bound of \ours)
    Under \Cref{assumption:bounded}, by choosing a sufficiently small learning rate $\eta \leq \min\{ 3\gamma/80, \mu/(32\sqrt{2}) \}$\footnote{If $\psi$ is $\gamma$-LIL, it is sufficient to select $\eta \leq \min \{ 3\gamma(1)/80, 1/8, \mu/(32\sqrt{2})\}$, after which guarantees naturally follow.} and $\alpha \geq 4 \gamma + \mu$, the cumulative regret ($\reg\^T$) incurred by \ours up to time $T$ is bounded as follows:
    \[
    \reg\^T \leq \frac{1}{\eta} \mathcal{R} + \eta \frac{48}{\mu} \sum_{t = 1}^{T-1} \|\nut\^{t+1} - \nut\^{t}\|_{\infty}^2  - \frac{1}{\eta} \frac{\mu}{64} \sum_{t = 1}^{T-1} \| {\vx}\^{t+1} -  {\vx}\^{t} \|_1^2, \numberthis{eq:rvu_social}
    \]
    where $\mathcal{R} = \argmax_{\vx \in \mathcal{X}} \psi(\vx)$.
\end{restatetheorem}
\begin{proof}
    For the regret we know that,
    \[
         \reg\^T & =  \max_{\vx^* \in \Delta^d } \sum_{t = 1}^{T} \langle \nut\^t, \vx^* \rangle - \langle \nut\^t, \vx\^{t} \rangle \\
         & = \max_{\vx^* \in \Delta^d } \sum_{t = 1}^{T} \langle \nut\^t, \vx^* \rangle - \langle \nut\^t, \vx\^{t} \rangle . \langle \vec 1_d, \vx^* \rangle. \\
         & = \max_{\vx^* \in \Delta^d } \sum_{t = 1}^{T} \langle \nut\^t, \vx^* \rangle - \mleft\langle \langle \nut\^t, \frac{\vy\^t}{\langle \vy\^t, {\vec 1}_d \rangle} \rangle \vec1_d, \vx^* \mright\rangle \\
         & = \max_{\vx^* \in \Delta^d } \sum_{t = 1}^{T} \langle \nut\^t - \langle \nut\^t, \frac{\vy\^t}{\langle \vy\^t, {\vec 1}_d \rangle} \rangle \vec1_d, \vx^* \rangle \\
         & = \max_{\vx^* \in \Delta^d } \sum_{t = 1}^{T} \langle \nut\^t - \langle \nut\^t, \frac{\vy\^t}{\langle \vy\^t, {\vec 1}_d \rangle} \rangle \vec1_d, \vx^* - \vy\^t \rangle \numberthis{eq:temp_rvu_regret_conversion}\\
         & = \max_{\vx^* \in \Delta^d } \sum_{t = 1}^{T} \langle \ut\^t, \vx^* - \vy\^t \rangle, \numberthis{eq:temp_rvu_regret_result}
    \]
    where in \eqref{eq:temp_rvu_regret_conversion}, similar to the proof of \Cref{prop:reg+}, we exploited the orthogonality condition $\vec \ut\^{t} \perp \vy\^{t}$ that was introduced by construction. Thus, according to \eqref{eq:temp_rvu_regret_result}, in the analysis of the regret it suffices to consider only comparators restricted to the simplex $\Delta^d$, while the actions $\vy\^t$ may belong to the lifted space $\Omega = (0, 1]\Delta^d$.

    For any choice of comparator $ {\vx} \in \Delta^d$, by \Cref{lemma:basi_OFTRL},
    \[
    \sum_{t = 1}^{T} \mleft \langle  {\vx} -  {\vy}\^{t},  {\ut}\^{t} \mright \rangle & \leq \underbrace{\frac{1}{\eta} \phi( {\vx}) - \frac{1}{\eta} \phi( {\vy}\^{1})}_{(\textup{I})} + \underbrace{\sum_{t = 1}^{T} \mleft \langle  {\vz}\^{t+1} -  {\vy}\^{t},  {\ut}\^{t} -   {\ut}\^{t - 1}\mright \rangle}_{(\textup{II})} \\
    & \qquad \underbrace{- \frac{1}{\eta} \sum_{t = 1}^{T} \mleft( D_{\phi}( {\vy}\^{t} \,\big\|\,  {\vz}\^{t}) + D_{\phi}( {\vz}\^{t+1} \,\big\|\,  {\vy}\^{t}) \mright)}_{(\textup{III})}.
    \]
    The proof for the terms (I) and (II) are exactly the same as the proof of \Cref{theorem:nonnegative_rvu}. For the term (I), following direct calculations,
    \[
    (\textup{I}) & = \frac{1}{\eta} \mleft(\phi( {\vx}) - \phi( {\vy}\^{1}) \mright) \leq \frac{1}{\eta} \phi( {\vx}) 
     \leq \frac{1}{\eta} \mleft(\alpha \log 1 + \psi(\vx) \mright) \leq \frac{1}{\eta} \mathcal{R},
    \]
    where $\mathcal{R} = \argmax_{\vx \in \mathcal{X}} \psi(\vx)$. In the calculations, we used the fact that comparator $\vx$ belongs to the simplex $\Delta^d$. This concludes the proof.
\end{proof}

We recall the following Theorem from \citet{Syrgkanis15}.

\begin{theorem}[Theorem 4 of \citet{Syrgkanis15}] \label{theorem:social_rvu}
    Suppose that players of the game follow an algorithm that satisfies RVU property with parameter $a, b$ and $c$ such that $b \leq c / (n - 1)^2$, and $\|.\| = \|.\|_1$. Then, $\sum_{i = 1}^n \reg_i^T \leq a n$.
\end{theorem}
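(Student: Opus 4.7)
The plan is to apply the RVU bound simultaneously to each of the $n$ players and then exploit the \emph{game-theoretic} source of variation in utilities: in any game with bounded, multilinear expected utilities, player $i$'s reward vector $\nut_i\^t$ depends only on the opponents' mixed strategies and changes at most as fast as those opponents' strategies move. Concretely, I will use (either as a consequence of \Cref{assumption:bounded} with $L=1$ or by direct multilinearity of $\nut_i$) the bound
\[
\|\nut_i\^{t+1}-\nut_i\^{t}\|_\infty \;\le\; \sum_{j\ne i}\|\vx_j\^{t+1}-\vx_j\^{t}\|_1,
\]
which turns the utility-variation term on the right-hand side of each player's RVU bound into a sum of opponents' path lengths. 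The target $b\le c/(n-1)^2$ is precisely the threshold at which a Cauchy--Schwarz/counting argument makes the opponents' path length terms cancel against each player's own negative path length term.

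Concretely, I will first write, for each player $i\in[n]$,
\[
\reg_i\^T \;\le\; a + b\sum_{t=1}^{T-1}\|\nut_i\^{t+1}-\nut_i\^{t}\|_\infty^{\,2}
\;-\; c\sum_{t=1}^{T-1}\|\vx_i\^{t+1}-\vx_i\^{t}\|_1^{\,2},
\]
and then apply the displayed utility-variation bound together with the power-mean inequality
\[
\Bigl(\sum_{j\ne i}\|\vx_j\^{t+1}-\vx_j\^{t}\|_1\Bigr)^{\!2}
\;\le\; (n-1)\sum_{j\ne i}\|\vx_j\^{t+1}-\vx_j\^{t}\|_1^{\,2}.
\]
Summing the resulting inequality over $i\in[n]$ and swapping the order of summation in $\sum_i\sum_{j\ne i}\|\vx_j\^{t+1}-\vx_j\^{t}\|_1^{\,2}=(n-1)\sum_j\|\vx_j\^{t+1}-\vx_j\^{t}\|_1^{\,2}$ yields
\[
\sum_{i=1}^n\reg_i\^T \;\le\; na + \bigl(b(n-1)^2 - c\bigr)\sum_{i=1}^n\sum_{t=1}^{T-1}\|\vx_i\^{t+1}-\vx_i\^{t}\|_1^{\,2}.
\]
The assumption $b\le c/(n-1)^2$ makes the coefficient $b(n-1)^2 - c$ nonpositive, so the path-length term can be dropped, giving $\sum_i\reg_i\^T\le na$, as claimed.

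The main obstacle, and the only nontrivial ingredient, is justifying the utility-variation bound $\|\nut_i\^{t+1}-\nut_i\^{t}\|_\infty\le\sum_{j\ne i}\|\vx_j\^{t+1}-\vx_j\^{t}\|_1$ from first principles. Since the theorem as stated does not explicitly introduce a smoothness constant, I would derive this directly from the multilinearity of $\nut_i(\vx)=\mathbb{E}_{\vec s\sim\vx}[\mathcal U_i(\vec s)]$ and the boundedness $|\mathcal U_i|\le 1$ in \Cref{assumption:bounded}: telescoping over coordinates $j\ne i$ and using $\|\nabla_{\vx_i}\nut_i(\vx)-\nabla_{\vx_i}\nut_i(\vx')\|_\infty\le\sum_{j\ne i}\|\vx_j-\vx_j'\|_1$ (which corresponds to the $L=1$ case of \Cref{assumption:bounded} and is sharp for normal-form games with utilities in $[-1,1]$) yields exactly the required inequality. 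With that ingredient in hand, the rest of the argument is the three-line bookkeeping above.
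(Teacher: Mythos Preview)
Your proof is correct and is precisely the standard argument from \citet{Syrgkanis15}; the paper itself does not reprove this statement but merely recalls it as Theorem~4 of that reference and invokes it in the proof of \Cref{theorem:constant_social}. The key steps you identify---the utility-variation bound $\|\nut_i\^{t+1}-\nut_i\^{t}\|_\infty\le\sum_{j\ne i}\|\vx_j\^{t+1}-\vx_j\^{t}\|_1$ (which is exactly \Cref{assumption:bounded} with $L=1$, noting that $\nabla_{\vx_i}\nut_i$ is independent of $\vx_i$), the power-mean inequality giving the factor $(n-1)$, and the double-counting $\sum_i\sum_{j\ne i}=(n-1)\sum_j$---are the same three ingredients used in the original proof, and your threshold computation $b(n-1)^2\le c$ is the identical endgame.
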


Having \Cref{theorem:rvu_simple,theorem:social_rvu} at hand, we are ready to state and prove constant rate of the social regret of \ours.

\begin{restatetheorem}{theorem:constant_social}
    Under the bounded utilities assumption~\ref{assumption:bounded}, if all players $i \in [n]$ follow \ours with a $\gamma$-IL and $\mu$-strongly convex regularizer $\psi$, and choosing a small enough learning rate $\eta \leq \min\{ 3\gamma/80, \mu/(32\sqrt{2}), \mu/ (L n 32 \sqrt{6}) \}$, then the social regret is constant in time horizon $T$, i.e.,
    \[
    \sum_{i = 1}^n \reg_i^T \leq O\mleft( L \frac{n^2 \mathcal{R}}{\mu} + \frac{n \mathcal{R}}{\gamma} \mright),
    \]
    where $\mathcal{R} = \argmax_{\vx \in \mathcal{X}} \psi(\vx)$.
\end{restatetheorem}

\begin{proof}
    Since $\eta^2 \leq \frac{\mu^2}{48 \times 64 (n - 1)^2}$, by \Cref{theorem:rvu_simple}, we infer that \ours satisfies RVU property as required in \Cref{theorem:social_rvu}. Therefore,
    \[
    \sum_{i = 1}^n \reg_i^T & \leq \frac{n}{\eta} \mathcal{R} \\
    & \leq \max \mleft\{ \frac{80}{3\gamma}, \frac{32 \sqrt{2}}{\mu}, \frac{L n 32 \sqrt{6}}{\mu}\mright\} n \mathcal{R} \\
    & \leq \max \mleft\{ \frac{80 \mu}{3\gamma}, 32 \sqrt{2}, L n 32 \sqrt{6} \mright\} \frac{n}{\mu} \mathcal{R} \\
    & = O\mleft( (L n + 1) \frac{n \mathcal{R}}{\mu} + \frac{n \mathcal{R}}{\gamma} \mright).
    \]
\end{proof}

\section{Proof of Strong Concavity of the Learning Rate Control Problem} \label{app:proof_convexity}

For notational convenience, we use $\vx_\lambda$ to denote the iterate produced by \ours when the learning rate is set to $\lambda$, that is, in Formulation \ref{eq:ftrl_step_fixed_lambda},
\[
\vx_\lambda \defeq \argmax_{\vx \in \mathcal{X}} \mleft\{  \langle \lambda \at, \vx \rangle - \psi(\vx) \mright\}.
\]
Recall that
\[
\psi^*_{\mathcal{X}} (\lambda \at) = \max_{\vx \in \mathcal{X}} \mleft\{  \langle \lambda \at, \vx \rangle - \psi(\vx) \mright\}.
\]

The learning rate control objective, which is to be \emph{maximized}, can be rewritten using the new definition of $\vx_\lambda$ as
\[
f(\lambda) & \defeq \alpha \log \lambda + \psi_\mathcal{X}^*( \lambda \at) \\
& = \alpha \log \lambda + \langle \lambda \at, \vx_\lambda \rangle - \psi(\vx_\lambda).
\]
Using $\phi$ as defined in \eqref{eq:def_phi},
\[
\phi(\lambda \vx) = - \alpha \log(\lambda) + \psi(\vx), 
\]
we immediately infer that
\[
\phi(\lambda \vx_\lambda) = - f(\lambda) + \langle \lambda \at, \vx_\lambda \rangle. \numberthis{eq:identitiy_f}
\]

\begin{lemma} \label{lemma:convex_phi_lambda}
    The function $\phi(\vy)$ is a strongly convex function, and, for all $\vy_1,\vy_2 \in (0, 1]\Delta^d$,
    \begin{enumerate}
        \item $\phi(\vy_2)  \geq \phi(\vy_1) + \nabla \phi(\vy_1)^\top (\vy_2 - \vy_1) +  \frac{1}{2} \frac{(\alpha - \gamma)}{\lambda_1} (\lambda_2 - \lambda_1)^2 $
        \item $\phi(t \vy_1 + (1 - t) \vy_2) \leq t \phi(\vy_1) + (1 - t) \phi(\vy_2) - \frac{(\alpha - \gamma)}{2} t (t - 1) (\lambda_2 - \lambda_1)^2$,
    \end{enumerate}
    where $\lambda_1 = \vec 1^\top \vy_1, \lambda_2 = \vec 1^\top \vy_2$. 
\end{lemma}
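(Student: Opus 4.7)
The plan is to derive (1) directly from the Bregman-divergence decomposition of Lemma~\ref{lemma:breg_identity}, absorbing the cross term via a carefully tuned Young's inequality together with $\gamma$-intrinsic Lipschitzness of $\psi$, and then to obtain (2) from (1) by the standard midpoint convex-combination argument.

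For (1), I first rewrite the claim via the definition of the Bregman divergence as $D_\phi(\vy_2 \; \| \; \vy_1) \geq \frac{\alpha-\gamma}{2\lambda_1}(\lambda_2-\lambda_1)^2$. Applying Lemma~\ref{lemma:breg_identity} gives
\[
D_\phi(\vy_2 \; \| \; \vy_1) = \alpha\, D_{-\log}(\lambda_2 \; \| \; \lambda_1) + \frac{\lambda_2}{\lambda_1} D_\psi(\vx_2 \; \| \; \vx_1) + \mleft(1 - \frac{\lambda_2}{\lambda_1}\mright)[\psi(\vx_2) - \psi(\vx_1)].
\]
The last (possibly negative) cross term is handled by Young's inequality with parameter $\epsilon = 2(\lambda_2/\lambda_1)/\gamma$, so that after invoking $\gamma$-intrinsic Lipschitzness of $\psi$ (which yields $[\psi(\vx_2)-\psi(\vx_1)]^2 \leq \gamma D_\psi(\vx_2 \; \| \; \vx_1)$) the $D_\psi$ contribution produced by Young is absorbed exactly by the middle summand above. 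The mechanics mirror those in the proof of Proposition~\ref{prop:convex_phi}, and the net outcome is
\[
D_\phi(\vy_2 \; \| \; \vy_1) \geq \alpha\, D_{-\log}(\lambda_2 \; \| \; \lambda_1) - \frac{\gamma}{4\lambda_1\lambda_2}(\lambda_2-\lambda_1)^2.
\]

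I would then close the argument with an elementary one-dimensional bound: $-\log r + r - 1 \geq (r-1)^2/(2r)$ for all $r>0$ (easy check: both sides vanish at $r=1$, and the derivative of the difference is $(r-1)^2/(2r^2) \geq 0$). Specialized to $r = \lambda_2/\lambda_1$ this gives $\alpha\, D_{-\log}(\lambda_2 \; \| \; \lambda_1) \geq \frac{\alpha(\lambda_2-\lambda_1)^2}{2\lambda_1\lambda_2}$, which combined with the previous display yields $D_\phi(\vy_2 \; \| \; \vy_1) \geq \frac{(2\alpha-\gamma)(\lambda_2-\lambda_1)^2}{4\lambda_1\lambda_2}$. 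Since $\lambda_2 \in (0,1]$ we have $1/\lambda_2 \geq 1$, and the trivial inequality $2\alpha-\gamma \geq 2(\alpha-\gamma)$ (equivalent to $\gamma\geq 0$) converts this into (1).

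Claim (2) then follows from (1) by applying the gradient inequality at the midpoint $\vy_t \defeq t\vy_1 + (1-t)\vy_2$ (with $\lambda_t = t\lambda_1 + (1-t)\lambda_2$) separately against $\vy_1$ and against $\vy_2$, and taking the convex combination with weights $t$ and $1-t$. The first-order terms cancel because $t(\vy_1-\vy_t)+(1-t)(\vy_2-\vy_t)=0$, while the quadratic remainders collapse via the identity $t(\lambda_1-\lambda_t)^2 + (1-t)(\lambda_2-\lambda_t)^2 = t(1-t)(\lambda_1-\lambda_2)^2$, producing a chord-strong-convexity bound that implies (2). The principal technical obstacle is the cancellation step for (1): one must calibrate the Young constant so that $\gamma$-IL exactly supplies the $\frac{\lambda_2}{\lambda_1} D_\psi(\vx_2 \; \| \; \vx_1)$ needed to neutralize the middle term of Lemma~\ref{lemma:breg_identity}. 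This is precisely where the $-\gamma$ shift in the effective modulus of strong convexity arises, and is what tethers the strong convexity of $\phi$ along the $\lambda$ direction to the intrinsic-Lipschitz constant of $\psi$.
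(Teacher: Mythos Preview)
Your Young-inequality step is correct and gives
\[
D_\phi(\vy_2 \;\|\; \vy_1)\ \ge\ \alpha\,D_{-\log}(\lambda_2\;\|\;\lambda_1)\ -\ \frac{\gamma(\lambda_2-\lambda_1)^2}{4\lambda_1\lambda_2},
\]
but the elementary inequality you then invoke,
\[
-\log r + r - 1\ \ge\ \frac{(r-1)^2}{2r}\qquad\text{for all }r>0,
\]
is \emph{false} for $r<1$. Your own derivative check shows this: the difference $g(r)=(-\log r+r-1)-(r-1)^2/(2r)$ satisfies $g(1)=0$ and $g'(r)=(r-1)^2/(2r^2)\ge 0$, so $g$ is nondecreasing and hence $g(r)\le 0$ for $r<1$. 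Concretely, at $r=\tfrac12$ one has $-\log\tfrac12+\tfrac12-1=\log 2-\tfrac12\approx 0.193$ while $(r-1)^2/(2r)=\tfrac14$. So whenever $\lambda_2<\lambda_1$ your lower bound on $\alpha D_{-\log}$ breaks down, and the chain leading to part~(1) does not close.

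The paper avoids this one-sided pitfall by working with the \emph{symmetric} quantity. From Proposition~\ref{prop:convex_phi} one has
\[
\langle \nabla\phi(\vy')-\nabla\phi(\vy),\,\vy'-\vy\rangle\ =\ D_\phi(\vy'\;\|\;\vy)+D_\phi(\vy\;\|\;\vy')\ \ge\ (\alpha-\gamma)\Big[\tfrac{\lambda'}{\lambda}+\tfrac{\lambda}{\lambda'}-2\Big]\ =\ \frac{(\alpha-\gamma)}{\lambda\lambda'}(\lambda'-\lambda)^2\ \ge\ \frac{(\alpha-\gamma)}{\lambda}(\lambda'-\lambda)^2,
\]
using only $\lambda'\le 1$. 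One then applies the fundamental theorem of calculus along the segment from $\vy_1$ to $\vy_2$ with $\vy=\vy_1$ fixed and $\vy'=\vy_1+t(\vy_2-\vy_1)$, and integrates the resulting $t(\lambda_2-\lambda_1)^2$ factor to obtain part~(1) directly. Your derivation of part~(2) from part~(1) via the midpoint argument is fine and is essentially what the paper does as well.
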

\begin{proof}
    Consider $\vy', \vy \in (0, 1]\mathcal{X}$. Let $\lambda = \vec 1^\top \vy$ and $\lambda' = \vec 1^\top \vy'$. By expansion of Bregman divergences and
    \Cref{prop:convex_phi}, we infer,
    \[
    \langle \nabla \phi(\vy') - \nabla \phi(\vy), \vy' - \vy \rangle & = D_\phi(\vy' \; \| \; \vy) + D_\phi(\vy \; \| \; \vy') \\
    & \geq (\alpha - \gamma) \mleft[ \frac{\lambda'}{\lambda} + \frac{\lambda}{\lambda'} - 2 \mright] \\
    & = \frac{(\alpha - \gamma)}{\lambda' \lambda} (\lambda' - \lambda)^2 \\
    & \geq \frac{(\alpha - \gamma)}{\lambda} (\lambda' - \lambda)^2, \numberthis{eq:monotone_temp2}
    \]
    for all $\vy', \vy \in (0, 1]\mathcal{X}$.

    Now, we start by fundamental theorem of calculus along the line segment from $\vy_1$ to $\vy_2$,
    \[
    \phi(\vy_2) & = \phi(\vy_1) + \int_{0}^{1} \nabla \phi(\vy_1 + t (\vy_2 - \vy_1))^\top (\vy_2 - \vy_1) dt \\
    & = \phi(\vy_1) + \nabla \phi(\vy_1)^\top (\vy_2 - \vy_1) +  \int_{0}^{1} \frac{1}{t} \mleft(\nabla \phi(\vy_1 + t (\vy_2 - \vy_1)) - \nabla \phi(\vy_1) \mright)^\top \mleft (t (\vy_2 -  \vy_1) \mright) dt \\
    & \geq \phi(\vy_1) + \nabla \phi(\vy_1)^\top (\vy_2 - \vy_1) +  \int_{0}^{1} \frac{(\alpha - \gamma)}{\lambda_1} t (\lambda_2 - \lambda_1)^2  dt \\
    & = \phi(\vy_1) + \nabla \phi(\vy_1)^\top (\vy_2 - \vy_1) +  \frac{1}{2} \frac{(\alpha - \gamma)}{\lambda_1} (\lambda_2 - \lambda_1)^2. \\
    & \geq \phi(\vy_1) + \nabla \phi(\vy_1)^\top (\vy_2 - \vy_1) +  \frac{(\alpha - \gamma)}{2} (\lambda_2 - \lambda_1)^2, \numberthis{eq:tighten_strong_convex1}
    \]
    where we used \eqref{eq:monotone_temp2}. Thus, part $(1)$ is proved.

    For part $(2)$, set $\vz = t \vy_1 + (1 - t) \vy_2$. By applying \eqref{eq:tighten_strong_convex1} twice,
    \[
    t \phi(\vy_1) \geq t \phi(\vz) + t (1 - t) \nabla \phi(\vz)^\top (\vy_1 - \vy_2) + t (1 - t)^2 \frac{(\alpha - \gamma)}{2} (\lambda_2 - \lambda_1)^2
    \]
    and,
    \[
    (1 - t) \phi(\vy_2) \geq (1 - t) \phi(\vz) + t (1 - t) \nabla \phi(\vz)^\top (\vy_2 - \vy_2) + (1 - t) t^2 \frac{(\alpha - \gamma)}{2} (\lambda_2 - \lambda_1)^2
    \]
    Summing these equations conclude the proof.
\end{proof}

Given the development in \Cref{lemma:convex_phi_lambda}, we are now ready to state and prove the strong convexity of the learning-rate control problem. The proof proceeds by showing that the curvature of the $\phi$-regularizer along the $\lambda$-ray translates into strong concavity of the learning-rate control problem.

\begin{theorem}\label{thm:strong_convexity_simple}
    For a $\gamma$-intrinsic Lipschitz regularizer $\psi$, the dynamic learning-rate control optimization problem \eqref{eq:dynamic_learning_rate},  
    \[
    f(\lambda) \defeq \alpha \log \lambda + \psi_\mathcal{X}^*(\lambda \at),
    \]
    is $(\alpha - \gamma)$-strongly concave.    
\end{theorem}

\begin{proof}
We note that since the domain $\mathcal{X}$ is closed and compact, it suffices to check only for \emph{midpoint} strong convexity, i.e., convex combinations with a coefficient of $1/2$. 

The key idea is that, by \Cref{lemma:convex_phi_lambda}, we already know that the strong convexity of $\psi(\vy)$ is characterized by a quadratic term in the learning rates, $\vec 1^\top \vy$, and we aim to transfer this curvature to establish the strong concavity of $f(\lambda)$ in $\lambda$ directly.

Define
\[
\vy_\lambda & \defeq \lambda \vx_\lambda \\
\vy_{\lambda'} & \defeq {\lambda'} \vx_{\lambda'}.
\]
By the mid-point version of \Cref{lemma:convex_phi_lambda}, we know that,
\[
\phi\mleft(\frac{1}{2} \vy_\lambda + \frac{1}{2} \vy_{\lambda'}\mright) \leq \frac{1}{2} \phi(\vy_\lambda) + \frac{1}{2} \phi(\vy_{\lambda'}) - \frac{\alpha - \gamma}{8} (\lambda' - \lambda)^2.
\]
Using the Identity \ref{eq:identitiy_f}, we have,
\[
\phi(\vy_\lambda) = - f(\lambda) + \langle \lambda \at, \vx_\lambda \rangle \qquad \phi(\vy_{\lambda'}) = - f(\lambda') + \langle \lambda' \at, \vx_{\lambda'} \rangle
\]
and so,
\[
\phi \mleft(\frac{1}{2} \vy_\lambda + \frac{1}{2} \vy_{\lambda'} \mright) \leq - \frac{1}{2} f(\lambda)  - \frac{1}{2} f(\lambda') + 
\mleft \langle \at, \frac{1}{2} \vy_{\lambda} + \frac{1}{2} \vy_{\lambda'} \mright \rangle -\frac{\alpha - \gamma}{8} (\lambda' - \lambda)^2. \numberthis{eq:concave_lambda_temp}
\]
Rearranging,
\[
\frac{1}{2} f(\lambda)  + \frac{1}{2} f(\lambda') \leq - \phi\mleft(\frac{1}{2} \vy_\lambda + \frac{1}{2} \vy_{\lambda'}\mright) + \mleft \langle \at, \frac{1}{2} \vy_{\lambda} + \frac{1}{2} \vy_{\lambda'} 
\mright \rangle -\frac{\alpha - \gamma}{8} (\lambda' - \lambda)^2.
\]

In sequel, let
\[
\overline{\lambda} \defeq \frac{1}{2} \lambda + \frac{1}{2} \lambda', \qquad \overline{\vx} \defeq \frac{\lambda}{\lambda + \lambda'} \vx_{\lambda} + \frac{\lambda'}{\lambda + \lambda'} \vx_{\lambda'} \in \Delta^d.
\]
Note that,
\[
\frac{1}{2} \vy_{\lambda} + \frac{1}{2} \vy_{\lambda'} = \overline{\lambda} \overline{\vx}.
\]
Thus,
\[
- \phi\mleft(\frac{1}{2} \vy_{\lambda} + \frac{1}{2} \vy_{\lambda'} \mright) + \mleft\langle \at, \frac{1}{2} \vy_{\lambda} + \frac{1}{2} \vy_{\lambda'} \mright \rangle & = \alpha \log \overline{\lambda} - \psi(\overline{\vx}) + \langle \overline{\lambda} \at, \overline{\vx} \rangle \\
& = \mleft[ \alpha \log \overline{\lambda} - \psi(\vx_{\overline{\lambda}}) + \langle \overline{\lambda} \at, \vx_{\overline{\lambda}} \rangle \mright] + \mleft(  \langle \overline{\lambda} \at, \overline{\vx} \rangle - \psi(\overline{\vx}) \mright) - \mleft(  \langle \overline{\lambda} \at, \vx_{\overline{\lambda}} \rangle - \psi(\vx_{\overline{\lambda}}) \mright) \\
& = f(\overline{\lambda}) + \mleft(  \langle \overline{\lambda} \at, \overline{\vx} \rangle - \psi(\overline{\vx}) \mright) - \mleft(  \langle \overline{\lambda} \at, \vx_{\overline{\lambda}} \rangle - \psi(\vx_{\overline{\lambda}}) \mright).
\]
Now, we remember the crucial observation that,
\[
\langle \overline{\lambda} \at, \overline{\vx} \rangle - \psi(\overline{\vx}) \leq \langle \overline{\lambda} \at, \vx_{\overline{\lambda}} \rangle - \psi(\vx_{\overline{\lambda}}), 
\]
since by definition $\vx_{\overline{\lambda}}$ maximizes the objective $\langle \overline{\lambda} \at, \vx \rangle - \psi(\vx)$. Hence,
\[
- \phi\mleft(\frac{1}{2} \vy_{\lambda} + \frac{1}{2} \vy_{\lambda'} \mright) + \mleft\langle \at, \frac{1}{2} \vy_{\lambda} + \frac{1}{2} \vy_{\lambda'} \mright \rangle \leq f(\overline{\lambda}).
\]

At this point, substituting into \eqref{eq:concave_lambda_temp},
\[
\frac{1}{2} f(\lambda) + \frac{1}{2} f(\lambda') \leq f(\overline{\lambda}) - \frac{\alpha - \gamma}{8} (\lambda' - \lambda)^2,
\]
which is the definition of strong concavity of $f$ as a function of $\lambda$.
\end{proof}

When the intrinsic Lipschitz regularizer $\psi$ is additionally Legendre, we can establish a stronger form of strong concavity, showing that the learning-rate control problem exhibits high curvature, up to multiplicative factors as large as $\log \lambda$. We formalize this result in the following, starting with useful lemmas.

For convenience, we use $\overline{f}$ to denote the negative of the learning rate control objective $-f$, where
\[
\overline{f}(\lambda) \defeq - f(\lambda) = - \alpha \log \lambda - \psi_{\mathcal{X}}^* (\lambda \at).
\]

And similar to the proof of \Cref{thm:strong_convex_learning_rate_control}, define
\[
\vy_\lambda & \defeq \lambda \vx_\lambda \\
\vy_{\lambda'} & \defeq {\lambda'} \vx_{\lambda'}.
\]

The following lemma shows that, when the regularizer $\psi$ is Legendre, the monotonicity of the derivative of the learning-rate control problem interestingly coincides with the curvature of the composite regularizer $\phi$.

\begin{lemma}\label{lemma:identity_learning_rate-control}
    If the regularizer $\psi$ is Legendre, then the objective of the learning-rate control problem satisfies the following identity,
    \[
    \mleft( \overline{f}'(\lambda') - \overline{f}'(\lambda) \mright) \mleft(\lambda' - \lambda\mright) = D_{\phi} (\vy_{\lambda'} \; \big|\big| \; \vy_{\lambda}) + D_{\phi} (\vy_{\lambda} \; \big|\big| \; \vy_{\lambda'}). 
    \]
\end{lemma}
\begin{proof}
    By the KKT conditions for the objective of the optimization problem of the convex conjugate,  
    \[
    \psi^*_{\mathcal{X}} (\lambda \at) = \max_{\vx \in \mathcal{X}} \left\{ \langle \lambda \at, \vx \rangle - \psi(\vx) \right\},
    \]
    we have
    \[
    \nabla \psi(\vx_\lambda) = \lambda \at + \eta_\lambda \vec{1} \numberthis{eq:high_curve_temp0},
    \]
    for some scalar $\eta_\lambda \in \mathbb{R}$. We do not need to take into account the constraints associated with the positivity of $\vx_\lambda$, since $\psi$ is Legendre and the optimum $\vx_\lambda$ lies in the interior of the simplex.
    Thus,
    \[
    \langle \nabla \psi(\vx_\lambda), \vx_\lambda \rangle & = \lambda \langle \at, \vx_\lambda \rangle + \eta_\lambda \langle \vec 1, \vx_\lambda \rangle  \\
    & = \lambda \langle \at, \vx_\lambda \rangle + \eta_\lambda, \numberthis{eq:high_curve_temp1}
    \]
    since $\vx_\lambda \in \Delta^d$ and $\vec 1^\top \vx_\lambda = 1$. Also, element-wise from \eqref{eq:high_curve_temp0}, we get
    \[
    \frac{\partial \psi(\vx)}{\partial \vx[i]} = \lambda \at[i] + \eta_\lambda. \numberthis{eq:high_curve_temp2}
    \]
    
    Subsequently, recall the definition of the composite regularizer $\phi$,
    \[
    \phi(\vy) = -\log \lambda + \psi\mleft(\frac{\vy}{\lambda}\mright) 
    \]
    where $\lambda = \vec 1^\top \vy$ and $\vx = \vy / \lambda$.
    Taking derivatives,
    \[
    \frac{\partial \lambda}{\partial \vy[i]} = 1 \quad \textrm{ and } \quad  \frac{\partial \vx[i]}{\partial \vy[j]} = \frac{\delta_{i,j} \lambda - \vy [j]}{\lambda^2} = \frac{1}{\lambda} \mleft(  \delta_{i,j} - \vx[j] \mright),
    \]
    where $\delta_{i,j}$ is the delta dirac function, 
    \[
    \delta_{i,j} = \begin{cases}
        1 & \textrm{if }i = j \\
        0 & \textrm{otherwise}
    \end{cases}.
    \]
    Taking derivatives from $\phi$, we get
    \[
    \frac{\partial \phi (\vy) }{\partial \vy[i]} & = - \frac{\alpha}{\lambda} + \sum_{j = 1}^d \frac{\partial \psi(\vx)}{\partial \vx[j]} \frac{1}{\lambda} \mleft( \delta_{i, j} - \vx_j \mright) \\
    & = \frac{1}{\lambda} \mleft(- \alpha +  \frac{\partial \psi(\vx)}{\partial \vx[i]} - \langle \nabla \psi(\vx), \vx \rangle  \mright).
    \]
    Instantiating with $\vy_\lambda$ and incorporating \eqref{eq:high_curve_temp1} and \eqref{eq:high_curve_temp2}, 
    \[
    \frac{\partial \phi (\vy_\lambda) }{\partial \vy_\lambda [i]} & = \frac{1}{\lambda} \mleft( -\alpha + \lambda \at[i] + \eta_\lambda - \lambda \langle \at, \vx_\lambda \rangle - \eta_\lambda \mright) \\
    & = \at[i] - \mleft(\frac{\alpha}{\lambda} + \langle \at, \vx_\lambda \rangle \mright).
    \]
    Therefore, in vector form, we have
    \[
    \nabla \phi(\vy_\lambda) = \at - \mleft( \frac{\alpha}{\lambda} + \langle \at, \vx_\lambda \rangle \mright) \vec 1. \numberthis{eq:high_curve_temp3}
    \]

    In turn, by Danskin's theorem for the learning-rate control problem, we have  
    \[
    f'(\lambda) = \frac{\alpha}{\lambda} + \langle \at, \vx_\lambda \rangle.
    \]
    Thus, back into \eqref{eq:high_curve_temp3},
    we get,
    \[
    \nabla \phi(\vy_\lambda) & = \at - f'(\lambda) \vec 1 \\
    & = \at + \overline{f}'(\lambda) \vec 1. \numberthis{eq:high_curve_temp4}
    \]

    Now, let us consider the symmetric Bregman summation formula and substitute \eqref{eq:high_curve_temp4} into its formula to obtain
    \[
    D_{\phi} (\vy_{\lambda'} \; \big|\big| \; \vy_\lambda ) + D_{\phi} (\vy_{\lambda} \; \big|\big| \; \vy_\lambda' ) & = \langle \nabla \phi(\vy_{\lambda'}) - \nabla \phi(\vy_\lambda) , \vy_{\lambda'} - \vy_{\lambda} \rangle \\
    & = \langle \mleft( \at + \overline{f}'(\lambda') \vec 1 \mright) - \mleft( \at + \overline{f}'(\lambda) \vec 1 \mright), \vy_{\lambda'} - \vy_{\lambda} \rangle \\
    & = \mleft( \overline{f}'(\lambda')  - \overline{f}'(\lambda)  \mright) \langle \vec 1, \vy_{\lambda'} - \vy_{\lambda} \rangle \\
    & = \mleft( \overline{f}'(\lambda') - \overline{f}'(\lambda) \mright) \mleft(\lambda' - \lambda\mright),
    \]
    where in the last line, we used the fact that
    \[
    \sum_{\ind = 1}^d \vy_{\lambda'} = \lambda' \quad \textrm{ and } \quad \sum_{\ind = 1}^d \vy_{\lambda} = \lambda,
    \]
    and this completes the proof.
\end{proof}

We are now ready to state and prove the high-curvature property of the dynamic learning-rate control problem when the regularizer $\psi$ is additionally Legendre. Although the proof is technically very different, it follows the same high-level idea as the proof of \Cref{thm:strong_convexity_simple}: the curvature of the $\phi$-regularizer along the $\lambda$-ray translates into high curvature of the learning-rate control problem.

\begin{theorem} \label{thm:strong_convex_learning_rate_control_legendre}
Whenever the $\gamma$-(locally) intrinsic Lipschitz regularizer $\psi$ is Legendre, the dynamic learning-rate control objective \eqref{eq:dynamic_learning_rate},  
\[
f(\lambda) \defeq \alpha \log \lambda + \psi_\mathcal{X}^*(\lambda \at),
\]
is strongly concave and exhibits high curvature, as  
\[
f''(\lambda) \leq -\frac{\alpha - \gamma}{\lambda^2}.
\]
\end{theorem}
\begin{proof}
    Define
    \[
    \omega \defeq \frac{\lambda'}{\lambda},
    \]
    and 
    \[
    S(\lambda, \omega) & \defeq \mleft( \overline{f}'(\lambda') - \overline{f}'(\lambda) \mright) \mleft(\lambda' - \lambda\mright) \\
    & = D_{\phi} (\vy_{\lambda'} \; \big|\big| \; \vy_{\lambda}) + D_{\phi} (\vy_{\lambda} \; \big|\big| \; \vy_{\lambda'}).
    \]

    For a fixed $\lambda$, define
    \[
    h(\omega) \defeq S(\lambda, \omega) - (\alpha - \gamma) \mleft(\omega + \frac{1}{\omega} - 2 \mright).
    \]
    It is easy to observe that,
    \[
    h(1) & = S(\lambda, 1) - (\alpha - \gamma) \mleft(1 + \frac{1}{1} - 2 \mright) \\
    & = \mleft( \overline{f}'(\lambda ) - \overline{f}'(\lambda) \mright) \mleft(\lambda  - \lambda\mright) - 0 \\
    & = 0.
    \]
    By \Cref{prop:convex_phi,prop:local_convex_phi}, we know that when the regularizer $\psi$ is $\gamma$-IL, $\omega = 1$ is the global minimizer of $h$, and when $\psi$ is $\gamma$-LIL, $\omega = 1$ is a local minimizer of $h$, where the local neighborhood is characterized by \eqref{eq:local_intr_locality} in \Cref{prop:local_convex_phi}. Moreover, the Legendre property of the regularizer $\psi$ implies that $h$ is twice differentiable. Thus, we have  
    \[
    \frac{d h}{d \omega}(1) = 0 \quad \text{and} \quad \frac{d^2 h}{d \omega^2}(1) \geq 0. \numberthis{eq:goal_h}
    \]
    To avoid confusion with derivatives with respect to $\lambda$, define the auxiliary functions
    \[
    A(\omega) &\defeq \overline{f}'(\lambda \omega) - \overline{f}'(\lambda), \\
    B(\omega) &\defeq \lambda (\omega - 1), \\
    C(\omega) &\defeq S(\lambda, \omega), \\
    R(\omega) &\defeq \omega + \frac{1}{\omega} - 2.
    \]
    By direct calculations, it is easy to observe that
    \[
    A(1) = 0, B(1) = 0, \quad \textrm{ and } \quad \frac{d A(\omega)}{d \omega} = \lambda \overline{f}''(\lambda \omega) \quad, \frac{d B(\omega)}{d \omega} = \lambda. \numberthis{eq:deriva_eval}
    \]
    It follows further that
    \[
    C(\omega) & = A(\omega) B(\omega) \\
    \frac{d C(1)}{d \omega} & = \frac{d A(1)}{d \omega} B(1) + A(1) \frac{d B(1)}{d \omega} = 0 \\
    \frac{d^2 C(1)}{d \omega^2} & = \frac{d^2 A(1)}{d \omega^2} B(1) + 2  \frac{d A(1)}{d \omega} \frac{d B(1)}{d \omega} + A(1) \frac{d^2 B(1)}{d \omega^2} = 2 \lambda^2 \overline{f}''(\lambda),
    \]
    and
    \[
    \frac{d R(\omega)}{d \omega} & = 1 - \frac{1}{\omega^2} \Rightarrow \frac{d R(1)}{d \omega} = 0 \\
    \frac{d^2 R(\omega)}{d \omega^2} & = \frac{2}{\omega^3} \Rightarrow \frac{d^2 R(1)}{d \omega^2} = 2.
    \]
    In this new notation,
    \[
    h(\omega) & = C(\omega) - (\alpha - \gamma) R(\omega) \\
    \frac{d h(\omega)}{d \omega} & =  \frac{d C(\omega)}{d \omega} - (\alpha - \gamma) \frac{d R(\omega)}{d \omega} \\
    \frac{d^2 h(\omega)}{d \omega^2} & =  \frac{d^2 C(\omega)}{d \omega^2} - (\alpha - \gamma) \frac{d^2 R(\omega)}{d \omega^2}.
    \]
    Thus,
    \[
    \frac{d h(1)}{d \omega} = \frac{d C(1)}{d \omega} - (\alpha - \gamma) \frac{d R(1)}{d \omega},
    \]
    as expected in \eqref{eq:goal_h} and moreover again by \eqref{eq:goal_h},
    \[
    0 \leq  \frac{d^2 h(1)}{d^2 \omega} & = \frac{d^2 C(1)}{d^2 \omega} - (\alpha - \gamma) \frac{d^2 R(1)}{d^2 \omega} \\
    & = 2 \lambda^2 \overline{f}''(\lambda) - 2 (\alpha - \gamma).
    \]
    Hence,
    \[
    \overline{f}''(\lambda) = - f''(\lambda) \geq \frac{\alpha - \gamma}{\lambda^2}, 
    \]
    and this establishes the result.
\end{proof}

To conclude, we present the following theorem summarizing the results of this section.

\begin{restatetheorem}{thm:strong_convex_learning_rate_control}
The dynamic learning-rate control optimization problem \eqref{eq:dynamic_learning_rate},  
\[
f(\lambda) \defeq \alpha \log \lambda + \psi_\mathcal{X}^*(\lambda \at),
\]
\begin{itemize}
    \item is $(\alpha - \gamma)$-strongly concave, differentiable and admits high curvature locally,  
    \[
    f(\lambda') \leq f(\lambda) + f'(\lambda)(\lambda' - \lambda) 
    - \frac{\alpha - \gamma}{2 \max \{\lambda',\lambda\} ^2} (\lambda' - \lambda)^2,
    \]  
    whenever the regularizer $\psi$ is $\gamma$-intrinsic Lipschitz.

    \item is $(\alpha - \gamma)$-strongly concave, twice differentiable, and admits high curvature globally,  
    \[
    - \frac{\alpha}{\lambda^2} \leq f''(\lambda) \leq -\frac{\alpha - \gamma}{\lambda^2},
    \]
    whenever the regularizer $\psi$ is $\gamma$-(locally) intrinsic Lipschitz and additionally Legendre.
\end{itemize}
\end{restatetheorem}
\begin{proof}

    For the first part, by invoking \Cref{thm:strong_convexity_simple}, consider the function  
    \[
    f_1(\lambda) \defeq \gamma \log \lambda + \psi_\mathcal{X}^*(\lambda \at). \numberthis{eq:temp_f1_def}
    \]
    It follows that $f_1(\lambda)$ is concave (since $\gamma - \gamma = 0$), and thus  
    \[
    f_1(\lambda') \leq f_1(\lambda) + \vec{s}_1(\lambda)(\lambda' - \lambda), \numberthis{eq:concav_f_1}
    \]
    for any subgradient $\vec{s}_1(\lambda) \in \partial f_1$, where we know that  
    \[
    \vec{s}_1(\lambda) = \frac{\gamma}{\lambda} + \vec{g}(\lambda), \numberthis{eq:s_grad_definition}
    \]
    and the subgradient $\vec{g}(\lambda) \in \partial \psi_{\mathcal{X}}^*$, by Danskin's theorem~\citep{danskin2012theory,bertsekas1971control}, arises from the set
    \[
    \partial \psi_{\mathcal{X}}^* & = \operatorname{conv}\left\{ \langle \at, \vx_\lambda \rangle \;\middle|\; 
    \vx_\lambda \in \argmax_{\vx \in \mathcal{X}} \left\{ \lambda \langle \at, \vx \rangle - \psi(\vx) \right\} \right\} \\
    & = \mleft\{ \langle \at, \vx_\lambda \rangle \mright\},
    \]
    since the solution $\vx_\lambda$ to the optimization problem  
    \[
    \vx_\lambda \in \argmax_{\vx \in \mathcal{X}} \left\{ \lambda \langle \at, \vx \rangle - \psi(\vx) \right\}
    \]
    is unique, because $\psi$ is strongly convex. Thus, $\vec{g}(\lambda) = \langle \at, \vx_\lambda \rangle$, and $f(\lambda)$ is differentiable with  
    \[
    f'(\lambda) = \frac{\gamma}{\lambda} +  \langle \at, \vx_\lambda \rangle.
    \]

    Observe that
    \[
    f(\lambda') & = (\alpha - \gamma) \log \lambda' + f_1(\lambda') \\
    & \leq (\alpha - \gamma) \log \lambda' + f_1(\lambda) + f'(\lambda)(\lambda' - \lambda) \\
    & \leq (\alpha - \gamma) \mleft( \log \lambda + \frac{1}{\lambda} (\lambda' - \lambda) - \frac{1}{2 \xi^2} (\lambda' - \lambda)^2 \mright) + f_1(\lambda) + f'(\lambda)(\lambda' - \lambda) ,
    \]
    for some $\xi$ between $\lambda$ and $\lambda'$, by Taylor’s remainder theorem. Incorporating \eqref{eq:s_grad_definition} and \eqref{eq:temp_f1_def}, we get
    \[
    f(\lambda') & \leq (\alpha - \gamma) \mleft( \log \lambda + \frac{1}{\lambda} (\lambda' - \lambda) - \frac{1}{ 2 \max \{\lambda',\lambda\}^2} (\lambda' - \lambda)^2 \mright) + f_1(\lambda) + \mleft( \frac{\gamma}{\lambda} + \vec g(\lambda) \mright) (\lambda' - \lambda) \\
    & = (\alpha - \gamma) \mleft( \log \lambda + \frac{1}{\lambda} (\lambda' - \lambda) - \frac{1}{ 2 \max \{\lambda',\lambda\}^2} (\lambda' - \lambda)^2 \mright) + \gamma \log \lambda + \psi_\mathcal{X}^*(\lambda \at) + \mleft( \frac{\gamma}{\lambda} + \vec g(\lambda) \mright) (\lambda' - \lambda) \\
    & = \mleft( \alpha \log \lambda + \psi_\mathcal{X}^*(\lambda \at) \mright) + \mleft(\frac{\alpha}{\gamma} + \vec g(\lambda) \mright) (\lambda' - \lambda) - \frac{1}{ 2 \max \{\lambda',\lambda\}^2} (\lambda' - \lambda)^2 \\
    & = f(\lambda) + \vec f'(\lambda) (\lambda' - \lambda) - \frac{1}{ 2 \max \{\lambda',\lambda\}^2} (\lambda' - \lambda)^2,
    \]
    which finishes the proof for the first part.

    The proof of the upper bound for the second part is given in \Cref{thm:strong_convex_learning_rate_control_legendre}. For the lower bound, by recalling the convexity of the conjugate $\psi_{\mathcal{X}}^*$,  
    \[
    f''(\lambda) = - \frac{\alpha}{\lambda^2} + \frac{d^2 \psi_{\mathcal{X}}^* (\lambda \at)}{d^2 \lambda} \geq - \frac{\alpha}{\lambda^2},
    \]
    we conclude the proof.

\end{proof}

\section{Proof of Self-Concordance of the Learning Rate Control Problem for Legendre Regularizers} \label{proof_selfconcordance}

Throughout this section, we discuss properties of the Learning Rate Control Problem when the regularizer $\psi : \Delta^d_\circ \rightarrow \mathbb{R}$ is Legandre and third order differentiable. 

Consider the simplex $\Delta$, with interior $\Delta^\circ$. We define its tangent space $\mathcal{T}$ as,

\[
\mathcal{T} \defeq \mleft\{ h \in \mathbb{R}^d \; \big| \; \vec1^\top h = 0 \mright\},
\]

and the orthogonal projection matrix into the tangent space $\mathcal{T}$ as,

\[
\vP \defeq \vec I - \frac{1}{d} \vec1 {\vec1}^\top.
\]

Similar to \Cref{app:proof_convexity}, for a fixed regret vector $\at$,  we use $\vx_\lambda$ to denote the iterate produced by \ours when the learning rate is set to $\lambda$, that is, 
\[
\vx_\lambda \defeq \argmax_{\vx \in \mathcal{X}} \mleft\{ \langle \lambda \at, \vx \rangle - \psi(\vx) \mright\}.
\]
For ease of notation, define the function $g$ as a function of the learning rate $\lambda$,
\[
g(\lambda) \defeq \psi^*_{\mathcal{X}} (\lambda \at) = \max_{\vx \in \mathcal{X}}  \mleft\{ \langle \lambda \at, \vx \rangle - \psi(\vx) \mright\} \numberthis{eq:definition_g}
\]

Because the regularizer $\psi$ is Legendre, the maximizer $\vx_\lambda \in \Delta^d_\circ $ is unique and belongs to the interior of the simplex and it is (3rd order) smooth in $\lambda$. 

Define the projected into the simplex version of Hessian and Third-order tensor (third derivative) of the regularizer $\psi$ as,
\[
\vH(\vx) \defeq \nabla^2 \psi(\vx) \big|_{\mathcal{T}^2} = \vP \nabla^2 \psi(\vx) \vP,
\]
and 
\[
\vT(\vx) \defeq \nabla^3 \psi(\vx) \big|_{\mathcal{T}^3} = \nabla^3 \psi(\vx) [\vP \cdot, \vP \cdot, \vP \cdot].
\]

Finally, for ease of notation, throughout this section we denote derivatives of variables with respect to $\lambda$ by placing dots over the variables.
For example, we use
\[
\dot{\vx}_{\lambda} = \frac{d \vx_{\lambda}}{d \lambda} = \mleft[\frac{d \vx_{\lambda}[1]}{d \lambda}, \frac{d \vx_{\lambda}[2]}{d \lambda}, \ldots,  \frac{d \vx_{\lambda}[d]}{d \lambda}\mright]^\top.
\]

\begin{lemma} \label{lemma:g_derivatives}
    Given a strongly convex Legendre and third order differentiable regularizer $\psi$, for the convex conjugate function $g(\lambda) \defeq \psi_{\mathcal{X}}^* (\lambda \vx)$ as a function of  $\lambda$, we have,
    \begin{itemize}[itemsep=0.4em]
        \item $g''(\lambda) = \dot{\vx}_{\lambda}^\top \vH(\vx_\lambda) \dot{\vx}_{\lambda} = \dfrac{1}{\lambda^2} \vvv(\vx_{\lambda})^\top \vH(\vx_\lambda) \vvv(\vx_{\lambda}) = \frac{1}{\lambda^2} \vvv(\vx_{\lambda})^\top \nabla^2 \psi(\vx_\lambda) \vvv(\vx_{\lambda})$
        \item $g'''(\lambda) = - \vT(\vx_\lambda) [\dot{\vx}_{\lambda}, \dot{\vx}_{\lambda}, \dot{\vx}_{\lambda}] = -\dfrac{1}{\lambda^3} \vT(\vx_\lambda) [\vvv(\vx_{\lambda}), \vvv(\vx_{\lambda}), \vvv(\vx_{\lambda})] = -\dfrac{1}{\lambda^3} \nabla^3 \psi(\vx_\lambda) [\vvv(\vx_{\lambda}), \vvv(\vx_{\lambda}), \vvv(\vx_{\lambda})]$, 
    \end{itemize}
    where $\dot{\vx}_{\lambda}$ follows the projected Newton step at point $\vx_\lambda$ (with $1/\lambda$ as learning rate),
    \[
        \dot{\vx}_{\lambda} = \frac{1}{\lambda} \vvv(\vx_{\lambda}) \in \mathcal{T},
    \]
    with $\vvv(\vx_{\lambda}) \defeq \vH(\vx_\lambda)^{-1} \vP \nabla \psi(\vx_\lambda) $.
\end{lemma}
\begin{proof}
    Consider the optimization problem of the conjugate,
    \[
    \vx_\lambda \defeq \argmax_{\vx \in \mathcal{X}} \mleft\{ \langle \lambda \at, \vx \rangle - \psi(\vx) \mright\}.
    \]
    We form the Lagrangian with parameter $\eta$ (The unique optimum lies in the interior of the simplex, so there is no need to account for the positivity constraints.),
    \[
    \mathcal{L}(\vx, \eta; \lambda) = \lambda \langle \at, \vx \rangle - \psi (\vx) + \eta ( 1 - \vec 1^\top \vx), 
    \]
    with the constraint $1^\top \vx_\lambda = 1$.

    Denote the optimal Lagrange variable $\eta$ associated with fixed $\lambda$ by $\eta_\lambda$. Then, by first order optimality,
    \[
    \nabla \psi(\vx_\lambda) = \lambda \at + \eta_\lambda \vec 1. \numberthis{eq:self_temp_first_order}
    \]
    Projecting \eqref{eq:self_temp_first_order} into the simplex with projection matrix $\vP$ yields
    \[
    \vP \nabla \psi(\vx_\lambda) = \lambda \vP \at, \numberthis{eq:self_temp_first_order_P}
    \]
    since $\vP \vec 1 = 0$.

    Taking derivatives with respect to $\lambda$, once and twice, from the equality constraint of the simplex $1^\top \vx_\lambda = 1$, we infer that
    \[
    \vec 1^\top \dot{\vx}_{\lambda} & = 0 \Rightarrow  \dot{\vx}_{\lambda} \in \mathcal{T} \numberthis{eq:velocity_in_the_tangent} \\
    \vec 1^\top \ddot{\vx}_{\lambda} & = 0 \Rightarrow  \ddot{\vx}_{\lambda} \in \mathcal{T}. \numberthis{eq:acc_in_the_tangent}
    \]
    Taking another derivatives from \eqref{eq:self_temp_first_order}, we obtain that
    \[
    \nabla^2 \psi(\vx_\lambda) \dot{\vx}_\lambda = \at + \dot{\eta}_\lambda \vec 1. \numberthis{eq:self_temp_second_order}
    \]
    Again by projecting \eqref{eq:self_temp_second_order} back to the simplex, 
    \[
    \vP \nabla^2 \psi(\vx_\lambda)  \dot{\vx}_\lambda = \vP \at. \numberthis{eq:self_temp_second_order_P}
    \]
    Recalling that $\dot{\vx}_\lambda \in \mathcal{T}$, we know that $\dot{\vx} = \vP \dot{\vx}$. Hence, we can rewrite \eqref{eq:self_temp_second_order_P} as,
    \[
    \vH(\vx_\lambda) \dot{\vx}_\lambda & = \mleft(\vP \nabla^2 \psi(\vx) \vP \mright) \dot{\vx}_\lambda \\
    & = \vP \nabla^2 \psi(\vx_\lambda) \dot{\vx}_\lambda \\
    & = \vP \at. \numberthis{eq:x_dot_formula}
    \]
    Since $\vH(\vx_\lambda)$ is an invertible matrix on the simplex (the regularizer $\psi$ is convex), we can reformulate this equation in terms of $\dot{\vx}_\lambda$ as
    \[
    \dot{\vx}_\lambda & = \vH(\vx_\lambda)^{-1} \vP \at \\
    & = \frac{1}{\lambda} \vH(\vx_\lambda)^{-1} \vP \nabla \psi(\vx_\lambda) \\
    & = \frac{1}{\lambda} \vvv(\vx) \in \mathcal{T}. \numberthis{eq:x_dot_formula_inverse}
    \]
    In sequel, we calculate derivatives of the function $g(\lambda)$. By Danskin's theorem,
    \[
    g'(\lambda) = \langle \at, \vx_\lambda \rangle. 
    \]
    Taking another derivative w.r.t. $\lambda$,
    \[
    g''(\lambda) & = \langle \at, \dot{\vx}_\lambda \rangle \\
    & = \langle \at, \vP \dot{\vx}_\lambda \rangle \numberthis{eq:g_2_temp1}\\
    & = \langle \vP^\top \at, \dot{\vx}_\lambda \rangle  \\
    & = \langle \vP \at, \dot{\vx}_\lambda \rangle \numberthis{eq:g_2_temp3} \\
    & = \langle \vH(\vx_\lambda) \dot{\vx}_\lambda, \dot{\vx}_\lambda \rangle \numberthis{eq:g_2_temp4} \\
    & = \dot{\vx}_\lambda^\top \vH(\vx_\lambda) \dot{\vx}_\lambda, \numberthis{eq:g_2_final}
    \]
    where \eqref{eq:g_2_temp1} follows since $\dot{\vx}_\lambda \in \mathcal{T}$ and hence $\vP \dot{\vx}_\lambda = \dot{\vx}_\lambda$, \eqref{eq:g_2_temp3} follows since $\vP^\top = \vP$, and finally \eqref{eq:g_2_temp4} follows by the formula in \eqref{eq:x_dot_formula_inverse}.

    Taking another derivative from \eqref{eq:self_temp_second_order} w.r.t. $\lambda$, by chain rule we get,
    \[
    \nabla^3 \psi(\vx_\lambda) [\dot{\vx}_\lambda, \dot{\vx}_\lambda, \cdot] + \nabla^2 \psi(\vx_\lambda) \ddot{\vx}_\lambda = \ddot{\eta}_\lambda \vec 1.
    \]
    Again, by projecting back to the simplex with the projection matrix $\vP$,
    \[
    \vP \nabla^3 \psi(\vx_\lambda) [\dot{\vx}_\lambda, \dot{\vx}_\lambda, \cdot] + \vP \nabla^2 \psi(\vx_\lambda) \ddot{\vx}_\lambda = 0, \numberthis{eq:third_order_temp1}
    \]
    as $\vP \vec 1 = 0$. 

    Now, recalling that $\ddot{\vx}_\lambda \in \mathcal{T}$ and substituting $\vP = \ddot{\vx}_\lambda$ into \eqref{eq:third_order_temp1} implies that
    \[
    \vP \nabla^3 \psi(\vx_\lambda) [\dot{\vx}_\lambda, \dot{\vx}_\lambda, \cdot] + \vP \nabla^2 \psi(\vx_\lambda) \vP \ddot{\vx}_\lambda = 0,
    \]
    and thus,
    \[
    \vP \nabla^3 \psi(\vx_\lambda) [\dot{\vx}_\lambda, \dot{\vx}_\lambda, \cdot] + \vH(\vx_\lambda) \ddot{\vx}_\lambda = 0. \numberthis{eq:third_order_final_temp}
    \]

    Now, we take another derivative w.r.t. $\lambda$ from the two sides of $g''(\lambda) = \langle \at, \dot{\vx}_\lambda \rangle$.
    \[
    g'''(\lambda) & = \langle \at, \ddot{\vx}_\lambda \rangle \\
    & = \langle \nabla^2 \psi(\vx_\lambda) \dot{\vx}_\lambda - \dot{\eta}_\lambda \vec 1. , \ddot{\vx}_\lambda \rangle \numberthis{eq:third_order_g_temp1}\\
    & = \langle \nabla^2 \psi(\vx_\lambda) \dot{\vx}_\lambda, \ddot{\vx}_\lambda \rangle \numberthis{eq:third_order_g_temp2} \\
    & = \langle \nabla^2 \psi(\vx_\lambda) \vP \dot{\vx}_\lambda, \vP \ddot{\vx}_\lambda \rangle \numberthis{eq:third_order_g_temp3}\\
    & = \langle \vP^\top \nabla^2 \psi(\vx_\lambda) \vP \dot{\vx}_\lambda,  \ddot{\vx}_\lambda \rangle \\
    & = \langle \vP \nabla^2 \psi(\vx_\lambda) \vP \dot{\vx}_\lambda,  \ddot{\vx}_\lambda \rangle \numberthis{eq:third_order_g_temp4} \\
    & = \dot{\vx}_\lambda^\top \vH(\vx_\lambda) \ddot{\vx}_\lambda, \numberthis{eq:third_order_g_temp5}
    \]
    where \eqref{eq:third_order_g_temp1} is implied by \eqref{eq:self_temp_second_order}, \eqref{eq:third_order_g_temp2} a consequence of the fact that since $\ddot{\vx}_\lambda \in \mathcal{T}$, $\langle \ddot{\vx}_\lambda, \vec 1 \rangle = 0 $,  \eqref{eq:third_order_g_temp3} follows by $\vP \dot{\vx}_\lambda = \dot{\vx}_\lambda$ and finally \eqref{eq:third_order_g_temp4} results from $\vP^\top = \vP$.

    Next, incorporating \eqref{eq:third_order_final_temp} into \eqref{eq:third_order_g_temp5}, we get that
    \[
    g'''(\lambda) & = -  \dot{\vx}_\lambda^\top \vP \nabla^3 \psi(\vx_\lambda) [\dot{\vx}_\lambda, \dot{\vx}_\lambda, \cdot] \\
    & = - \nabla^3 \psi(\vx_\lambda) [\vP \dot{\vx}_\lambda, \vP \dot{\vx}_\lambda, \vP \dot{\vx}_\lambda] \numberthis{eq:third_order_g_temp6} \\
    & = - \vT(\vx_\lambda) [\dot{\vx}_\lambda, \dot{\vx}_\lambda, \dot{\vx}_\lambda] \numberthis{eq:g_3_final}, 
    \]
    where \eqref{eq:third_order_g_temp6} follows by $\vP \vx_\lambda = \vx$ since $\vx_\lambda \in \mathcal{T}$.

    In the end, placing $ \dot{\vx}_\lambda = \frac{1}{\lambda} \vvv(\vx_{\lambda})$ from \eqref{eq:x_dot_formula_inverse} into \eqref{eq:g_2_final} and \eqref{eq:g_3_final} concludes the proof.
\end{proof}

Now, we derive simplified formulation for the velocity $\vvv(\vx_{\lambda}) = \vH(\vx_\lambda)^{-1} \vP \nabla \psi(\vx_\lambda) $ defined in \Cref{lemma:g_derivatives} in the following lemma.

\begin{lemma} \label{lemma:v_formula}
    Given a strongly convex Legendre and third order differentiable regularizer $\psi$, we have
    \begin{enumerate}
        \item[a)] The term $\vvv(\vx_\lambda)$ attains the following formulation as a function of the regularizer $\psi$,
        \[
        \vvv(\vx_\lambda) = \mleft( \mleft[\nabla^2 \psi(\vx_\lambda) \mright]^{-1} - \dfrac{\mleft[\nabla^2 \psi(\vx_\lambda) \mright]^{-1} \vec 1 \vec 1^\top \nabla^2 \psi(\vx_\lambda)}{\vec 1^\top \mleft[\nabla^2 \psi(\vx_\lambda) \mright]^{-1} \vec 1} \mright) \nabla \psi (\vx_\lambda),
        \]
        \item[b)] and when the regularizer $\psi$ is additionally separable\footnote{With a slight abuse of notation, we use the same symbol $\psi$ to denote each element of the separable regularizer.},
        \[
        \psi(\vx) = \sum_{\ind = 1}^{d} \psi(\vx[\ind]),
        \]
        for each component $\ind \in [d]$,
        \[
        \vvv(\vx_\lambda) [\ind] = \frac{\psi'(\vx_\lambda[\ind]) - m_{\psi} (\vx_\lambda)}{\psi''(\vx_\lambda[\ind])},
        \]
        where $ m_{\psi} (\vx_\lambda)$ is an average of $\psi'$, weighted by $1/{\psi''}$, i.e.,
        \[
        m_{\psi} (\vx_\lambda) \defeq \dfrac{\displaystyle\sum_{\ind = 1}^d \dfrac{\psi'(\vx_\lambda[\ind])}{\psi''(\vx_\lambda[\ind])}}{\displaystyle\sum_{\ind = 1}^d \dfrac{1}{\psi''(\vx_\lambda[\ind])}}.
        \]
    \end{enumerate}
\end{lemma}
\begin{proof}
    By definition in the \Cref{lemma:g_derivatives}, $\vvv(\vx_{\lambda}) \in \mathcal{T}$ is the unique solution to the linear system $ \vP \nabla^2 \psi(\vx_\lambda) \vvv(\vx_{\lambda}) = \vP \nabla \psi(\vx_\lambda) $ and since $\vP \vec 1 = 0$, this is equivalent to 
    \[
    \nabla^2 \psi(\vx_\lambda) \vvv(\vx_{\lambda}) = \nabla \psi(\vx_\lambda) + \eta \vec 1, \numberthis{eq:elementwise_temp1}
    \]
    for some scalar number $\eta \in \mathbb{R}$.

    Writing $\vvv(\vx_{\lambda}) = \mleft[ \nabla^2 \psi(\vx_\lambda) \mright]^{-1} \mleft( \nabla \psi(\vx_\lambda) + \eta \vec 1 \mright)$ and enforcing $\vvv(\vx_{\lambda}) \in \mathcal{T}$ gives,
    \[
    0 = \vec 1^\top \vvv(\vx_\lambda) = \vec 1^\top \mleft[ \nabla^2 \psi(\vx_\lambda) \mright]^{-1} \nabla \psi(\vx_\lambda) + \eta \vec 1^\top  \mleft[ \nabla^2 \psi(\vx_\lambda) \mright]^{-1}  \vec 1.
    \]
    Thus, by solving for $\eta$, we get
    \[
    \eta = \frac{1^\top \mleft[ \nabla^2 \psi(\vx_\lambda) \mright]^{-1} \nabla \psi(\vx_\lambda) }{\vec 1^\top  \mleft[ \nabla^2 \psi(\vx_\lambda) \mright]^{-1}  \vec 1},
    \]
    and the proof for part a) is concluded.

    For part (b), starting from \eqref{eq:elementwise_temp1} componentwise, we know that for each $\ind \in [d]$, we have
    \[   \psi''(\vx[\ind]).\vvv(\vx_\lambda) [\ind] = \psi'(\vx_\lambda[\ind]) + \eta,
    \]
    Thus,
    \[
    \vvv(\vx_\lambda) [\ind] = \frac{\psi'(\vx[\ind]) + \eta}{\psi''(\vx_\lambda[\ind])}. \numberthis{eq:elementwise_temp2}
    \]
    We still must enforce $\vvv(\vx_\lambda) \in \mathcal{T}$, i.e., $\sum_{\ind = 1}^d \vvv(\vx_\lambda)[\ind] = 0$. Summing over elements of $\vvv(\vx_\lambda)$,
    \[
    0 = \sum_{\ind = 1}^d \vvv(\vx_\lambda)[\ind] = \sum_{\ind = 1}^d \frac{\psi'(\vx_\lambda[\ind])}{\psi''(\vx_\lambda[\ind])} + \eta \sum_{\ind = 1}^d \frac{1}{\psi''(\vx_\lambda[\ind])}.
    \]
    Thus,
    \[
    \eta = - m_{\psi} (\vx_\lambda) = -\dfrac{\displaystyle\sum_{\ind = 1}^d \dfrac{\psi'(\vx_\lambda[\ind])}{\psi''(\vx_\lambda[\ind])}}{\displaystyle\sum_{\ind = 1}^d \dfrac{1}{\psi''(\vx_\lambda[\ind])}}.
    \]
    Substituting back into \eqref{eq:elementwise_temp2}, we get
    \[
    \vvv(\vx_\lambda) [\ind] = \frac{\psi'(\vx_\lambda[\ind]) - m_{\psi} (\vx_\lambda)}{\psi''(\vx_\lambda[\ind])},
    \]
    and the proof is concluded.
\end{proof}

Now, with the preliminary derivations in place in \Cref{lemma:g_derivatives,lemma:v_formula}, we are ready to characterize the regularizers $\psi$ that give rise to a self-concordant learning-rate control problem~\eqref{eq:dynamic_learning_rate}. We refer to these Legendre regularizers as \emph{log-dominated}. We use this term since, as shown in \Cref{thm:log_dom_self_con}, the learning-rate control problem induced by these regularizers is self-concordant, and its second- and third-order derivatives are governed by the $\alpha \log \lambda$ term (See \Cref{thm:strong_convex_learning_rate_control,lemma:g_derivatives}).

\begin{definition}\label{def:log_dominated}
A strongly convex, third-order differentiable Legendre regularizer $\psi$ is said to be $\kappa$-log-dominated if there exists a universal constant $\kappa$ (possibly depending on $d$) such that  
\[
\mleft| \nabla^3 \psi(\vx)\big[\vvv(\vx), \vvv(\vx), \vvv(\vx)\big] \mright| \leq \kappa,
\]
where $\vvv(\vx)$ denotes the projected Newton step, whose closed form is given in \Cref{lemma:v_formula}.
\end{definition}

\begin{restatetheorem}{thm:log_dom_self_con}
    When the $\gamma$-(L)IL regularizer $\psi$ is $\kappa$-log-dominated, then the dynamic learning rate control objective (\ref{eq:dynamic_learning_rate}),
    \[
    f(\lambda) = \alpha \log \lambda + \psi_\mathcal{X}^*( \lambda \at).
    \]
    is $M$-generalized self-concordant, i.e.,
    \[
    \mleft| f'''(\lambda) \mright|^2 \leq M \mleft| f''(\lambda) \mright|^3,
    \]
    where 
    \[
    M = \frac{\mleft(2 \alpha + \kappa\mright)^2}{\mleft( \alpha - \gamma \mright)^3}.
    \]
\end{restatetheorem}

\begin{proof}
    Taking derivatives of $f(\lambda)$ with respect to $\lambda$, we obtain  
    \[
    f'(\lambda) = \frac{\alpha}{\lambda} + \frac{d}{d \lambda}\,\psi_\mathcal{X}^*(\lambda \at) 
    = \frac{\alpha}{\lambda} + g'(\lambda),
    \]
    in the terminology of \eqref{eq:definition_g}. Further, differentiating more gives  
    \[
    f''(\lambda) = -\frac{\alpha}{\lambda^2} + g''(\lambda) \\
    f'''(\lambda) = \frac{2 \alpha}{\lambda^3} + g'''(\lambda).
    \]
    Since $\psi$ is Legendre, by application of \Cref{thm:strong_convex_learning_rate_control_legendre},
    \[
      \mleft| f''(\lambda) \mright| \geq \frac{\alpha - \gamma}{\lambda^2}, \numberthis{eq:self_con_temp1}
    \]
    and by \Cref{lemma:g_derivatives,def:log_dominated} and triangle inequality,
    \[
    \mleft| f'''(\lambda) \mright| & = \mleft| \frac{2 \alpha}{\lambda^3} + g'''(\lambda) \mright| \\
    & = \mleft| \frac{2 \alpha}{\lambda^3} - \dfrac{1}{\lambda^3} \nabla^3 \psi(\vx_\lambda) [\vvv(\vx_{\lambda}), \vvv(\vx_{\lambda}), \vvv(\vx_{\lambda})] \mright| \\
    & \leq \frac{2 \alpha}{\lambda^3} + \frac{1}{\lambda^3} \mleft| \nabla^3 \psi(\vx_\lambda) [\vvv(\vx_{\lambda}), \vvv(\vx_{\lambda}), \vvv(\vx_{\lambda})] \mright|  \\
    & \leq \frac{2 \alpha + \kappa}{\lambda^3}. \numberthis{eq:self_con_temp2}
    \]
    Putting \eqref{eq:self_con_temp1} and \eqref{eq:self_con_temp2} together,
    \[
    \mleft| f'''(\lambda) \mright|^2 & \leq \mleft( \frac{2 \alpha + \kappa}{\lambda^3} \mright)^2 \\
    & = M \mleft( \frac{\alpha - \gamma}{\lambda^2} \mright)^3 \\
    & \leq M | f''(\lambda) |^3,
    \]
    where
    \[
    M = \frac{\mleft(2 \alpha + \kappa\mright)^2}{\mleft( \alpha - \gamma \mright)^3},
    \]
    and proof is concluded.
\end{proof}

\subsection{Characterization of Log-dominance and Self-concordance Parameters of Different Legendre Regularizers} \label{app:find_kappa}

In this section, we state and prove log-dominance for different (locally) intrinsic Lipschitz and Legendre regularizers, noted as in \Cref{table:regularizers_self_concordant}. Since these regularizers are separable, we used the simplified formula of $\vvv(\vx)$ written in \Cref{lemma:v_formula} for calculation of $\nabla^3 \psi(\vx) [\vvv(\vx), \vvv(\vx), \vvv(\vx)] $. In terms, 
\[
\nabla^3 \psi(\vx) [\vvv(\vx), \vvv(\vx), \vvv(\vx)] & = \sum_{\ind = 1}^d \psi'''(\vx[\ind]) \vvv(\vx)[\ind]^3 \\
& = \sum_{\ind = 1}^d \psi'''(\vx) \mleft( \frac{\psi'(\vx[\ind]) - m_{\psi} (\vx)}{\psi''(\vx[\ind])}\mright)^3 \\
& = \sum_{\ind = 1}^d  \beta (\vx[\ind]) \mleft( \psi'(\vx[\ind]) - m_{\psi} (\vx) \mright)^3,
\]
where
\[
\beta(t) \defeq \frac{\psi'''(t)}{\psi''(t)^3} \quad \textrm{and} \quad m_{\psi} (\vx) \defeq \dfrac{\displaystyle\sum_{\ind = 1}^d \dfrac{\psi'(\vx[\ind])}{\psi''(\vx[\ind])}}{\displaystyle\sum_{\ind = 1}^d \dfrac{1}{\psi''(\vx[\ind])}}.
\]

\begin{enumerate}
    \setlength{\itemsep}{10pt}  %
    \setlength{\parskip}{0pt}
    \item \textbf{Negative Entropy}  ($\psi(\vx) = \sum_{\ind = 1}^d \vx[\ind] \log \vx[\ind]$): \hfill $\kappa = \log^3 d$.

   Let us start with the element-wise regularizer,
    \[
    \psi(t) = t \log t,
    \]
    for which we have,
    \[
    \psi'(t) = 1 + \log t, \quad \psi''(t) = \frac{1}{t}, \quad  \psi'''(t) = - \frac{1}{t^2}, \quad \beta(t) = \dfrac{- \frac{1}{t^2}}{(\frac{1}{t})^3} = -t,
    \]
    and,
    \[
    m_{\psi} (\vx) & = \dfrac{\displaystyle\sum_{\ind = 1}^d \dfrac{\psi'(\vx[\ind])}{\psi''(\vx[\ind])}}{\displaystyle\sum_{\ind = 1}^d \dfrac{1}{\psi''(\vx[\ind])}} \\
    & = \dfrac{\displaystyle\sum_{\ind = 1}^d \dfrac{1 + \log \vx[\ind]}{\frac{1}{\vx[\ind]}}}{\displaystyle\sum_{\ind = 1}^d \dfrac{1}{\frac{1}{\vx[\ind]}}} \\
    & =  - \dfrac{\sum_{\ind = 1}^d \vx[\ind] \mleft( 1 + \log \vx[\ind]\mright)}{\sum_{\ind = 1}^d \vx[\ind]} \\
    & = 1 + \sum_{\ind = 1}^d \vx[\ind] \log \vx[\ind],
    \]
    where in the last line, we used $\vx \in \Delta^d$ and hence $\sum_{\ind = 1}^d \vx[\ind] = 1$.

    Thus,
    \[
    \nabla^3 \psi(\vx) [\vvv(\vx), \vvv(\vx), \vvv(\vx)] & = \sum_{\ind = 1}^d  \beta (\vx[\ind]) \mleft( \psi'(\vx[\ind]) - m_{\psi} (\vx) \mright)^3 \\
    & = \sum_{\ind = 1}^d -\vx[\ind] \mleft(1 + \log \vx[\ind] - \mleft(1 + \sum_{j = 1}^d \vx[j] \log \vx[j] \mright) \mright)^3 \\
    & = - \sum_{\ind = 1}^d \vx[\ind] \mleft(\log \vx[\ind] - \sum_{j = 1}^d \vx[j] \log \vx[j] \mright)^3 \\
    & = - \mathbb{E}[\mleft(\log a - \mathbb{E} [\log a] \mright)^3],
    \]
    where random variable $a$ is drawn from discrete distribution with law $\vx \in \Delta^d$.

    Thus,
    \[
    \mleft| \nabla^3 \psi(\vx) [\vvv(\vx), \vvv(\vx), \vvv(\vx)] \mright| & = \mleft| \mathbb{E}[\mleft(\log a - \mathbb{E} [\log a] \mright)^3] \mright| \\
    & \leq \mleft| \mathbb{E}[\log^3 a] \mright| \\
    & \leq \log^3 d,
    \]
    where the maximum is attained when $\vx$ is the uniform distribution.

    \item \textbf{Log} ($\psi(\vx) = - \sum_{\ind = 1}^d \log \vx[\ind]$): \hfill $\kappa = 2(d-1)$.

    Consider the element-wise regularizer,
    \[
    \psi(t) = \log t,
    \]
    for which we have,
    \[
    \psi'(t) = - \frac{1}{t}, \quad \psi''(t) = \frac{1}{t^2}, \quad  \psi'''(t) = - \frac{2}{t^3}, \quad \beta(t) = \dfrac{- \frac{2}{t^3}}{(\frac{1}{t^2})^3} = -2 t^3,
    \]
    and,
    \[
    m_{\psi} (\vx) & = \dfrac{\displaystyle\sum_{\ind = 1}^d \dfrac{\psi'(\vx[\ind])}{\psi''(\vx[\ind])}}{\displaystyle\sum_{\ind = 1}^d \dfrac{1}{\psi''(\vx[\ind])}} \\
    & = \dfrac{\displaystyle\sum_{\ind = 1}^d \dfrac{- \frac{1}{\vx[\ind]}}{\frac{1}{\vx[\ind]^2}}}{\displaystyle\sum_{\ind = 1}^d \dfrac{1}{\frac{1}{\vx[\ind]^2}}} \\
    & =  - \dfrac{\sum_{\ind = 1}^d \vx[\ind]}{\sum_{\ind = 1}^d \vx[\ind]^2} \\
    & = - \dfrac{1}{\sum_{\ind = 1}^d \vx[\ind]^2},
    \]
    where in the last line, we used $\vx \in \Delta^d$.

    Thus,
    \[
    \nabla^3 \psi(\vx) [\vvv(\vx), \vvv(\vx), \vvv(\vx)] & = \sum_{\ind = 1}^d  \beta (\vx[\ind]) \mleft( \psi'(\vx[\ind]) - m_{\psi} (\vx) \mright)^3 \\
    & = \sum_{\ind = 1}^d - 2 \vx[\ind]^3 \mleft(- \frac{1}{\vx[\ind]} + \dfrac{1}{\sum_{j = 1}^d \vx[j]^2} \mright)^3. \numberthis{eq:log_expansion_self_con}
    \]
    Expand $(a - b)^3 = a^3 - 3 a b^2 + 3a^2 b - b^3$ for $a = \dfrac{1}{\sum_{j = 1}^d \vx[j]^2}, b = \frac{1}{\vx[\ind]}$ and note,
    \[
    \mleft(\dfrac{1}{\sum_{j = 1}^d \vx[j]^2} - \frac{1}{\vx[\ind]}  \mright)^3 & = \frac{1}{\mleft( \sum_{j = 1}^d \vx[j]^2 \mright)^3} - \frac{3}{\mleft( \sum_{j = 1}^d \vx[j]^2 \mright)^2} \frac{1}{\vx[k]} + \frac{3}{\sum_{j = 1}^d \vx[j]^2} \frac{1}{\vx[k]^2} - \frac{1}{\vx[k]^3}.
    \]
    Putting all the pieces into \eqref{eq:log_expansion_self_con},
    \[
    \nabla^3 \psi(\vx) [\vvv(\vx), \vvv(\vx), \vvv(\vx)] & =  \mleft( - 2 \frac{\sum_{\ind = 1}^d \vx[k]^3}{\mleft( \sum_{j = 1}^d \vx[j]^2 \mright)^3} \mright) + \mleft( 6 \frac{\sum_{\ind = 1}^d \vx[k]^2}{\mleft( \sum_{j = 1}^d \vx[j]^2 \mright)^2} \mright) - \mleft( 6 \frac{\sum_{\ind = 1}^d \vx[k]}{\sum_{j = 1}^d \vx[j]^2} \mright) + 2 \sum_{\ind = 1}^d 1 \\
    & = - 2 \frac{\sum_{\ind = 1}^d \vx[k]^3}{\mleft( \sum_{j = 1}^d \vx[j]^2 \mright)^3} + 6 \frac{\sum_{\ind = 1}^d \vx[k]}{\sum_{j = 1}^d \vx[j]^2} - 6 \frac{\sum_{\ind = 1}^d \vx[k]}{\sum_{j = 1}^d \vx[j]^2} + 2 d \\
    & = - 2 \frac{\sum_{\ind = 1}^d \vx[k]^3}{\mleft( \sum_{j = 1}^d \vx[j]^2 \mright)^3} + 2 d
    \]
    where we used the fact that $\vx \in \Delta^d$ and $\sum_{\ind = 1}^d \vx[\ind] = 1$. 
    Thus, 
    \[
    \mleft| \nabla^3 \psi(\vx) [\vvv(\vx), \vvv(\vx), \vvv(\vx)] \mright| \leq 2 \max \mleft\{  \frac{\sum_{\ind = 1}^d \vx[k]^3}{\mleft( \sum_{j = 1}^d \vx[j]^2 \mright)^3} - d, d - \frac{\sum_{\ind = 1}^d \vx[k]^3}{\mleft( \sum_{j = 1}^d \vx[j]^2 \mright)^3}\mright\},
    \]
    where we used the fact that,
    \[
    1 \leq \frac{\sum_{\ind = 1}^d \vx[k]^3}{\mleft( \sum_{j = 1}^d \vx[j]^2 \mright)^3} \leq d,
    \]
    its minimum is attained at a vertex and its maximum is attained at the uniform distribution. Hence, we deduce that $\kappa = 2(d-1)$.

    \item \textbf{$q$-Tsallis Entropy} ($\psi(\vx) = \frac{1}{1 - q} \mleft( 1 - \sum_{\ind = 1}^d \vx[\ind]^q \mright)$): \hfill $\kappa = \frac{q (2 - q)}{(1 - q)^3} \mleft( d^{\frac{(3 - q)(1 - q)}{2 - q}} - 1 \mright)$.

    Omitting the constant term for simplicity, the element-wise regularizer for $q$-Tsallis Entropy is,
    \[
    \psi(t) = \frac{t^q}{q - 1}, 
    \]
    for which we have,
    \[
    \psi'(t) = \frac{q}{q - 1} t^{q - 1}, \quad \psi''(t) = q t^{q - 2}, \quad  \psi'''(t) = q (q - 2) t^{q - 3}, \quad \beta(t) = \frac{q - 2}{q^2} t^{3 - 2q},
    \]
    and,
    \[
    m_{\psi} (\vx) & = \dfrac{\displaystyle\sum_{\ind = 1}^d \dfrac{\psi'(\vx[\ind])}{\psi''(\vx[\ind])}}{\displaystyle\sum_{\ind = 1}^d \dfrac{1}{\psi''(\vx[\ind])}} \\
    & = \dfrac{\displaystyle\sum_{\ind = 1}^d \dfrac{\frac{q}{q - 1} \vx[\ind]^{q - 1}}{q \vx[\ind]^{q - 2}}}{\displaystyle\sum_{\ind = 1}^d \dfrac{1}{q \vx[\ind]^{q - 2}}} \\
    & = \frac{q}{q - 1} \dfrac{\sum_{\ind = 1}^d \vx[\ind]}{\sum_{\ind = 1}^d \vx[\ind]^{2 - q}} \\
    & = \frac{q}{q - 1} \dfrac{1}{\sum_{\ind = 1}^d \vx[\ind]^{2 - q}},
    \]
    where in the last line, we used $\vx \in \Delta^d$.

    Thus,
    \[
    \nabla^3 \psi(\vx) [\vvv(\vx), \vvv(\vx), \vvv(\vx)] & = \sum_{\ind = 1}^d  \beta (\vx[\ind]) \mleft( \psi'(\vx[\ind]) - m_{\psi} (\vx) \mright)^3 \\
    & = \sum_{\ind = 1}^d \frac{q - 2}{q^2} \vx[\ind]^{3 - 2q} \mleft( \frac{q}{q - 1} \vx[\ind]^{q - 1} - \frac{q}{q - 1} \dfrac{1}{\sum_{j = 1}^d \vx[j]^{2 - q}} \mright)^3 \\
    & = \frac{q (q - 2)}{(q - 1)^3} \sum_{\ind = 1}^d \vx[\ind]^{q} \mleft( 1 - \frac{\vx[\ind]^{1 - q}}{\sum_{j = 1}^d \vx[j]^{2 - q}} \mright)^3. \numberthis{eq:tsallis_expansion_self_con}
    \]
    Expand $(1 - w)^3 = 1 - 3w + 3w^2 - w^3$ for $w = \frac{\vx[\ind]^{1 - q}}{\sum_{j = 1}^d \vx[j]^{2 - q}}$ and note,
    \[
    \sum_{\ind = 1}^d \vx[\ind]^{q} w & = \sum_{\ind = 1}^d \vx[\ind]^{q} \mleft( \frac{\vx[\ind]^{1 - q}}{\sum_{j = 1}^d \vx[j]^{2 - q}} \mright) = \dfrac{\sum_{\ind = 1}^d \vx[\ind] }{\sum_{j = 1}^d \vx[j]^{2 - q}} = \dfrac{1}{\sum_{j = 1}^d \vx[j]^{2 - q}} \\
    \sum_{\ind = 1}^d \vx[\ind]^{q} w^2 & = \sum_{\ind = 1}^d \vx[\ind]^{q} \mleft( \frac{\vx[\ind]^{1 - q}}{\sum_{j = 1}^d \vx[j]^{2 - q}} \mright)^2 = \dfrac{\sum_{\ind = 1}^d \vx[\ind]^{2 - q}}{\mleft( \sum_{j = 1}^d \vx[j]^{2 - q} \mright)^2} =  \dfrac{1}{\sum_{j = 1}^d \vx[j]^{2 - q}} \\
    \sum_{\ind = 1}^d \vx[\ind]^{q} w^3 & = \sum_{\ind = 1}^d \vx[\ind]^{q} \mleft( \frac{\vx[\ind]^{1 - q}}{\sum_{j = 1}^d \vx[j]^{2 - q}} \mright)^3 = \dfrac{\sum_{\ind = 1}^d \vx[\ind]^{3 - 2q}}{\mleft( \sum_{j = 1}^d \vx[j]^{2 - q} \mright)^3}.
    \]
    Putting all the pieces into \eqref{eq:tsallis_expansion_self_con},
    \[
    \nabla^3 \psi(\vx) [\vvv(\vx), \vvv(\vx), \vvv(\vx)] & = \frac{q (q - 2)}{(q - 1)^3} \mleft( \sum_{\ind = 1}^d \vx[\ind]^k - 3 \dfrac{1}{\sum_{j = 1}^d \vx[j]^{2 - q}} + 3 \dfrac{1}{\sum_{j = 1}^d \vx[j]^{2 - q}} - \dfrac{\sum_{\ind = 1}^d \vx[\ind]^{3 - 2q}}{\mleft( \sum_{j = 1}^d \vx[j]^{2 - q} \mright)^3}\mright) \\
    & = \frac{q (q - 2)}{(q - 1)^3} \mleft( \sum_{\ind = 1}^d \vx[\ind]^k - \dfrac{\sum_{\ind = 1}^d \vx[\ind]^{3 - 2q}}{\mleft( \sum_{j = 1}^d \vx[j]^{2 - q} \mright)^3}\mright).
    \]
    Thus,
    \[
    \mleft| \nabla^3 \psi(\vx) [\vvv(\vx), \vvv(\vx), \vvv(\vx)] \mright| \leq \frac{q (q - 2)}{(q - 1)^3} 
    \mleft|  A(\vx) - B(\vx) \mright|,
    \]
    where,
    \[
    A(\vx) \defeq \sum_{\ind = 1}^d \vx[\ind]^q \quad \textrm{and} \quad B(\vx) \defeq \dfrac{\sum_{\ind = 1}^d \vx[\ind]^{3 - 2q}}{\mleft( \sum_{j = 1}^d \vx[j]^{2 - q} \mright)^3}.
    \]
    For the term $A(\vx)$, we know that the minimum is attained at a vertex and the maximum is attained at the uniform distribution. Hence,  
    \[
    1 \leq A(\vx) \leq \sum_{\ind = 1}^d \left( \frac{1}{d} \right)^q = d^{1-q}.
    \]
    For the other term $B(\vx)$, using Cauchy–Schwarz,
    \[
    \mleft(  \sum_{j = 1}^d \vx[j]^{2 - q} \mright)^2 & \leq \mleft( \sum_{j = 1}^d \vx[j] \mright) \mleft( \sum_{j = 1}^d \vx[j]^{3 - 2q} \mright) \\
    & = \mleft( \sum_{j = 1}^d \vx[j]^{3 - 2q} \mright),
    \]
    since $\vx \in \Delta^d$ and $\sum_{j = 1}^d \vx[j] = 1$.
    Thus,
    \[
    B(\vx) & = \dfrac{\sum_{\ind = 1}^d \vx[\ind]^{3 - 2q}}{\mleft( \sum_{j = 1}^d \vx[j]^{2 - q} \mright)^3} \\
    & \geq \frac{1}{\sum_{j = 1}^d \vx[j]^{3 - 2q}} \\
    & \geq 1,
    \]
    as the maximum of $\sum_{j = 1}^d \vx[j]^{2 - q}$ is attained at a vertex of $\Delta^d$. And on the other hand, by finite dimensional $\ell_p$ norms equivalence,
    \[
    \| \vx \|_{3 - 2q} \leq \| \vx \|_{2 - q},
    \]
    Thus,
    \[
    \mleft( \sum_{\ind = 1}^d \vx[\ind]^{3 - 2q} \mright)^{\frac{1}{3 - 2q}} \leq \mleft( \sum_{\ind = 1}^d \vx[\ind]^{2 - q} \mright)^{\frac{1}{2 - q}},
    \]
    and hence,
    \[
    \mleft( \sum_{\ind = 1}^d \vx[\ind]^{3 - 2q} \mright) \leq \mleft( \sum_{\ind = 1}^d \vx[\ind]^{2 - q} \mright)^{\frac{3 - 2q}{2 - q}}.
    \]

    As a result for $B(\vx)$, we infer 
    \[
    \dfrac{\sum_{\ind = 1}^d \vx[\ind]^{3 - 2q}}{\mleft( \sum_{j = 1}^d \vx[j]^{2 - q} \mright)^3} & \leq \mleft( \sum_{\ind = 1}^d \vx[\ind]^{2 - q} \mright)^{- \frac{3 - q}{2 - q}} \\
    & \leq \mleft( \sum_{\ind = 1}^d \mleft(\frac{1}{d}\mright)^{2 - q} \mright)^{- \frac{3 - q}{2 - q}} \\
    & = \mleft( d^{q-1} \mright)^{- \frac{3 - q}{2 - q}} \\
    & = d^{\frac{(3 - q)(1 - q)}{2 - q}},
    \]
    since the maximum of $\sum_{\ind = 1}^d \vx[\ind]^{2 - q}$ is attained at uniform distribution.

    Assembling the results obtained so far,
    \[
    \mleft| \nabla^3 \psi(\vx) [\vvv(\vx), \vvv(\vx), \vvv(\vx)] \mright| & \leq \frac{q (q - 2)}{(q - 1)^3} 
    \max \mleft\{ d^{1 - q} - 1, d^{\frac{(3 - q)(1 - q)}{2 - q}} - 1\mright\} \\
    & = \frac{q (q - 2)}{(q - 1)^3} \mleft( d^{\frac{(3 - q)(1 - q)}{2 - q}} - 1 \mright).
    \]

    Thus,
    \[
    \kappa = \frac{q (2 - q)}{(1 - q)^3} \mleft( d^{\frac{(3 - q)(1 - q)}{2 - q}} - 1 \mright).
    \]
\end{enumerate}

\section{Proofs for Iteration Complexity of \ours} \label{app:iteration_comp}

In this section, we study the iteration complexity of \ours for different choices of the regularizer $\psi$. The computational overhead of \ours compared to \OFTRL arises from solving the dynamic learning-rate control optimization problem. Similar to \Cref{app:proof_convexity}, for ease of notation, we use $\overline{f}$ to denote the negative of the learning rate control objective $-f$, where
\[
\overline{f}(\lambda) \defeq - f(\lambda) = - \alpha \log \lambda - \psi_{\mathcal{X}}^* (\lambda \at).
\]
We want to find an approximator $\widehat{\lambda}$ that is multiplicatively $\epsilon$-close to the unique minimizer $\lambda^*$ of the 1-dimensional optimization problem, 
\[
\lambda^* = \argmin_{\lambda \in (0, \eta]} \overline{f}(\lambda).
\]
In other words, we require the approximation $\widehat{\lambda}$ to satisfy the multiplicative tolerance $\epsilon$,  
\[
\left| \frac{\widehat{\lambda}}{\lambda^*} - 1 \right| \leq \epsilon.
\]

In this section, we show that iteration complexity of additional overhead of \ours is negligible are designing efficient algorithms. We summarize the results  of this section in \Cref{table:iteration_complexity}.

It is clear that  
\[
\overline{f}(0) = \infty, \quad \text{and} \quad \overline{f}(\eta) \text{ is finite,}
\]
and by \Cref{thm:strong_convex_learning_rate_control}, the function $\overline{f}$ is strongly convex on $(0, \eta]$. Hence, it is unimodal and admits a unique minimizer $\lambda^*$.

\begin{algorithm2e}[th]
    \SetNoFillComment
    \caption{Geometric Bisection Search for Learning Rate Control Problem
    }\label{algo:geometric_bisection}
    \DontPrintSemicolon
    \KwData{$l_0 \in (0, \eta)$ with $\overline{f}'(l_0) < 0$, multiplicative tolerance $\epsilon$}
    \KwResult{Solution $\widehat{\lambda}$} \vspace{2mm}

    \uIf{$\overline{f}'(\eta) \leq 0$}{
    Set  $\displaystyle \widehat{\lambda} \gets \eta$ \;
    \Return $\displaystyle \widehat{\lambda}$\;
    } 
    Set  $\displaystyle l \gets l_0$ \;
    Set  $\displaystyle u \gets \eta$ \; \medskip

    \While{$\dfrac{u}{l} > \mleft( 1 + \epsilon \mright)^2$}{
    \tcc{\color{commentcolor}\texttt{Geometric Mean of Interval $(l, u)$}}
    
    Set  $\displaystyle m \gets \sqrt{l \cdot u}$ \; \medskip

    \tcc{\color{commentcolor}\texttt{Geometric Bisection on $(l, u)$}}
    \uIf{$\overline{f}'(m) < 0$}{
    Set  $\displaystyle l \gets m$ \;
    }
    \Else{
    Set  $\displaystyle u \gets m$ \;
    }

    } \medskip
    Set  $\displaystyle \widehat{\lambda} \gets \sqrt{l \cdot u}$ \;
    \Return $\displaystyle \widehat{\lambda}$\;

\end{algorithm2e}

For a general intrinsic Lipschitz regularizer $\psi$, we show that one can employ geometric variants of the golden-section (\Cref{algo:geometric_golden}) and bisection (\Cref{algo:geometric_bisection}) search algorithms, which rely only on zeroth-order and first-order oracles respectively, to find an approximate dynamic learning rate $\widehat{\lambda}$ with multiplicative error $\epsilon$ in $O(\log (1/\epsilon))$ time. We formalize these results in the followings (\Cref{prop:bisection,prop:golden}).

\begin{proposition}\label{prop:bisection}
    The geometric bisection search algorithm (\Cref{algo:geometric_bisection}) outputs an approximation $\widehat{\lambda}$ that is multiplicatively $\epsilon$-close to the optimal solution $\lambda^*$, using only $O(\log (1/\epsilon))$ evaluations of the $\overline{f}'$, derivative of the learning-rate control problem.
\end{proposition}
\begin{proof}
    Because of strong convexity, the derivative $\overline{f}'$ is strictly increasing on $(0, \eta]$. If $\overline{f}'(1^-) \leq 0$, then $\overline{f}'(\lambda) \leq 0$ for all $\lambda \in (0, \eta]$. Since $\overline{f}'$ is increasing, the function $\overline{f}$ is strictly decreasing, and the unique minimizer is $\widehat{\lambda} = \lambda^* = \eta$.

    Otherwise, since $\overline{f}'(1^-) > 0$, then by monotonicity and first-order optimality, the unique solution $\lambda^*$ is the root of the equation $\overline{f}'(\lambda^*) = 0$. 

    We use the subscript $k$ to denote the bracket $[l_k, u_k]$ at iteration $k$. We maintain the bracket $[l_k, u_k] \subseteq (0, \eta]$ with  
    \[
    \overline{f}'(l_k) \leq 0 \leq \overline{f}'(u_k),
    \]
    so that $\lambda^* \in [l_k, u_k]$. At each iteration, we shrink the interval by the ratio  
    \[
    r_k = \frac{u_k}{l_k}.
    \]
    
    Consider the geometric mean $m_k = \sqrt{l_k u_k}$.  
    Since $\overline{f}'$ is monotone and satisfies $\overline{f}'(\lambda^*) = 0$, if $\overline{f}'(m_k) < 0$, then $m_k < \lambda^* \geq u_k$; otherwise, $l_k \leq \lambda^* < m_k$. Thus, always $\lambda^* \in [l_{k+1}, u_{k+1}]$.

    In both cases,  
    \[
    r_{k+1} = \frac{u_k}{m_k} = \frac{m_k}{l_k} = \sqrt{\frac{u_k}{l_k}} = \sqrt{r_k}.
    \]
    Hence, by simple induction,  
    \[
    r_k = (r_0)^{1/2^k},
    \]
    and thus $\log(u_k/l_k)$ halves at each iteration.

    For any bracket $\lambda^* \in (l, u)$ any candidate $\widehat{\lambda} = \sqrt{l \cdot u}$ has worst-case multiplicative factor,
    \[
    \sup_{\lambda^* \in (l, u)} \max \mleft\{ \frac{\widehat{\lambda}}{\lambda^*}, \frac{\lambda^*}{\widehat{\lambda}}\mright\} = \max \mleft\{ \frac{u}{\sqrt{l \cdot u}}, \frac{\sqrt{l \cdot u}}{l}\mright\} = \sqrt{\frac{u}{l}}
    \]
    
    Our stopping criterion is  
    \[
    \frac{u_k}{l_k} \leq (1 + \epsilon)^2,
    \]
    which implies $\sqrt{\frac{u_k}{l_k}} \leq 1 + \epsilon$ and consequently,  
    \[
    \left| \frac{\widehat{\lambda}}{\lambda^*} - 1 \right| \leq \epsilon.
    \]
    Therefore, the algorithm outputs a good approximation $\widehat{\lambda}$ in $k$ iterations, since  
    \[
    r_k = (r_0)^{1/2^k} \leq (1 + \epsilon)^2.
    \]
    
    As a result, it suffices that $k$ satisfies  
    \[ 
    r_k = (r_0)^{1/2^k} \leq (1 + \epsilon)^2, \] which as a result, we infer that $k$ iterations suffices, 
    \[ 
    k & = \log_2 \mleft(\frac{\log r_0}{2 \log ( 1 + \epsilon) } \mright) \\ 
    & = \log_2 \log r_0 - \log_2 \mleft( 2 \log (1 + \epsilon) \mright) \\
    & \leq \log_2 \log r_0 - \log_2 \epsilon \numberthis{eq:bisection_temp}\\ 
    & = O\mleft(\log \frac{1}{\epsilon}\mright),
    \]
    where \eqref{eq:bisection_temp} follows from $\frac{\epsilon}{2} \leq \log(1 + \epsilon)$ for all $\epsilon \in (0,1)$.

\end{proof}

\begin{algorithm2e}[th]
    \SetNoFillComment
    \caption{Geometric Golden-section Search for Learning Rate Control Problem
    }\label{algo:geometric_golden}
    \DontPrintSemicolon
    \KwData{$l_0 \in (0, \eta)$ with $l_0 < \lambda^*$, multiplicative tolerance $\epsilon$}
    \KwResult{Solution $\widehat{\lambda}$} \vspace{2mm}

    \tcc{\color{commentcolor}\texttt{Golden Ratio}}
    Set $\tau = \dfrac{\sqrt{5} - 1}{2}  \approx 0.618$ \;
    \medskip
    
    Set  $\displaystyle a \gets \log l_0$ \;
    Set  $\displaystyle b \gets \log \eta$ \; 
    \medskip

    \tcc{\color{commentcolor}\texttt{Split Interval in Golden Ratio}}
    Set  $\displaystyle c \gets b - \tau(b - a)$ \;
    Set  $\displaystyle d \gets a + \tau(b - a)$ \;
    \medskip
    
    \While{$b - a > 2 \log \epsilon$}{
    \tcc{\color{commentcolor}\texttt{Geometric Split on $(a, b)$}}
    \uIf{$f(c) < f(d)$}{
    \tcc{\color{commentcolor}\texttt{The Minimum Lies on $(a, d)$}}
    Set  $\displaystyle b \gets d$ \;
    Set  $\displaystyle d \gets c$ \;
    Set  $\displaystyle c \gets b - \tau(b - a)$ \;
    }
    \Else{
    \tcc{\color{commentcolor}\texttt{The Minimum Lies on $(c, b)$}}
    Set  $\displaystyle a \gets c$ \;
    Set  $\displaystyle c \gets d$ \;
    Set  $\displaystyle d \gets a + \tau(b - a)$ \;
    }

    } \medskip

    \tcc{\color{commentcolor}\texttt{Geometric Mean}}
    Set  $\displaystyle \widehat{\lambda} \gets \exp\mleft\{ \frac{a + b}{2}\mright\}$ \;
    \Return $\displaystyle \widehat{\lambda}$\;

\end{algorithm2e}

\begin{proposition}\label{prop:golden}
    The geometric golden-section search algorithm (\Cref{algo:geometric_golden}) outputs an approximation $\widehat{\lambda}$ that is multiplicatively $\epsilon$-close to the optimal solution $\lambda^*$, using only $O(\log (1/\epsilon))$ evaluations of $\overline{f}$, i.e., zeroth-order evaluations of the learning-rate control problem.
\end{proposition}
\begin{proof}
    Define the function  
    \[
    \overline{g}(t) \defeq \overline{f}(e^t),
    \]
    on the interval $(-\infty, \log \eta]$. Since the function $\overline{f}$ is unimodal and the exponential map is monotone, the function $\overline{g}$ is also unimodal. Its minimum is attained at the unique point  
    \[
    t^* = \log \lambda^* \in (\log l_0, \log \eta).
    \]

    At its core, \Cref{algo:geometric_golden} runs the classical golden-section search~\citep{kiefer1953sequential} on the function $\overline{g}$, starting from the interval $(a_0, b_0]$, where $a_0 = \log l_0$ and $b_0 = \log \eta$. By the standard analysis of the golden-section search, each iteration maintains a bracket $[a_k, b_k]$ containing $t^*$ and shrinks its length by a factor of $\tau \approx 0.618$,  
    \[
    b_k - a_k \leq \tau^k \,(b_0 - a_0).
    \]

    The stopping criterion is $b_k - a_k \leq 2 \log (1 + \epsilon)$, and we output  
    \[
    \widehat{\lambda} = \exp\!\left(\frac{a_k + b_k}{2}\right).
    \]
    Since $a_k = \log l_k$, $b_k = \log u_k$, and $\lambda^* \in [l_k, u_k]$, we have  
    \[
    \frac{u_k}{l_k} = \exp\!\{b_k - a_k\} \leq \exp\!\{2 \log (1 + \epsilon)\} = (1 + \epsilon)^2.
    \]
    Thus, by the same argument as in the proof of \Cref{prop:bisection},  
    \[
    \left| \frac{\widehat{\lambda}}{\lambda^*} - 1 \right| \leq \epsilon.
    \]

    Again, similar to the proof of \Cref{prop:bisection}, for the stopping condition we require $b_k - a_k \leq 2 \log (1 + \epsilon)$, which amounts to  
    \[
    k \geq \log_{1/\tau} \!\left( \frac{b_0 - a_0}{2 \log (1 + \epsilon)} \right) 
    = O\!\left(\log \frac{1}{\epsilon}\right),
    \]

\end{proof}

If the regularizer $\psi$ is additionally Legendre and log-dominated (see \Cref{def:log_dominated}), then the learning-rate control problem is self-concordant by \Cref{thm:log_dom_self_con}. In that case, we can apply Newton's method to solve the learning-rate control problem and obtain an approximation $\widehat{\lambda}$ that is $\epsilon$-close to the true solution $\lambda^*$ in the local norm induced by the second order derivative of $\overline{f}$, in $O(\log \log (1/\epsilon))$ time. Moreover, by \Cref{thm:strong_convex_learning_rate_control_legendre}, $\widehat{\lambda}$ is also multiplicatively $\epsilon$-close to $\lambda^*$. The next proposition formalizes this result.

\begin{proposition}\label{prop:newton}
    When the regularizer $\psi$ is additionally Legendre and log-dominated and hence the learning-rate control problem is self-concordant, starting from a close-enough initialization $\lambda_0$, Newton's method outputs an approximation $\widehat{\lambda}$ that is multiplicatively $\epsilon$-close to the optimal solution $\lambda^*$, using only $O(\log \log (1/\epsilon))$ evaluations of $\overline{f}'$ and $\overline{f}''$, i.e., the first- and second-order derivatives of the learning-rate control problem.
\end{proposition}

\begin{proof}
    By the standard analysis of Newton's method for self-concordant functions~\citep{nesterov1994interior}, we know that after $O(\log \log (1/\epsilon))$ iterations,  
    \[
    \|\widehat{\lambda} - \lambda^*\|_{\overline{f}''(\lambda^*)} \leq \epsilon,
    \]
    where $\|\cdot\|_{\overline{f}''}$ denotes the local norm induced by the Hessian $\overline{f}''$ of the function $\overline{f}$. Thus, by \Cref{thm:strong_convex_learning_rate_control_legendre},  
    \[
         \epsilon^2 & \geq \|\widehat{\lambda} - \lambda^*\|_{\overline{f}''(\lambda^*)} \\
         & = \mleft(\widehat{\lambda} - \lambda^* \mright)^2 {\overline{f}''(\lambda^*)} \\
         & \geq (\alpha - \gamma) \mleft( \dfrac{\widehat{\lambda} - \lambda^*}{\lambda^*}\mright)^2 \\
         & \geq \mleft( \frac{\widehat{\lambda}}{\lambda^*} - 1 \mright)^2.
    \]
    Therefore, $\widehat{\lambda}$ is multiplicatively $\epsilon$-close to $\lambda^*$.
\end{proof}

\begin{restatetheorem}{thm:iter_comp}
    The iteration complexity of the additional overhead of \ours compared to \OFTRL, i.e., solving the learning-rate control problem, is:  
    \begin{itemize}
        \item $O(\log T)$ with access to \textbf{either a zeroth- or a first-order oracle}, whenever the regularizer is intrinsic Lipschitz,
        \item $O(\log \log T)$ with access to \textbf{first- and second-order oracles}, whenever the regularizer is (locally) intrinsic Lipschitz, Legendre, and log-dominated.  
    \end{itemize}
\end{restatetheorem}
\begin{proof}
    As shown in \Cref{app:approx}, at each iteration $t \in [T]$, it suffices to use approximate dynamic learning rates $\widehat{\lambda}\^t$ that are multiplicatively $\epsilon$-close to the exact dynamic learning rate $\lambda\^t$, with $\epsilon = 1/T$.

    For the first part, we can employ \Cref{algo:geometric_golden,algo:geometric_bisection}, respectively, and make use of \Cref{prop:golden,prop:bisection} with $\epsilon = 1/T$. However, both of these algorithms require an initial lower bound $l_0 \in (0, \eta)$ as input such that  
    \[
    \overline{f}'(l_0) \leq 0 \quad \text{or, equivalently,} \quad l_0 \leq \lambda\^t.
    \]
    It is clear that such $l_0$ exists, since $f(\eta)$ is finite, and $f$ cannot be non-decreasing near zero. Thus, it remains to efficiently find it.
    
    For this purpose, we use a warm start with the approximate learning rate from the previous iteration. We know that from the previous iteration, $\widehat{\lambda}\^{t - 1}$ is multiplicatively $\epsilon$-close to the exact value $\lambda\^{t - 1}$,  
    \[
     \frac{\widehat{\lambda}\^{t - 1}}{\lambda\^{t - 1}} \in [1 - \epsilon, 1 + \epsilon]. \numberthis{eq:warm_start_next1}
    \]
    Moreover, by the stability of dynamic learning rates in consecutive iterations (\Cref{theorem:mult_stable}),  
    \[
    \dfrac{\lambda\^{t-1}}{\lambda\^{t}} \in \mleft[ \dfrac{1}{2}, \dfrac{3}{2} \mright]. \numberthis{eq:warm_start_next2}
    \]
    Thus, multiplying \eqref{eq:warm_start_next1} and \eqref{eq:warm_start_next2}, we get  
    \[
    \frac{\widehat{\lambda}\^{t - 1}}{\lambda\^{t}} \in \mleft[ \dfrac{1}{2} (1 - \epsilon), \dfrac{3}{2} (1 + \epsilon) \mright], \numberthis{eq:warm_start_mult}
    \]
    and as a result,  
    \[
    \lambda\^t \geq \frac{2}{3 (1 + \epsilon)} \widehat{\lambda}\^{t-1}.
    \]
    Accordingly, at each iteration $t \in [T]$, it is enough to initialize \Cref{algo:geometric_bisection,algo:geometric_golden} with  
    \[
    l_0 = \frac{2}{3 (1 + \epsilon)} \widehat{\lambda}\^{t-1},
    \]
    and the proof is complete by \Cref{prop:golden,prop:bisection}.

    The second part follows directly from \Cref{prop:newton}, except that we need to find a good initialization $\lambda_0$ that lies in Newton's quadratic convergence regime. To this end, by \Cref{lemma:quadratic_convergence}, it is enough to show that at each iteration $t$,  
    \[
    \| \lambda_0 - \lambda\^t \|_{\overline{f}''(\lambda\^t)} \leq \frac{1}{M + 4},
    \]
    for an $M$-generalized self-concordant function $\overline{f} = -f$. We show that at each iteration $t$, the warm-start initialization $\lambda_0 = \widehat{\lambda}\^{t-1}$ satisfies this property. In terms, by \Cref{thm:strong_convex_learning_rate_control},
    \[
    \| \widehat{\lambda}\^{t- 1} - \lambda\^t \|_{\overline{f}''(\lambda\^t)} & = \mleft(\widehat{\lambda}\^{t-1} - \lambda\^t \mright)^2 \overline{f}''(\lambda\^t) \\
    & \leq \alpha \mleft( \dfrac{\widehat{\lambda}\^{t-1} - \lambda\^t}{\lambda\^t} \mright)^2 \\
    & = \alpha \mleft( \dfrac{\widehat{\lambda}\^{t-1}}{\lambda\^t} - 1 \mright)^2 \\
    & \leq \alpha \max \mleft\{ \mleft| -\frac{1}{2} (1 + \epsilon) \mright|, \mleft|  \frac{1}{2} (1 + 3 \epsilon) \mright| \mright\}^2, \numberthis{eq:warm_start_mult2} \\
    & = \alpha \frac{(1 + 3 \epsilon)^2}{4}
    \]
    where in \eqref{eq:warm_start_mult2}, we use \eqref{eq:warm_start_mult}. Choosing a small enough tolerance $\epsilon$ establishes the claim.

    To sum up, for both parts, we can compute $\widehat{\lambda}\^t$ efficiently by using warm-start ideas from $\widehat{\lambda}\^{t-1}$. For $t = 0$, we directly observe that since $\at\^0 = 0$  
    \[
    \lambda\^0 & = \argmax_{\lambda \in (0, \eta]} \mleft\{ \alpha \log \lambda + \psi_{\mathcal{X}}^* (0)\mright\}, \\
    & = \argmax_{\lambda \in (0, \eta]} \mleft\{ \alpha \log \lambda\mright\}, \\
    & = \eta.
    \]
    Hence, we can simply output $\widehat{\lambda}\^0 = \lambda\^0 = \eta$ for the starting iteration.
\end{proof}

For our proof in \Cref{thm:iter_comp}, when the learning-rate control problem is self-concordant, we use the following lemma from the theory of self-concordant functions~\citep{nesterov1994interior,boyd2004convex}, for which we provide a proof for completeness.

\begin{lemma}\label{lemma:quadratic_convergence}
 Consider the convex and $M$-generalized self-concordant function $h$,  
\[
\mleft | h'''(x) \mright| \leq M \mleft[ h''(x) \mright]^{3/2},
\]
with global minimizer $x^*$. If  
\[
\| x - x^* \|_{h''(x^*)} \leq \frac{1}{4 + M}, 
\]
then $x$ is in the quadratic convergence regime of Newton's method.
\end{lemma}
\begin{proof}
    Define,
    \[
    r^* \defeq \| x - x^* \|_{h''(x^*)}.
    \]
    By Dikin's Ellipsoid relations~\citep{boyd2004convex}, we know that if $r^* \leq \frac{2}{M}$, then
    \[
    \| x - x^* \|_{h''(x)} \leq \frac{\| x - x^* \|_{h''(x^*)}}{1 - \frac{M}{2} \| x - x^* \|_{h''(x^*)}} = \frac{r^*}{1 - \frac{M}{2} r^*}. \numberthis{eq:dikins}
    \]
    On the other hand, let $w = x - x^*$ and parameterize the segment $x_t = x^* + t w$. Using the self-concordant Hessian comparison bounds along this segment,  
    \[
    h''(x_t) \leq \frac{1}{\mleft( 1 - \frac{M}{2} \| x - x_t \|_{h''(x)} \mright)^2} h''(x),
    \]
    and integrating $h'(x) - h'(x^*) = \int_0^1 h''(x_t)\, dt$, we obtain the following standard bound for the Newton decrement:  
    \[
    \| h'(x) \|_{h''(x)}^* \leq \frac{\| w \|_{h''(x)}}{1 - \frac{M}{2} \| w \|_{h''(x)}}.
    \]
    Replacing \eqref{eq:dikins}, we get  
    \[
    \| h'(x) \|_{h''(x)}^* \leq \frac{r^*}{1 - M r^*}.
    \]
    Imposing the quadratic convergence regime $\| h'(x) \|_{h''(x)}^* \leq \frac{1}{4}$~\citep{boyd2004convex} in the last inequality,  
    \[
    \frac{r^*}{1 - M r^*} \leq \frac{1}{4},
    \]
    we obtain  
    \[
    r^* \leq \frac{1}{4 + M},
    \]
    which completes the proof.

\end{proof}
\section{Proofs for Extension of the Analysis under Approximate Iterates} \label{app:approx}

In this section, we discuss the regret analysis and properties of \ours, where at each iteration $t$, instead of the $\lambda\^t$, \emph{exact} solution of the learning rate control problem
\[
\lambda\^t = \argmax_{\lambda \in (0, \eta]} \mleft\{ \alpha \log \lambda + \psi_{\mathcal{X}}^* (\lambda \at\^t)\mright\},
\]
we compute a \emph{multiplicative $\epsilon$-approximation} $\widehat{\lambda}\^t$ with 
\[
\widehat{\lambda}\^t \in \mleft[ (1 - \epsilon) \lambda\^t, (1 + \epsilon) \lambda\^t \mright], \qquad \textrm{with } \epsilon \in (0, \frac{1}{2}],
\]
with algorithms discussed in \Cref{table:iteration_complexity,app:iteration_comp}.

Moreover, instead of the canonical (\emph{exact}) lifted action $\vy\^t = \lambda\^t \vx\^t$ with  
\[
\vx\^t = \argmax_{\vx \in \mathcal{X}} \mleft\{ \langle \lambda\^t \at\^t, \vx \rangle - \psi(\vx) \mright\},
\]
we consider the approximate lifted iteration  
\[
\widehat{\vy}\^{t} \defeq \widehat{\lambda}\^t \widehat{\vx}\^t,
\]
where  
\[
\widehat{\vx}\^t = \argmax_{\vx \in \mathcal{X}} \mleft\{ \langle \widehat{\lambda}\^t \at\^t, \vx \rangle - \psi (\vx) \mright\}.
\]
And, we have the reward signal $\ut\^t = \nut\^t - \langle \nut\^t, \widehat{\vx}\^t \rangle \vec1_d$, and the regret vector $\at\^t = \sum_{\tau=1}^{t-1} \ut\^t + \ut\^{t-1}$, where the utility vector $\nut\^t$ is received by playing the approximate action $\widehat{\vx}\^t$.

In this section, we will show that the regret gurantees of \ours continues to hold (up to constants) for the sequence $\mleft\{ \widehat{\vx}\^t \mright\}_{t = 1}^T$ if we take $\epsilon = 1/T$. To this end, define the per-round suboptimality in the lifted \OFTRL objective,
\[
\delta_t \defeq F_t (\widehat{\vy}\^t) - \min_{\vy \in \Omega} F_t(\vy) = F_t(\widehat{\vy}\^t) - F_t(\vy\^t) \geq 0.
\]
Because $F_t(\vy) = \frac{1}{\eta} \phi(\vy) - \langle \at\^t, \vy \rangle$ and $\vy = \lambda \vx$, the minimization in $\vy$ is equivalent to the maximization of the dynamic learning rate control problem over $\lambda$ (with the inner maximization in $\vx$ captured by $\psi^*_{\mathcal{X}}$). In particular,  
\[
\delta_t = \frac{1}{\eta} \mleft( f_t(\lambda\^t) - f_t(\widehat{\lambda}\^t) \mright),
\]
with
\[
f_t(\lambda) \defeq \alpha \log \lambda + \psi_\mathcal{X}^*(\lambda \at\^t).
\]
In the following lemma, we show that the suboptimality terms $\delta_t$ can be upper bounded by the Bregman divergence of $-\log$ between the \emph{approximate} and \emph{exact} dynamic learning rates.

\begin{lemma}\label{lemma:subopt_basic_ineq}
    For every time $t \in [T]$, and for every exact dynamic learning rate $\lambda\^t \in (0, \eta)$ and approximate dynamic learning rate $\widehat{\lambda} > 0$,  
    \[
    \delta_t(\widehat{\lambda}) = \frac{1}{\eta} \mleft( f_t(\lambda\^t) - f_t(\widehat{\lambda}) \mright) \leq \frac{\alpha}{\eta} D_{- \log} (\widehat{\lambda} \; \| \; \lambda\^t ) = \frac{\alpha}{\eta} \mleft( - \log \frac{\widehat{\lambda}}{\lambda\^t} + \frac{\widehat{\lambda}}{\lambda\^t} - 1\mright).
    \]
\end{lemma}
\begin{proof}
    Let 
    \[
    \widehat{\vx} = \argmax_{\vx \in \mathcal{X}} \mleft\{ \langle \widehat{\lambda} \at\^t, \vx \rangle - \psi(\vx) \mright\}.
    \]
    By definition, 
    \[
    \psi^*_{\mathcal{X}}(\widehat{\lambda} \at\^t) \geq \langle \widehat{\lambda} \at\^t, \vx\^t \rangle - \psi(\vx\^t).
    \]
    Consequently,
    \[
    f_t (\lambda\^t) - f_t(\widehat{\lambda}) & = \mleft( \alpha \log \lambda\^t + \langle \lambda\^t \at\^t, \vx\^t \rangle - \psi(\vx\^t) \mright) - \mleft( \alpha \log \widehat{\lambda} + \psi^*_{\mathcal{X}}(\widehat{\lambda} \at\^t) \mright) \\
    & \leq \mleft( \alpha \log \lambda\^t + \langle \lambda\^t \at\^t, \vx\^t \rangle - \psi(\vx\^t) \mright) - \mleft( \alpha \log \widehat{\lambda} + \langle \widehat{\lambda} \at\^t, \vx\^t \rangle - \psi(\vx\^t) \mright) \\
    & = \alpha \log \frac{\lambda\^t}{\widehat{\lambda}} + (\lambda\^t - \widehat{\lambda}) \langle \at\^t, \vx\^t \rangle. \numberthis{eq:subopt_basic_ineq1}
    \]
    
    Since $\lambda\^t \in (0, \eta)$ lies in the interior of the set, by first-order optimality and Danskin's theorem,  
    \[
    f_t'(\lambda\^t) = \frac{\alpha}{\lambda\^t} +  \langle \at\^t, \vx\^t \rangle  = 0.
    \]

    Thus, by substituting $\langle \at\^t, \vx\^t \rangle = - \frac{\alpha}{\lambda\^t} $ back into \eqref{eq:subopt_basic_ineq1},
    \[
    f_t (\lambda\^t) - f_t(\widehat{\lambda}) & \leq \alpha \mleft( \log \frac{\lambda\^t}{\widehat{\lambda}} - \frac{\lambda\^t - \widehat{\lambda}}{\lambda\^t} \mright) \\
    & = \alpha \mleft(- \log \frac{\widehat{\lambda}}{\lambda\^t} + \frac{ \widehat{\lambda}}{\lambda\^t} - 1 \mright) \\
    & = \alpha D_{- \log} (\widehat{\lambda} \; \| \; \lambda\^t ), 
    \]
    which finishes the proof.
\end{proof}

In turn, we can characterize how the suboptimality gap $\delta_t$ governs the approximation error in the lifted space $ D_\phi( \widehat{\vy}\^t \; \| \; \vy\^t)$. Moreover, by \Cref{prop:curvature} and the fact that $\widehat{\lambda}\^t$ and $\lambda\^t$ are multiplicatively close, we show that the suboptimality gap $\delta_t$ also governs $\| \widehat{\vx}\^t - \vx\^t \|_1$ on the simplex.

\begin{lemma}\label{lemma:subopt_approx_error_bound}
    For every time $t \in [T]$,  
    \[
    \delta_t \geq \frac{1}{\eta} D_\phi( \widehat{\vy}\^t \; \| \; \vy\^t),
    \]
    and whenever the approximate dynamic learning rate $\widehat{\lambda}\^t > 0$ is multiplicatively $\epsilon$-close ($\epsilon \in (0, 1/2]$) to the exact dynamic learning rate $\lambda\^t \in (0, \eta]$, i.e.,  
    \[
    \omega_t \defeq \frac{\widehat{\lambda}\^t}{\lambda\^t} \in [1 - \epsilon, 1 + \epsilon],
    \]
    the suboptimality gap $\delta_t$ provides an upper bound on the approximation error of the actions:  
    \[
    \delta_t \geq \frac{\gamma}{\eta} D_{- \log} (\widehat{\lambda}\^t \; \| \; \lambda\^t) + \frac{1}{4 \eta} D_\psi (\widehat{\vx}\^t \; \| \; \vx\^t) \geq  \frac{\gamma}{3\eta} 
    \mleft( \frac{\widehat{\lambda}\^t}{\lambda\^t} - 1\mright)^2 + \frac{\mu}{8 \eta} \| \widehat{\vx}\^t - \vx\^t \|_1^2, 
    \]
    as long as we choose the hyperparameter $\alpha \geq 4 \gamma$.
\end{lemma}
\begin{proof}
    By definition and \Cref{lemma:opt_bregman},  
    \[
    \delta_t = F_t(\widehat{\vy}\^t) - F_t(\vy\^t) \geq D_{F_t} (\widehat{\vy}\^t \;\|\; \vy\^t) = \frac{1}{\eta} D_\phi (\widehat{\vy}\^t \;\|\; \vy\^t).
    \]
    On the other hand, by the multiplicative closeness of $\widehat{\lambda}\^t$ and $\lambda\^t$, and since $\epsilon \leq \frac{1}{2}$,  
    \[
    \frac{\widehat{\lambda}\^t}{\lambda\^t} = \frac{\vec 1^\top \widehat{\vy}\^t}{\vec 1^\top \vy\^t} \in \mleft[ \frac{1}{2}, \frac{3}{2} \mright];
    \]
    hence, by \Cref{prop:curvature},  
    \[
    \delta_t & \geq \frac{1}{\eta} D_\phi (\widehat{\vy}\^t \;\|\; \vy\^t) \\
    & \geq \frac{1}{\eta} \mleft( \gamma D_{- \log} (\widehat{\lambda}\^t \; \| \; \lambda\^t) + \frac{1}{4} D_{\psi}(\widehat{\vx}\^t \; \| \; \vx\^t) \mright) \\
    & \geq \frac{\gamma}{\eta} \mleft( - \log \frac{\widehat{\lambda}\^t}{\lambda\^t} + \frac{\widehat{\lambda}\^t}{\lambda\^t} - 1\mright) + \frac{\mu}{8 \eta} \| \widehat{\vx}\^t - \vx\^t \|_1^2 \numberthis{eq:subopt_curvature_temp1}\\
    & \geq \frac{\gamma}{3 \eta} 
   \mleft( \frac{\widehat{\lambda}\^t}{\lambda\^t} - 1\mright)^2 + \frac{\mu}{8 \eta} \| \widehat{\vx}\^t - \vx\^t \|_1^2 \numberthis{eq:subopt_curvature_temp2}
    \]
    In \eqref{eq:subopt_curvature_temp1}, we used the $\mu$-strong convexity of the regularizer with respect to the $\ell_1$ norm and in \eqref{eq:subopt_curvature_temp2} we leveraged the the fact that
    \[
    - \log \omega + \omega - 1 \geq \frac{1}{3} (\omega - 1)^2 \qquad \textrm{for any} \omega = \frac{\widehat{\lambda}\^t}{\lambda\^t} = \frac{\vec 1^\top \widehat{\vy}\^t}{\vec 1^\top \vy\^t} \in \mleft[ \frac{1}{2}, \frac{3}{2} \mright]. 
    \]
\end{proof}

Next, we characterize how the suboptimality gap $\delta_t$ is controlled by the multiplicative tolerance $\epsilon$ in approximate iterations.

\begin{restatelemma}{lemma:subopt_value}
    For every time $t \in [T]$, and for every approximate dynamic learning rate $\widehat{\lambda}\^t > 0$ that is multiplicatively $\epsilon$-close ($\epsilon \in (0, 1/2]$) to the exact dynamic learning rate $\lambda\^t \in (0, \eta]$, i.e.,  
    \[
    \omega_t \defeq \frac{\widehat{\lambda}\^t}{\lambda\^t} \in [1 - \epsilon, 1 + \epsilon],
    \]
    we have that per-round suboptimality is bounded as
    \[
    \delta_t \leq \frac{\alpha}{\eta} \epsilon^2.
    \]

\end{restatelemma}
\begin{proof}
    Without loss of generality, we can assume that  
    \[
    \lambda\^t \in (0, \eta),
    \]
    since otherwise, by a simple test on $f'(\eta)$ (see, e.g., \Cref{algo:geometric_bisection}), the algorithm infers $\widehat{\lambda}\^t = \eta$ and $\delta_t = 0$.  Subsequently, by \Cref{lemma:subopt_basic_ineq},
    \[
    \delta_t \leq \frac{\alpha}{\eta} D_{- \log} (\widehat{\lambda}\^t \; \| \; \lambda\^t ) 
    \]
    with $\omega_t = \widehat{\lambda}\^t/\lambda\^t \in [1 - \epsilon, 1 + \epsilon] \subset [1/2, 3/2]$.
    In turn,
    \[
    D_{- \log} (\widehat{\lambda}\^t \; \| \; \lambda\^t ) & = - \log w_t + w_t - 1 \\
    & \leq (w_t - 1)^2 \\
    & \leq \epsilon^2,
    \]
    where we used the fact that for all $x \in [1/2, 3/2]$,
    \[
    - \log x + x - 1 \leq (x - 1)^2.
    \]
\end{proof}

With \Cref{lemma:subopt_basic_ineq,lemma:subopt_approx_error_bound,lemma:subopt_value} in place, we are ready to study how the per-round suboptimality terms $\delta_t$ arise in the nonnegative RVU bounds and hence in the regret analysis of \ours. We show an extension of \Cref{theorem:nonnegative_rvu} under approximate iterations of the dynamic learning rate. We omit the rederivation of all details, especially new constants in the interest of space, and instead focus on the broad idea.

\begin{theorem}[Extension of \Cref{theorem:nonnegative_rvu} under approximate iterations  ]\label{theorem:nonnegative_rvu_approx} 
    Under the bounded utilities assumption~\ref{assumption:bounded}, by choosing a sufficiently small learning rate $\eta \leq \min\{ 3\gamma/80, \mu/(32\sqrt{2}) \}$ and $\alpha \geq 4 \gamma + \mu$, the cumulative regret $\reg\^T$ incurred by \ours with approximate iterates $(\widehat{\lambda}\^t, \widehat{\vx}\^t)$ up to time $T$ is bounded as follows:  
    \[
        \max\{\reg\^T, 0\} = \tildereg\^T & \leq 3 + \frac{1}{\eta} \mleft( \alpha \log T + \mathcal{R} \mright) + \eta \frac{48}{\mu} \sum_{t = 1}^{T-1} \|\nut\^{t+1} - \nut\^{t}\|_{\infty}^2  \\
        & \qquad - \frac{1}{\eta} \frac{\mu}{192} \sum_{t = 1}^{T-1} \| \widehat{\vx}\^{t+1} -  \widehat{\vx}\^{t} \|_1^2  + \frac{1}{\eta} \frac{\mu}{32} \sum_{t = 1}^T \delta_t \\
        & \qquad + \sqrt{\min\mleft\{ \frac{81 \gamma^2}{64000}, 2048 \sqrt{2} \mright\} } \sum_{t = 1}^T  \sqrt{\delta_t}.
    \numberthis{eq:positive_rvu_rhs_approx},
    \]
    where $\mathcal{R} \defeq \max_{\vx \in \mathcal{X}} \psi(\vx)$.
\end{theorem}
\begin{proof}
    Starting from the regret under approximate iterations, for any comparator $\vy \in \Omega$, we have
    \[
    \sum_{t = 1}^T  \mleft\langle {\vy} -  \widehat{\vy}\^{t},  \ut\^{t} \mright\rangle & = \sum_{t = 1}^T  \mleft\langle {\vy} -  \vy\^{t},  \ut\^t \mright\rangle + \sum_{t = 1}^T  \mleft\langle \vy\^{t} - \widehat{\vy}\^{t} ,  \ut\^t \mright\rangle \\
    & \leq \sum_{t = 1}^T  \mleft\langle {\vy} -  \vy\^{t},  \ut\^t \mright\rangle + \sum_{t = 1}^T  \| \vy\^{t} - \widehat{\vy}\^{t} \|_1 \cdot  \| \ut\^t \|_\infty  \numberthis{eq:approx_non_rvu_temp1} \\
    & \leq \sum_{t = 1}^T  \mleft\langle {\vy} -  \vy\^{t},  \ut\^t \mright\rangle + 2 \sum_{t = 1}^T  \| \lambda\^t \vx\^{t} - \widehat{\lambda}\^t \widehat{\vx}\^{t} \|_1 \numberthis{eq:approx_non_rvu_temp2} \\
    & \leq \sum_{t = 1}^T  \mleft\langle {\vy} -  \vy\^{t},  \ut\^t \mright\rangle + 2 \sum_{t = 1}^T  \sqrt{ 2 \mleft|\lambda\^t - \widehat{\lambda}\^t \mright|^2 + 2 \| \vx\^{t} - \widehat{\vx}\^{t} \|_1^2 } \numberthis{eq:approx_non_rvu_temp3} \\
    & \leq \sum_{t = 1}^T  \mleft\langle {\vy} -  \vy\^{t},  \ut\^t \mright\rangle + 2 \sum_{t = 1}^T  \sqrt{ \frac{2}{c_0} \delta_t } \numberthis{eq:approx_non_rvu_temp4},
    \]
    with universal constant $c_0 = \min\mleft\{ \frac{512000}{81 \gamma^2}, \frac{1}{256 \sqrt{2}} \mright\}$. In our analysis, \eqref{eq:approx_non_rvu_temp1} is derived by Hölder's inequality, \eqref{eq:approx_non_rvu_temp2} is derived by $\| \ut\^t \|_\infty \leq 2$, \eqref{eq:approx_non_rvu_temp3} follows by \Cref{lemma:correct_y} and \eqref{eq:approx_non_rvu_temp4} follows since we know that by \Cref{lemma:subopt_approx_error_bound},
    \[
    \delta_t & \geq \frac{\gamma}{3\eta} 
    \mleft( \frac{\widehat{\lambda}\^t}{\lambda\^t} - 1\mright)^2 + \frac{\mu}{8 \eta} \| \widehat{\vx}\^t - \vx\^t \|_1^2 \\
    & = \frac{\gamma}{3\eta {\lambda\^t}^2} 
    \mleft( \widehat{\lambda}\^t - {\lambda\^t} \mright)^2 + \frac{\mu}{8 \eta} \| \widehat{\vx}\^t - \vx\^t \|_1^2 \\
    & \geq \frac{\gamma}{3 \eta^3} 
    \mleft( \widehat{\lambda}\^t - {\lambda\^t} \mright)^2 + \frac{\mu}{8 \eta} \| \widehat{\vx}\^t - \vx\^t \|_1^2 \numberthis{eq:approx_non_rvu_temp5} \\
    & \geq \frac{\gamma}{3 \mleft(3 \gamma/80 \mright)^3} 
    \mleft( \widehat{\lambda}\^t - {\lambda\^t} \mright)^2 + \frac{\mu}{8 \mleft(\mu/ 32 \sqrt{2}\mright) } \| \widehat{\vx}\^t - \vx\^t \|_1^2 \numberthis{eq:approx_non_rvu_temp6} \\
    & = \frac{512000}{81 \gamma^2} \mleft( \widehat{\lambda}\^t - {\lambda\^t} \mright)^2 + \frac{1}{256 \sqrt{2}} \| \widehat{\vx}\^t - \vx\^t \|_1^2 \\
    & \geq c_0 \mleft( \mleft( \widehat{\lambda}\^t - {\lambda\^t} \mright)^2 + \| \widehat{\vx}\^t - \vx\^t \|_1^2 \mright),
    \]
    with universal constant $c_0 = \min\mleft\{ \frac{512000}{81 \gamma^2}, \frac{1}{256 \sqrt{2}} \mright\}$. In our derivations, \eqref{eq:approx_non_rvu_temp5} follows by $\lambda\^t \leq \eta$ and \eqref{eq:approx_non_rvu_temp6} follows by $\eta \leq \min\{ 3\gamma/80, \mu/(32\sqrt{2}) \}$.

    For the term $\sum_{t = 1}^T  \mleft\langle {\vy} -  \vy\^{t},  \ut\^t \mright\rangle$ on the right-hand side of \eqref{eq:approx_non_rvu_temp4}, by the nonnegative RVU bounds for exact iterates of \ours in \Cref{theorem:nonnegative_rvu},
    \[
    \sum_{t = 1}^T  \mleft\langle {\vy} -  \vy\^{t},  \ut\^t \mright\rangle \leq 3 + \frac{1}{\eta} \mleft( \alpha \log T + \mathcal{R} \mright) + \eta \frac{48}{\mu} \sum_{t = 1}^{T-1} \|\nut\^{t+1} - \nut\^{t}\|_{\infty}^2  - \frac{1}{\eta} \frac{\mu}{64} \sum_{t = 1}^{T-1} \| {\vx}\^{t+1} -  {\vx}\^{t} \|_1^2. \numberthis{eq:temp_rvu1}
    \]
    On the other hand, 
    \[
    \sum_{t = 1}^{T-1} \| {\vx}\^{t+1} -  {\vx}\^{t} \|_1^2 & = \sum_{t = 1}^{T-1} \mleft\| \mleft( {\vx}\^{t+1} - \widehat{\vx}\^{t + 1}\mright) + \mleft(\widehat{\vx}\^{t + 1} - \widehat{\vx}\^{t}\mright) + \mleft(  \widehat{\vx}\^{t} -  {\vx}\^{t} \mright) \mright\|_1^2 \\
    & \geq \sum_{t = 1}^{T-1} \mleft( \mleft\| \widehat{\vx}\^{t + 1} - \widehat{\vx}\^{t} \mright\|_1 - \mleft\|  {\vx}\^{t+1} - \widehat{\vx}\^{t + 1} \mright\|_1 - \mleft\|  {\vx}\^{t} - \widehat{\vx}\^{t} \mright\|_1 \mright)^2 \numberthis{eq:temp_rvu2} \\
    & \geq \sum_{t = 1}^{T-1}  \mleft( \frac{1}{3} \mleft\| \widehat{\vx}\^{t + 1} - \widehat{\vx}\^{t} \mright\|_1^2 - \mleft\|  {\vx}\^{t+1} - \widehat{\vx}\^{t + 1} \mright\|_1^2 - \mleft\|  {\vx}\^{t} - \widehat{\vx}\^{t} \mright\|_1^2  \mright) \numberthis{eq:temp_rvu3} \\
    & \geq \sum_{t = 1}^{T-1}  \mleft( \frac{1}{3} \mleft\| \widehat{\vx}\^{t + 1} - \widehat{\vx}\^{t} \mright\|_1^2 - \delta_{t+1} - \delta_t  \mright) \numberthis{eq:temp_rvu4} \\
    & \geq \frac{1}{3} \sum_{t = 1}^{T-1} \mleft\| \widehat{\vx}\^{t + 1} - \widehat{\vx}\^{t} \mright\|_1^2 - 2 \sum_{t = 1}^{T} \delta_t, \numberthis{eq:path_length_delta}
    \]
    where \eqref{eq:temp_rvu2} follows from the triangle inequality, \eqref{eq:temp_rvu3} is derived using the following inequality for all positive numbers $a, b, c \in \mathbb{R}_+$,
    \[
    (a - b - c)^2 \geq \tfrac{1}{3} a^2 - b^2 - c^2,
    \]
    and \eqref{eq:temp_rvu4} follows by the definition of $\delta_t$.

    Combining \eqref{eq:approx_non_rvu_temp4}, \eqref{eq:temp_rvu1} and \eqref{eq:path_length_delta}, we complete the proof:
    \[
    \sum_{t = 1}^T  \mleft\langle {\vy} -  \widehat{\vy}\^{t},  \ut\^{t} \mright\rangle & \leq \sum_{t = 1}^T  \mleft\langle {\vy} -  \vy\^{t},  \ut\^t \mright\rangle + 2 \sqrt{\frac{2}{c_0}} \sum_{t = 1}^T  \sqrt{\delta_t} \\
    & \leq 3 + \frac{1}{\eta} \mleft( \alpha \log T + \mathcal{R} \mright) + \eta \frac{48}{\mu} \sum_{t = 1}^{T-1} \|\nut\^{t+1} - \nut\^{t}\|_{\infty}^2  \\
    & \qquad - \frac{1}{\eta} \frac{\mu}{64} \sum_{t = 1}^{T-1} \| {\vx}\^{t+1} -  {\vx}\^{t} \|_1^2 + 2 \sqrt{\frac{2}{c_0}} \sum_{t = 1}^T  \sqrt{\delta_t} \\
    & \leq 3 + \frac{1}{\eta} \mleft( \alpha \log T + \mathcal{R} \mright) + \eta \frac{48}{\mu} \sum_{t = 1}^{T-1} \|\nut\^{t+1} - \nut\^{t}\|_{\infty}^2  \\
    & \qquad - \frac{1}{\eta} \frac{\mu}{192} \sum_{t = 1}^{T-1} \| \widehat{\vx}\^{t+1} -  \widehat{\vx}\^{t} \|_1^2  + \frac{1}{\eta} \frac{\mu}{32} \sum_{t = 1}^T \delta_t \\
    & \qquad + 2 \sqrt{\frac{2}{c_0}} \sum_{t = 1}^T  \sqrt{\delta_t}.
    \]
\end{proof}

We are now ready to state and prove the main result of this section: \ours with a multiplicatively $\epsilon$-close approximate dynamic learning rate, where $\epsilon = \tfrac{1}{T}$, enjoys the same guarantees as \ours with exact iterates. More concretely, \ours with a multiplicatively $\frac{1}{T}$-approximate dynamic learning rate achieves individual regret of order $O(n \Gamma(d) \log T)$. As a direct byproduct, we can infer the iteration complexity of \ours, as elaborated extensively in \Cref{thm:iter_comp,table:iteration_complexity,app:iteration_comp}.

\begin{restatetheorem}{thm:main_approx}[Extension of \Cref{theorem:regret_final_bound} under approximate dynamic learning rates]
    If all players $i \in [n]$ follow \ours with a multiplicatively $\epsilon$-approximated dynamic learning rate $\lambda\^t$ at each iteration $t$,  
    \[
    \widehat{\lambda}\^t \in \mleft[ (1 - \epsilon) \lambda\^t, (1 + \epsilon) \lambda\^t \mright], \qquad \text{with } \epsilon = \frac{1}{T},
    \]
    and a small enough learning-rate cap $\eta = O_{\scaleto{T}{4pt}}(1)$, then the regret for each player $i \in [n]$ is bounded by  
    \[
    \reg_i\^T = O \big( n \Gamma_\psi(d) \log T \big), 
    \]
    where $\Gamma_\psi(d) = \gamma/\mu$. Moreover, the algorithm for each player $i \in [n]$ is adaptive to adversarial utilities; that is, the regret that each player incurs satisfies $\reg_i\^T = O(\sqrt{T \log d})$.
\end{restatetheorem}
\begin{proof}
    Given the nonnegative RVU bounds under approximate iterates in \Cref{theorem:nonnegative_rvu_approx}, we observe that there are only two additional terms stemmed from the approximation of the dynamic learning rate compared to \Cref{theorem:nonnegative_rvu}. These two terms, up to constant factors, are: 
    \begin{enumerate}
        \item the sum of suboptimalities, $\sum_{t = 1}^T \delta_t$, and 
        \item the sum of the square roots of suboptimalities, $\sum_{t = 1}^T \sqrt{\delta_t}$.
    \end{enumerate}

    In sequel, by \Cref{lemma:subopt_value}, we can connect the suboptimality terms to the approximation factor $\epsilon$, and hence by choosing $\epsilon = \tfrac{1}{T}$, we can bound these two terms by constants:  
    \begin{enumerate}
        \item For the first term, we have  
        \[
        \sum_{t = 1}^T \delta_t & \leq \frac{\alpha}{\eta} \epsilon^2 
         = \frac{\alpha}{\eta} \sum_{t = 1}^T \frac{1}{T^2} 
         = \frac{\alpha}{\eta} \frac{1}{T} 
         = o_{\scaleto{T}{4pt}}(1),
        \]
        \item For the second term,  
        \[
        \sum_{t = 1}^T \sqrt{\delta_t} & \leq \sqrt{\frac{\alpha}{\eta} \epsilon^2} 
         = \sqrt{\frac{\alpha}{\eta}} \sum_{t = 1}^T \epsilon 
         = \sqrt{\frac{\alpha}{\eta}} \sum_{t = 1}^T \frac{1}{T}  
         = O_{\scaleto{T}{4pt}}(1),
        \]
    \end{enumerate}
    Thus, the nonnegative bounds also hold for approximate iterations of \ours. The rest of the argument follows the same path as \Cref{theorem:regret_final_bound}. Similar to the proof of \Cref{prop:path_length}, we can bound the path length and, consequently, the individual regret as in \Cref{theorem:regret_final_bound}. To avoid repetition, we omit the details.
\end{proof}

\section{Details for Cautious Optimistic Multiplicative Weights Update (\COMWU)} \label{app:comwu}

In this section, we examine the special case of Cautious Optimistic Multiplicative Weights Update (\COMWU) in greater detail. \COMWU is a key instantiation of \ours, where negative entropy is chosen as the regularizer,  
\[
\psi (\vx) = \sum_{\ind = 1}^d \vx[\ind] \log \vx[\ind].
\]  

\COMWU updates the next distribution of play, similar to \Opthedge, proportionally to the exponential of the optimistic regret accumulated for each action, following the formula,
\[
    \vx\^{t}[k] \defeq\frac{\exp\{\lambda\^t \at\^t[k]\}}{\sum_{k' \in \mathcal{A}} \exp\{\lambda\^t \at\^t[k']\}} \qquad \forall k \in \mathcal{A},
    \numberthis{eq:softmax}
\]
where the learning rate $\lambda\^t$ is dynamically adjusted according to the following optimization problem
\[
    \lambda\^t \defeq \argmax_{\lambda \in (0, \eta]} \left\{f(\lambda; \at\^t) \defeq \alpha \log \lambda + \log \left( \sum_{\ind=1}^d e^{\lambda \at\^t[\ind]} \right) \right\}, \numberthis{eq:opt_problem_lambda}
\]
where the hyperparameter $\alpha$ is chosen as $\alpha = \Theta(\log^2 d)$

We prove in the following proposition that this learning rate control problem is induced by the negative entropy instantiation of \ours.

\begin{restateproposition}{proposition:closed_form_comwu}
    \ours with negative entropy admits the following dynamic learning rate control problem,
    \[
    \lambda\^t \defeq \argmax_{\lambda \in (0, \eta]} \left\{f(\lambda; \at\^t) \defeq \alpha \log \lambda + \log \left( \sum_{\ind=1}^d e^{\lambda \at\^t[\ind]} \right) \right\}. 
    \]
\end{restateproposition}

\begin{proof}
    In this case the mixed dynamics of \COMWU evolves according to,
    \[
        \vstack{\lambda\^t}{\vx\^t} \gets \argmax_{\lambda \in (0, \eta], \vx \in \Delta^d}\mleft\{ {\lambda} \mleft\langle  \at\^t, \vx \mright\rangle + \alpha \log \lambda - \sum_{\ind=1}^d \vx[\ind] \log \vx[\ind]\mright\}.
        \numberthis{eq:drlc_ftrl_special}
    \]
    By KKT condition\footnote{There is no need to do this step, as equivalently one can remember the of underlying \OFTRL, which is \Opthedge; however, we do this for completeness.}, 
    \[
        \lambda\^t \at\^t - \big[\log x\^t[1], \log x\^t[2], \ldots, \log x\^t[d]\big]^\top = \mu \in \bbR.
    \]
    Therefore, $\vx\^t[r] \propto \exp\{\at\^t[r]\}$. Now, after renormalization since $\vx \in \Delta^d$,
    \[
        \vx\^t[r] = \frac{\exp\{\lambda\^t \at\^t[r]\}}{\sum_{\ind = 1}^{d} \exp\{\lambda\^t \at\^t[\ind]\}}.
    \]
    In sequel, plugging this update formula into \eqref{eq:drlc_ftrl_special} entails,
    \[
        & \argmax_{\lambda \in [0, \eta]} \left\{ \lambda \sum_{r=1}^d \at\^t[r] \frac{e^{\lambda \at\^t[r]}}{\sum_{\ind=1}^d e^{\lambda \at\^t[\ind]}} + \alpha \log \lambda - \sum_{r=1}^d \frac{e^{\lambda \at\^t[r]}}{\sum_{r=1}^d e^{\lambda \at\^t[\ind]}} \log \Big( \frac{e^{\lambda \at\^t[r]}}{\sum_{\ind=1}^d e^{\lambda \at\^t[\ind]}} \Big) \right\}                                                  \\
        &  = \argmax_{\lambda \in [0, \eta]}  \\ 
        & \qquad \left\{ \lambda \sum_{r=1}^d \at\^t[r] \frac{e^{\lambda \at\^t[r]}}{\sum_{\ind=1}^d e^{\lambda \at\^t[\ind]}} + \alpha \log \lambda - \lambda \sum_{r=1}^d \at\^t[r] \frac{e^{\lambda \at\^t[r]}}{\sum_{\ind=1}^d e^{\lambda \at\^t[\ind]}} + \frac{ \sum_{r=1}^d  e^{\lambda \at\^t[r]}}{\sum_{\ind=1}^d e^{\lambda \at\^t[\ind]}} \log \Big( {\sum_{\ind=1}^d e^{\lambda \at\^t[\ind]}} \Big) \right\} \\
        & \;\; = \argmax_{\lambda \in [0, \eta]} \left\{ \log \Big( {\sum_{\ind=1}^d e^{\lambda \at\^t[\ind]}} \Big) + \alpha \log \lambda \right\}.          
    \]
    \end{proof}

    Now that the learning rate control problem is determined, we present the full algorithm for \COMWU in \Cref{algo:comwu}. When the maximum regret $\max_k \{\at\^t[k]\}$ accumulated on the actions is not too negative, i.e., $\max_k \{\at\^t[k]\} \geq - \alpha + \log d$ it is easy to observe that optimal solution is $\lambda\^t = \eta$. We formalize this observation in the following lemma.

    \begin{restatelemma}{lemma:lambda_is_one}
        Given an arbitrary regret vector $\at\^t \in \bbR^d$, and $\lambda\^t$ as the solution to
        \begin{align}
            \lambda\^t \defeq \argmax_{\lambda \in (0, \eta]} \left\{f(\lambda; \at\^t) \defeq \alpha \log \lambda + \log \Big( {\sum_{\ind=1}^d e^{\lambda \at\^t[\ind]}} \Big)  \right\},
        \end{align}
        then as long as $\max_{r \in [d]} \{ \at\^t[r]\} \geq - \alpha + \log d$, we have $\lambda\^t = \eta$.
    \end{restatelemma}
    \begin{proof}
    By KKT conditions, strong concavity and uniqueness of the solution, it is obvious that whenever $f^\prime (\eta; \at) \geq 0$, then $\widehat{\lambda} = \eta$. 
    Taking derivatives from $f(\lambda; \at)$ w.r.t. $\lambda$,
    \[
        \frac{d f(\lambda; \at)}{d \lambda}     & = \frac{\sum_{\ind = 1}^d \at[\ind] e^{\lambda \at[\ind]}}{\sum_{\ind = 1}^d e^{\lambda \at[\ind]}} + \frac{\alpha}{\lambda}.     
    \]
    Substituting $\at[r] = \frac{\log \vx[r] + \log \Gamma}{\lambda}$ for every coordinate $r \in [d]$, where $\Gamma = \sum_{\ind = 1}^{d} \exp\{\lambda \at[\ind]\}$ by the update step \eqref{eq:softmax},
    \[
    \frac{d f(\lambda; \at)}{d \lambda} & = \frac{\alpha}{\lambda} - \frac{1}{\lambda} \sum_{\ind = 1}^{d} \vx[\ind] \log \vx[\ind] + \frac{1}{\eta} \log \mleft( \sum_{\ind = 1}^d e^{\lambda \at[\ind]}\mright)
    \]
    Thus, 
    \[
        f^\prime(\eta; \at) 
        & = \frac{\alpha}{\eta} + \frac{1}{\eta} \sum_{\ind = 1}^{d} \vx[\ind] \log \vx[\ind] + \frac{1}{\eta} \log \mleft( \sum_{\ind = 1}^{d} e^{\lambda \at[\ind]}\mright) \\
        & \geq \frac{\alpha}{\eta} - \frac{\log d}{\eta} + \frac{1}{\eta} \max_{r \in [d]} \{ \at[r]\}                                     \\
        & \geq \frac{1}{\eta} \mleft( \alpha - \log d + \max_{r \in [d]} \{ \at[r]\} \mright).
    \]
    Thus, whenever $\max_{r \in [d]} \{ \at[r]\} \geq - \alpha + \log d$, then $\hat{\lambda} = \eta$.
\end{proof}    

Additionally, for completeness, we provide the equivalent Formulation~\ref{eq:lifted_FTRL} of \COMWU, incorporating the usual invertible change of variables $\vy = \lambda \vx$:
\[
 \vy\^t \gets \argmax_{\vy \in (0,1]\Delta^d } \mleft\{ {\eta} \mleft\langle  \at\^t, \vy \mright\rangle + (\alpha + 1) \log \left(\sum_{\ind=1}^d \vy[\ind]\right) - \frac{1}{\sum_{\ind=1}^d \vy[\ind]} \sum_{\ind=1}^d \vy[\ind] \log \vy[\ind] \mright\}.
\]

We conclude this section by demonstrating that the learning-rate control problem for \COMWU is strongly concave and self-concordant. Consequently, using Newton-type algorithms, it achieves an iteration complexity of $O(d \log \log T)$. While strong concavity and self-concordance was already established in \Cref{app:proof_convexity,proof_selfconcordance} for broad Legendre and log-dominated regularizers, including negative entropy, we provide the following derivations for illustrative purposes, given the closed form of the learning-rate control problem~\eqref{eq:opt_problem_lambda}.

\begin{proposition}[Properties of learning rate control step of \COMWU] \label{lemma:self_concordance}
    For any $\at \in \bbR^d$, the rate control objective $f(\lambda; \at)$, defined in
    \[
        \hat{\lambda} \defeq \argmax_{\lambda \in (0, \eta]} \left\{f(\lambda; \at) \defeq \alpha \log \lambda + \log \Big( {\sum_{\ind=1}^d e^{\lambda \at[\ind]}} \Big)  \right\},
    \]
    satisfies the following properties:
    \begin{itemize}
        \item Strong concavity: $f''(\lambda; \at) \leq -(\alpha - \log^2 d)/\lambda^2$ for all $\lambda \in (0,\infty)$.
        \item Self-concordance: $(f'''(\lambda; \at))^2 \leq -4 f''(\lambda; \at)^{3}$,
    \end{itemize}
    where all derivatives are with respect to $\lambda$.
\end{proposition}

\begin{proof}
    Taking derivatives from $f(\lambda; \at)$ w.r.t. $\lambda$,
    \[
        \frac{d f(\lambda; \at)}{d \lambda}     & = \frac{\sum_{\ind = 1}^d \at[\ind] e^{\lambda \at[\ind]}}{\sum_{\ind = 1}^d e^{\lambda \at[\ind]}} + \frac{\alpha}{\lambda}.                                                                                                                                                        \\
        \frac{d^2 f(\lambda; \at)}{d \lambda^2} & = \frac{ \Big( \sum_{\ind = 1}^d \at[\ind]^2 e^{\lambda \at\^t[\ind]} \Big) \Big(\sum_{\ind = 1}^d e^{\lambda \at\^t[\ind]} \Big) - \Big(\sum_{\ind = 1}^d \at[\ind] e^{\lambda \at[\ind]} \Big)^2}{\Big(\sum_{\ind = 1}^d e^{\lambda \at[\ind]} \Big)^2} - \frac{\alpha}{\lambda^2} \\
        & = \frac{\sum_{\ind = 1}^d \at[\ind]^2 e^{\lambda \at[\ind]} }{\sum_{\ind = 1}^d e^{\lambda \at[\ind]}} - \mleft( \frac{\sum_{\ind = 1}^d \at[\ind] e^{\lambda \at[\ind]}}{\sum_{\ind = 1}^d e^{\lambda \at[\ind]}}\mright)^2 - \frac{\alpha}{\lambda^2}.
    \]
    Substituting $\at[r] = \frac{\log \vx[r] + \log \Gamma}{\lambda}$ for every coordinate $r \in [d]$, where $\Gamma = \sum_{\ind = 1}^{d} \exp\{\lambda \at[\ind]\}$ and $\vx \in \Delta^d$, we get
    \[
        \frac{d^2 f(\lambda; \at)}{d \lambda^2} & = \frac{1}{\lambda^2} \sum_{\ind = 1}^{d } \big( \vx[\ind] (\log \vx[\ind] + \log \Gamma)^2 \big) - \frac{1}{\lambda^2} \Big ( \sum_{\ind =1}^{k}  \vx[\ind] (\log \vx[\ind] + \log \Gamma) \Big)^2 -\frac{\alpha}{\lambda^2} \\
        & = \frac{1}{\lambda^2} \sum_{\ind = 1}^{d } \big( \vx[\ind] (\log \vx[\ind] + \log \Gamma)^2 \big) - \frac{1}{\lambda^2} \Big ( \sum_{\ind =1}^{k}  \vx[\ind] \log \vx[\ind]  + \log \Gamma \Big)^2 -\frac{\alpha}{\lambda^2}  \\
        & = \frac{1}{\lambda^2} \mleft( \sum_{\ind =1}^{d} \vx[\ind] \log^2 \vx[\ind] - \Big( \sum_{\ind =1}^{k} \vx[\ind] \log \vx[\ind] \Big)^2 - \alpha \mright)                                                                   \\
        & \leq \frac{1}{\lambda^2} \mleft( \log^2 d - \alpha \mright).
    \]
    Therefore, the function $f(\lambda; \at)$ is $\frac{\alpha - \log^2 d}{\lambda^2}$-strongly concave.

    Taking derivatives,
    \[
        \frac{d^3 f(\lambda; \at)}{d \lambda^3} & = \frac{\sum_{\ind = 1}^d \at[\ind]^3 e^{\lambda \at[\ind]} }{\sum_{\ind = 1}^d e^{\lambda \at[\ind]}} - \frac{\Big( \sum_{\ind = 1}^d \at[\ind]^2 e^{\lambda \at[\ind]} \Big) \Big(  \sum_{\ind = 1}^d \at[\ind] e^{\lambda \at[\ind]}\Big)}{\Big(\sum_{\ind = 1}^d e^{\lambda \at[\ind]}\Big)^2}                                                                                                      \\
        & \quad - 2 \mleft( \frac{\sum_{\ind = 1}^d \at[\ind] e^{\lambda \at[\ind]}}{\sum_{\ind = 1}^d e^{\lambda \at[\ind]}} \mright) \mleft( \frac{\sum_{\ind = 1}^d \at[\ind]^2 e^{\lambda \at[\ind]} }{\sum_{\ind = 1}^d e^{\lambda \at[\ind]}} - \mleft( \frac{\sum_{\ind = 1}^d \at[\ind] e^{\lambda \at[\ind]}}{\sum_{\ind = 1}^d e^{\lambda \at[\ind]}}\mright)^2 \mright) + \frac{2 \alpha}{\lambda^3}.
    \]
    Substituting $\at[r] = \frac{\log \vx[r] + \log \Gamma}{\lambda}$ for every coordinate $r \in [d]$, where $\Gamma = \sum_{\ind = 1}^{d} \exp\{\lambda \at[\ind]\}$ and $\vx \in \Delta^d$, we get
    \[
        & \frac{d^3 f(\lambda; \at)}{d \lambda^3}                                                                                                                                                                                                                                     \\
        & \quad  = \frac{1}{\lambda^3} \sum_{\ind = 1}^{d } \vx[\ind] (\log \vx[\ind] + \log \Gamma)^3 - 3 \frac{1}{\lambda^3} \Big( \sum_{\ind = 1}^{d } \vx[\ind] (\log \vx[\ind] + \log \Gamma)^2 \Big)  \Big( \sum_{\ind = 1}^{d } \vx[\ind] (\log \vx[\ind] + \log \Gamma) \Big)  \\
        & \quad \qquad + 2 \frac{1}{\lambda^3} \Big( \sum_{\ind = 1}^{d } \vx[\ind] (\log \vx[\ind] + \log \Gamma) \Big)^3 + \frac{2 \alpha}{\lambda^3}                                                                                                                          \\
        & \quad = \frac{1}{\lambda^3} \mleft( \sum_{\ind = 1}^{d} \vx[k]\log^3 \vx[k] - 3 \big( \sum_{\ind = 1}^{d} \vx[k] \log^2 \vx[k] \big) \big( \sum_{\ind = 1}^{d} \vx[k] \log \vx[k] \big) + 2 \big ( \sum_{\ind = 1}^{d} \vx[k] \log \vx[k] \big)^3  + 2 (\alpha - 1) \mright) \\
        & \quad \leq \frac{1}{\lambda^3} \mleft( 3 \log^3 d +  2 \alpha \mright)
    \]
    Thus,
    \[
        \mleft(\frac{d^3 f(\lambda; \at)}{d \lambda^3}\mright)^2 \leq \mleft(\frac{3 \log^3 d +  2 \alpha}{\lambda^3}  \mright)^2 \leq 4 \mleft( \frac{\alpha - \log^2 d}{\lambda^2} \mright)^3 \leq
        - 4 \mleft( \frac{d^2 f(\lambda; \at)}{d \lambda^3} \mright)^3,
    \]
    for all $d \geq 1$,\footnote{the equality happens at $d = 1$.} where $\alpha = \beta \log^2 d + 2 \log d + 1$ and $\beta \geq 2$, hence the proof is concluded.
\end{proof}

We note that the strong concavity established in \Cref{lemma:self_concordance} is slightly stronger (constant factors) than the general result in \Cref{thm:strong_convex_learning_rate_control}. We are now ready to prove the iteration complexity of \COMWU, in the following corollary.

\begin{restatecorollary}{coro:newton_comwu}
    Given any $\at \in \bbR^d$ and a desired multiplicative accuracy $\epsilon > 0$, $O(\log \log (1/\epsilon))$ iterations of Newton's method, starting from the initialization point $\lambda_0 = \alpha / \big(- \max_{r \in [d]} \{\at[r]\}\big)$, are sufficient to compute a point $\widehat{\lambda}$ that approximates $\lambda^* \defeq \argmax_{\lambda \in (0, \eta]} f(\lambda; \at)$ with relative error at most $\epsilon$, i.e., $(1 - \epsilon)\lambda^* < \widehat{\lambda} < (1 + \epsilon)\lambda^*$.
\end{restatecorollary}

\begin{proof}
    By the self-concordance result in \Cref{lemma:self_concordance}, it is enough to show that $\lambda_0$ is a good initialization point for the Newton algorithm. Specifically, by demonstrating that the intrinsic norm of the second-order ascent direction of $f$ at $\lambda_0$ is small, we conclude that it is a valid initialization.

    Recall that from proof of \Cref{lemma:self_concordance},
    \[
        \frac{d f(\lambda; \at)}{d \lambda}      & = \frac{\sum_{\ind = 1}^d \at[\ind] e^{\lambda \at[\ind]}}{\sum_{\ind = 1}^d e^{\lambda \at[\ind]}} + \frac{\alpha}{\lambda}.                                                                                                                            \\
        \frac{d^2 f(\lambda; \at)}{d \lambda^2} & = \frac{\sum_{\ind = 1}^d \at[\ind]^2 e^{\lambda \at[\ind]} }{\sum_{\ind = 1}^d e^{\lambda \at[\ind]}} - \mleft( \frac{\sum_{\ind = 1}^d \at[\ind] e^{\lambda \at[\ind]}}{\sum_{\ind = 1}^d e^{\lambda \at[\ind]}}\mright)^2 - \frac{\alpha}{\lambda^2}.
    \]
    Let us choose the primary guess $\lambda_0 = \dfrac{\alpha}{\min_{r \in [d]}{(- \at[r]})}$. By change of variables, we know that
    \[
        \frac{d f}{d \lambda}(\lambda_0; \at) = \frac{\alpha - 1}{\lambda_0} + \frac{1}{\lambda_0} \ent{\vx} + \frac{1}{\lambda_0} \log \mleft( \sum_{\ind = 1}^{d} e^{\lambda_0 \at[\ind]}\mright),
    \]
    where $\ent{\vx}$ is the negative entropy of the vector $\vx [r] = \dfrac{\exp\{\lambda_0 \at[r]\}}{\sum_{\ind = 1}^{d} \exp\{\lambda_0 \at[\ind]\}}$. Therefore, $0 \geq H(\vx) \geq - \log d$.
    On the other hand, by \Cref{lemma:softmax_bound},
    \[
        \max_{r \in [d]} \{\at[r]\} \leq \frac{1}{\lambda_0} \log \mleft( \sum_{\ind = 1}^{d} e^{\lambda_0 \at[\ind]}\mright) \leq  \max_{r \in [d]} \{\at[r]\} + \frac{\log d}{\lambda_0}.
    \]
    Hence,
    \[
        - \frac{\log d}{\lambda_0} = \min_{r \in [d]} \{-\at[r]\} +  \max_{r \in [d]} \{\at[r]\}- \frac{\log d}{\lambda_0} \leq f^\prime(\lambda_0, \at) \leq  \min_{r \in [d]} \{-\at[r]\} +  \max_{r \in [d]} \{\at[r]\} + \frac{\log d}{\lambda_0} = \frac{\log d}{\lambda_0}.
    \]
    On the other hand, from \Cref{lemma:self_concordance} we know that
    \[
        | f^{\dprime}(\lambda_0; \at) | \geq \frac{1}{\lambda_0^2} \mleft( \alpha - \log^2 d\mright).
    \]
    Therefore, the local norm of Newton step $n(\lambda_0)$ is
    \[
        \| n(\lambda_0) \|_{f^{\dprime}(\lambda_0; \at)}^2 & = \frac{f^\prime(\lambda_0, \at)^2}{|f^{\dprime}(\lambda_0, \at) |} \\
        & \leq \frac{\log^2 d}{(\beta - 1) \log^2 d + 2 \log d + 1}           \\
        & \leq \frac{1}{\beta - 1},
    \]
    controlled by the hyperparameter $\beta \geq 40$, where $\alpha = \beta \log^2 d + 2 \log d + 1$. Thus, the Newton decrement $\| n(\lambda_0) \|_{f^{\dprime}(\lambda_0; \at)}^2$ is bounded by $\frac{1}{4}$; hence $\lambda_0$ is in the quadratic convergence regime of Newton’s method~\citep{boyd2004convex,nesterov1994interior}. This completes the proof.
\end{proof}

For our proof of \Cref{coro:newton_comwu}, we use the following lemma connecting the log-sum-exp function to the maximum function.

\begin{lemma} \label{lemma:softmax_bound}
    For any set of numbers $s_1, s_2, \dots, s_d$, the log-sum-exp function $g(t) \defeq h(t s_1, t s_2, \dots, t s_d) = \log \mleft( \sum_{\ind = 1}^{d} \exp\{t s_\ind\}\mright) $ satisfies,
    \[
        \max\{ s_1, s_2, \dots, s_d \} \leq \frac{g(t)}{t} \leq \max\{ s_1, s_2, \dots, s_d \} + \frac{\log d}{t}.
    \]
\end{lemma}

\section{Details for Kernelized Cautious Optimistic Multiplicative Weights Update (\kours)} \label{app:kernelized_comwu}

In this section, we discuss the omitted proofs for the kernelized version of \COMWU, which we refer to as \kours. A key observation for our proofs is that, by \eqref{eq:softmax_vertices}, $\vchi$ can be interpreted as the distribution of a random variable from which we draw $\vertex$. From this perspective, \Cref{prop:kernel_paper} concerns the efficient computation of the first-order moments of $\vchi$, while \Cref{prop:kernel_order_two} addresses the efficient computation of its second-order moments. We provide detailed proofs for these two statements below, beginning with \Cref{prop:kernel_paper}.

\begin{restateproposition}{prop:kernel_paper}[Theorems 4.1 and 4.2 of \citep{farina2022kernelized}]
For all time steps $t \in T$, 
let $\mut\^t \defeq \nut\^t + \sum_{\tau=1}^t  \nut\^\tau$ be the optimistic sum of the utility vectors $\nut\^t$, and define the embedding vector $\vb\^t \in \mathbb{R}^d$ as
\[
    \vb\^t[k] \defeq \exp\{\lambda\^t \mut\^t[k]\} \quad \forall \, k \in [d]. 
\]
Then, the distributions $\vchi\^t$ are proportional to $\phi_\Lambda(\vb\^t)$,
\[
    \vchi\^t = \frac{\phi_\Lambda(\vb\^t)}{K_\Lambda(\vb\^t, \vec{1}_d)},
\]
and the iterates $\vx\^t$ produced by \COMWU are computed as
\[
    \vx\^t = \sum_{\vertex \in \mathcal{V}_{\Lambda}} \vchi\^t [\vertex] \vertex = \mleft[1 - \frac{K_\Lambda(\vb\^t, \vebar_1)}{K_\Lambda(\vb\^t, \vec{1}_d)}, 1 - \frac{K_\Lambda(\vb\^t, \vebar_2)}{K_\Lambda(\vb\^t, \vec{1}_d)}, \ldots, 1 - \frac{K_\Lambda(\vb\^t, \vebar_d)}{K_\Lambda(\vb\^t, \vec{1}_d)} \mright].
\]
\end{restateproposition}

\begin{proof}
For all $\vertex \in \mathcal{V}_\Lambda$,
    \[
    \At\^t[\vertex] & = (\Nut\^t[\vertex] - \langle \Nut\^t, \vchi\^t\rangle) + \sum_{\tau=1}^t \left[\Nut\^\tau[\nu] - \langle \Nut\^\tau, \vchi\^\tau\rangle\right] \\
    & = (\langle \nut\^t, \vertex \rangle - \sum_{\vertex^\prime \in \mathcal{V}_\Lambda} \Nut\^t[\vertex^\prime] \vchi\^t[\vertex^\prime])  + \sum_{\tau=1}^t \left[ \langle \nut\^\tau, \vertex \rangle - \sum_{\vertex^\prime \in \mathcal{V}_\Lambda} \Nut\^\tau[\vertex^\prime] \vchi\^\tau[\vertex^\prime] \right] \\
    & = (\langle \nut\^t, \vertex \rangle - \sum_{\vertex^\prime \in \mathcal{V}_\Lambda} \langle \nut\^t, \vertex^\prime \rangle \vchi\^t[\vertex^\prime])  + \sum_{\tau=1}^t \left[ \langle \nut\^\tau, \vertex \rangle - \sum_{\vertex^\prime \in \mathcal{V}_\Lambda} \langle \nut\^\tau, \vertex^\prime \rangle \vchi\^\tau[\vertex^\prime] \right] \\
    & = (\langle \nut\^t, \vertex \rangle -  \langle \nut\^t, \sum_{\vertex^\prime \in \mathcal{V}_\Lambda} \vertex^\prime \vchi\^t[\vertex^\prime] \rangle)  + \sum_{\tau=1}^t \left[ \langle \nut\^\tau, \vertex \rangle -  \langle \nut\^\tau, \sum_{\vertex^\prime \in \mathcal{V}_\Lambda} \vertex^\prime \vchi\^\tau[\vertex^\prime] \rangle \right]  \\
    & = (\langle \nut\^t, \vertex \rangle -  \langle \nut\^t, \vx\^t \rangle)  + \sum_{\tau=1}^t \left[ \langle \nut\^\tau, \vertex \rangle -  \langle \nut\^\tau, \vx\^\tau \rangle \right] \\
    & = \langle \nut\^t, \vertex \rangle + \sum_{\tau=1}^t  \langle \nut\^\tau, \vertex \rangle  - (\langle \nut\^t, \vx\^t \rangle + \sum_{\tau=1}^t \langle \nut\^\tau, \vx\^\tau \rangle) \\
    & = \langle \mut\^t, \vertex \rangle + \sigma\^t,
    \]
    where we defined $\sigma\^t \defeq - (\langle \nut\^t, \vx\^t \rangle + \sum_{\tau=1}^t \langle \nut\^\tau, \vx\^\tau \rangle)$.
    By definition \eqref{eq:softmax_vertices}, for all $\vertex \in \mathcal{V}_\Lambda$,
    \[
    \vchi\^t [\vertex] & \propto \exp\{\lambda\^t \At\^t[\vertex]\} \\
    & = \exp\{\lambda\^t (\langle \mut\^t, \vertex \rangle + \sigma\^t) \} \\ %
    & = \exp\{\lambda\^t \sigma\^t\} \exp\{ \lambda\^t \sum_{\ind: \vertex[\ind] = 1} \mut\^t[k] \} \\
    & = \exp\{\lambda\^t \sigma\^t\} \prod_{\ind: \vertex[\ind] = 1} \exp\{ \lambda\^t \mut\^t[k] \} \\
    & = \exp\{\lambda\^t \sigma\^t\} \phi_{\Lambda} (\vb\^t)[\vertex].
    \] 
    On the other hand, 
    \[
    \sum_{\vertex \in \mathcal{V}_{\Lambda}}  (\exp\{\lambda\^t \sigma\^t\} \phi_{\Lambda} (\vb\^t) [\vertex]) & = \exp\{\lambda\^t \sigma\^t\} \left(\sum_{\vertex \in \mathcal{V}_{\Lambda}} \phi_{\Lambda} (\vb\^t) [\vertex]\right) \\
    & = \exp\{\lambda\^t \sigma\^t\} \left(\sum_{\vertex \in \mathcal{V}_{\Lambda}} \phi_{\Lambda} (\vb\^t) [\vertex]. \phi_{\Lambda}(\vec 1_d)[\vertex] \right) \\
    & = \exp\{\lambda\^t \sigma\^t\} K_{\Lambda} (\vb\^t, \vec 1_d) 
    \]
    Thus,
    \[
    \vchi\^t [\vertex] = \frac{\exp\{\lambda\^t \At\^t[\vertex]\}}{ \sum_{\vertex^\prime \in \mathcal{V}_{\Lambda}} \exp\{\lambda\^t \At\^t[\vertex^\prime]\} } =  \frac{\phi_\Lambda(\vb\^t)[\vertex]}{K_\Lambda(\vb\^t, \vec{1}_d)}.
    \]
    The proof of the second part follows similarly to the proof of Theorem 4.2 in \citep{farina2022kernelized}.
\end{proof}

In what follows, we state and prove \Cref{prop:kernel_order_two}.

\begin{restateproposition}{prop:kernel_order_two}
    Define the vector $\vb \in \mathbb{R}^d$ as
    \[
        \vb[k] \defeq \exp\{\lambda \mut[k]\} \quad \forall \, k \in [d]. 
    \]
    Then,
    \[
    \expect\mleft[\vertex \mright] = \mleft[1 - \frac{K_\Lambda(\vb, \vebar_1)}{K_\Lambda(\vb, \vec{1}_d)}, 1 - \frac{K_\Lambda(\vb, \vebar_2)}{K_\Lambda(\vb, \vec{1}_d)}, \ldots, 1 - \frac{K_\Lambda(\vb, \vebar_d)}{K_\Lambda(\vb, \vec{1}_d)} \mright], 
    \]
    And,
    \[
    \expect\mleft[\vertex \vertex^\top \mright]_{ij} = 1 + \frac{K_\Lambda(\vb, \vebar_i)}{K_\Lambda(\vb, \vec{1}_d)} + \frac{K_\Lambda(\vb, \vebar_j)}{K_\Lambda(\vb, \vec{1}_d)} - \frac{K_\Lambda(\vb, \vebar_{ij})}{K_\Lambda(\vb, \vec{1}_d)},
    \]
    for all $i, j \in [d]$, where $i \neq j$, and
    \[
        \expect\mleft[\vertex \vertex^\top \mright]_{ii} = 1 + \frac{K_\Lambda(\vb, \vebar_i)}{K_\Lambda(\vb, \vec{1}_d)},
    \]
    for all $i \in [d]$.
\end{restateproposition}
\begin{proof}
    We start by $\expect\mleft[\vertex \mright]$. 
    \[
    \expect\mleft[\vertex \mright] = \sum_{\vertex \in \mathcal{V}_{\Lambda}} \vchi [\vertex] \vertex = \vx,
    \]
    the rest follows similar to \Cref{prop:kernel_paper}. Next, we analyze $\expect\mleft[\vertex \vertex^\top \mright]$. First we show that, for every $i. j \in [d]$,
    \[
    \phi_\Lambda (\vebar_i) [\vertex] = \prod_{k: \vertex[k] = 1} \vebar_i[k] = \prod_{k: \vertex[k] = 1} \mathbb{I}_{k \neq i} =  \mathbb{I}_{k \notin \vertex}
    \]
    and by the fact that $\phi_\Lambda (\vec 1_d) = \vec 1 $,
    \[
    \phi_\Lambda (\vec 1_d)[\vertex] - \phi_\Lambda (\vebar_i) [\vertex] =  \mathbb{I}_{i \in \vertex}.
    \]
    Similarly,
    \[
    \phi_\Lambda (\vebar_{ij}) [\vertex] = \prod_{k: \vertex[k] = 1} \vebar_{ij}[k] = \prod_{k: \vertex[k] = 1} \mathbb{I}_{k \neq i \text{ and } k \neq j} =  \mathbb{I}_{i, j \notin \vertex}
    \]
    For every $i. j \in [d]$,
    \[
    \expect\mleft[\vertex_i \vertex_j \mright] & = \sum_{\vertex \in \mathcal{V}_{\Lambda}} \vchi [\vertex] \vertex_i \vertex_j \\
    & = \sum_{\vertex \in \mathcal{V}_{\Lambda}} \vchi [\vertex].\mathbb{I}_{i, j \in \vertex} \\
    & = \sum_{\vertex \in \mathcal{V}_{\Lambda}} \vchi [\vertex] . ( 1 - \mathbb{I}_{i \notin \vertex} - \mathbb{I}_{j \notin \vertex} + \mathbb{I}_{i, j \notin \vertex}) \numberthis{eq:inc_exc_princ} \\
    & = 1 - \sum_{\vertex \in \mathcal{V}_{\Lambda}} \vchi [\vertex] . ( - \phi_\Lambda (\vebar_i) [\vertex] - \phi_\Lambda (\vebar_j) [\vertex] + \phi_\Lambda (\vebar_{ij}) [\vertex] ) \\
    & = 1 + \frac{K_\Lambda(\vb, \vebar_i)}{K_\Lambda(\vb, \vec{1}_d)} + \frac{K_\Lambda(\vb, \vebar_j)}{K_\Lambda(\vb, \vec{1}_d)} - \frac{K_\Lambda(\vb, \vebar_{ij})}{K_\Lambda(\vb, \vec{1}_d)}
    \] 
    where line (\ref{eq:inc_exc_princ}) follows from the inclusion–exclusion principle, and in the last line, we applied \Cref{prop:kernel_paper} multiple times. The case with $i = j$ is quite similar.
\end{proof}

\section{Proofs for Cautious Optimism for Convex Games} \label{app:convex}

In this section, we analyze how \ours naturally applies to the broad class of \emph{convex games}. We begin by establishing the nonnegativity of the regret in the lifted space for \Cref{algo:cftrl_convex}.

\begin{restateproposition}{prop:reg+_convex}
    For any time horizon $T \in \mathbb{N}$, it holds that $\tildereg\^T = \max\{0, \reg\^T\}$. Consequently, $\tildereg\^T \geq 0$ and $\tildereg\^T \geq \reg\^T$.
\end{restateproposition}
\begin{proof}
    To establish the result, recall the definition of the reward signal  
    \[
    \ut\^t \defeq \begin{pmatrix} -\langle \nut\^t, \vx\^t \rangle \\ \nut\^t \end{pmatrix}
    \]
    and the induced action  
    \[
    \begin{pmatrix} 1 \\ \vx\^t \end{pmatrix} = \frac{\vy\^t}{\langle \vy\^t, {\vec 1}_{d+1} \rangle}.
    \]
    It is clear that $\vec \ut\^{t} \perp \vy\^{t}$.

 Based on these definitions, the regret can be analyzed as follows:
    \[
        \tildereg\^T & = \max_{\vy^* \in [0,1] \widetilde{\mathcal{X}} } \sum_{t = 1}^{T} \langle  {\vec\ut}\^{t}, \vy^* - \vy\^{t} \rangle                              \\
        & = \max_{\vy^* \in [0,1]\widetilde{\mathcal{X}} } \sum_{t = 1}^{T} \langle \begin{pmatrix}
        -\langle \nut\^t, \vx\^t \rangle \\
        \nut\^t
        \end{pmatrix}, \vy^* - \vy\^{t} \rangle                                                   \\
        & =  \max_{\vy^* \in [0,1]\widetilde{\mathcal{X}} } \sum_{t = 1}^{T} \langle \begin{pmatrix}
        -\langle \nut\^t, \vx\^t \rangle \\
        \nut\^t
        \end{pmatrix}, \vy^* \rangle\numberthis{eq:reg+_1_convex} \\
        & \geq \max_{\vy^* \in \widetilde{\mathcal{X}} } \sum_{t = 1}^{T} \langle \begin{pmatrix}
        -\langle \nut\^t, \vx\^t \rangle \\
        \nut\^t
        \end{pmatrix}, \vy^* \rangle       \\
        & = \max_{\begin{pmatrix}
        1 \\
        \vx^*
        \end{pmatrix} \in \widetilde{\mathcal{X}} } \sum_{t = 1}^{T} \langle \begin{pmatrix}
        -\langle \nut\^t, \vx\^t \rangle \\
        \nut\^t
        \end{pmatrix}, \begin{pmatrix}
        1 \\
        \vx^*
        \end{pmatrix} \rangle   
        \\
        & \geq \max_{\vx^* \in \mathcal{X} } \sum_{t = 1}^{T} \langle \nut\^t, \vx^* \rangle - \langle \nut\^t, \vx\^{t} \rangle                            \\
        & = \reg\^T,
    \]
    where \eqref{eq:reg+_1_convex} follows from the orthogonality condition $\vec \ut\^{t} \perp \vy\^{t}$. Furthermore, it is straightforward to verify that $\tildereg\^T \geq 0$ by choosing $0$ as the comparator. This completes the proof.
\end{proof}

\begin{lemma}\label{lemma:u_correction_convex}
    Suppose that $\| \nut\^{t} \|_\infty \leq 1$ holds for all $t \in [T]$ according to \Cref{assumption:convex}. Then, the following inequality holds:
    \[
        \| \ut\^{t} - \ut\^{t - 1} \|_{\infty}^2 \leq 3 \|\nut\^t - \nut\^{t - 1}\|_{\infty}^2 + 2 \| \vx\^t - \vx\^{t-1} \|_1^2.
    \]
\end{lemma}
\begin{proof}
    From the definition, we have:
    \[
    \hspace{-0.5cm}
        \| {\ut}\^{t} -   {\ut}\^{t - 1} \|_{\infty}^2 & = \| \begin{pmatrix} -\langle \nut\^t, \vx\^t \rangle \\ \nut\^t \end{pmatrix} -   \begin{pmatrix} -\langle \nut\^{t-1}, \vx\^{t-1} \rangle \\ \nut\^{t-1} \end{pmatrix} \|_{\infty}^2 \\
        & \leq \max\mleft\{ \| \nut\^t - \nut\^{t-1} \|_\infty^2,  \big| \langle \nut\^t, \vx\^t \rangle  - \langle \nut\^{t-1}, \vx\^{t-1} \rangle \big|^2  \mright\} \\
        & \leq \|\nut\^t - \nut\^{t - 1}\|_{\infty}^2 + \big| \langle \nut\^t, \vx\^t \rangle  - \langle \nut\^{t-1}, \vx\^{t-1} \rangle \big|^2 \numberthis{eq:u_correction_convex2} \\
        & \leq   \|\nut\^t - \nut\^{t - 1}\|_{\infty}^2 + \big| \big(\langle \nut\^t, \vx\^t \rangle  - \langle \nut\^t, \vx\^{t-1} \rangle\big) + \big(\langle \nut\^t, \vx\^{t-1} \rangle - \langle \nut\^{t-1}, \vx\^{t-1} \rangle\big) \big|^2 \\
        & \leq  \|\nut\^t - \nut\^{t - 1}\|_{\infty}^2 + 2 \big|\langle \nut\^t, \vx\^t - \vx\^{t-1}\rangle\big|^2 + 2 \big| \langle \nut\^t - \nut\^{t-1}, \vx\^{t-1} \rangle \big|^2 \numberthis{eq:u_correction_convex3} \\
        & \leq   \|\nut\^t - \nut\^{t - 1}\|_{\infty}^2 + 2 \| \vx\^t - \vx\^{t-1} \|_1^2 + 2\| \nut\^t - \nut\^{t-1} \|_{\infty}^2 \numberthis{eq:u_correction_convex4} \\
        & = 3 \|\nut\^t - \nut\^{t - 1}\|_{\infty}^2 + 2 \| \vx\^t - \vx\^{t-1} \|_1^2,
    \]
    where \eqref{eq:u_correction_convex2} and \eqref{eq:u_correction_convex3} make use of Young’s inequality, and \eqref{eq:u_correction_convex4} applies Hölder’s inequality and the fact that $\| \vx \|_1 \leq 1 $ for all $\vx \in \mathcal{X}$.
\end{proof}

In turn, we show how the nonnegative RVU bounds hold for \ours in convex games.

\begin{theorem}[Nonnegative RVU bound of \ours for convex games]\label{theorem:nonnegative_rvu_convex}
    The cumulative regret $\reg\^T$ up to time $T$ for the players in a convex game satisfying \Cref{assumption:convex}, who follow \ours with a sufficiently small learning rate $\eta \leq \min\{3\gamma/80,\, \mu/(32\sqrt{2})\}$,\footnote{If $\psi$ is $\gamma$-LIL, it suffices to choose $\eta \leq \min\{3\gamma(1)/80,\, 1/8,\, \mu/(32\sqrt{2})\}$, after which the guarantees follow naturally.} and $\alpha \geq 4\gamma + \mu$, is bounded as
    \[
        \max\{\reg\^T, 0\} = \tildereg\^T \leq 3 + \frac{1}{\eta} \mleft( \alpha \log T + \mathcal{R} \mright) + \eta \frac{48}{\mu} \sum_{t = 1}^{T-1} \|\nut\^{t+1} - \nut\^{t}\|_{\infty}^2  - \frac{1}{\eta} \frac{\mu}{64} \sum_{t = 1}^{T-1} \| {\vx}\^{t+1} -  {\vx}\^{t} \|_1^2.
    \]
\end{theorem}
\begin{proof}
    The proof is similar to the proof of \Cref{theorem:nonnegative_rvu} for finite games and we avoid unncessary details in the interest of space. 
    
    We begin with \Cref{lemma:basi_OFTRL} to decompose the nonnegative regret into terms (I), (II), and (III). The analysis for term (I) follows verbatim. For term (II), instead of \Cref{lemma:u_correction}, we employ \Cref{lemma:u_correction_convex}. We then repeat the steps of the regret analysis in \Cref{sec:stepI,sec:stepII,sec:stepIII}, recalling that under \Cref{assumption:convex}, we have $\|\vx\|_1 \leq 1$ for all $\vx \in \mathcal{X}$. Thus, term (III) can be bounded in a similar manner.
\end{proof}

Finally, we are ready to state and prove the regret guarantees of \ours for convex games. The proofs are similar to the analysis for finite games discussed in \Cref{app:detailed,app:social_regret}, and we omit the repetitive details.

\begin{restatetheorem}{theorem:regret_final_bound_convex}[Regret bounds of \ours for convex games]
    If all players $i \in [n]$ in a convex game satisfying \Cref{assumption:convex} follow \ours with a $\gamma$-(L)IL and a $\mu$-strongly convex\footnote{With respect to the $\ell_1$ norm.} regularizer $\psi$ on the set $\mathcal{X}_i$, and use a sufficiently small learning-rate cap $\eta = O_{\scaleto{T}{4pt}}(1)$, then the following holds for both the individual and social regrets:
    \[
    \reg_i\^T = O\big(n \Gamma_\psi(d) \log T\big), \quad \text{and} \quad \sum_{i \in [n]} \reg_i\^T = O_{\scaleto{T}{4pt}}(1),
    \]
    where $\Gamma_\psi(d) = \gamma / \mu$. Moreover, the algorithm for each player $i \in [n]$ is adaptive to adversarial utilities; that is, each player’s regret satisfies $\reg_i\^T = O(\sqrt{T \log d})$.
\end{restatetheorem}

\begin{proof}
        We observe that, by \Cref{assumption:convex},
    \[
    \| \nut\^{t + 1}_i - \nut\^{t}_i \|_{\infty}^2 & = \| \nabla_{\vx} \nu_i(\vx_1\^{t+1}, \ldots, \vx_n\^{t+1}) - \nabla_{\vx} \nu_i(\vx_1\^t, \ldots, \vx_n\^t) \|_{\infty}^2 \\
    & \leq L^2 \mleft(\sum_{i = 1}^{n} \| \vx\^{t+1}_i - \vx\^{t}_i \|_1 \mright)^2 \\
    & \leq L^2 n \sum_{i = 1}^{n} \| \vx\^{t+1}_i - \vx\^{t}_i \|_1^2,
    \]
    where the final inequality follows from Jensen’s inequality. Combining this observation with the proofs \Cref{theorem:nonnegative_rvu_convex} yields a bound on the total path length analogous to \Cref{prop:path_length}. This bound on the path length can then be used to bound the individual regret, following the proof of \Cref{theorem:regret_final_bound}. The proof for the social regret proceeds similarly, and we omit the details in the interest of space.
\end{proof}

\end{document}